\documentclass{article}

\usepackage[accepted]{icml2026}
\usepackage{mathtools}
\usepackage{amssymb}
\usepackage{amsthm}
\usepackage{bm}
\usepackage{bbm}
\usepackage{mathrsfs}

\usepackage{graphicx}
\usepackage[svgnames]{xcolor}
\usepackage{booktabs}
\usepackage{multirow}
\usepackage{colortbl}
\usepackage{wrapfig}
\usepackage{threeparttable}
\usepackage{longtable} 
\usepackage{etoc}
\usepackage{makecell}
\usepackage{hhline}
\usepackage{tabularray}

\usepackage{framed}
\usepackage{ascmac}

\usepackage{nicefrac}
\usepackage{physics} 
\usepackage{algorithm}
\usepackage[linesnumbered,ruled,vlined,algo2e]{algorithm2e}
\SetAlgoNlRelativeSize{-1}

\makeatletter
\@ifclassloaded{beamer}{%
}{%
  \usepackage{enumitem}
  \usepackage[hypertexnames=false]{hyperref}
}
\makeatother
\usepackage[capitalize]{cleveref}

\usepackage{autonum}
\hypersetup{
  colorlinks,
  linkcolor={violet},
  citecolor={blue},
  urlcolor={black},
}

\usepackage{comment}

\usepackage{soul} 

\theoremstyle{plain}
\newtheorem{thm}{Theorem}[section]

\newtheorem{lem}[thm]{Lemma}
\newtheorem{cor}[thm]{Corollary}
\theoremstyle{definition}
\newtheorem{dfn}[thm]{Definition}

\theoremstyle{remark}
\newtheorem{rem}[thm]{Remark}

\newcommand{\fixcrefthm}[1]{%
  \AddToHook{env/#1/begin}{\crefalias{thm}{#1}}%
}

\fixcrefthm{thm}
\fixcrefthm{prop}
\fixcrefthm{lem}
\fixcrefthm{cor}
\fixcrefthm{dfn}
\fixcrefthm{ass}
\fixcrefthm{rem}
\fixcrefthm{exa}
\fixcrefthm{prty}
 
\crefname{equation}{}{} 
\crefname{function}{Function}{Functions}
\Crefname{equation}{Eq.}{Eqs.}
\crefname{thm}{Theorem}{Theorems}
\crefname{prop}{Proposition}{Propositions}
\crefname{lem}{Lemma}{Lemmas}
\crefname{cor}{Corollary}{Corollaries}
\crefname{dfn}{Definition}{Definitions}
\crefname{ass}{Assumption}{Assumptions}
\crefname{rem}{Remark}{Remarks}
\crefname{exa}{Example}{Examples}
\crefname{prty}{Property}{Properties}

\newcommand{\calA}{\mathcal{A}}
\newcommand{\calB}{\mathcal{B}}
\newcommand{\calC}{\mathcal{C}}
\newcommand{\calD}{\mathcal{D}}
\newcommand{\calE}{\mathcal{E}}
\newcommand{\calF}{\mathcal{F}}

\newcommand{\calK}{\mathcal{K}}

\newcommand{\calM}{\mathcal{M}}

\newcommand{\calS}{\mathcal{S}}
\newcommand{\calT}{\mathcal{T}}

\renewcommand{\hat}{\widehat}
\renewcommand{\tilde}{\widetilde}
\renewcommand{\epsilon}{\varepsilon}

\newcommand{\hatQ}{\hat{Q}}

\newcommand{\hatl}{\hat{\ell}}

\newcommand{\tilpsi}{\tilde{\psi}}

\newcommand{\tilQ}{\tilde{Q}}
\newcommand{\tilO}{\tilde{O}}

\newcommand{\stil}{\tilde{s}}
\newcommand{\till}{\tilde{\ell}}
\newcommand{\tilq}{\tilde{q}}

\newcommand{\unif}{\mathsf{Unif}}
\newcommand{\ber}{\mathsf{Ber}}

\newcommand{\KL}{\mathsf{KL}}
\newcommand{\kl}{\mathsf{kl}}
\newcommand{\Pidet}{\Pi_\mathsf{det}}
\newcommand{\Var}{\mathsf{Var}}
\newcommand{\Varmax}{\mathbb{V}^c}
\newcommand{\Uvar}{U_{\mathsf{Var}}}

\renewcommand{\ln}{\log}

\def\E{\mathbb{E}}
\def\P{\mathbb{P}}
\def\Q{\mathbb{Q}}
\def\I{\mathbb{I}}
\def\R{\mathbb{R}}
\def\Rp{\mathbb{R}_{> 0}}

\def\V{\mathbb{V}}

\newcommand{\ind}{\mathbbm{1}}

\DeclareMathOperator*{\argmax}{arg\,max}
\DeclareMathOperator*{\argmin}{arg\,min}
\DeclarePairedDelimiter{\brk}{[}{]}
\DeclarePairedDelimiter{\set}{\{}{\}}
\DeclarePairedDelimiter{\prn}{(}{)}
\DeclarePairedDelimiter{\nrm}{\|}{\|}
\DeclarePairedDelimiter{\inpr}{\langle}{\rangle}
\DeclarePairedDelimiter{\ceil}{\lceil}{\rceil}
\DeclarePairedDelimiter{\floor}{\lfloor}{\rfloor}
\newcommand{\relmiddle}[1]{\mathrel{}\middle#1\mathrel{}}

\newcommand{\one}{\mathbf{1}}
\newcommand{\Reg}{\text{\rm Reg}}
\newcommand{\Regg}{\mathsf{R}}
\newcommand{\regterm}{\textnormal{\textbf{reg-term}}}
\newcommand{\biasterm}{\textnormal{\textbf{bias-term}}}

\newcommand{\term}{\mathrm{\textbf{term}}}

\newcommand{\polylog}{\mathrm{polylog}}

\newcommand{\pio}{\mathring{\pi}}
\newcommand{\tilpi}{\tilde{\pi}}
\newcommand{\pist}{{\pi^\star}}
\newcommand{\qst}{{q^{\pio}}}

\allowdisplaybreaks[1]

\usepackage[textsize=tiny]{todonotes}

\begin{document}

\twocolumn[
  \icmltitle{Data- and Variance-dependent Regret Bounds for Online Tabular MDPs}

  \icmlsetsymbol{equal}{*}

  \begin{icmlauthorlist}
    \icmlauthor{Mingyi Li}{yyy}
    \icmlauthor{Taira Tsuchiya}{yyy,xxx}
    \icmlauthor{Kenji Yamanishi}{yyy}
  \end{icmlauthorlist}

  \icmlaffiliation{yyy}{Department of Mathematical Informatics, The University of Tokyo, Tokyo, Japan}
  \icmlaffiliation{xxx}{RIKEN AIP, Tokyo, Japan}

  \icmlcorrespondingauthor{Mingyi Li}{mingyi-mike@g.ecc.u-tokyo.ac.jp}

  \icmlkeywords{Machine Learning, ICML}

  \vskip 0.3in
]

\printAffiliationsAndNotice{}

\begin{abstract}
This work studies online episodic tabular Markov decision processes (MDPs) with known transitions and develops best-of-both-worlds algorithms that achieve refined data-dependent regret bounds in the adversarial regime and variance-dependent regret bounds in the stochastic regime. We quantify MDP complexity using a first-order quantity and several new data-dependent measures for the adversarial regime, including a second-order quantity and a path-length measure, as well as variance-based measures for the stochastic regime. To adapt to these measures, we develop algorithms based on global optimization and policy optimization, both built on optimistic follow-the-regularized-leader with log-barrier regularization. For global optimization, our algorithms achieve first-order, second-order, and path-length regret bounds in the adversarial regime, and in the stochastic regime, they achieve a variance-aware gap-independent bound and a variance-aware gap-dependent bound that is polylogarithmic in the number of episodes. For policy optimization, our algorithms achieve the same data- and variance-dependent adaptivity, up to a factor of the episode horizon, by exploiting a new optimistic $Q$-function estimator. Finally, we establish regret lower bounds in terms of data-dependent complexity measures for the adversarial regime and a variance measure for the stochastic regime, implying that the regret upper bounds achieved by the global-optimization approach are nearly optimal.
\end{abstract}

\section{Introduction}
\begin{table*}[t]
  \centering
  \begin{threeparttable}
  \caption{
  Comparison of regret upper bounds based on global optimization. 
  Here, $U = \sum_{s}\sum_{a\neq\pi^\star(s)}\frac{H^2\ln(T)}{\Delta(s,a)}$ and 
  $\Uvar = \sum_{s}\sum_{a\neq\pi^\star(s)}\frac{H\Varmax(s)\ln(T)}{\Delta(s,a)}$.
  We only display leading terms and omit logarithmic and lower-order factors.
  }
  \label{tab:regret-summary_global}
\begin{tabular*}{\linewidth}{lll}
\toprule
\textbf{Reference} & \textbf{Adversarial regime} & \textbf{Stochastic regime with adversarial corruption} \\
\midrule
\citet{zimin2013online}
  & $\sqrt{HSAT}$ &$\sqrt{HSAT}$ \\
\citet{lee2020bias}
  & $\sqrt{SAL^\star}$ &  $\sqrt{SAL^\star}$ \\
\multirow{1}{*}{\citet{jin2021best}}
  & $\sqrt{HSAT}$ 
  & $U_{\text{Jin}} + \sqrt{U_{\text{Jin}}\calC}$ \
    $(U_{\text{Jin}} = U + \frac{H^3S\ln (T)}{\min_{s,a\neq\pist(s)}\Delta(s,a)})$ \\
\midrule
\multirow{1}{*}{\textbf{This work} (\cref{thm:Occup_OPT})}
  & $\sqrt{SA\min\{L^\star,\,HT\!-\!L^\star,\,Q_\infty,\,V_1\}}$ 
  & $\min\set{\sqrt{SA(\V T+\calC)},\, U + \sqrt{U\calC} }$ \\
\multirow{1}{*}{\textbf{This work} (\cref{thm:Occup_OPT2})}
  & $\sqrt{SA\min\{L^\star,\,HT\!-\!L^\star,\,Q_\infty\}}$ 
  & $\min\set{\sqrt{SA(\V T+\calC)},\, \Uvar + \sqrt{\Uvar\calC} }$ \\
\bottomrule
\end{tabular*}
\end{threeparttable}
\end{table*}

\begin{table*}[t]
  \centering
\begin{threeparttable}
  \caption{
  Comparison of regret upper bounds based on policy optimization. 
  Here, $U = \sum_{s}\sum_{a\neq\pi^\star(s)}\frac{H^2\ln^2(T)}{\Delta(s,a)}$ and $\Uvar = \sum_{s}\sum_{a\neq\pi^\star(s)}\frac{H\Varmax(s)\ln^2(T)}{\Delta(s,a)}$. 
  We only display leading terms and omit logarithmic and lower-order factors.
  }
  \label{tab:regret-summary_policy}
\small
\begin{tabular*}{\linewidth}{lll}
\toprule
\textbf{Reference} & \textbf{Adversarial regime} & \textbf{Stochastic regime with adversarial corruption} \\
\midrule
\citet{luo2021policy}
  & $\sqrt{H^3SAT}$ & $\sqrt{H^3SAT}$ \\
\multirow{1}{*}{\citet{dann2023best}}
& $\sqrt{H^2SAL^\star}$
& $U + \sqrt{U \calC}$ \\
\midrule
\multirow{1}{*}{\textbf{This work} (\cref{thm:Policy_OPT})}
& $\sqrt{H^2SA\min\set*{L^\star, HT-L^\star, Q_\infty, V_1}}$
& $\min\set{\sqrt{H^2SA\prn*{\V T + \calC}},\, U + \sqrt{U \calC} }$ \\
\multirow{1}{*}{\textbf{This work} (\cref{thm:Policy_OPT2})}
& $\sqrt{H^2SA\min\set*{L^\star, HT-L^\star, Q_\infty}}$
& $\min\set{\sqrt{H^2SA\prn*{\V T + \calC}},\, \Uvar + \sqrt{\Uvar \calC} }$ \\
\bottomrule
\end{tabular*}
\end{threeparttable}
\end{table*}

We study online learning in finite-horizon episodic tabular Markov decision processes (MDPs), a standard model in reinforcement learning with broad applications, such as robotics~\cite{schulman2017proximal}, games~\cite{mnih2015human}, and healthcare decision-making~\citep{komorowski2018artificial}.
In this setting, a learner interacts with an environment over~$T$ episodes. In each episode, the learner selects a distribution over actions at each state, follows the trajectory induced by the algorithm, and observes the losses incurred along that trajectory.
The goal is to minimize regret, defined as the difference between the learner's cumulative expected loss and that of the best fixed policy in hindsight.

Tabular MDP algorithms are typically built on either \emph{global optimization} or \emph{policy optimization}. Global optimization solves an optimization problem over the set of all occupancy measures and can achieve minimax-optimal regret guarantees \cite{zimin2013online,jin2020learning}, but it can be computationally demanding for large MDPs. 
Policy optimization updates an action distribution at each state, which is often practical and computationally efficient, and the per-state updates can be viewed as instances of multi-armed bandits~\cite{shani2020optimistic,luo2021policy}.

The difficulty of tabular MDPs depends on how the underlying loss sequence is generated.
In the adversarial regime, where losses may be chosen arbitrarily, the minimax-optimal regret typically scales as $\tilde{O}(\sqrt{T})$ \cite{jin2020learning,luo2021policy}, where $T$ is the number of episodes.
By contrast, in the stochastic regime with i.i.d.~losses, one can achieve much faster gap-dependent regret, typically $O(\log T)$ \cite{simchowitz2019non}.

Recent work has shown that these regret upper bounds can be improved in various ways to better adapt to the structure of MDPs.
One line of work develops \emph{best-of-both-worlds} algorithms, which aim to achieve near-optimal regret in both the adversarial and stochastic regimes with a single algorithm~\citep{jin2020simultaneously,jin2021best,dann2023best}, thereby bridging the gap between the two regimes.
As another example, in the adversarial regime, one can derive regret bounds that depend on \emph{first-order complexity measures}: when the optimal policy has a small value function, this benign property yields improved guarantees~\citep{lee2020bias,dann2023best}. 
Furthermore, in the stochastic regime, variance-aware algorithms have been actively studied, including those with \emph{gap-independent} regret bounds~\citep{zanette2019tighter,zhang2021reinforcement} and those with \emph{gap-dependent} regret bounds~\citep{simchowitz2019non,chen2025sharp} with polylogarithmic dependence on $T$.

Despite these developments, existing algorithms remain unsatisfactory.
First, the adaptive guarantees above are typically achieved by different algorithms; in practice, the environment is unknown a priori, making it difficult to choose the most suitable algorithm in advance.
Moreover, in adversarial tabular MDPs, the only known data-dependent guarantees are first-order bounds. This contrasts with the broader online learning literature, which studies many other data-dependent guarantees, including \emph{second-order bounds} that adapt to the magnitude of loss fluctuations and \emph{path-length bounds} that adapt to how much losses change over time (see, \textit{e.g.,}~\citealt{cesa1996worst,allenberg2006hannan,neu2015first,wei2018more,bubeck2019improved}).
This naturally raises the following question:

\emph{Can we design a single best-of-both-worlds algorithm for tabular MDPs that achieves first-order, second-order, and path-length bounds in the adversarial regime and achieves variance-dependent bounds that are gap-independent or gap-dependent in the stochastic regime?}

To address this question, we focus on the known-transition setting. This lets us separate the difficulty of loss estimation, which is already nontrivial under bandit feedback.
Indeed, even when the transition kernel is known, losses are observed only along realized trajectories, and estimation errors at a state-action pair can affect downstream value estimates through the MDP dynamics. Therefore, unlike in multi-armed bandits, these errors cannot be controlled independently across state-action pairs; we also need to account for how they affect later estimates.
This makes it necessary to design loss and $Q$-function estimators whose biases are compatible with refined data-dependent complexity measures.

The unknown-transition setting is also important, but it is beyond the scope of this paper. Handling it would require data-dependent control of transition-estimation errors in addition to the loss-estimation errors studied here.
For global optimization, \citet{lee2020bias} derive first-order bounds under unknown transitions, but extending these techniques to second-order, path-length, variance-dependent, or best-of-both-worlds guarantees remains open.
For policy optimization, even first-order data-dependent guarantees under unknown transitions remain open~\cite{dann2023best}.

\subsection*{Contributions of This Paper}
In \Cref{sec:data_dependent_measures}, we begin by introducing new data-dependent complexity measures.
Specifically, we introduce a second-order quantity $Q_\infty$, which captures how large the losses can fluctuate, as well as a path-length measure $V_1$, which quantifies how much the losses change over time.
In addition, to quantify variance of MDPs in the stochastic regime, we introduce the occupancy-weighted variance $\V$ and the conditional occupancy-weighted variance~$\Varmax$ (see \cref{def:first-order_bound,def:second-order_bound,def:path-length_bound,def:total_variance,def:variance-to-go} for detailed definitions).

In \cref{sec:global_optimization}, we first develop global optimization algorithms whose regret adapts to the data-dependent complexity measures introduced above.
They achieve a regret upper bound of $\tilO(\sqrt{SA\min\set*{L^\star, HT - L^\star, Q_{\infty}, V_1}})$ in the adversarial regime, as well as a variance-aware gap-independent regret bound of $\tilO(\sqrt{SA\V T})$ and a variance-aware gap-dependent regret bound of $\polylog(T)$ in the stochastic regime\footnote{Precisely speaking, for both global optimization and policy optimization, whether we can attain a path-length bound or a variance-aware gap-dependent bound depends on how the loss prediction in OFTRL is chosen (see \Cref{tab:regret-summary_global,tab:regret-summary_policy}).}~(see \cref{thm:Occup_OPT,thm:Occup_OPT2}).
To our knowledge, these are the first second-order and path-length bounds for online episodic tabular MDPs.
Moreover, our gap-dependent bound in the stochastic regime improves over \citet{jin2021best} by adapting the variance and avoiding their additional dependence on ${1}/{\min_{s,a}\Delta(s,a)}$.
The algorithms are based on optimistic follow-the-regularized-leader (OFTRL) over the set of all occupancy measures with a log-barrier regularizer and an adaptive learning rate.
See \cref{tab:regret-summary_global} for a detailed comparison.

In \cref{sec:policy_optimization}, we develop policy optimization-based algorithms, which achieve a regret upper bound of $\tilO(\sqrt{H^2 S A\min\{L^\star,HT-L^\star,Q_\infty,V_1\}})$ in the adversarial regime, as well as a gap-independent variance-dependent regret bound of $\tilO(\sqrt{H^2SA\V T})$ and a gap-dependent variance-dependent regret bound of $\polylog(T)$ in the stochastic regime (see \cref{thm:Policy_OPT,thm:Policy_OPT2}).  
See \cref{tab:regret-summary_policy} for a detailed comparison.
The algorithms are also based on OFTRL with a log-barrier regularizer. 
A particularly notable ingredient is that, to correct a bias induced by the loss predictions in OFTRL, we introduce an even more optimistic $Q$-function estimation scheme than the one used in the existing best-of-both-worlds policy optimization by \citet{dann2023best} (see \Cref{subsec:po_alg} for details).

\begin{table}[t]
  \centering
  \begin{threeparttable}
  \caption{Regret lower bounds for online episodic tabular MDPs. Note that the lower bounds in terms of $L^\star$, $Q_\infty$, and $V_1$ are constructed for adversarial instances.
  }
  \label{tab:regret-lb}
  \begin{tabular*}{\linewidth}{ll}
    \toprule
    \textbf{Reference} & \textbf{Lower bound} \\
    \midrule
    \citet{zimin2013online} & $\Omega(\sqrt{HSAT})$ \\
    \midrule
    \multirow{2}{*}{\textbf{This work} (\cref{sec:lower_bounds})} & 
       $\Omega(\sqrt{SAL^\star})$, $\Omega(\sqrt{SA\,Q_\infty})$, \\  & $\Omega(\sqrt{H\,V_1})$, $\Omega(\sqrt{SA\,\V T})$ \\
    \bottomrule
  \end{tabular*}
  \end{threeparttable}
\end{table}

Finally, in \cref{sec:lower_bounds}, we derive data-dependent regret lower bounds of $\Omega(\sqrt{SAL^\star})$, $\Omega(\sqrt{SAQ_\infty})$, and $\Omega(\sqrt{HV_1})$, as well as a variance-dependent lower bound of $\Omega(\sqrt{SA\V T})$.
This implies that our regret upper bound for global optimization is nearly optimal in terms of $L^\star$, $Q_\infty$, and $V_1$.
See \cref{tab:regret-lb} for a summary.
Due to space limitations, we defer additional related work on MDPs, best-of-both-worlds algorithms, and data-dependent analyses in the adversarial and stochastic regimes to \cref{app:additional_related}.

\section{Preliminaries}
\paragraph{Notation.}
For $N\in\mathbb{N}$, let $[N]\coloneqq\{1,2,\ldots,N\}$.
Given a vector $x$, we write $\norm{x}_p$ to denote the $\ell_p$-norm for $p\in[1,\infty]$.
The set $\Delta(\calK)$ denotes the set of all probability distributions over the set $\calK$,
and the indicator function $\ind[\cdot]$ returns $1$ if the specified condition holds and $0$ otherwise.
For sets $\calA$ and $\calB$, we use $\calA^{\calB}$ to denote the set of all functions from $\calB$ to $\calA$.
Given functions $f$ and $g$ with $g(x)>0$, 
we write $f\lesssim g$ or $f=O(g)$ if there exists a constant $c>0$ such that $f(x)\leq cg(x)$ for all $x$ in the relevant domain and $\tilO(\cdot)$ hides logarithmic factors.

\paragraph{Episodic tabular MDPs.}
We consider a finite-horizon episodic tabular Markov Decision Process (MDP) $\mathcal{M}=(\calS,\calA, P, H, s_0)$, where $\calS$ is a finite state space with $S = \abs{\calS}$, $\calA$ is a finite action space with $A = \abs{\calA}$, and $P: \calS \times \calA \to \Delta(\calS)$ is a known transition function.
Here, $P(s' \mid s, a)$ specifies the probability of transitioning to state $s'$ after taking action $a$ in state $s$.
We adopt the standard layered MDP assumption \citep{neu2010online,jin2020learning,luo2021policy}
that the state space is layered into $H+1$ disjoint sets $\calS_0,\calS_1,\dots,\calS_H$, where $\calS_0=\{s_0\}$ is the initial layer and $\calS_H=\{s_H\}$ is a terminal absorbing layer. For simplicity, we exclude $s_H$ from $\calS$ and note that $H \le S$. Transitions are restricted to proceed from one layer to the next: for any $(s, a) \in \calS_h\times\calA$ with $h\in\{0,\dots, H-1\}$, the distribution $P(\cdot\mid s, a)$ is supported only on $\calS_{h+1}$. We write $h(s)$ for the layer index of state $s$.
The learning proceeds for $T$ episodes indexed by $t=1,\dots, T$. At the beginning of episode $t$, the environment chooses a loss function $\ell_t:\calS\times\calA\to[0,1]$.
A policy $\pi:\calS\to\Delta(\calA)$ assigns a distribution over actions to each state $s$, with $\pi(a\mid s)$ denoting the probability of action $a$ at state $s$. The set of all stochastic policies is denoted by $\Pi=\Delta(\calA)^{\calS}$, and the set of deterministic policies by $\Pidet=\calA^{\calS}$. When $\pi$ is deterministic, we write $\pi(s)\in\calA$ for the unique action chosen in $s$. We assume $T \ge \max\{2,S,A\}$ for convenience.\footnote{The assumptions $T \ge S$ and $T \ge A$ are not essential. If they do not hold, the analysis remains valid with $\ln(T)$ replaced by $\ln(SAT)$ or $\ln(AT)$.}

For a policy $\pi$ and a loss function $\ell$, we define the value functions recursively with the terminal condition $V^\pi(s_H;\ell)=0$. 
The state value function $V^\pi(s; \ell)$ and the state-action value function $Q^\pi(s,a;\ell)$ (a.k.a.~$Q$-function) are defined as
$V^\pi(s;\ell) = \E_{a\sim \pi(\cdot\mid s)}\brk{Q^\pi(s,a;\ell)}$
and $Q^\pi(s,a;\ell) = \ell(s,a) + \E_{s'\sim P(\cdot\mid s,a)}\brk{V^\pi(s';\ell)}$.
Here we may overload the notation by allowing a general function $m:\calS\times\calA\to\R$ to replace the loss function $\ell$, and write $V^\pi(s;m)$ and $Q^\pi(s, a;m)$ accordingly.

In each episode $t \in [T]$, the learner chooses a policy $\pi_t$ based on past observations, executes it from the initial state $s_0$, and observes the losses along the realized trajectory $\{(s_{t,h}, a_{t,h},\ell_t(s_{t,h}, a_{t,h}))\}_{h=0}^{H-1}$.
The goal of the learner is to minimize the regret given by
\begin{equation}
\Reg_T = \max_{\pi \in \Pi} \, \E\brk*{\sum_{t=1}^T V^{\pi_t}(s_0;\ell_t) - \sum_{t=1}^T V^{\pi}(s_0;\ell_t)},
\end{equation}
and denote by $\pio$ one of the optimal fixed comparators for the regret definition.

For a policy $\pi$ and a state-action pair $(s,a)$, the occupancy measure $q^\pi(s,a)$ is the probability of visiting $(s,a)$ within an episode under $\pi$. We also use $q^\pi(s'\mid s,a)$ and $q^\pi(s',a'\mid s,a)$ for the corresponding conditional occupancy measures given that $(s,a)$ has already been visited (note that these quantities are zero whenever $h(s')<h(s)$).
For each state $s$, we set $q^\pi(s) \coloneq \sum_{a} q^\pi(s,a)$, so that $q^\pi(s,a) = q^\pi(s)\pi(a\mid s)$.

\paragraph{Additional notation.}
We denote $\E_t[\cdot] = \E[\cdot\mid \calF_{t-1}]$, where $\{\calF_t\}_{t\ge 0}$ is the natural filtration generated by all observations up to the end of episode $t$.
Let $\I_t(s, a) = \ind\brk*{(s_{t,h},a_{t,h})=(s,a), \exists \, h\in\{0,\ldots,H-1\}}$ be
the indicator function representing whether the state-action pair $(s, a)$ is visited under the policy $\pi_t$ used in episode $t$
and transition kernel $P$, and let $\I_t(s)=\sum_{a} \I_t(s,a)$. We also define the visitation counts $N_t(s,a)\coloneqq \sum_{\tau=1}^t \I_\tau(s,a)$.
We write $\ell_t(h)\in[0,1]^{S_h\times A}$ for the restriction of $\ell_t$ to layer $h$ in episode $t$, and use the same notation for other functions defined on $\calS\times\calA$.

\subsection{Regimes of Environments}\label{subsec:regimes}
We consider three regimes for how the loss functions $\ell_1,\ldots,\ell_T$ are generated.
In the adversarial regime, we make no generative assumption. At the beginning of episode $t$, the environment arbitrarily selects a loss function $\ell_t \in [0,1]^{S \times A}$.
Specifically, $\ell_t$ may adapt to the past history but not to the learner’s fresh randomness in the current episode. In the stochastic regime, the loss functions $\ell_1,\ldots,\ell_T$ are sampled i.i.d.~from a fixed and unknown distribution $\calD$.

The \emph{stochastic regime with adversarial corruption} generalizes both the stochastic and adversarial regimes.
Let $\ell'_1,\ldots,\ell'_T$ be sampled i.i.d.~from a fixed and unknown distribution $\calD$, and let the observed loss functions $\ell_1,\ldots,\ell_T$ be arbitrary corruptions of $\ell'_1,\ldots,\ell'_T$. We quantify the total corruption level $\calC\coloneqq \E\brk[\big]{\sum_{t=1}^T\sum_{h=0}^{H-1}\|\ell'_t(h)-\ell_t(h)\|_\infty} \in [0, HT]$. In particular, when $\calC=0$, the stochastic regime with adversarial corruption reduces to the stochastic regime, whereas when $C = \Omega(T)$ it coincides with the adversarial regime.
For each state-action pair $(s,a)$, let $\mu(s,a)\coloneqq \E_{\ell'\sim\calD}[\ell'(s,a)]$ and $\sigma^2(s,a)\coloneqq \E_{\ell'\sim\calD} \brk{ (\ell'(s,a)-\mu(s,a))^2 }$ denote the mean and variance, respectively.
Let $\pi^\star$ be an optimal policy for the uncorrupted mean loss function $\mu$, and define the suboptimality gap $\Delta\colon\calS\times\calA\to[0,H]$ as $\Delta(s,a)\coloneqq Q^{\pi^\star}(s,a;\mu)-\min_{a'\in\calA}Q^{\pi^\star}(s,a';\mu)$.

\subsection{Optimistic Follow-the-Regularized-Leader}
Our proposed algorithms are based on the \emph{optimistic follow-the-regularized-leader} (OFTRL) framework \citep{chiang2012online,rakhlin2013online,steinhardt2014adaptivity}, which has also been adopted in several existing studies \citep{wei2018more,ito2022adversarially}.

Here, we present OFTRL in the standard online linear optimization setting over a convex set\footnote{In our applications, $\calK=\Omega(P)$ for global optimization and $\calK=\Delta(\calA)$ for policy optimization.} $\calK$.
At each round $t$, the learner outputs $p_t\in\calK$ and incurs linear loss $\langle p_t,c_t\rangle$, where $\{c_t\}_{t=1}^T$ are  loss vectors.\footnote{In our applications, $c_t$ serves as a loss estimator, namely $c_t=\hat{\ell}_t$ for global optimization and $c_t=\hat{Q}_t - B_t$ for policy optimization.}
The OFTRL algorithm with differentiable regularizers $\{ \psi_t \}_{t=1}^T$ and loss predictions $\{ m_t \}_{t=1}^T$ chooses $p_t$ in $\calK$ by
\begin{equation}
    p_t 
    = \argmin_{p \in \calK}
    \set*{
        \inpr*{ p, \sum_{\tau=1}^{t-1} c_\tau + m_t } + \psi_t(p)
    }.
    \label{eq:OFTRL_main}
\end{equation}
The FTRL algorithm is recovered as the special case when $m_t = 0$ for all $t$ in \cref{eq:OFTRL_main}. 
The sequence $\{ m_t \}_{t=1}^T$ serves as an \emph{optimistic prediction} of the upcoming loss vector. When the prediction is accurate, the algorithm improves regret guarantees, while in the worst case, the regret bound remains of the same order as FTRL.

We consider two schemes to obtain $\set{m_t}_{t=1}^T$ in \cref{eq:OFTRL_main}, used for both global optimization and policy optimization. 
The first scheme is based on a gradient descent approach inspired by \citet{ito2021parameter,tsuchiya2023further}:
we initialize $m_1(s,a) = 1/2$ for all $(s,a)$ and update
\begin{align}
    &m_{t+1}(s,a) = \\
    &\begin{cases}
        (1 - \xi)\,m_t(s,a) + \xi\, \ell_t(s,a) & \text{if $\I_t(s,a) = 1$},\\
        m_t(s,a) & \text{if $\I_t(s,a) = 0$},
    \end{cases} 
    \label{def:predictor_sequence}
\end{align}
where $\xi \in (0,1/2)$ is a step size. We set $\xi=1/4$ throughout this paper.
This approach is useful for obtaining path-length regret bounds depending on \cref{def:path-length_bound}.
The second scheme is based on the empirical mean predictor:
\begin{equation}
    m_t(s,a) = \frac{\sum_{\tau=1}^{t - 1} \I_\tau(s,a)\ell_\tau(s,a)}{\max\set{1, N_{t-1}(s,a)}}
    .
    \label{def:predictor_sequence2}
\end{equation}
We will show that this is useful to obtain variance-aware gap-dependent regret bounds depending on \cref{def:variance-to-go}.
\section{Complexity Measures in Online MDPs}\label{sec:data_dependent_measures}
This section introduces several complexity measures for online tabular MDPs.
In our analysis, we derive guarantees that scale with these data-dependent complexity measures, and our algorithms do not need to know these quantities in advance.

\subsection{Complexity in the Adversarial Regime}
The first-order complexity $L^\star \in [0, HT]$ is defined as
\begin{equation}
L^\star \coloneq \min_{\pi\in\Pi}\E\brk*{\sum_{t=1}^T V^\pi(s_0;\ell_t)},
\label{def:first-order_bound}
\end{equation}
which is the cumulative loss of the best fixed policy in hindsight, sometimes referred to as the small-loss quantity, and investigated in \citet{lee2020bias,dann2023best}.

We further introduce new complexity measures for online MDPs.
The following two can be seen as extensions of those used in multi-armed bandits.
The second-order complexity $Q_\infty \in [0, HT / 4]$ is defined as
\begin{equation}
Q_\infty
\coloneq
\min_{\ell^\star \in[0,1]^{S\times A}}\E\brk*{\sum_{t=1}^T\sum_{h=0}^{H-1}\|\ell_t(h)-\ell^\star(h)\|_\infty^2},
\label{def:second-order_bound}
\end{equation}
which becomes small when the losses stay close to a single baseline $\ell^\star$ over time.
The path-length (or total variation) complexity $V_1 \in [0, S A (T - 1)]$ is defined as
\begin{equation}
V_1 \coloneq \E\brk*{\sum_{t=1}^{T-1}\|\ell_{t+1}-\ell_t\|_1},
\label{def:path-length_bound}
\end{equation}
which becomes small when the loss sequence changes slowly over episodes.

\subsection{Complexity in the Stochastic Regime}
We also introduce variance-based complexity measures for the stochastic regime.
The \emph{occupancy-weighted variance} $\V \in [0, H / 4]$ is defined as
\begin{equation}
\V \coloneq 
\max_{\pi\in\Pi}\sum_{s,a} q^{\pi} (s,a)\sigma^2(s,a),
\label{def:total_variance}
\end{equation}
which is the stochastic noise weighted by the occupancy measure.
The \emph{conditional occupancy-weighted variance} $\Varmax(s) \in [0,H/4]$ at state~$s$ is defined as
\begin{equation}
\Varmax(s) \coloneq \max_{\pi\in\Pi,a\in\calA}\sum_{s',a'} q^{\pi}(s',a'\mid s,a)\sigma^2(s',a'),
\label{def:variance-to-go}
\end{equation}
which is the remaining noise after reaching $s$ maximized over the first action $a$ at~$s$.

There are known variance-dependent complexity measures in the literature.
The maximum (unconditional) total variance~\citep{zhou2023sharp,zhang2024settling} and maximum conditional total variance~\citep{chen2025sharp} are defined as
\begin{align}
\Var_{\max} &\coloneq \max_{\pi\in\Pi} \sum_{s,a} q^{\pi}(s,a)\Var^\star(s,a), \label{def:related_uncon}\\
\Var_{\max}^c &\coloneq \max_{\pi\in\Pi,s\in\calS} 
\sum_{s',a'} \bar q^\pi(s',a'\mid s)\Var^\star(s',a'),
\label{def:related_con}
\end{align}
where 
$
\Var^\star(s,a) \coloneq \sigma^2(s,a) \!+\! \Var_{s'\sim P(\cdot\mid s,a)}[V^{\pist}(s')]
$ \citep{zanette2019tighter,simchowitz2019non} and $\bar q^\pi(s',a' \mid s)$ denotes the occupancy of $(s',a')$ over the entire trajectory conditioned on visiting state $s$.
These complexity measures were introduced in the context of a value-based approach for the stochastic regime with unknown transitions.
In our setting, $\V$ and $\Varmax$ are analogous to $\Var_{\max}$ and $\Var_{\max}^c$, respectively. Since we consider known transitions, the second term in $\Var^\star(s,a)$ is unnecessary and can be omitted. 
Moreover, $\Varmax$ is defined using conditional occupancy measures $q^{\pi}(s',a'\mid s,a)$ and captures variance only after visiting $(s,a)$, whereas $\Var_{\max}^c$ aggregates variance over the entire trajectory by conditioning on visiting state $s$.
As a consequence, our variance measures are $H^2$-sharper than those based on $\Var^\star(s,a)$. 
Further discussion is deferred to \cref{app:comparison_variance}.
\begin{algorithm*}[t]
\caption{Global Optimization with Data- and Variance-dependent Bounds} 
\label{alg:OFTRL_GO}
\nl \textbf{Input: }
MDP $\mathcal{M}=(\calS,\calA, P, H, s_0)$, initial learning rate $\frac{1}{\eta_1(s,a)} = \frac{1}{\eta_1} = 2H$, 
loss prediction $m_t$ chosen as in \cref{def:predictor_sequence} or \cref{def:predictor_sequence2}.

\nl \For{$t=1,2,\ldots$} {

    \nl
    Compute the occupancy measure $q^{\pi_t} \in \Omega(P)$ by \nllabel{line:compute_occup}
    \begin{align}\label{eq:epoch_FTRL}
    \\[-22pt]
	    q^{\pi_t} = \argmin_{q\in \Omega(P)} \set*{ \inpr*{q,  \sum_{\tau=1}^{t-1}\hat\ell_\tau + m_t} + \psi_t(q) }
        ,\quad
        \psi_t(q) = \sum_{s,a} \frac{1}{\eta_t(s, a)}\ln\prn*{\frac{1}{q(s,a)}}.
    \\[-22pt]
    \end{align}
    \nl Compute policy $\pi_t$ from $q^{\pi_t}$ by
    $\pi_t(a \mid s) \propto q^{\pi_t}(s,a)$, 
    and obtain a trajectory $\{(s_{t,h}, a_{t,h},\ell_t(s_{t,h}, a_{t,h}))\}_{h=0}^{H-1}$.
    
    \nl Compute the loss estimator $\hat\ell_t(s,a)$ by \nllabel{line:loss_estimator}
    \begin{align}
     \\[-22pt]
        \hat\ell_t(s,a) = m_t(s,a) + \frac{\I_t(s,a)(\ell_t(s,a) - m_t(s,a))}{q^{\pi_t}(s,a)}
        . \label{eq:loss_estimate_occup}
    \\[-22pt]
    \end{align}
    \nl Update the learning rate $\eta_{t+1}(s,a)$ by
    \begin{align}
     \\[-22pt]
        \frac{1}{\eta_{t + 1}(s,a)}
        &= \frac{1}{\eta_{t}(s,a)} + \frac{\eta_t(s,a)\zeta_t(s,a)}{\ln(T)}, \label{eq:eta_update_occup}
    \\[-22pt]
    \end{align} 
    \nl where
        $\zeta_t(s,a) = q^{\pi_t}(s,a)^2\min\set[\big]{(\hat\ell_t(s,a)-m_t(s,a))^2, (\hat\ell_t(s,a) + g_t(s,a)-m_t(s,a))^2}$  with $g_t$ defined in \cref{def:loss-shifting}.
        
     \nl Update the loss prediction $m_{t+1}(s,a)$ by \cref{def:predictor_sequence} or \cref{def:predictor_sequence2}.
}
\end{algorithm*}
\section{Global Optimization}\label{sec:global_optimization}
This section presents an occupancy-measure-based algorithm designed to achieve the data-dependent and variance-adaptive regret guarantees stated in \cref{thm:Occup_OPT,thm:Occup_OPT2}.

\subsection{Algorithm}
In global optimization, we optimize directly over occupancy measures. Let $\Omega(P)$ denote the convex set of valid occupancy measures induced by the transition kernel $P$.
In each episode~$t$, we run OFTRL over $\Omega(P)$ with log-barrier regularizers and loss predictions. 
Our design is inspired by \citet{jin2021best} but adapted to the OFTRL framework, and thus the loss estimator and the corresponding loss-shifting function differ from their FTRL-based construction. The complete algorithm is described in \cref{alg:OFTRL_GO}.

In particular, we run OFTRL over $\Omega(P)$ with $c_t=\hat{\ell}_t$ in \cref{eq:OFTRL_main}, leading to the occupancy-measure update in \cref{eq:epoch_FTRL} (\cref{line:compute_occup}) with the log-barrier regularizer.
Here, we use the optimistic importance-weighted estimator in \cref{eq:loss_estimate_occup} (\cref{line:loss_estimator}), where $\set{m_t}_{t=1}^T$ is chosen as in \cref{def:predictor_sequence} or \cref{def:predictor_sequence2}.
This estimator is unbiased in the sense that $\E_t\brk[\big]{\hatl_t(s,a)} = \ell_t(s,a)$.

To obtain a polylogarithmic regret in the stochastic regime, we use the loss-shifting technique of \citet{jin2021best}. 
Since the stability of OFTRL is controlled by the shifted loss $\till_t \coloneq \hat{\ell}_t - m_t$, we construct the following loss-shifting function
\begin{equation}
g_t(s,a) = Q^{\pi_t}(s,a;\till_t)-V^{\pi_t}(s;\till_t)-\till_t(s,a). \label{def:loss-shifting}
\end{equation}
With this shifting function, the learner equivalently runs OFTRL with the advantage function $Q^{\pi_t}(s,a;\till_t)-V^{\pi_t}(s;\till_t)$, which enables a self-bounding regret analysis in the stochastic regime. Moreover, when $m_t$ is the empirical-mean predictor in \cref{def:predictor_sequence2}, the same shifting construction allows us to control the resulting variance term by $\Varmax$.

\subsection{Regret Upper Bounds}
With the optimistic estimator, shifted losses, and adaptive log-barrier learning rates, we state the following theorem. We defer all technical lemmas and proofs to \cref{app:global_opt_proofs}.
\begin{thm}\label{thm:Occup_OPT}
\cref{alg:OFTRL_GO} with $m_t$ in \cref{def:predictor_sequence} guarantees
\begin{align}
    \Reg_T &\lesssim \sqrt{SA\ln(T)\min\set*{L^\star, HT - L^\star, Q_{\infty}, V_1}} \\
    &\qquad + HSA\ln T.
\end{align}
Under the stochastic regime with adversarial corruption, it simultaneously ensures
\begin{align}
    \Reg_T &\lesssim \sqrt{ SA\ln(T)\prn*{\V T + \calC}}+ HSA \ln T,\\
    \Reg_T &\lesssim  U + \sqrt{U\calC} + HSA \ln T,
\end{align}
where $U = \sum_{s}\sum_{a\neq\pi^\star(s)}\frac{H^2\ln(T)}{\Delta(s,a)}$.
\end{thm}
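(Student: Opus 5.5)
The plan is to derive one master regret bound for OFTRL with the adaptive log-barrier, and then specialize it to each complexity measure. By unbiasedness of $\hatl_t$, and since the loss-shifting $g_t$ of \cref{def:loss-shifting} satisfies $\inpr{q,g_t}=$ const for every $q\in\Omega(P)$ (performance-difference identity), the regret against $\qst$ equals $\E[\sum_t\inpr{q^{\pi_t}-\qst,\hatl_t+g_t}]$. The OFTRL decisions are therefore unchanged if we feed shifted losses. I would use the standard OFTRL lemma with a comparator $u=(1-1/T)\qst+\frac1T q_0$, where $q_0$ is the uniform-policy occupancy, so that the log-barrier stays finite. This gives
\begin{align}
\Reg_T\lesssim \E\Bigl[\sum_{s,a}\frac{\ln T}{\eta_{T+1}(s,a)}\Bigr]+\E\Bigl[\sum_{t,s,a}\eta_t(s,a)\zeta_t(s,a)\Bigr]+H .
\end{align}
For the stability term I use the local log-barrier norm $\sum q^2(\cdot)^2$, applied to whichever of $\hatl_t-m_t$ or $\hatl_t+g_t-m_t$ is smaller, which is exactly $\zeta_t$. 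This step needs $\eta_t|q\,\till_t|\lesssim 1$, and that is guaranteed by $1/\eta_1=2H$ together with $|\ell-m|\le1$ (the shift is bounded by $H$ after multiplying by $q$). The update \cref{eq:eta_update_occup} then yields the usual self-tuning bound $1/\eta_{T+1}(s,a)\lesssim H+\sqrt{\sum_t\zeta_t(s,a)/\ln T}$, so
\begin{align}
\Reg_T\lesssim \sum_{s,a}\E\sqrt{\ln T\textstyle\sum_t \zeta_t(s,a)}+HSA\ln T .
\end{align}

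Adversarial bounds. Take the first branch, $\zeta_t\le q^{\pi_t}(s,a)^2(\hatl_t-m_t)^2=\I_t(s,a)(\ell_t-m_t)^2$. By Jensen and Cauchy--Schwarz,
\begin{align}
\Reg_T\lesssim\sqrt{SA\ln T\,\E\textstyle\sum_{t,s,a}q^{\pi_t}(s,a)(\ell_t(s,a)-m_t(s,a))^2}+HSA\ln T .
\end{align}
Each measure is handled through this quantity. For $Q_\infty$, I compare $m_t$ with $\ell^\star$: the gradient-descent predictor with $\xi=1/4$ satisfies $\sum_t \I_t(\ell_t-m_t)^2\lesssim\sum_t\I_t(\ell_t-\ell^\star)^2+O(1)$ per coordinate, by the standard online-gradient regret for squared loss. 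Summing over $(s,a)$ under $q^{\pi_t}$, which has at most one visited pair per layer, gives $\lesssim\sum_h\|\ell_t(h)-\ell^\star(h)\|_\infty^2$ and hence $Q_\infty$. For $V_1$, I use the same GD analysis with a shifting comparator $\ell_t$. Its regret is controlled by the movement $\sum_t|\ell_{t+1}-\ell_t|$ per coordinate, and summing coordinates gives $V_1$. For $L^\star$, I bound $(\ell_t-m_t)^2\le \ell_t+m_t$ and control $\sum q^{\pi_t}m_t$ by the learner's loss plus lower order terms. This gives a bound in terms of $\E\sum_t V^{\pi_t}(s_0;\ell_t)=\Reg_T+L^\star$, and solving the quadratic inequality yields $\sqrt{SA\ln T L^\star}$. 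The case $HT-L^\star$ is symmetric with $(1-\ell)$, using $\sum_{s,a}q=H$.

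Stochastic bounds. For the $\V T$ bound, use $(\ell_t-m_t)^2\lesssim(\ell'_t-\mu)^2+(\mu-m_t)^2+|\ell_t-\ell'_t|$. The first part gives $\sum_t\sum q^{\pi_t}\sigma^2\le \V T$. The corruption part gives $\calC$. The predictor-error part should be $O(HSA)$ in total, since the GD recursion averages the noise; I expect to show this by a GD regret against the fixed comparator $\mu$. For the gap-dependent bound, use the second branch of $\zeta_t$, i.e.\ the advantage-shifted losses, whose magnitude at $(s,a)$ is controlled by $q^{\pi_t}(s,a)\cdot H^2$ times the probability of deviating. Then $\sum_a\sqrt{\sum_t\zeta_t}$ is bounded, as in \citet{jin2021best}, by $\sqrt{H^2\ln T\sum_t\sum_{s}\sum_{a\ne\pi^\star(s)}q^{\pi_t}(s,a)}$ plus terms for the optimal action, which are converted to suboptimal mass via $q^{\pi_t}(s)-q^{\pi_t}(s,\pi^\star(s))$. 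Combining with the self-bounding inequality $\Reg_T\ge\E\sum_t\sum_{s,a\ne\pi^\star}q^{\pi_t}(s,a)\Delta(s,a)-2\calC$ and AM--GM gives $U+\sqrt{U\calC}$. The main obstacle is exactly this step. OFTRL's $\till_t=\hatl_t-m_t$ means the shifted stability term contains $m_t$-induced bias along the trajectory, so I must show that the optimal-action terms are dominated by off-policy mass without the extra $1/\min\Delta$ term of \citet{jin2021best}. I would handle this by bounding $(Q-V)^{\pi_t}(s,a;\till_t)$ with the conditional occupancy and $(1-\pi_t(\pi^\star(s)\mid s))$ directly.
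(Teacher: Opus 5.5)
You follow the paper's proof for most of the argument: the master bound $\Reg_T\lesssim\sum_{s,a}\sqrt{\ln(T)\,\E[\sum_t\zeta_t(s,a)]}+HSA\ln T$ via OFTRL, the invariance of $g_t$ and the self-tuned $\eta_t$ (\cref{lem:occup_regterm}); the unshifted branch with the gradient-descent comparison \cref{lem:m_to_mstar} for $Q_\infty$ and $V_1$; and the shifted branch with \cref{lem:loss_shifting_estimator2} plus self-bounding for $U+\sqrt{U\calC}$. For $L^\star$ and $HT-L^\star$ you use $(\ell-m)^2\le\ell+m$ and bound $\sum m$ by $\sum\ell$. This works: summing the recursion over the visits of $(s,a)$ gives $\sum_k m^{(k)}\le\sum_k\ell^{(k)}+1/(2\xi)$. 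It is a valid and slightly more elementary substitute for the paper's comparators $m^*\equiv0$ and $m^*\equiv1$.

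The step that fails as written is the comparator $u=(1-1/T)\qst+\frac1Tq_0$ with $q_0$ the uniform-policy occupancy. The penalty contains $\ln\bigl(q^{\pi_{t+1}}(s,a)/u(s,a)\bigr)$, and $q^{\pi_{t+1}}(s,a)/q_0(s,a)$ can be of order $A^{h(s)+1}$. For example, if only one action per layer leads toward $s$, then $q_0(s,a)=A^{-h(s)-1}$, while the learner may put mass near $1$ on $(s,a)$. So the penalty bound carries $\ln T+H\ln A$ instead of $O(\ln T)$. Because $\eta_t$ is tuned with $\ln T$, your leading term grows by a factor $1+H\ln(A)/\ln(T)$ and the additive term becomes of order $H^2SA\ln A$; the stated bound does not follow from this choice. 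The fix is the paper's choice in \cref{lem:OFTRL_log_barrier_occup}: mix with $\tilq=\frac1{SA}\sum_{s,a}q^{\max}_{s,a}$, which guarantees $q^{\pi_{t+1}}(s,a)/u(s,a)\le SAT$.

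Your claim in the $\V T$ bound that $\E\sum_t\sum_{s,a}q^{\pi_t}(s,a)(\mu(s,a)-m_t(s,a))^2=O(HSA)$ is false for the constant-step predictor. With $\xi=1/4$, $m_t$ is an exponential average over about four visits, so $\E(m_t-\mu)^2$ stays of order $\xi\sigma^2/(2-\xi)$ rather than vanishing, and the sum can be of order $\V T$. Pushing this error to lower order is exactly what the empirical-mean predictor of \cref{thm:Occup_OPT2} is for. The target bound survives, but via the paper's route: apply \cref{lem:m_to_mstar} with $m^*\equiv\mu$ directly to $\sum_t\I_t(s,a)(\ell_t-m_t)^2$, and only then split $(\ell_t-\mu)^2\le2(\ell_t-\ell'_t)^2+2(\ell'_t-\mu)^2$.

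Two smaller points on the gap-dependent part:
\begin{itemize}
\item The ``main obstacle'' you anticipate does not arise. Since $g_t$ is built from $\till_t$, you have $\hatl_t+g_t-m_t=Q^{\pi_t}(s,a;\till_t)-V^{\pi_t}(s;\till_t)$ exactly. The $1/\min\Delta$ term disappears simply because $q^{\pi_t}(s,\pist(s))(1-\pi_t(\pist(s)\mid s))\le\sum_{b\neq\pist(s)}q^{\pi_t}(s,b)$.
\item You must keep the per-pair roots $H\sum_s\sum_{a\neq\pist(s)}\sqrt{\ln(T)\,\E\sum_tq^{\pi_t}(s,a)}$. Your single root is not implied without an extra $\sqrt{SA}$, and the per-pair form is what produces $\sum_{s,a}1/\Delta(s,a)$.
\end{itemize}

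One caveat applies equally to the paper's \cref{lem:occup_regterm}. The invariance $\inpr{q^{\pi_t}-q^{\pi_{t+1}},g_t}=0$ lets you pick one of $\hatl_t-m_t$ and $\hatl_t+g_t-m_t$ per \emph{round}. That bounds the stability by the smaller of the two sums $\sum_{s,a}\eta_t(s,a)q^{\pi_t}(s,a)^2x(s,a)^2$. The coordinatewise minimum that defines $\zeta_t$ can be strictly smaller, so it needs a separate justification.
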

Our first-order, second-order, and variance-aware gap-independent bounds are minimax optimal up to logarithmic factors (see lower bounds in \cref{thm:adversarial_data_lb,thm:variance_lower_bound}) and also recover the worst-case dependence $\tilde{O}(\sqrt{HSAT})$ in the adversarial regime \cite{zimin2013online}.
Furthermore, our gap-dependent guarantee improves over \citet{jin2021best} by avoiding their additional dependence on ${1}/{\min_{s,a}\Delta(s,a)}$.

\begin{thm}\label{thm:Occup_OPT2}
\cref{alg:OFTRL_GO} with $m_t$ in \cref{def:predictor_sequence2} guarantees
\begin{align}
    \Reg_T &\lesssim \sqrt{SA\ln(T)\min\set*{L^\star, HT - L^\star, Q_{\infty}}} \\
    &\qquad  + HSA\ln(T).
\end{align}
Under the stochastic regime with adversarial corruption, it simultaneously ensures
\begin{align}
    \Reg_T &\lesssim \sqrt{SA\ln(T)\prn*{\V T + \calC}} + HSA\ln(T),\\
    \Reg_T &\lesssim \Uvar + \sqrt{\Uvar \calC} + \sqrt{H S^2 A^2 \calC}\ln(T)\\
    &\qquad + H^{\frac12}S^{\frac32}A^{\frac32}\ln^{\frac32}(T),
\end{align}
where $\Uvar = \sum_{s}\sum_{a\neq\pi^\star(s)}\frac{H\Varmax(s)\ln(T)}{\Delta(s,a)}$.
\end{thm}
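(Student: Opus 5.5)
The plan is to reuse the OFTRL-with-adaptive-log-barrier machinery behind \cref{thm:Occup_OPT} and change only the parts that involve the predictor. The first step is a generic OFTRL regret bound over $\Omega(P)$. Comparing against a smoothed comparator $u = (1-1/T)q^{\pio} + \frac{1}{T}q_0$, where $q_0$ has every entry at least $1/(TSA)$, the standard log-barrier analysis gives
\[
\E\Big[\sum_t \inpr{q^{\pi_t}-u,\hat\ell_t}\Big] \lesssim \E\Big[\sum_{s,a}\frac{\ln T}{\eta_{T+1}(s,a)}\Big] + \E\Big[\sum_{t,s,a}\eta_t(s,a)\zeta_t(s,a)\Big] + H .
\]
The stability term uses the loss-shifting invariance $\inpr{q,g_t}=\text{const}$ on $\Omega(P)$. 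The condition $\eta_t(s,a)\le 1/(2H)$ keeps the shifted losses $\till_t+g_t$ in the range where the log-barrier local-norm bound holds, which is where the $\min$ in $\zeta_t$ comes from. Because of the update \cref{eq:eta_update_occup}, both terms collapse to
\[
\sum_{s,a}\sqrt{\ln T \sum_t \zeta_t(s,a)} \le \sqrt{SA\ln T\sum_{t,s,a}\zeta_t(s,a)} .
\]
The $HSA\ln T$ term comes from the initialization $1/\eta_1=2H$. Taking the first argument of the min gives $\E_t[\zeta_t(s,a)]\le q^{\pi_t}(s,a)\,\E_t(\ell_t(s,a)-m_t(s,a))^2$.

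\textbf{Adversarial bounds.} Since $m_t\in[0,1]$ is the empirical mean, three comparisons are available.
\begin{itemize}
\item For $L^\star$: $(\ell-m)^2\le \ell+m$, and $\sum_t q^{\pi_t}(s,a)m_t(s,a)$ is controlled by $\sum_t q^{\pi_t}\ell_t$ up to $O(SA\ln T)$, through a concentration (Freedman) argument on observed visits. This gives $\sqrt{SA\ln T\,(L^\star+\Reg_T)}$, and the usual self-bounding step finishes.
\item For $HT-L^\star$: apply the same argument to $1-\ell$.
\item For $Q_\infty$: $(\ell_t-m_t)^2\le 2(\ell_t-\ell^\star)^2+2(m_t-\ell^\star)^2$, and summing with $\sum_a q(s,a)\le q(s)$ bounds the first part by $2\sum_t\sum_h\|\ell_t(h)-\ell^\star(h)\|_\infty^2$. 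For the second part, the empirical mean minimizes squared error along the visit subsequence, so $\sum_t \I_t(m_t-\ell^\star)^2 \lesssim \sum_t \I_t(\ell_t-\ell^\star)^2 + \ln T$ (a follow-the-leader-type bound for squared loss). This yields $Q_\infty + HSA\ln T$.
\end{itemize}

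\textbf{Variance-aware gap-independent bound.} Take $\ell^\star=\mu$. In the uncorrupted part, $\E(\ell'_t-\mu)^2=\sigma^2$, and $\E[(m_t-\mu)^2]\lesssim \sigma^2/N_{t-1}+\text{corruption}$. Summing $q^{\pi_t}\sigma^2$ over $(s,a)$ gives at most $\V$ per round, and the $\sigma^2/N$ pieces sum to $SA\ln T$ after the standard expected-count argument. Corruption adds $O(\calC)$ because $|\ell-\ell'|\le 1$ and $\sum_a|\cdot|\le\|\cdot\|_\infty$ per state.

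\textbf{Gap-dependent bound.} This is the main obstacle. I would use the second argument of the min, i.e., the advantage-shifted loss, and follow the self-bounding scheme of \citet{jin2021best}. Write $\zeta_t(s,a)\le q^{\pi_t}(s,a)^2\big(Q^{\pi_t}(s,a;\till_t)-V^{\pi_t}(s;\till_t)\big)^2$, split $\sum_{s,a}$ into $a=\pi^\star(s)$ and $a\neq\pi^\star(s)$, and bound the on-policy action by $(1-\pi_t(\pi^\star(s)\mid s))$-type mass.

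The key new estimate is
\[
\E_t\big[(Q^{\pi_t}(s,a;\till_t)-V^{\pi_t}(s;\till_t))^2\big]\lesssim \frac{1}{q^{\pi_t}(s,a)}\Big(\Varmax(s)+\text{bias}_t\Big).
\]
Here $Q^{\pi_t}(s,a;\till_t)=\sum_{s',a'}q^{\pi_t}(s',a'\mid s,a)\frac{\I_t(s',a')(\ell_t-m_t)(s',a')}{q^{\pi_t}(s',a')}$ is a sum along the trajectory. Its conditional second moment, computed via conditional occupancies (visits after $(s,a)$ share the path), is $\sum_{s',a'} q^{\pi_t}(s',a'\mid s,a)\,(\ell_t-m_t)^2/q^{\pi_t}(s,a)$. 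The noise part of this is at most $\Varmax(s)/q^{\pi_t}(s,a)$, and the bias part $(m_t-\mu)^2$ is summed separately.

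With this estimate, $\sqrt{\ln T\sum_t\zeta_t(s,a)}\le \sqrt{\ln T\,\Varmax(s)\sum_t q^{\pi_t}(s,a)}$ for $a\ne\pi^\star(s)$, up to the bias terms. Combined with the self-bounding identity $\Reg_T\ge \E\sum_t\sum_{s,a\ne\pi^\star}q^{\pi_t}(s,a)\Delta(s,a)-2\calC$ and AM-GM over $(s,a)$, this gives $\Uvar+\sqrt{\Uvar\calC}$. The extra factor $H$ in $\Uvar$ comes from charging the $\pi^\star(s)$ terms to downstream suboptimal mass.

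The bias terms $\sum_t q^{\pi_t}(s',a'\mid\cdot)(m_t-\mu)^2$ scale like $\sum 1/N$ and corruption. Handling them with Cauchy–Schwarz is what produces the residual terms $\sqrt{HS^2A^2\calC}\ln T$ and $H^{1/2}S^{3/2}A^{3/2}\ln^{3/2}T$. I expect controlling these cross terms without losing a $1/\min\Delta$ factor to be the most delicate part of the argument.
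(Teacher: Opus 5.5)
The overall architecture (adaptive log-barrier OFTRL, loss shifting, comparison of the predictor against fixed baselines, self-bounding) is the same as the paper's, but two steps fail as written.

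\textbf{Gap-dependent bound.} Your key estimate is false in general, and your explanation of the factor $H$ is wrong. $Q^{\pi_t}(s,a;\till_t)$ has at most one importance-weighted term per layer along the realized trajectory. Its conditional second moment is \emph{not} the diagonal sum $\sum_{s',a'}q^{\pi_t}(s',a'\mid s,a)(\ell_t-m_t)^2/q^{\pi_t}(s,a)$, because the cross terms between layers do not vanish. First, $\calD$ is an arbitrary joint distribution, so the noise $\ell'_t-\mu$ can be correlated along the path. Second, the bias part $\lambda_t=\ell_t-\ell'_t+\mu-m_t$ has nonzero conditional mean. The paper applies Cauchy--Schwarz over the at most $H$ layers, which gives $2H\Varmax(s)/q^{\pi_t}(s,a)$. \emph{That} is the source of the $H$ in $\Uvar$. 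The $H$-free version you state holds only when uncorrupted losses are uncorrelated across layers, which is the paper's remark, not its theorem.

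There is also no ``charging of $\pi^\star(s)$ terms to downstream suboptimal mass''. The $a=\pi^\star(s)$ term is charged at the \emph{same} state via $q^{\pi_t}(s)(1-\pi_t(\pi^\star(s)\mid s))=\sum_{a\ne\pi^\star(s)}q^{\pi_t}(s,a)$, at a cost of a factor $2$. This requires the factor $(1-\pi_t(a\mid s))$ inside the estimate, and your display omits it. Without it, the $\pi^\star(s)$ term contributes $\sqrt{\Varmax(s)\sum_t q^{\pi_t}(s,\pi^\star(s))}=\Theta(\sqrt{T})$ and self-bounding breaks. To obtain it:
\begin{itemize}
\item write $Q-V=(1-\pi_t(a\mid s))Q^{\pi_t}(s,a;\till_t)-\sum_{b\ne a}\pi_t(b\mid s)Q^{\pi_t}(s,b;\till_t)$;
\item bound the second piece with a $1/q^{\pi_t}(s)=\pi_t(a\mid s)/q^{\pi_t}(s,a)$ prefactor;
\item recombine.
\end{itemize}
The bias part then comes out as $\frac{H}{q^{\pi_t}(s,a)^2}\sum_{s',a'}q^{\pi_t}(s',a')\E_t[\lambda_t(s',a')^2]$, using $q^{\pi_t}(s,a)q^{\pi_t}(s',a'\mid s,a)\le q^{\pi_t}(s',a')$. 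After multiplying by $q^{\pi_t}(s,a)^2$ this is a gap-free global quantity of order $SA\ln^2T+\calC\ln T$. It goes into the additive term of the self-bounding step, so there is no $1/\min\Delta$ issue.

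\textbf{First-order bounds.} For $L^\star$ and $HT-L^\star$, the route ``$(\ell-m)^2\le\ell+m$ plus $\sum_t q^{\pi_t}m_t\le\sum_t q^{\pi_t}\ell_t+O(SA\ln T)$'' fails. No concentration argument can rescue it, because the failure is deterministic. Suppose the observed losses at a pair are $1$ on the first $K$ visits and $0$ afterwards. Then the running mean gives $\sum_k m_{(k)}\approx K(1+\ln(N/K))$ against $\sum_k\ell_{(k)}=K$. That is a multiplicative $\ln T$ loss, costing $\sqrt{\ln T}$ in the final rate.

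The correct tool is the one you already invoke for $Q_\infty$. The empirical mean is follow-the-leader for squared loss, so per pair $\sum_t\I_t(\ell_t-m_t)^2\le\sum_t\I_t(\ell_t-m^*)^2+\ln T+1$ for any fixed $m^*$. Taking $m^*\equiv 0$, $m^*\equiv 1$, $m^*=\ell^\star$ and $m^*=\mu$ gives, up to an additive $SA\ln T$:
\begin{itemize}
\item $L^\star+\Reg_T$;
\item $HT-L^\star-\Reg_T$;
\item $Q_\infty$;
\item $2(\V T+\calC)$.
\end{itemize}
This is the paper's argument. It also makes your $2(\cdot)+2(\cdot)$ split and the concentration bound on $(m_t-\mu)^2$ unnecessary. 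It further avoids the $\calC\ln T$ that naively propagating corruption through the running mean would produce; your justification for ``$O(\calC)$'' only covers the direct term.

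A smaller point on the comparator: no occupancy measure $q_0$ can have all entries bounded below uniformly, since some pairs may be nearly unreachable. Use $q_0=\frac{1}{SA}\sum_{s,a}q^{\max}_{s,a}$ and bound the ratio $p_{t+1}(s,a)/u(s,a)\le SAT$ instead.
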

\begin{rem}\label{rem:Occup_OPT2}
If the uncorrupted losses are generated independently and are uncorrelated across layers, the variance-aware gap-dependent bound in \cref{thm:Occup_OPT2} improves by a factor of $H$ to $\Uvar = \sum_{s}\sum_{a\neq\pi^\star(s)} \frac{\Varmax(s)\ln(T)}{\Delta(s,a)}$.
\end{rem}
\begin{algorithm*}[t]
\caption{Policy Optimization with Data- and Variance-dependent Bounds} 
\setcounter{AlgoLine}{0}
\label{alg:OFTRL_PO}
\nl \textbf{Input:} MDP $\mathcal{M}=(\calS,\calA, P, H, s_0)$, regularizer $\psi_t(\pi(\cdot\mid s)) = \sum_a \frac{1}{\eta_t(s, a)}\ln(1 / \pi(a\mid s))$,
exploration rate $\gamma_t = \frac{\sqrt{HS}}{t}$, initial learning rate  ${1}/{\eta_1(s,a)} = {1}/{\eta_1} = 180 H^3$, 
loss prediction $m_t$ chosen as in \cref{def:predictor_sequence} or \cref{def:predictor_sequence2}.

\nl \For{$t=1,2,\ldots$} {

    \nl
    Compute the policy $\pi_t(\cdot\mid s)$ at each state $s \in \calS$ by \nllabel{line:optimize_policy}
    \begin{equation}
        \pi_t(\cdot\mid s) = \argmin_{\pi(\cdot\mid s)\in\Delta(\calA)} \set[\bigg]{\inpr[\bigg]{\pi(\cdot\mid s), \sum_{\tau=1}^{t-1}\prn*{\hat{Q}_\tau(s, \cdot) - B_\tau(s, \cdot)} + Q^{\pi_t}(s, \cdot; m_t)} + \psi_t(\pi(\cdot\mid s))} 
        .
        \label{eq:opt_pi_OFTRL}
    \end{equation}
    \nl Compute $Y_t \gets \ind\brk*{\max_{s,a}\frac{\eta_t(s,a)}{q_t(s)}\le \frac{1}{18\sqrt{H^3S}}}$, where $q_t(s) = q^{\pi_t}(s) + \gamma_t$. If $Y_t=0$, we insert a virtual episode and shift the indices of subsequent real episodes.  \nllabel{line:virtual_episode}\\
    \nl
    If $Y_t=1$ (real episode), obtain a trajectory $\{(s_{t,h},a_{t,h},\ell_t(s_{t,h},a_{t,h}))\}_{h=0}^{H-1}$.\\
    \nl Let $q_t(s) = q^{\pi_t}(s) + \gamma_t$, $L_{t,h} = \sum_{h'=h}^{H-1} \ell_t(s_{t,h'}, a_{t,h'})$, $M_{t,h} = \sum_{h'=h}^{H-1} m_t(s_{t,h'}, a_{t,h'})$, and \nllabel{line:Q_estimator}
    \begin{equation}
        \hat{Q}_t(s,a) = Q^{\pi_t}(s,a;m_t) + \frac{\I_t(s,a)(L_{t,h(s)} - M_{t,h(s)})}{q_t(s)\pi_t(a \mid s)}Y_t - \frac{\gamma_t H}{q_t(s)}.
        \label{eq:Q-estimate_policy}
    \vspace{-3pt}
    \end{equation}
    \nl Let $(s_t^\dagger, a_t^\dagger)\in\argmax_{s,a}\frac{\eta_t(s,a)}{q_t(s)}$ (break ties arbitrarily), and update the learning rates $\eta_{t+1}(s,a)$,  
    \begin{equation}
        \frac{1}{\eta_{t+1}(s,a)} = 
        \begin{cases} 
        \frac{1}{\eta_{t}(s,a)} + \frac{\eta_t(s,a)\zeta_t(s,a)}{q_t(s)^2\ln(T)}
        &\text{if $t$ is a real episode}, \\[3pt]
        \frac{1}{\eta_t(s,a)}\left(1+\frac{\ind\{(s^\dagger_t, a^\dagger_t) = (s,a)\}}{324H\ln(T)}\right)
        &\text{if $t$ is a virtual episode},
    \end{cases}\label{eq:eta_update_policy}
    \end{equation}
    \vspace{-5pt}
    \begin{equation}
    \text{where}\qquad\zeta_t(s,a) = (\I_t(s,a) - \pi_t(a\mid s) \I_t(s))^2 (L_{t,h(s)} - M_{t,h(s)})^2. \label{eq:zeta_policy}
    \end{equation}
    \nl Compute $b_t(s)$ by \cref{def:bonus_term}, and then compute $B_t(s,a)$ by \cref{def:dilated_bonus}.\\
    \nl Compute loss prediction $m_{t+1}(s,a)$ by \cref{def:predictor_sequence} or \cref{def:predictor_sequence2}.
}
\end{algorithm*}
\section{Policy Optimization}\label{sec:policy_optimization}
This section presents a policy-optimization algorithm with log-barrier regularization that attains data- and variance-dependent regret bounds.
Policy optimization can be viewed as solving a multi-armed bandit problem at each state, with $\pi_t(\cdot\mid s)$ as the action distribution.
This is formalized by the performance-difference lemma \citep{kakade2002approximately}, which implies
$\Reg_T=\E\brk[\Big]{\sum_{s} \sum_{t} \qst(s)\inpr{\pi_t(\cdot\mid s) - \pio(\cdot\mid s), Q^{\pi_t}(s,\,\cdot\,;\ell_t)}}$
and motivates using the $Q$-function as the loss in OFTRL.

\subsection{Algorithm}\label{subsec:po_alg}
Here, we present the policy-optimization procedure in \cref{alg:OFTRL_PO}.
For each state $s$, we run OFTRL as in \cref{eq:opt_pi_OFTRL} with the log-barrier regularizer
\begin{equation}
\psi_t(\pi(\cdot\mid s))=\sum_{a}\frac{1}{\eta_t(s,a)}\ln\prn*{\frac{1}{\pi(a\mid s)}}, \label{eq:regularizer_policy}
\end{equation}
where $\eta_t(s,a)>0$ are time-varying, data-dependent learning rates updated via \cref{eq:eta_update_policy}.

In \cref{line:optimize_policy}, given the policy $\pi_t$ and the loss prediction $m_t$, we can compute $Q^{\pi_t}(s,a;m_t)$ by backward dynamic programming and then determine $\pi_t(\cdot\mid s)$ for each state, starting from the last layer and proceeding backward over $h=H-1,\dots,0$.
Following \citet{dann2023best}, we choose the exploration rate $\gamma_t = \sqrt{HS}/t$, in order to achieve a polylogarithmic regret in the stochastic regime. 
The loss prediction $m_t$ is updated by the gradient descent in \cref{def:predictor_sequence} or empirical mean in \cref{def:predictor_sequence2}, which allows us to obtain data-dependent regret bounds in the adversarial regime and variance-dependent regret bounds in the stochastic regime.
In what follows, we describe three key technical components of the algorithm: the dilated bonus, virtual episodes, and a novel optimistic $Q$-function estimator.

\paragraph{Dilated bonus.}
In policy optimization, updates are performed locally at each state, which can lead to insufficient exploration.
To enforce global exploration, \citet{luo2021policy} introduced a dilated exploration bonus $B_t(s,a)$ that is constructed in the same form as a $Q$-function, 
\begin{align}
&B_t(s,a) = b_t(s)\\
&\quad+ \prn*{1 + \frac{1}{H}}\, \E_{s'\sim P(\cdot\mid s,a), a'\sim \pi_t(\cdot\mid s')} \brk*{B_t(s',a')} \label{def:dilated_bonus}.
\end{align}
Intuitively, $b_t(s)$ is chosen to scale inversely with the visitation probability $q^{\pi_t}(s)$, so that rarely visited states receive larger exploration incentives (see, \textit{e.g.}, \cref{def:bonus_term} or \citealt[Eq.~(8)]{luo2021policy}). The resulting bonus $B_t(s,a)$ has the same recursive structure as a $Q$-function and is subtracted from the $Q$-estimate in the policy update as in \cref{eq:opt_pi_OFTRL}. 
This construction of bonus $B_t(s,a)$ yields the following lemma, which plays a key role in achieving the best-of-both-worlds guarantees in \citet{dann2023best} and this work.
\begin{lem}[{\citealt[Lemma B.2]{luo2021policy}}]
\label{lem:dilated_bonus_main}
    Suppose that $b_t(s)$ is a nonnegative loss function, $B_t$ satisfies \cref{def:dilated_bonus} for all $(s,a)$, 
    and that, for each $s \in \calS$ and for some $J(s) \ge 0$,
    \begin{align}
        &\E\brk[\Bigg]{\sum_{t,a} \prn*{\pi_t(a\mid s) - \pio(a\mid s) } \prn*{Q^{\pi_t}(s,a;\ell_t) -  B_t(s,a)}}\\
        &\leq  J(s) \!+\! \E\brk*{\sum_{t} b_t(s) \!+\! \frac{1}{H}  \sum_{t,a} \pi_t(a\mid s)B_t(s,a)}. \label{eq:key_lemma_ineq_main}
    \end{align}
    Then, 
    $\Reg_T \leq \sum_s \qst(s) J(s) + 3 \E\brk*{\sum_{t = 1}^T V^{\pi_t}(s_0;b_t)}$.
\end{lem}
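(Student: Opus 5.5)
The plan is to combine the performance-difference lemma with a backward induction over layers, using the dilation factor $(1+1/H)$ to absorb the extra $\frac1H\sum_a \pi_t(a\mid s)B_t(s,a)$ term. First, by the performance-difference lemma stated above,
\[
\Reg_T=\E\Big[\sum_s \qst(s)\sum_{t,a}\big(\pi_t(a\mid s)-\pio(a\mid s)\big)Q^{\pi_t}(s,a;\ell_t)\Big].
\]
I will split $Q^{\pi_t}=(Q^{\pi_t}-B_t)+B_t$. The first part, weighted by $\qst(s)$ and summed over $s$, is controlled by the hypothesis \cref{eq:key_lemma_ineq_main}. This gives
\[
\Reg_T\le \sum_s\qst(s)J(s)+\E\Big[\sum_{t}\sum_s\qst(s)\Big(b_t(s)+\tfrac1H\sum_a\pi_t(a\mid s)B_t(s,a)+\sum_a(\pi_t(a\mid s)-\pio(a\mid s))B_t(s,a)\Big)\Big].
\]
For each fixed $t$, the bracket simplifies to $\sum_s \qst(s)\big[b_t(s)+(1+\tfrac1H)\sum_a\pi_t(a\mid s)B_t(s,a)-\sum_a\pio(a\mid s)B_t(s,a)\big]$.

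Next I will handle the $-\pio B_t$ term. Unroll \cref{def:dilated_bonus} under $\pio$: $\sum_s\qst(s)\sum_a\pio(a\mid s)B_t(s,a)$ equals the sum of $\qst(s)b_t(s)$ plus $(1+\frac1H)\sum_{s'}\qst(s')\sum_{a'}\pi_t(a'\mid s')B_t(s',a')$ over states $s'$ in layers $\ge1$. Here I use that the occupancy of the next layer under $\pio$ is $\sum_{s,a}\qst(s)\pio(a\mid s)P(s'\mid s,a)$. Substituting this in, the $b_t$ terms cancel at every layer, and the $(1+\frac1H)\pi_t B_t$ terms cancel at every layer except $s_0$. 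What remains per episode is
\[
(1+\tfrac1H)\sum_a\pi_t(a\mid s_0)B_t(s_0,a)=(1+\tfrac1H)\,\E_{a\sim\pi_t}B_t(s_0,a).
\]
This layered telescoping is the key step. I expect the main obstacle to be getting the index bookkeeping right, in particular that each state's $b_t(s)$ from the hypothesis exactly matches the $b_t$ produced by expanding $\pio B_t$.

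Finally, I will bound $\E_{a\sim\pi_t}B_t(s_0,a)$ by $V^{\pi_t}(s_0;b_t)$ times a constant. Unrolling the recursion under $\pi_t$ shows that $B_t$ is a $Q$-function with per-layer discount $(1+1/H)$, so
\[
\E_{a\sim\pi_t}B_t(s_0,a)=\sum_s q^{\pi_t}(s)(1+\tfrac1H)^{h(s)}b_t(s)\le (1+\tfrac1H)^H V^{\pi_t}(s_0;b_t)\le e\,V^{\pi_t}(s_0;b_t).
\]
Here I use $b_t\ge0$. With the prefactor, $(1+\frac1H)e\le 2e$. To reach the stated constant $3$, a slightly finer accounting is needed: the exponents only run to $H-1$, giving $(1+\frac1H)^{H}\le e<3$ overall. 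Summing over $t$ and taking expectations yields $\Reg_T\le\sum_s\qst(s)J(s)+3\E[\sum_tV^{\pi_t}(s_0;b_t)]$.
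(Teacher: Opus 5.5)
Your proposal is correct and follows the standard argument behind \citet[Lemma B.2]{luo2021policy}, which the paper cites rather than reproves: apply the performance-difference lemma and the hypothesis, telescope the $\pio$-weighted bonus through the dilated recursion down to $(1+\frac1H)\sum_a\pi_t(a\mid s_0)B_t(s_0,a)$, then unroll under $\pi_t$ using $b_t\ge 0$. For the constant, drop the intermediate $2e$ detour and state directly that $h(s)\le H-1$ gives $(1+\frac1H)\cdot(1+\frac1H)^{H-1}=(1+\frac1H)^H\le e<3$.
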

The factor $(1+1/H)$ in \cref{def:dilated_bonus} slightly inflates the propagated bonus, so that the error due to the bonus term can be absorbed into
$\frac{1}{H}\sum_{t,a}\pi_t(a\mid s)B_t(s,a)$ in \cref{eq:key_lemma_ineq_main}. Consequently, the overall exploration overhead is bounded by a constant factor of the learner’s own occupancy term $\sum_{t=1}^T V^{\pi_t}(s_0; b_t)$, as formalized in \cref{lem:dilated_bonus_main}.

To make \cref{eq:key_lemma_ineq_main} hold, we use the local bonus $b_t(s)$ given by
\begin{equation}
\!\!\!\!b_t(s) = 6\!\sum_{a}\prn*{\!\frac{1}{\eta_{t+1}(s,a)}\!-\!\frac{1}{\eta_t(s,a)}\!}\! \ln (T) + \frac{5 \gamma_t H}{q_t(s)}.
\label{def:bonus_term}
\end{equation}
The first term is the OFTRL-regret overhead induced by the use of an adaptive learning rate, and the second term arises from the optimism in the $Q$-estimation (explained later).
For further details and intuition behind the bonus term, we refer the reader to \citet{luo2021policy,dann2023best}.

\paragraph{Virtual episodes.}
To motivate the introduction of virtual episodes (\cref{line:virtual_episode}), we first discuss the learning rate design in the OFTRL algorithm with a log-barrier regularizer in~\cref{eq:regularizer_policy}. For a fixed state $s$, the regret of this algorithm can be roughly bounded by
\begin{equation}
    \underbrace{\sum_{t,a}\prn*{\!\frac{1}{\eta_{t+1}(s,a)}\!-\!\frac{1}{\eta_t(s,a)} \!}\ln (T)}_{\text{penalty-term}}+\underbrace{\sum_{t,a}\frac{\eta_t(s,a)\zeta_t(s,a)}{q_t(s)^2} }_{\text{stability-term}},
\end{equation}
where $\zeta_t(s,a)$ is the data-dependent term defined in~\cref{eq:zeta_policy}.
Hence, it is natural to choose a data-dependent learning rate like \citet{dann2023best} that directly balances these two terms, namely
$\frac{1}{\eta_{t+1}(s,a)} = \frac{1}{\eta_t(s,a)} + \frac{\eta_t(s,a)\zeta_t(s,a)}{q_t(s)^2\ln(T)}$.
With this update, the penalty and stability terms evolve on the same scale.
However, to upper bound the error term induced by the bonus by 
$\frac{1}{H}\sum_{t,a} \pi_t(a\mid s) B_t(s,a)$, the analysis additionally requires 
$\eta_t(s,a)\pi_t(a \mid s)\,B_t(s,a)\lesssim\frac{1}{H}$.
Since $B_t(s,a)$ is of order $1/q_t(s)^2$ from \cref{def:bonus_term}, it becomes large when $q_t(s)$ is
small, and the above inequality is not guaranteed by the learning rate schedule alone.

Therefore, following \citet{dann2023best}, we enforce the above condition by inserting virtual episodes (\cref{line:virtual_episode}).
At the start of episode $t$, if $\max_{s,a} \eta_t(s,a)/q_t(s)$ is larger than $1/(18\sqrt{H^3S})$, we set $Y_t=0$ and declare the episode virtual.
In a virtual episode, we set $\ell_t(s,a)=0$ for all $(s,a)$. We then shrink the learning rate at the state-action pair
$(s_t^\dagger, a_t^\dagger)\in\argmax_{s,a}\frac{\eta_t(s,a)}{q_t(s)} $ by a constant factor $1+\frac{1}{324H\ln(T)}$,
and shift the indices of real episodes. The total number of virtual episodes is at most $O(HSA\ln^2(T))$, so we still use $T$ for the total number of episodes and absorb their effect into lower-order terms, while the data-dependent complexity measures are defined over real episodes only.

\paragraph{New $Q$-function estimator.}
A key technical ingredient in our analysis is the construction of our $Q$-function estimator $\hat{Q}_t$ defined in \cref{eq:Q-estimate_policy}, which is used for OFTRL in~\cref{eq:opt_pi_OFTRL} (\cref{line:Q_estimator}).
Since OFTRL updates the policy using the $Q$-function as a loss,  we propagate the loss prediction $m_t$ in~\cref{def:predictor_sequence} or~\cref{def:predictor_sequence2} through the $Q$-recursion and obtain the predicted $Q$-function $Q^{\pi_t}(s,a;m_t)$.

The main difficulty is that simply incorporating this predicted term does not provide enough control over the bias.
Indeed, if we were to use $Q^{\pi_t}(s,a;m_t)$ alone (\textit{i.e.}, the first two terms in \cref{eq:Q-estimate_policy}) as the estimator, then the expected deviation $\E_t\brk{\hat Q_t(s,a)}-Q^{\pi_t}(s,a;\ell_t)$ could be positive or negative, making the bias difficult to control directly.
To resolve this issue, we subtract a margin of the form $\gamma_t H/q_t(s)$ to ensure that $\hatQ_t(s,a)$ is an optimistic estimator of $Q^{\pi_t}(s,a;\ell_t)$.
Indeed, a direct calculation shows that (see \cref{lem:expect_hatQ_proof} for details)
\begin{align}
    &\E_t\brk*{\hat{Q}_t(s,a)} = Q^{\pi_t}(s, a; m_t) \\
    &\qquad\qquad + \frac{q^{\pi_t}(s)}{q_t(s)}Q^{\pi_t}(s, a; \ell_t-m_t)Y_t  - \frac{\gamma_t H}{q_t(s)}. 
    \label{eq:expect_hatQ}
\end{align}
In particular, in a real episode ($Y_t=1$), we have
$0 \leq \E_t\brk[\big]{Q^{\pi_t}(s,a;\ell_t) - \hat{Q}_t(s,a)}
    \leq {2\gamma_t H}/{q_t(s)}$.
Hence, when $\gamma_t=0$, $\hat{Q}_t(s,a)$ is an unbiased estimator of $Q^{\pi_t}(s,a;\ell_t)$.
For $\gamma_t > 0$, the estimator is optimistic in expectation, and this controlled optimism is useful in the regret analysis, as it makes the bias term easy to handle while still benefiting from the variance reduction due to the predictor $Q^{\pi_t}(s,a;m_t)$.

By contrast, in virtual episodes ($Y_t=0$), the loss is zero but the prediction term $Q^{\pi_t}(s,a;m_t)$ still remains in the estimator.
Thus, the estimator is not optimistic in the same sense as in real episodes and may introduce additional bias.
The effect of this bias is limited, however, because the total number of virtual episodes is small, and it contributes only a lower-order term to the regret bound.

\subsection{Regret Upper Bounds}
We now state the resulting regret guarantee, with all proofs deferred to \cref{app:policy_opt_proofs}.
\begin{thm} \label{thm:Policy_OPT}
\cref{alg:OFTRL_PO} with $m_t$ in \cref{def:predictor_sequence} guarantees
\begin{align}
    \Reg_T &\lesssim \sqrt{H^2SA\ln^2(T)\min\set*{L^\star, HT - L^\star, Q_{\infty}, V_1}} \\
    &\qquad + H^{\frac52}S^{\frac32}A\ln^2(T).
\end{align}
Under the stochastic regime with adversarial corruption, it simultaneously ensures
\begin{align}
    \Reg_T &\lesssim \sqrt{H^2SA\ln^2(T)\prn*{\V T + \calC}} + H^{\frac52}S^{\frac32}A\ln^2(T),\\
    \Reg_T &\lesssim U + \sqrt{U\calC} + H^{\frac52}S^{\frac32}A\ln^2(T),
\end{align}
where $U = \sum_{s}\sum_{a\neq\pi^\star(s)}\frac{H^2\ln^2(T)}{\Delta(s,a)}$.
\end{thm}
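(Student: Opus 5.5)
The plan reduces the problem to per-state OFTRL analyses via \cref{lem:dilated_bonus_main}. It then bounds everything by a single data-dependent stability quantity, and specializes that quantity to each complexity measure.

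\textbf{Per-state inequality.} First, for each state $s$ I verify \cref{eq:key_lemma_ineq_main} with an explicit $J(s)$. Decompose
\begin{equation}
\textstyle\sum_{t,a}(\pi_t-\pio)(a\mid s)\bigl(Q^{\pi_t}(s,a;\ell_t)-B_t(s,a)\bigr)
\end{equation}
into three pieces.
\begin{itemize}[leftmargin=*]
\item The OFTRL regret against the estimators $\hat Q_t-B_t$ with predictions $Q^{\pi_t}(s,\cdot;m_t)$.
\item A bias term $\sum_t\langle\pi_t-\pio,Q^{\pi_t}(s,\cdot;\ell_t)-\hat Q_t\rangle$.
\item The virtual-episode corrections.
\end{itemize}

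\textbf{The three pieces.} By \cref{eq:expect_hatQ}, in real episodes $Q^{\pi_t}(s,a;\ell_t)-\E_t[\hat Q_t]\in[0,2\gamma_tH/q_t(s)]$. The $\pi_t$-part is therefore at most $2\gamma_tH/q_t(s)$, and the $-\pio$ part is nonpositive. This is absorbed into the $5\gamma_tH/q_t(s)$ component of $b_t(s)$.

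For the OFTRL regret with log-barrier and comparator smoothed toward uniform by $1/T$, the standard bound gives a penalty term plus a stability term. The penalty is $\sum_{t,a}(1/\eta_{t+1}-1/\eta_t)\ln T$, and the increments are covered by the first part of $b_t(s)$. The stability term is $\sum_{t,a}\eta_t\pi_t(a\mid s)^2(\hat Q_t-B_t-Q^{\pi_t}(\cdot;m_t))^2$, computed after shifting by a constant that yields the centered form $\I_t(s,a)-\pi_t(a\mid s)\I_t(s)$. The log-barrier local-norm argument is valid because virtual episodes enforce $\eta_t/q_t(s)\le 1/(18\sqrt{H^3S})$, so the relative size of the estimator is at most a constant. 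Splitting the square, the $B_t$-part gives $\eta_t\pi_tB_t^2\lesssim\frac1H\pi_tB_t$ via the same condition. The $\hat Q$-part gives $\eta_t\zeta_t/q_t(s)^2$, which by the learning-rate rule \cref{eq:eta_update_policy} telescopes into $\lesssim\sum_a\sqrt{\ln T\sum_t\zeta_t(s,a)/q_t(s)^2}$, and hence into the penalty form. The virtual episodes number $O(HSA\ln^2T)$ and each contributes $O(H)$ bias, so they cost only lower-order terms.

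\textbf{Summing over states.} Applying \cref{lem:dilated_bonus_main} then gives
\begin{equation}
\Reg_T\lesssim\E\Bigl[\sum_t V^{\pi_t}(s_0;b_t)\Bigr]+\sum_s q^{\pio}(s)J(s)+\text{lower order}.
\end{equation}
Since $b_t$'s main part is weighted by $q^{\pi_t}(s)\approx q_t(s)$, Cauchy--Schwarz over $(s,a)$ and Jensen give
\begin{equation}
\Reg_T\lesssim\sqrt{SA\ln^2T\;\E\Bigl[\sum_{t,s,a}\frac{q^{\pi_t}(s)}{q_t(s)^2}\,\E_t[\zeta_t(s,a)]\cdot q_t(s)\Bigr]}+H^{5/2}S^{3/2}A\ln^2T .
\end{equation}
Here $\E_t[(\I_t(s,a)-\pi_t\I_t(s))^2(L-M)^2]\le q^{\pi_t}(s)\pi_t(a\mid s)\,\E[(L_{t,h}-M_{t,h})^2\mid s,a]$. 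Summing over $a$ and $s$ by layer, the inner quantity is bounded by $H\cdot\E\sum_t\sum_{h}\E^{\pi_t}[(L_{t,h}-M_{t,h})^2]/H$-type terms, which is where the extra $H^2$ appears: $(L-M)^2\le H\sum_{h'}(\ell-m)^2$. This yields the master bound $\Reg_T\lesssim\sqrt{H^2SA\ln^2T\,\E[\sum_tV^{\pi_t}(s_0;(\ell_t-m_t)^2)]}$ plus lower-order terms.

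\textbf{Specializing the master bound.} I then instantiate the master bound for each measure.
\begin{itemize}[leftmargin=*]
\item For $V_1$: the gradient-descent predictor of \cref{def:predictor_sequence} gives the standard path-length bound $\sum_t\I_t(\ell_t-m_t)^2\lesssim 1+\sum\|\ell_{t+1}-\ell_t\|_1$.
\item For $Q_\infty$: compare $m_t$ with a fixed $\ell^\star$ (online-gradient-descent regret on the squared loss).
\item For $L^\star$ and $HT-L^\star$: use $(\ell-m)^2\lesssim \ell+m$ (resp.\ $1-\ell$) together with the self-bounding step $\sum V^{\pi_t}(\ell_t)=L^\star+\Reg_T$ and solve the quadratic.
\item For the stochastic corrupted regime: $\ell_t-m_t$ splits into noise, prediction error, and corruption, which gives $\V T+\calC$.
\end{itemize}
For the gap-dependent bound, I plan to use the self-bounding technique. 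The exploration $\gamma_t=\sqrt{HS}/t$ lets me lower bound $\Reg_T\ge\E\sum_{t,s,a\ne\pi^\star}q^{\pi_t}(s,a)\Delta(s,a)-\calC$, while each $\zeta$-term restricted to $a\ne\pi^\star(s)$ is bounded using $(1-\pi_t(\pi^\star(s)\mid s))$ factors, following \citet{dann2023best}. Then $\Reg_T\le\sqrt{U\cdot x}$ and $\Reg_T\ge x-\calC$ combine to give $U+\sqrt{U\calC}$.

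\textbf{Main obstacle.} The hardest step is the stability analysis with the optimistic $\hat Q_t$. The prediction term $Q^{\pi_t}(s,a;m_t)$ depends on $\pi_t$ itself, and $B_t$ is large, so I must show that the combined loss fed to OFTRL still has bounded relative magnitude and that the cross terms fit under $\frac1H\sum_a\pi_tB_t$. I would try to handle this by exploiting the shift-invariance of the log-barrier stability over $\Delta(\calA)$ together with the virtual-episode condition.
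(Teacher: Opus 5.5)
Your overall route matches the paper's: the dilated-bonus reduction, the reg/bias split with one-sided bias in real episodes, log-barrier OFTRL with the centering shift, reduction to $\sqrt{H^2SA\ln^2(T)\,\E[\sum_t\sum_{s,a}\I_t(s,a)(\ell_t(s,a)-m_t(s,a))^2]}$, the predictor-comparison lemmas, and self-bounding. However, the step that carries the whole bound is not justified as you wrote it. That step goes from $\E[\sum_t V^{\pi_t}(s_0;b_t)]$, whose main part is $\sum_t\sum_{s,a}q^{\pi_t}(s)\eta_t(s,a)\zeta_t(s,a)/q_t(s)^2$, to $\ln(T)\sum_{s,a}\sqrt{\E[\sum_t\zeta_t(s,a)]}$. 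The telescoped bound $\sum_a\sqrt{\ln(T)\sum_t\zeta_t(s,a)/q_t(s)^2}$ you mention controls the \emph{unweighted} sum, and it cannot be reweighted afterwards by the time-varying $q^{\pi_t}(s)$. The weight $q^{\pi_t}(s)\le q_t(s)$ cancels only one factor $1/q_t(s)$, and Cauchy--Schwarz over $(s,a)$ plus Jensen does nothing about the other. The missing ingredient is a per-round learning-rate bound, $\eta_t(s,a)\le 2\sqrt{\ln(T)}\big/\sqrt{\sum_{\tau\le t,\,\tau\in\calT_r}\zeta_\tau(s,a)/q_\tau(s)^2}$. It follows from $1/\eta_{t+1}=1/\eta_t+\eta_t\phi_t$ with $\phi_t\le\eta_t^{-2}$, which is where the real-episode threshold $\eta_t/q_t(s)\le 1/(18\sqrt{H^3S})$ is used. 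Then write $\zeta_t/q_t(s)=\sqrt{\zeta_t/q_t(s)^2}\cdot\sqrt{\zeta_t}$ and apply Cauchy--Schwarz \emph{over $t$}. The self-normalized sum $\sum_t x_t/\sum_{\tau\le t}x_\tau$ with $x_t=\zeta_t(s,a)/q_t(s)^2$ is $O(\ln T)$, and this gives the second $\sqrt{\ln T}$. Separately, your per-action bound $\E_t[\zeta_t(s,a)]\le q^{\pi_t}(s)\pi_t(a\mid s)\E[(L-M)^2\mid s,a]$ is false: the branch in which some $b\neq a$ is played involves $\E[(L-M)^2\mid s,b]$. Instead, use the pointwise bound $\sum_a(\I_t(s,a)-\pi_t(a\mid s)\I_t(s))^2\le 2\I_t(s)$, then Cauchy--Schwarz along the trajectory, $(L_{t,h}-M_{t,h})^2\le H\sum_{h'\ge h}(\ell_t(s_{t,h'},a_{t,h'})-m_t(s_{t,h'},a_{t,h'}))^2$.

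Second, virtual episodes are not only a bias issue. OFTRL still receives $\hat Q_t-B_t$ in them, so the log-barrier constraint and $\eta_t(s,a)\pi_t(a\mid s)B_t(s,a)\le 1/(5H)$ must also hold there. In those episodes the threshold condition you invoke ("via the same condition") fails by definition. The paper gets the bound from the specific virtual bonus. Only $(s_t^\dagger,a_t^\dagger)$ is shrunk, so $b_t(s)\le \ind\{s=s_t^\dagger\}/(54H\eta_t(s_t^\dagger,a_t^\dagger))+5H=\ind\{s=s_t^\dagger\}/(54HR_tq_t(s))+5H$ with $R_t=\max_{s,a}\eta_t(s,a)/q_t(s)$. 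Hence $\eta_t\pi_tB_t\le\frac{1}{18HR_t}\cdot\frac{\eta_t(s,a)}{q_t(s)}+15\eta_1H^2\le\frac{1}{5H}$. This bonus also adds up to $\sqrt{HS}/3$ per virtual episode to $V^{\pi_t}(s_0;b_t)$. That contribution must be counted alongside the $O(H)$ bias; both are lower order.

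Two smaller points. First, the self-bounding inequality $\Reg_T\ge\E[\sum_t\sum_s\sum_{a\ne\pi^\star(s)}q^{\pi_t}(s,a)\Delta(s,a)]-2\calC$ holds for any policy sequence, so $\gamma_t$ plays no role there. Its roles are the one-sided bias and the lower bound $q_t(s)\ge\gamma_t$, and its cost $HS\sum_t\gamma_t$ is logarithmic. Second, for $L^\star$ the pointwise bound $(\ell-m)^2\le\ell+m$ also needs $\sum_t\I_t(s,a)m_t(s,a)\le\sum_t\I_t(s,a)\ell_t(s,a)+O(1)$. This does hold for the gradient-descent predictor by telescoping. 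The paper instead applies its comparator lemma with $m^\ast\equiv 0$ (resp.\ $m^\ast\equiv 1$).
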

In the worst case, our bound becomes the known regret bounds based on policy optimization in \citet{luo2021policy,dann2023best}, and the lower-order term $H^3S^2A^2\ln^2(T)$ in \citet[Theorem 4.3]{dann2023best} is improved to $H^{\frac52}S^{\frac32}A\ln^2(T)$.

\begin{thm}\label{thm:Policy_OPT2}
\cref{alg:OFTRL_PO} with $m_t$ in \cref{def:predictor_sequence2} guarantees
\begin{align}
    \Reg_T &\lesssim \sqrt{H^2SA\ln^2(T)\min\set*{L^\star, HT - L^\star, Q_{\infty}}}\\
    &\qquad + H^{\frac52}S^{\frac32}A\ln^2(T).
\end{align}
Under the stochastic regime with adversarial corruption, it simultaneously ensures
\begin{align}
    \Reg_T &\lesssim \sqrt{H^2SA\ln^2(T)\prn*{\V T + \calC}} + H^{\frac52}S^{\frac32}A\ln^2(T), \\
    \Reg_T &\lesssim \Uvar +  \sqrt{\Uvar\calC} + \sqrt{H S^2 A^2 \calC}\ln^{\frac32}(T) \\
    &\qquad + H^{\frac12}S^{\frac32}A(H^{2} + \sqrt{A})\ln^2(T),
\end{align}
where $\Uvar = \sum_{s}\sum_{a\neq\pi^\star(s)}\frac{H\Varmax(s)\ln^2(T)}{\Delta(s,a)}$.
\end{thm}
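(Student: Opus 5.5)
The proof reduces, via \cref{lem:dilated_bonus_main}, to a per-state OFTRL analysis with the empirical-mean predictor of \cref{def:predictor_sequence2}. The plan is to fix a state $s$ and bound the left-hand side of \cref{eq:key_lemma_ineq_main} by decomposing
\[
\sum_t \inpr{\pi_t(\cdot\mid s)-\pio(\cdot\mid s),\, Q^{\pi_t}(s,\cdot;\ell_t)-B_t(s,\cdot)}
\]
into three parts. The first part is the OFTRL regret on the surrogate losses $\hat Q_t - B_t$ with prediction $Q^{\pi_t}(s,\cdot;m_t)$. The second part is the bias $\E_t[Q^{\pi_t}(s,\cdot;\ell_t)-\hat Q_t(s,\cdot)]$. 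By \cref{eq:expect_hatQ} this bias lies in $[0,2\gamma_t H/q_t(s)]$ in real episodes, so it is absorbed by the $5\gamma_t H/q_t(s)$ part of $b_t(s)$. The third part is the contribution of virtual episodes, of which there are $O(HSA\ln^2 T)$, each costing $O(H)$ after accounting for the prediction term.

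For the OFTRL part I would use the standard log-barrier OFTRL bound with adaptive rates. Its penalty term is $\sum_a(1/\eta_{T+1}-1/\eta_1)\ln T$, and its stability term is $\sum_{t,a}\eta_t\zeta_t/q_t(s)^2$. The condition enforced by $Y_t$ guarantees the local-norm (multiplicative stability) requirement, and the bonus cross term is bounded by $\frac1H\sum_a\pi_t B_t$. The learning-rate update \cref{eq:eta_update_policy} balances the two terms, so
\[
\frac{1}{\eta_{T+1}(s,a)}\lesssim \sqrt{\sum_t \zeta_t(s,a)/(q_t(s)^2\ln T)}+H^3 .
\]
The first term of $b_t(s)$ pays for the penalty increments. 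Applying \cref{lem:dilated_bonus_main} and taking expectations then gives
\[
\Reg_T\lesssim \E\Big[\sum_{s,a}\sqrt{\ln T\sum_t \zeta_t(s,a)\,\qst(s)^2/q_t(s)^2}\Big]\ln T+\text{lower order}.
\]
For the bonus values I would use $\E\sum_t V^{\pi_t}(s_0;b_t)$, the same $\zeta$-sums weighted by $q^{\pi_t}(s)/q_t(s)\le1$, and Cauchy--Schwarz over $(s,a)$. This yields
\[
\Reg_T\lesssim \sqrt{H^2SA\ln^2 T\cdot \E\sum_{t}\textstyle\sum_{s,a} q^{\pi_t}(s,a)(L_{t,h(s)}-M_{t,h(s)})^2/H^2}\cdot H+\ldots ,
\]
where I use $\E_t\zeta_t\lesssim q^{\pi_t}(s,a)\,\E_t[(L-M)^2\mid s,a]$ together with the layer-sum bound $(L-M)^2\le H\sum_{h'}(\ell-m)^2$.

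The core task is bounding $\E\sum_t\sum_{s,a}q^{\pi_t}(s,a)(\ell_t(s,a)-m_t(s,a))^2$, aggregated along trajectories. For $Q_\infty$, the empirical mean minimizes cumulative squared error up to a harmonic $\sum 1/N\lesssim \ln T$ overhead, so the error is at most $\sum\|\ell_t-\ell^\star\|^2+O(SA\ln T)$. For $L^\star$ and $HT-L^\star$, I would use $(\ell-m)^2\le \ell+m$ and $\le(1-\ell)+(1-m)$, bound $\sum m_t$ on visits by $\sum\ell_t$, and self-bound $\sum_t V^{\pi_t}(\ell_t)\le L^\star+\Reg_T$, solving the resulting quadratic inequality. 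In the stochastic regime with corruption, the error is at most $2\sigma^2$ plus a mean-estimation error of order $\sum_{s,a}\ln T$ plus $2\calC$. Summing with occupancy weights gives $\V T+\calC$, which yields the gap-independent bound.

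The gap-dependent bound is the main obstacle. Following \citet{dann2023best}, I would write $\Reg_T\ge \E\sum_t\sum_{s}\qst(s)\sum_{a\neq\pi^\star(s)}\pi_t(a\mid s)\Delta(s,a)-O(\calC)$. The stability term then has to be bounded per $(s,a)$ by the fraction of time $a\neq\pi^\star(s)$ is played, times a variance that here must be $\Varmax(s)$ rather than $H^2$. The term $(\I_t(s,a)-\pi_t\I_t(s))^2$ is at most $(1-\pi_t(\pi^\star(s)\mid s))$ in expectation, so this gives the first factor. For the second factor, $\E[(L_{t,h}-M_{t,h})^2\mid s,a]$ is the conditional variance of the remaining loss plus the squared mean-estimation error of $m_t$ downstream. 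The variance part is at most $H\Varmax(s)$, where the factor $H$ comes from cross-layer correlations. The estimation error I would bound by $\sum_{(s',a')}q(s',a'\mid s,a)H/N_{t-1}(s',a')$, summing over $t$ to produce the lower-order $\sqrt{HS^2A^2\calC}\ln^{3/2}T$ and $H^{1/2}S^{3/2}A(H^2+\sqrt A)\ln^2 T$ terms. The self-bounding technique $\Reg_T=(1+\lambda)\Reg_T-\lambda\Reg_T$ then yields $\Uvar+\sqrt{\Uvar\calC}$. The delicate point is that the estimation error is itself proportional to the rate at which downstream pairs are visited; I would handle it by a doubling over $N_{t-1}$ with $\sum_t 1/N_{t-1}\lesssim\ln T$.
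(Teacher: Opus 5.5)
Your overall architecture (dilated bonus, per-state log-barrier OFTRL with the $Y_t$ safeguard, optimistic bias absorbed into $b_t$, reduction to $\E\sum_t\sum_{s,a}\I_t(s,a)(\ell_t(s,a)-m_t(s,a))^2$) is the paper's. However, the gap-dependent part breaks at the self-bounding step.

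The constraint you write, $\Reg_T\ge\E\sum_t\sum_s\qst(s)\sum_{a\neq\pist(s)}\pi_t(a\mid s)\Delta(s,a)-O(\calC)$, is false. Since $\Delta$ is defined through $Q^{\pist}$, the performance-difference identity pairs it with the \emph{learner's} occupancy: $V^{\pi_t}(s_0;\mu)-V^{\pist}(s_0;\mu)=\sum_{s,a}q^{\pi_t}(s,a)\Delta(s,a)$. For a counterexample, let $\pi_t$ take an $\epsilon$-suboptimal action at $s_0$ that never leads to a state $s_1$ visited by $\pist$. Whatever $\pi_t$ does at $s_1$ then costs nothing, yet your expression charges it with weight $\qst(s_1)$.

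The correct constraint, \cref{lem:corruption_self-bounding}, needs an upper bound in terms of $\E\sum_t q^{\pi_t}(s,a)$ for $a\neq\pist(s)$. Producing such learner-weighted quantities is exactly what the dilated bonus is for, so drop your first display with $\qst(s)^2/q_t(s)^2$. There, $\qst(s)/q_t(s)$ can be of order $1/\gamma_t$. Your bound on $1/\eta_{T+1}(s,a)$ also ignores the multiplicative virtual-episode updates of $1/\eta$, which are not controlled by $\sum_t\zeta_t$.

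After \cref{lem:dilated_bonus}, only the lower-order $J(s)$ keeps $\qst$-weights. Every $\zeta$-dependent term moves into $\E\sum_tV^{\pi_t}(s_0;b_t)$, where on real episodes $q^{\pi_t}(s)b_t(s)\le 6\sum_a\eta_t(s,a)\zeta_t(s,a)/q_t(s)+5\gamma_tH$. One factor $1/q_t(s)$ survives, and $q^{\pi_t}(s)\le q_t(s)$ does not remove it. What removes it is the upper bound $\eta_t(s,a)\le 2\sqrt{\ln T}/\sqrt{\sum_{\tau\le t,\,\tau\in\calT_r}\zeta_\tau(s,a)/q_\tau(s)^2}$ from \cref{lem:learning_rate_policy}, combined with Cauchy--Schwarz against the self-normalized sum $\sum_t\frac{\zeta_t/q_t^2}{\sum_{\tau\le t}\zeta_\tau/q_\tau^2}\lesssim\ln T$. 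This gives $\Reg_T\lesssim\ln T\sum_{s,a}\sqrt{\E\sum_t\zeta_t(s,a)}+H^{5/2}S^{3/2}A\ln^2T$.

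From here you need a per-pair estimate of $\E_t\zeta_t(s,a)$. Yours, $\E_t\zeta_t(s,a)\lesssim q^{\pi_t}(s,a)\E_t[(L_{t,h(s)}-M_{t,h(s)})^2\mid \I_t(s,a)=1]$, does not hold. The events $\I_t(s,b)=1$ with $b\neq a$ contribute $\pi_t(a\mid s)^2q^{\pi_t}(s,b)\E_t[(L_{t,h(s)}-M_{t,h(s)})^2\mid \I_t(s,b)=1]$. So your inequality is true only after summing over $a$, which suffices for the adversarial and gap-independent bounds but not for this one.

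Conditioning on the action taken at $s$ gives \cref{lem:zeta_decompose}: $\E_t\zeta_t(s,a)\le 2H\Varmax(s)q^{\pi_t}(s,a)(1-\pi_t(a\mid s))+2H\sum_{s',a'}q^{\pi_t}(s',a')\E_t[\lambda_t(s',a')^2]$, with $\lambda_t=\ell_t-\ell'_t+\mu-m_t$. This is why $\Varmax(s)$ maximizes over the first action. Keep the factor $1-\pi_t(a\mid s)$ per action; bounding every action by $1-\pi_t(\pist(s)\mid s)$ costs an extra factor $A$. No doubling is needed for the residual term. The conditioning prefactor already gives $q^{\pi_t}(s,b)q^{\pi_t}(s',a'\mid s,b)\le q^{\pi_t}(s',a')$, and then $\E\sum_t q^{\pi_t}(s',a')/\max\{1,N_{t-1}(s',a')\}\le \ln T+2$. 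The corruption inside the empirical mean adds $\calC(\ln T+2)$.

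A smaller issue is in the first-order bound. For running means, $\sum_t\I_t(s,a)m_t(s,a)\lesssim\sum_t\I_t(s,a)\ell_t(s,a)$ fails by a $\ln T$ factor (take observations equal to $1$ on the first $\sqrt{N}$ visits and $0$ afterwards). So $(\ell-m)^2\le \ell+m$ costs an extra $\sqrt{\ln T}$ in the leading term. Instead, use the least-squares comparator inequality you already invoke for $Q_\infty$, with $m^*\equiv 0$ or $m^*\equiv 1$ (\cref{lem:m_to_mstar2}). Its overhead is only additive, $SA(\ln T+1)$.
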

\begin{rem}\label{rem:Policy_OPT2}
    If the uncorrupted losses are generated independently and are uncorrelated across layers, the variance-aware gap-dependent bound in \cref{thm:Policy_OPT2} improves by a factor of $H$ to $\Uvar = \sum_{s}\sum_{a\neq\pi^\star(s)} \frac{\Varmax(s)\ln^2(T)}{\Delta(s,a)}$.
\end{rem}
\section{Regret Lower Bounds}\label{sec:lower_bounds}
We complement our regret upper bounds with information-theoretic regret lower bounds for MDPs with bandit feedback.
In multi-armed bandits, refined lower bounds such as first-order, second-order, and path-length bounds were developed by \citet{gerchinovitz2016refined} and \citet{bubeck2019improved}.
For MDPs, the data-independent minimax lower bound $\Omega(\sqrt{HSAT})$ is already known \citep{zimin2013online,tsuchiya2025reinforcement}.
Accordingly, our focus is on data-dependent lower bounds in MDPs, identifying the optimal dependence on measures such as $L^\star$, $Q_\infty$, $V_1$, and $\V$. All proofs are deferred to \cref{app:lower_bounds}.

The refined adversarial lower bounds below are obtained via a simple truncation reduction:
we run an instance that induces $\Omega(\sqrt{HSAT})$ regret for only a prefix of episodes and set all losses to zero thereafter.
This ensures that the corresponding complexity measure is small, while preserving a nontrivial regret contribution from the active phase.
\begin{thm}
\label{thm:adversarial_data_lb}
Suppose that $H \geq 3$, $A \geq 3$, $T \geq \frac{SA}{8H}$ and $\alpha \in \brk*{\frac{\ceil*{SA/8H}}{T}, 1}$.
Then, for any policy $\{\pi_t\}_{t=1}^T$, there exists an episodic MDP with adversarial losses satisfying
$L^\star \leq \alpha HT$, $Q_\infty\leq \alpha HT$, $V_1 \leq \alpha SAT$ such that $\Reg_T = \Omega\prn{\sqrt{\alpha HSAT}}$.
\end{thm}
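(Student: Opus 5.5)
We prove \cref{thm:adversarial_data_lb} by the truncation reduction described above. Set $T_0 \coloneq \ceil{\alpha T}$. The hypothesis $\alpha \ge \ceil{SA/8H}/T$ gives $T_0 \ge \ceil{SA/8H}$, which is the episode-count condition under which the known minimax construction applies.

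\textbf{Step 1: the active phase.} We invoke the minimax lower bound $\Omega(\sqrt{HSAT_0})$ for episodic MDPs with known transitions and bandit feedback \citep{zimin2013online,tsuchiya2025reinforcement}, with horizon $T_0$. This requires $H\ge 3$, $A\ge 3$, and $T_0 \gtrsim SA/H$. Concretely, the hard instance is built as follows.
\begin{itemize}
\item A layered MDP routes the agent uniformly at random to one of $\Theta(S/H)$ ``bandit'' states in a middle layer.
\item It then follows a chain of length $\Theta(H)$, so the chosen action's loss is incurred $\Theta(H)$ times, or equivalently a Bernoulli loss of scale $1$ repeated over $\Theta(H)$ layers.
\item At each bandit state one of the $A$ actions has a slightly smaller mean, with gap $\epsilon \asymp \sqrt{SA/(HT_0)}$.
\end{itemize}
The standard KL/Pinsker argument over $\Theta(SA/H)$ independent sub-bandits, each visited about $HT_0/S$ times, gives, for any learner run on the first $T_0$ episodes, an instance on which the regret over those episodes is $\Omega(\sqrt{HSAT_0})$. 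Randomizing over instances and taking the worst one handles arbitrary learners, including randomized ones.

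\textbf{Step 2: truncation.} Define losses on episodes $t>T_0$ to be identically zero. Every policy incurs zero loss there. Since the learner's behavior on the first $T_0$ episodes does not depend on future losses, and losses may be chosen adaptively or obliviously, the total regret equals the regret over the first $T_0$ episodes. It is therefore $\Omega(\sqrt{HSAT_0}) = \Omega(\sqrt{\alpha HSAT})$, using $T_0\ge \alpha T$.

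\textbf{Step 3: the complexity measures.}
\begin{itemize}
\item \emph{First-order.} Losses lie in $[0,1]$ and each episode incurs at most $H$ loss, so $L^\star \le HT_0 \le 2\alpha HT$, using $T_0 \le \alpha T + 1 \le 2\alpha T$ since $\alpha T\ge 1$.
\item \emph{Second-order.} Take $\ell^\star = 0$. Then $Q_\infty \le \sum_{t\le T_0}\sum_h \|\ell_t(h)\|_\infty^2 \le HT_0 \le 2\alpha HT$.
\item \emph{Path length.} Each $\|\ell_{t+1}-\ell_t\|_1 \le SA$. The change is nonzero only for $t\le T_0$, including the switch to zero at $t=T_0$. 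Hence $V_1 \le SA\,T_0 \le 2\alpha SAT$.
\end{itemize}
The factor $2$ is removed by running the construction with $T_0=\floor{\alpha T}$ when $\alpha T\ge 2$, or by rescaling $\alpha$ by a constant, which only affects the constant in $\Omega$. The condition $T_0 \ge \ceil{SA/8H}$ is exactly preserved by this choice.

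\textbf{Main obstacle.} The delicate step is Step 1: we must confirm that the cited minimax construction applies with $T_0$ episodes under the stated parameter constraints, and that it uses $[0,1]$ losses so that $L^\star\le HT_0$. If the cited bound is stated only for a fixed full horizon, we would re-derive it on $T_0$ episodes directly. The Steps 2--3 bookkeeping is routine.
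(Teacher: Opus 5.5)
Your proposal is correct and follows the paper's proof essentially verbatim. The paper runs the hard instance of \cref{lem:normal_lower_bound_ber} on the prefix $t\le\floor{\alpha T}$, where $\floor{\alpha T}\ge\ceil{SA/8H}$ holds directly, so no factor-$2$ cleanup is needed; it sets all later losses to zero and bounds $L^\star$, $Q_\infty$, $V_1$ by the prefix length, exactly as you do. The one cosmetic difference is the hard instance: the paper uses uniform transitions between layers with every non-initial state acting as its own Bernoulli bandit, not your routing-plus-chain picture, but since you only invoke the known $\Omega(\sqrt{HSAT_0})$ bound, this does not affect your argument.
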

Consequently, choosing $\alpha$ appropriately for each complexity measure in \cref{thm:adversarial_data_lb} yields $\Reg_T = \Omega\prn{\sqrt{SA L^\star}}$, $\Omega\prn{\sqrt{SA Q_\infty}}$, and $\Omega\prn{\sqrt{H V_1}}$.

These lower bounds imply that the regret bounds in \cref{sec:global_optimization} are optimal up to logarithmic factors, except for the path-length bound.
For the path-length bounds, our upper bound leaves an $\sqrt{{SA}/{H}}$-dependent gap.
This gap is consistent with that in multi-armed bandits: the best-known upper bounds come with an explicit $\sqrt{A}$ dependence on the number of actions $A$, whereas the lower bounds scale as $\Omega(\sqrt{V_1})$ and do not require any dependence on $A$ \citep{bubeck2019improved}.
By contrast, policy optimization often introduces an additional dependence on $H$.
In particular, as in \citet{luo2021policy,dann2023best}, the resulting data-independent guarantees can be worse by a factor of $H$ compared to the best-known bounds. 
Closing this $H$-gap in minimax regret remains an important open problem.

Finally, we turn to the stochastic regime and consider the occupancy-weighted variance $\V$.
\begin{thm}
\label{thm:variance_lower_bound}
Suppose that $H \geq 3$, $A \geq 3$, $T \geq \frac{SA}{8H}$, and $\alpha \in (0, 1/4]$.
Then, for any policy $\{\pi_t\}_{t=1}^T$, there exists an episodic MDP with stochastic losses satisfying
$\V \leq \alpha H$ such that
$\Reg_T = \Omega\prn{\sqrt{\alpha HSAT}} = \Omega\prn{\sqrt{SA \V T}}$.
\end{thm}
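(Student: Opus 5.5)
The plan is to reduce to the standard minimax lower-bound construction for online episodic MDPs with bandit feedback, such as \citet{zimin2013online,tsuchiya2025reinforcement}, and to scale the Bernoulli losses so that their variances are of order $\alpha$. The quantity $\V$ is then at most $\alpha H$, while the information needed to separate hypotheses is unchanged up to constants. Rather than truncating in time as in \cref{thm:adversarial_data_lb}, we compress the losses into a small-variance regime.

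\textbf{Construction.} Use the layered MDP built from roughly $H/3$ layers of ``chain'' states leading to a layer of about $S/2$ bandit states, each with $A$ actions. Alternatively, use a tree in which the learner picks one of $\Theta(S)$ leaf states uniformly, as in the $\Omega(\sqrt{HSAT})$ construction. On the bandit layers, the loss of every action is an i.i.d.\ Bernoulli variable with mean $p$. A single planted pair $(s,a^\star(s))$ at each bandit state instead has mean $p-\epsilon$. Each bandit state is visited for $\Theta(H)$ consecutive steps (or, equivalently, its loss is repeated over $\Theta(H)$ layers), which is the usual source of the $H$ factor.

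To keep $\V\le\alpha H$, take $p=\alpha$, so that $\sigma^2(s,a)\le p(1-p)\le\alpha$. Since $\sum_{s,a}q^\pi(s,a)\le H$ for every policy, this gives $\V\le\alpha H$ immediately from \cref{def:total_variance}. Losses on the chain states are set deterministically to $0$, so they contribute no variance.

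\textbf{Information-theoretic step.} For Bernoulli arms with mean around $\alpha\le 1/4$ and a gap $\epsilon\le\alpha/2$, the KL divergence satisfies $\KL(\ber(\alpha)\,\|\,\ber(\alpha-\epsilon))\lesssim \epsilon^2/\alpha$. Apply the standard argument: compare each environment to a reference environment with no planted arm, then use Pinsker's inequality (or Bretagnolle--Huber) and average over the choices of $a^\star$. Each bandit state receives about $T/S$ effective visits, each carrying information $\epsilon^2/\alpha$ and counted over $H$ layers. The learner therefore misidentifies $a^\star$ with constant probability unless $T H\epsilon^2/(\alpha S A)\gtrsim 1$. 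Set $\epsilon=c\sqrt{\alpha SA/(HT)}$. Each suboptimal visit costs $\epsilon H$ per episode, so the regret is $\Omega(T\epsilon H)=\Omega(\sqrt{\alpha HSAT})$.

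\textbf{Constraints.} The requirement $\epsilon\le\alpha/2$ is needed for the KL bound and to keep the means in $[0,1]$. It becomes $T\gtrsim SA/(\alpha H)$. The hypothesis $T\ge SA/(8H)$ handles this when $\alpha$ is a constant. In the small-$\alpha$ case, I would instead cap $\epsilon$ at $\alpha/2$, or argue that the bound $\sqrt{\alpha HSAT}$ remains valid in the range considered.

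The main obstacle I expect is this interplay between $\epsilon$ and $\alpha$. The chi-square/KL bound of order $\epsilon^2/\alpha$ is needed to gain the $\sqrt{\alpha}$ factor, and it is valid only when $\epsilon\lesssim\alpha$. Making the constants consistent with the stated range of $T$ will require care. If the range does not suffice, I would place the means near $1/2$ and use losses $\tfrac12\pm\sqrt{\alpha}$-scaled two-point distributions (variance $\alpha$). This gives KL of order $\epsilon^2/\alpha$ without needing $\epsilon\le\alpha$, only $\epsilon\le\sqrt{\alpha}$, and that condition does follow from $T\ge SA/(8H)$.
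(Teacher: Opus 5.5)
Your primary construction cannot prove the statement over the full range $T\ge SA/(8H)$, and the problem is not one of tuning constants. That construction uses Bernoulli losses with mean $p=\alpha$, a planted gap $\epsilon\le\alpha/2$, and zero losses on the chain states. All loss means then lie in $[0,\alpha]$, so $0\le V^{\pi}(s_0;\mu)\le\alpha H$ for every policy. In the stochastic regime $\E[V^{\pi_t}(s_0;\ell_t)]=\E[V^{\pi_t}(s_0;\mu)]$, so \emph{every} algorithm has $\Reg_T\le\alpha HT$ on such an instance. This is $o(\sqrt{\alpha HSAT})$ whenever $T\ll SA/(\alpha H)$; at $T=SA/(8H)$ the ratio is $\sqrt{\alpha/8}$. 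Capping $\epsilon$ at $\alpha/2$ gives only $\Omega(\alpha HT)$, and arguing that the bound ``remains valid'' is ruled out by this ceiling. To have variance $\alpha$ while allowing per-step gaps as large as $\sqrt{\alpha}$, the \emph{range} of the losses must shrink like $\sqrt{\alpha}$, not their mean.

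Your fallback does exactly this and is essentially the paper's proof, so make it the main argument. The paper takes the $\Omega(\sqrt{HSAT})$ instance of \cref{lem:normal_lower_bound_ber}, which has uniform transitions into the next layer, $S'=(S-1)/(H-1)$ states per layer, and $\ber(1/2)$ versus $\ber(1/2+\epsilon)$ losses. It multiplies every loss by $\beta=2\sqrt{\alpha}$. Then $\sigma^2(s,a)\le\beta^2/4=\alpha$ together with $\sum_{s,a}q^{\pi}(s,a)\le H$ gives $\V\le\alpha H$. The observed $\beta\ell_t$ is an invertible function of $\ell_t$, so any learner on the scaled instance induces a learner on the original one whose regret is exactly $1/\beta$ times as large. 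Hence $\Reg_T\ge 2c\sqrt{\alpha HSAT}$ follows with no new KL computation. Your $\tfrac12\pm\sqrt{\alpha}$ two-point version differs from this only by a constant shift of the losses, which leaves regret unchanged. Your condition $\epsilon\le\sqrt{\alpha}$ corresponds to the paper's unscaled gap being at most $1/2$, which holds under $T\ge SA/(8H)$. This instance also needs no chain states or losses repeated across layers. The $H$ factor comes from having $S-1$ independent $A$-armed bandits, each visited about $T/S'\approx HT/S$ times.
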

The above lower bound implies that the regret bound of $\tilO(\sqrt{SA\V T})$ in \cref{sec:global_optimization} is optimal up to logarithmic factors.
In contrast, policy optimization typically incurs a multiplicative factor of $H$ here as well.

\section{Conclusion}
\label{sec:conclusion}
In this work, we introduced refined complexity measures for tabular MDPs, including the second-order measure $Q_\infty$, the path-length measure $V_1$, and the variance measures $\V$ and $\Varmax$.
For the known-transition setting, we developed both global optimization and policy optimization algorithms that achieve best-of-both-worlds guarantees with these refined data-dependent bounds.
Our lower bounds further show that the guarantees obtained by the global optimization approach are nearly optimal.
An important direction for future work is to remove the known-transition assumption and extend our results to unknown transitions, where transition-estimation errors would also need to be controlled in a data-dependent way.


\newpage
\section*{Acknowledgements}
TT was supported by JSPS KAKENHI Grant Number JP24K23852 and partially supported by JSPS KAKENHI Grant Number JP26K21297.
KY was partially supported by JSPS KAKENHI Grant Number JP24H00703.

\section*{Impact Statement}
This paper presents work whose goal is to advance the field of machine learning. There are many potential societal consequences of our work, none of which we feel must be specifically highlighted here.

\bibliography{reference}

@inproceedings{dann2023best,
  title={Best of both worlds policy optimization},
  author={Dann, Christoph and Wei, Chen-Yu and Zimmert, Julian},
  booktitle={International Conference on Machine Learning},
  pages={6968--7008},
  year={2023},
  organization={PMLR}
}

@inproceedings{ito2022adversarially,
  title={Adversarially robust multi-armed bandit algorithm with variance-dependent regret bounds},
  author={Ito, Shinji and Tsuchiya, Taira and Honda, Junya},
  booktitle={Conference on Learning Theory},
  pages={1421--1422},
  year={2022},
  organization={PMLR}
}

@inproceedings{lancewicki2025near,
  title = 	 {Near-optimal Regret Using Policy Optimization in Online {MDP}s with Aggregate Bandit Feedback},
  author =       {Lancewicki, Tal and Mansour, Yishay},
  booktitle = 	 {Proceedings of the 42nd International Conference on Machine Learning},
  pages = 	 {32467--32491},
  year = 	 {2025},
  volume = 	 {267},
  publisher =    {PMLR}
}

@inproceedings{zanette2019tighter,
  title={Tighter problem-dependent regret bounds in reinforcement learning without domain knowledge using value function bounds},
  author={Zanette, Andrea and Brunskill, Emma},
  booktitle={International Conference on Machine Learning},
  pages={7304--7312},
  year={2019},
  organization={PMLR}
}

@InProceedings{azar2017minimax,
  title = 	 {Minimax Regret Bounds for Reinforcement Learning},
  author =       {Mohammad Gheshlaghi Azar and Ian Osband and R{\'e}mi Munos},
  booktitle = 	 {Proceedings of the 34th International Conference on Machine Learning},
  pages = 	 {263--272},
  year = 	 {2017},
  volume = 	 {70},
  publisher =    {PMLR},
}

@InProceedings{zhang2021reinforcement,
  title = 	 {Is Reinforcement Learning More Difficult Than Bandits? A Near-optimal Algorithm Escaping the Curse of Horizon},
  author =       {Zhang, Zihan and Ji, Xiangyang and Du, Simon},
  booktitle = 	 {Proceedings of Thirty Fourth Conference on Learning Theory},
  pages = 	 {4528--4531},
  year = 	 {2021},
  volume = 	 {134},
  publisher =    {PMLR},
}

@inproceedings{zhou2023sharp,
  title={Sharp variance-dependent bounds in reinforcement learning: Best of both worlds in stochastic and deterministic environments},
  author={Zhou, Runlong and Zihan, Zhang and Du, Simon Shaolei},
  booktitle={International Conference on Machine Learning},
  pages={42878--42914},
  year={2023},
  organization={PMLR}
}

@inproceedings{chen2025sharp,
title={Sharp Gap-Dependent Variance-Aware Regret Bounds for Tabular {MDP}s},
author={Shulun Chen and Runlong Zhou and Zihan Zhang and Maryam Fazel and Simon Shaolei Du},
booktitle = {Advances in Neural Information Processing Systems},
publisher = {Curran Associates, Inc.},
volume = {38},
year={2025},
}

@inproceedings{simchowitz2019non,
 author = {Simchowitz, Max and Jamieson, Kevin G},
 booktitle = {Advances in Neural Information Processing Systems},
 pages = {1153--1162},
 publisher = {Curran Associates, Inc.},
 title = {Non-Asymptotic Gap-Dependent Regret Bounds for Tabular {MDPs}},
 volume = {32},
 year = {2019}
}

@inproceedings{zimin2013online,
 author = {Zimin, Alexander and Neu, Gergely},
 booktitle = {Advances in Neural Information Processing Systems},
 pages = {1583--1591},
 publisher = {Curran Associates, Inc.},
 title = {Online learning in episodic {Markovian} decision processes by relative entropy policy search},
 volume = {26},
 year = {2013}
}

@inproceedings{jin2018q,
 author = {Jin, Chi and Allen-Zhu, Zeyuan and Bubeck, Sebastien and Jordan, Michael I},
 booktitle = {Advances in Neural Information Processing Systems},
 pages = {4863--4873},
 publisher = {Curran Associates, Inc.},
 title = {Is {Q}-Learning Provably Efficient?},
 volume = {31},
 year = {2018}
}

@InProceedings{jin2020learning,
  title = 	 {Learning Adversarial {M}arkov Decision Processes with Bandit Feedback and Unknown Transition},
  author =       {Jin, Chi and Jin, Tiancheng and Luo, Haipeng and Sra, Suvrit and Yu, Tiancheng},
  booktitle = 	 {Proceedings of the 37th International Conference on Machine Learning},
  pages = 	 {4860--4869},
  year = 	 {2020},
  volume = 	 {119},
  publisher =    {PMLR},
}

@inproceedings{luo2021policy,
 author = {Luo, Haipeng and Wei, Chen-Yu and Lee, Chung-Wei},
 booktitle = {Advances in Neural Information Processing Systems},
 pages = {22931--22942},
 publisher = {Curran Associates, Inc.},
 title = {Policy Optimization in Adversarial {MDPs}: Improved Exploration via Dilated Bonuses},
 volume = {34},
 year = {2021}
}

@book{tsybakov2009non,
    author = {Tsybakov, Alexandre B},
    title = {Introduction to Nonparametric Estimation},
    publisher = {Springer},
    year = {2009}
}

@inproceedings{gerchinovitz2016refined,
 author = {Gerchinovitz, S\'{e}bastien and Lattimore, Tor},
 booktitle = {Advances in Neural Information Processing Systems},
 pages = {1198--1206},
 publisher = {Curran Associates, Inc.},
 title = {Refined Lower Bounds for Adversarial Bandits},
 volume = {29},
 year = {2016}
}

@inproceedings{audibert2007tuning,
    author="Audibert, Jean-Yves
            and Munos, R{\'e}mi
            and Szepesv{\'a}ri, Csaba",
    title="Tuning Bandit Algorithms in Stochastic Environments",
    booktitle="Algorithmic Learning Theory",
    year="2007",
    publisher="Springer Berlin Heidelberg",
    pages="150--165",
}

@InProceedings{rosenberg2019online,
  title = 	 {Online Convex Optimization in Adversarial {M}arkov Decision Processes},
  author =       {Rosenberg, Aviv and Mansour, Yishay},
  booktitle = 	 {Proceedings of the 36th International Conference on Machine Learning},
  pages = 	 {5478--5486},
  year = 	 {2019},
  volume = 	 {97},
  publisher =    {PMLR},
}

@InProceedings{shani2020optimistic,
  title = 	 {Optimistic Policy Optimization with Bandit Feedback},
  author =       {Shani, Lior and Efroni, Yonathan and Rosenberg, Aviv and Mannor, Shie},
  booktitle = 	 {Proceedings of the 37th International Conference on Machine Learning},
  pages = 	 {8604--8613},
  year = 	 {2020},
  volume = 	 {119},
  publisher =    {PMLR},
}

@article{even2009online,
  title={Online {Markov} decision processes},
  author={Even-Dar, Eyal and Kakade, Sham M and Mansour, Yishay},
  journal={Mathematics of Operations Research},
  volume={34},
  number={3},
  pages={726--736},
  year={2009},
  publisher={INFORMS}
}

@inproceedings{lancewicki2022learning,
  title={Learning adversarial {Markov} decision processes with delayed feedback},
  author={Lancewicki, Tal and Rosenberg, Aviv and Mansour, Yishay},
  booktitle={Proceedings of the AAAI Conference on Artificial Intelligence},
  volume={36},
  pages={7281--7289},
  year={2022}
}

@inproceedings{jin2022near,
  title={Near-optimal regret for adversarial {MDP} with delayed bandit feedback},
  author={Jin, Tiancheng and Lancewicki, Tal and Luo, Haipeng and Mansour, Yishay and Rosenberg, Aviv},
  publisher = {Curran Associates, Inc.},
  booktitle={Advances in Neural Information Processing Systems},
  volume={35},
  pages={33469--33481},
  year={2022}
}

@InProceedings{lancewicki2023delay,
  title = 	 {Delay-Adapted Policy Optimization and Improved Regret for Adversarial {MDP} with Delayed Bandit Feedback},
  author =       {Lancewicki, Tal and Rosenberg, Aviv and Sotnikov, Dmitry},
  booktitle = 	 {Proceedings of the 40th International Conference on Machine Learning},
  pages = 	 {18482--18534},
  year = 	 {2023},
  volume = 	 {202},
  publisher =    {PMLR}
}

@article{jaksch2010near,
  author  = {Thomas Jaksch and Ronald Ortner and Peter Auer},
  title   = {Near-optimal Regret Bounds for Reinforcement Learning},
  journal = {Journal of Machine Learning Research},
  year    = {2010},
  volume  = {11},
  number  = {51},
  pages   = {1563--1600},
}

@InProceedings{bubeck2012best,
  title = 	 {The Best of Both Worlds: Stochastic and Adversarial Bandits},
  author = 	 {Bubeck, Sébastien and Slivkins, Aleksandrs},
  booktitle = 	 {Proceedings of the 25th Annual Conference on Learning Theory},
  pages = 	 {42.1--42.23},
  year = 	 {2012},
  volume = 	 {23},
  publisher =    {PMLR}
}

@inproceedings{jin2020simultaneously,
 author = {Jin, Tiancheng and Luo, Haipeng},
 booktitle = {Advances in Neural Information Processing Systems},
 pages = {16557--16566},
 publisher = {Curran Associates, Inc.},
 title = {Simultaneously Learning Stochastic and Adversarial Episodic {MDPs} with Known Transition},
 volume = {33},
 year = {2020}
}

@inproceedings{jin2021best,
 author = {Jin, Tiancheng and Huang, Longbo and Luo, Haipeng},
 booktitle = {Advances in Neural Information Processing Systems},
 pages = {20491--20502},
 publisher = {Curran Associates, Inc.},
 title = {The best of both worlds: stochastic and adversarial episodic {MDPs} with unknown transition},
 volume = {34},
 year = {2021}
}

@InProceedings{wei2018more,
  title = 	 {More Adaptive Algorithms for Adversarial Bandits},
  author =       {Wei, Chen-Yu and Luo, Haipeng},
  booktitle = 	 {Proceedings of the 31st Conference On Learning Theory},
  pages = 	 {1263--1291},
  year = 	 {2018},
  volume = 	 {75},
  publisher =    {PMLR}
}

@article{zimmert2021tsallis,
  title={Tsallis-{INF}: An Optimal Algorithm for Stochastic and Adversarial Bandits},
  author={Zimmert, Julian and Seldin, Yevgeny},
  journal={Journal of Machine Learning Research},
  volume={22},
  number={28},
  pages={1--49},
  year={2021}
}

@InProceedings{masoudian2021improved,
  title = 	 {Improved Analysis of the {Tsallis-INF} Algorithm in Stochastically Constrained Adversarial Bandits and Stochastic Bandits with Adversarial Corruptions},
  author =       {Masoudian, Saeed and Seldin, Yevgeny},
  booktitle = 	 {Proceedings of Thirty Fourth Conference on Learning Theory},
  pages = 	 {3330--3350},
  year = 	 {2021},
  volume = 	 {134},
  publisher =    {PMLR}
}

@inproceedings{ito2021parameter,
  title = 	 {Parameter-Free Multi-Armed Bandit Algorithms with Hybrid Data-Dependent Regret Bounds},
  author =       {Ito, Shinji},
  booktitle = 	 {Proceedings of Thirty Fourth Conference on Learning Theory},
  pages = 	 {2552--2583},
  year = 	 {2021},
  volume = 	 {134},
  organization={PMLR}
}

@inproceedings{erez2021towards,
 author = {Erez, Liad and Koren, Tomer},
 booktitle = {Advances in Neural Information Processing Systems},
 pages = {28511--28521},
 title = {Towards Best-of-All-Worlds Online Learning with Feedback Graphs},
 volume = {34},
 year = {2021},
 publisher = {Curran Associates, Inc.}
}

@inproceedings{seldin2014one,
  title = 	 {One Practical Algorithm for Both Stochastic and Adversarial Bandits},
  author = 	 {Seldin, Yevgeny and Slivkins, Aleksandrs},
  booktitle = 	 {Proceedings of the 31st International Conference on Machine Learning},
  pages = 	 {1287--1295},
  year = 	 {2014},
  volume = 	 {32},
  organization={PMLR}
}

@inproceedings{auer2016algorithm,
  title = 	 {An algorithm with nearly optimal pseudo-regret for both stochastic and adversarial bandits},
  author = 	 {Auer, Peter and Chiang, Chao-Kai},
  booktitle = 	 {29th Annual Conference on Learning Theory},
  pages = 	 {116--120},
  year = 	 {2016},
  volume = 	 {49},
  organization={PMLR}
}

@inproceedings{seldin2017improved,
  title = 	 {An Improved Parametrization and Analysis of the {EXP3++} Algorithm for Stochastic and Adversarial Bandits},
  author = 	 {Seldin, Yevgeny and Lugosi, Gábor},
  booktitle = 	 {Proceedings of the 2017 Conference on Learning Theory},
  pages = 	 {1743--1759},
  year = 	 {2017},
  volume = 	 {65},
  organization={PMLR}
}

@inproceedings{ito2025adapting,
title={Adapting to Stochastic and Adversarial Losses in Episodic {MDPs} with Aggregate Bandit Feedback},
author={Ito, Shinji and Jamieson, Kevin and Luo, Haipeng and Maiti, Arnab and Tsuchiya, Taira},
booktitle = {Advances in Neural Information Processing Systems},
publisher = {Curran Associates, Inc.},
volume = {38},
year={2025},
}

@inproceedings{dann2023blackbox,
  title = 	 {A Blackbox Approach to Best of Both Worlds in Bandits and Beyond},
  author =       {Dann, Chris and Wei, Chen-Yu and Zimmert, Julian},
  booktitle = 	 {Proceedings of Thirty Sixth Conference on Learning Theory},
  pages = 	 {5503--5570},
  year = 	 {2023},
  volume = 	 {195},
  organization={PMLR}
}

@InProceedings{ito2023best,
  title = 	 {Best-of-Three-Worlds Linear Bandit Algorithm with Variance-Adaptive Regret Bounds},
  author =       {Ito, Shinji and Takemura, Kei},
  booktitle = 	 {Proceedings of Thirty Sixth Conference on Learning Theory},
  pages = 	 {2653--2677},
  year = 	 {2023},
  volume = 	 {195},
  publisher =    {PMLR},
}

@InProceedings{tsuchiya2023further,
  title = 	 {Further Adaptive Best-of-Both-Worlds Algorithm for Combinatorial Semi-Bandits},
  author =       {Tsuchiya, Taira and Ito, Shinji and Honda, Junya},
  booktitle = 	 {Proceedings of The 26th International Conference on Artificial Intelligence and Statistics},
  pages = 	 {8117--8144},
  year = 	 {2023},
  volume = 	 {206},
  publisher =    {PMLR},
}

@InProceedings{lee2021achieving,
  title = 	 {Achieving Near Instance-Optimality and Minimax-Optimality in Stochastic and Adversarial Linear Bandits Simultaneously},
  author =       {Lee, Chung-Wei and Luo, Haipeng and Wei, Chen-Yu and Zhang, Mengxiao and Zhang, Xiaojin},
  booktitle = 	 {Proceedings of the 38th International Conference on Machine Learning},
  pages = 	 {6142--6151},
  year = 	 {2021},
  volume = 	 {139},
  publisher =    {PMLR},
}

@inproceedings{ito2022nearly,
  title={Nearly optimal best-of-both-worlds algorithms for online learning with feedback graphs},
  author={Ito, Shinji and Tsuchiya, Taira and Honda, Junya},
  booktitle={Advances in Neural Information Processing Systems},
  volume={35},
  pages={28631--28643},
  year={2022},
  publisher = {Curran Associates, Inc.}
}

@inproceedings{jin2023no,
  title={No-regret online reinforcement learning with adversarial losses and transitions},
  author={Jin, Tiancheng and Liu, Junyan and Rouyer, Chlo{\'e} and Chang, William and Wei, Chen-Yu and Luo, Haipeng},
  booktitle={Advances in Neural Information Processing Systems},
  volume={36},
  pages={38520--38585},
  year={2023},
  publisher = {Curran Associates, Inc.}
}

@inproceedings{allenberg2006hannan,
  title={Hannan consistency in on-line learning in case of unbounded losses under partial monitoring},
  author={Allenberg, Chamy and Auer, Peter and Gy{\"o}rfi, L{\'a}szl{\'o} and Ottucs{\'a}k, Gy{\"o}rgy},
  booktitle={International Conference on Algorithmic Learning Theory},
  pages={229--243},
  year={2006},
  organization={Springer}
}

@inproceedings{lee2020bias,
 author = {Lee, Chung-Wei and Luo, Haipeng and Wei, Chen-Yu and Zhang, Mengxiao},
 booktitle = {Advances in Neural Information Processing Systems},
 pages = {15522--15533},
 publisher = {Curran Associates, Inc.},
 title = {Bias no more: high-probability data-dependent regret bounds for adversarial bandits and MDPs},
 volume = {33},
 year = {2020}
}

@article{hazan2011better,
  author  = {Elad Hazan and Satyen Kale},
  title   = {Better Algorithms for Benign Bandits},
  journal = {Journal of Machine Learning Research},
  year    = {2011},
  volume  = {12},
  number  = {35},
  pages   = {1287--1311},
}

@InProceedings{bubeck2019improved,
  title = 	 {Improved Path-length Regret Bounds for Bandits},
  author =       {Bubeck, S{\'e}bastien and Li, Yuanzhi and Luo, Haipeng and Wei, Chen-Yu},
  booktitle = 	 {Proceedings of the Thirty-Second Conference on Learning Theory},
  pages = 	 {508--528},
  year = 	 {2019},
  volume = 	 {99},
  publisher =    {PMLR},
}

@inproceedings{neu2015first,
  title = 	 {First-order regret bounds for combinatorial semi-bandits},
  author = 	 {Neu, Gergely},
  booktitle = 	 {Proceedings of The 28th Conference on Learning Theory},
  pages = 	 {1360--1375},
  year = 	 {2015},
  volume = 	 {40},
  organization={PMLR}
}

@inproceedings{zheng2025gap,
    title={Gap-Dependent Bounds for {Q-Learning} using Reference-Advantage Decomposition},
    author={Zhong Zheng and Haochen Zhang and Lingzhou Xue},
    booktitle={The Thirteenth International Conference on Learning Representations},
    year={2025}
}

@inproceedings{rakhlin2013online,
    title = 	 {Online Learning with Predictable Sequences},
  author = 	 {Rakhlin, Alexander and Sridharan, Karthik},
  booktitle = 	 {Proceedings of the 26th Annual Conference on Learning Theory},
  pages = 	 {993--1019},
  year = 	 {2013},
  volume = 	 {30},
  organization={PMLR}
}

@InProceedings{steinhardt2014adaptivity,
  title = 	 {Adaptivity and Optimism: An Improved Exponentiated Gradient Algorithm},
  author = 	 {Steinhardt, Jacob and Liang, Percy},
  booktitle = 	 {Proceedings of the 31st International Conference on Machine Learning},
  pages = 	 {1593--1601},
  year = 	 {2014},
  volume = 	 {32},
  publisher =    {PMLR},
}

@article{auer2002nonstochastic,
  author    = {Auer, Peter and Cesa-Bianchi, Nicoló and Freund, Yoav and Schapire, Robert E},
  title     = {The Nonstochastic Multiarmed Bandit Problem},
  journal   = {SIAM Journal on Computing},
  year      = {2002},
  volume    = {32},
  number    = {1},
  pages     = {48--77},
  publisher = {SIAM},
}

@InProceedings{tsuchiya2024corrupted,
  title = 	 {Corrupted Learning Dynamics in Games},
  author =       {Tsuchiya, Taira and Ito, Shinji and Luo, Haipeng},
  booktitle = 	 {Proceedings of Thirty Eighth Conference on Learning Theory},
  pages = 	 {5506--5552},
  year = 	 {2025},
  volume = 	 {291},
  publisher =    {PMLR},
}

@inproceedings{kakade2002approximately,
  title={Approximately optimal approximate reinforcement learning},
  author={Kakade, Sham and Langford, John},
  booktitle = {Proceedings of the Nineteenth International Conference on Machine Learning},
  pages={267--274},
  year={2002}
}

@inproceedings{neu2010online,
  title={The Online Loop-free Stochastic Shortest-Path Problem.},
  author={Neu, Gergely and Gy{\"o}rgy, Andr{\'a}s and Szepesv{\'a}ri, Csaba},
  booktitle = 	 {Proceedings of the 23rd Conference on Learning Theory},
  pages={231--243},
  year={2010}
}

@article{cesa1996worst,
  title={Worst-case quadratic loss bounds for prediction using linear functions and gradient descent},
  author={Cesa-Bianchi, Nicolo and Long, Philip M and Warmuth, Manfred K},
  journal={IEEE Transactions on Neural Networks},
  volume={7},
  number={3},
  pages={604--619},
  year={1996},
  publisher={IEEE}
}

@inproceedings{zinkevich2003online,
    author = {Zinkevich, Martin},
    title = {Online Convex Programming and Generalized Infinitesimal Gradient Ascent},
    year = {2003},
    booktitle = {the Twentieth International Conference on Machine Learning},
    pages = {928-935},
}

@article{littlestone1994weighted,
  title={The weighted majority algorithm},
  author={Littlestone, Nick and Warmuth, Manfred K},
  journal = {Information and Computation},
  volume={108},
  number={2},
  pages={212--261},
  year={1994},
  publisher={Elsevier}
}

@article{schulman2017proximal,
  title={Proximal Policy Optimization Algorithms},
  author={Schulman, John and Wolski, Filip and Dhariwal, Prafulla and Radford, Alec and Klimov, Oleg},
  journal={arXiv preprint arXiv:1707.06347},
  year={2017}
}

@article{mnih2015human,
  author  = {Mnih, Volodymyr and Kavukcuoglu, Koray and Silver, David and Rusu, Andrei A. and Veness, Joel and Bellemare, Marc G. and Graves, Alex and Riedmiller, Martin and Fidjeland, Andreas K. and Ostrovski, Georg and Petersen, Stig and Beattie, Charles and Sadik, Amir and Antonoglou, Ioannis and King, Helen and Kumaran, Dharshan and Wierstra, Daan and Legg, Shane and Hassabis, Demis},
  title   = {Human-level control through deep reinforcement learning},
  journal = {Nature},
  year    = {2015},
  volume  = {518},
  number  = {7540},
  pages   = {529--533},
}

@article{komorowski2018artificial,
  title={The Artificial Intelligence Clinician learns optimal treatment strategies for sepsis in intensive care},
  author={Komorowski, Matthieu and Celi, Leo A. and Badawi, Omar and Gordon, Anthony C. and Faisal, A. Aldo},
  journal={Nature Medicine},
  volume={24},
  number={11},
  pages={1716--1720},
  year={2018},
}

@inproceedings{maurer2009empirical,
  title = {Empirical {Bernstein} Bounds and Sample-Variance Penalization},
  author = {Maurer, Andreas and Pontil, Massimiliano},
  booktitle = {Conference on Learning Theory},
  year = {2009}
}

@InProceedings{chiang2012online,
  title = 	 {Online Optimization with Gradual Variations},
  author = 	 {Chiang, Chao-Kai and Yang, Tianbao and Lee, Chia-Jung and Mahdavi, Mehrdad and Lu, Chi-Jen and Jin, Rong and Zhu, Shenghuo},
  booktitle = 	 {Proceedings of the 25th Annual Conference on Learning Theory},
  pages = 	 {6.1--6.20},
  year = 	 {2012},
  volume = 	 {23},
  publisher =    {PMLR}
}

@article{tsuchiya2025reinforcement,
  title={Reinforcement Learning from Adversarial Preferences in Tabular {MDPs}},
  author={Tsuchiya, Taira and Ito, Shinji and Luo, Haipeng},
  journal={arXiv preprint arXiv:2507.11706},
  year={2025}
}

@InProceedings{zhang2024settling,
  title = 	 {Settling the sample complexity of online reinforcement learning},
  author =       {Zhang, Zihan and Chen, Yuxin and Lee, Jason D and Du, Simon S},
  booktitle = 	 {Proceedings of Thirty Seventh Conference on Learning Theory},
  pages = 	 {5213--5219},
  year = 	 {2024},
  volume = 	 {247},
  publisher =    {PMLR},
}

@InProceedings{yu2009Markov,
author="Yu, Jia Yuan
and Mannor, Shie
and Shimkin, Nahum",
title="{M}arkov Decision Processes with Arbitrary Reward Processes",
booktitle="Recent Advances in Reinforcement Learning",
year="2008",
publisher="Springer Berlin Heidelberg",
pages="268--281",
}

@InProceedings{zhao2023variance,
  title = 	 {Variance-Dependent Regret Bounds for Linear Bandits and Reinforcement Learning: Adaptivity and Computational Efficiency},
  author =       {Zhao, Heyang and He, Jiafan and Zhou, Dongruo and Zhang, Tong and Gu, Quanquan},
  booktitle = 	 {Proceedings of Thirty Sixth Conference on Learning Theory},
  pages = 	 {4977--5020},
  year = 	 {2023},
  volume = 	 {195},
  publisher =    {PMLR},
}

@InProceedings{wagenmaker2022first,
  title = 	 {First-Order Regret in Reinforcement Learning with Linear Function Approximation: A Robust Estimation Approach},
  author =       {Wagenmaker, Andrew J and Chen, Yifang and Simchowitz, Max and Du, Simon and Jamieson, Kevin},
  booktitle = 	 {Proceedings of the 39th International Conference on Machine Learning},
  pages = 	 {22384--22429},
  year = 	 {2022},
  volume = 	 {162},
  publisher =    {PMLR},
}
\bibliographystyle{icml2026}


\crefalias{section}{appendix}
\crefalias{subsection}{appendix}
\crefalias{subsubsection}{appendix}

\newpage
\appendix
\onecolumn

\tableofcontents
\newpage
\section{Summary of Notation}

\begin{table}[H]
\centering
\caption{Summary of notation.}
\label{tab:notation-summary-main}
\begin{tabular*}{\linewidth}{ll}
\toprule
\textbf{Symbol} & \textbf{Meaning} \\
\midrule
\textbf{Online tabular MDPs} & \\
$\mathcal{M}=(\mathcal{S},\mathcal{A},P,H,s_0)$ & Episodic finite-horizon MDP with known transition \\
$\mathcal{S}, S$ & State space and its size $S=|\mathcal{S}|$ \\
$\mathcal{A}, A$ & Action space and its size $A=|\mathcal{A}|$ \\
$P$ & Transition kernel \\
$H$ & Horizon length \\
$T$ & Number of episodes \\
$h(s)$ & Layer index of state $s$ \\
$s_{t,h},\,a_{t,h}$ & State / action at step $h$ in episode $t$ \\
$\ell_t(s,a)$ & Loss assigned to $(s,a)$ in episode $t$ \\
$\pi_t$ & Policy in episode $t$ \\
$\Reg_T$ & Regret over $T$ episodes \\
$\I_t(s,a)$ & Visitation indicator of $(s,a)$ in episode $t$ \\
$N_t(s,a)$ & Number of visits to $(s,a)$ up to  episode $t$ \\
$V^{\pi}(s;\ell)$ & Value function under policy $\pi$ from state $s$ with loss $\ell$ \\
$Q^{\pi}(s,a;\ell)$ & $Q$-function under policy $\pi$ from $(s,a)$ with loss $\ell$ \\
$\pio, \qst$ & Optimal policy and its occupancy measure \\
$q^{\pi}(s),\ q^{\pi}(s,a)$ & Occupancy measure under policy $\pi$ \\
$q^{\pi}(s',a'\mid s,a)$ & Conditional occupancy from $(s,a)$ to $(s',a')$ under $\pi$ \\
$\ell'_t(s,a)$ & Uncorrupted i.i.d.~loss  \\
$\calC = \E\brk{\sum_{t=1}^T\sum_{h=0}^{H-1}\|\ell'_t(h)-\ell_t(h)\|_\infty}$ & Corruption budget \\
$\mu(s,a) = \E_{\ell'\sim\calD}[\ell'(s,a)]$ & Mean of $\ell'_t(s,a)$ \\
$\sigma^2(s,a) = \E_{\ell'\sim\calD}\brk{(\ell'(s,a)-\mu(s,a))^2}$ & Variance of $\ell'_t(s,a)$ \\
$\pist$ & Optimal deterministic policy under $\mu$\\
$\Delta(s,a) = Q^{\pist}(s,a;\mu)-\min_{a'\in\calA}Q^{\pist}(s,a';\mu)$ & Suboptimality gap at $(s,a)$ \\
\midrule
\textbf{Data-dependent complexity measures} & \\
$L^\star = \E\brk{\sum_{t=1}^T V^{\pio}(s_0;\ell_t)}$ & First-order complexity in \cref{def:first-order_bound} \\
$Q_\infty = \min_{\ell^\star \in[0,1]^{S\times A}}\E\brk{\sum_{t=1}^T\sum_{h=0}^{H-1} \nrm{\ell_{t}(h)-\ell^\star(h) }_\infty^2}$ & Second-order complexity in \cref{def:second-order_bound} \\
$V_1 = \E\brk{\sum_{t = 1}^{T-1}\nrm*{\ell_{t + 1} - \ell_t}_1}$ & Path-length complexity \cref{def:path-length_bound} \\
$\V = \max_{\pi}\E\brk{\sum_{s,a} q^{\pi}(s,a)\sigma^2(s,a)}$ & Occupancy-weighted variance in \cref{def:total_variance} \\
$\Varmax(s) = \max_{a, \pi}\E\brk{\sum_{s',a'} q^{\pi}(s',a'\mid s,a)\sigma^2(s',a')}$ & Conditional occupancy-weighted variance  at state $s$ in \cref{def:variance-to-go} \\
\midrule
\textbf{Common notation for \cref{alg:OFTRL_GO,alg:OFTRL_PO}} & \\
$\eta_t(s,a)$ & Learning rate for $(s,a)$ in episode $t$  \\
$m_t(s,a)$ & Loss prediction for $(s,a)$ \\
$\zeta_t(s,a)$ & Data-dependent term for updating $\eta_t(s,a)$  \\
\midrule
\textbf{Notation only for \cref{alg:OFTRL_GO}} (global optimization) & \\
$\hat \ell_t(s,a)$ & Loss estimator \\
\midrule
\textbf{Notation only for \cref{alg:OFTRL_PO}} (policy optimization) & \\
$b_t(s)$ & Bonus term at state $s$ in episode $t$  \\
$B_t(s,a)$ & Dilated bonus-to-go at $(s,a)$ in episode $t$  \\
$Y_t\in\{0,1\}$ & Episode indicator ($Y_t=1$ real, $Y_t=0$ virtual)  \\
$\mathcal{T}_r,\ \mathcal{T}_v$ & Sets of real and virtual episodes \\
$\hat Q_t(s,a)$ & $Q$-function estimator \\
$\gamma_t, q_t(s)=q^{\pi_t}(s)+\gamma_t$ & Exploration rate and smoothed state occupancy  \\
$L_{t,h(s)}, M_{t,h(s)}$ &  Realized / predicted suffix loss from layer $h(s)$  \\
\bottomrule
\end{tabular*}
\end{table}

For the reader's convenience, \cref{tab:notation-summary-main} collects the main notation used throughout the paper.

We formalize the conditional occupancy measure $q_{\pi_t}(s',a'\mid s,a)$ as follows:
\begin{align}
q^{\pi_t}(s',a' \mid s,a)
=\begin{cases}
0 & \text{if } h(s') < h(s),\\
0 & \text{if } h(s') = h(s) \text{ and } (s',a') \neq (s,a),\\
1 & \text{if } (s',a') = (s,a),\\
\Pr\brk*{(s_{t,h(s')},a_{t,h(s')})=(s',a') \mid (s_{t,h(s)},a_{t,h(s)})=(s,a)} 
& \text{if } h(s') > h(s).
\end{cases}
\end{align}
\section{Further Discussion of Related Work}
\subsection{Additional Related Work}
\label{app:additional_related}

\paragraph{Online MDPs.}
Adversarial MDPs were first studied by \citet{even2009online,yu2009Markov} and later extended to the episodic setting by \citet{zimin2013online}. 
Episodic MDPs with \emph{bandit feedback}, where the learner observes losses only for the visited state-action pairs, have been extensively studied.
In this setting, a line of work has established minimax-optimal regret via \emph{global optimization}, which solves an optimization problem over the set of all occupancy measures. 
In particular, when the transition dynamics are known, global optimization achieves the minimax regret $\tilde{\mathcal{O}}(\sqrt{HSAT})$ \citep{zimin2013online}, while under unknown transitions, global optimization achieves the regret $\tilde{\mathcal{O}}(\sqrt{H^2S^2AT})$ \citep{rosenberg2019online,jin2020learning}.
While global optimization enjoys optimal regret guarantees, it requires solving a large-scale convex optimization problem over the feasible occupancy-measure polytope at each episode, which can be computationally demanding and limits scalability in practice.

This has motivated a complementary line of \emph{policy optimization}, which typically reduces the problem to separate instances of the multi-armed bandit problem for each state. Policy optimization was first shown to achieve a $\tilO(T^{2/3})$ regret upper bound under bandit feedback by \citet{shani2020optimistic}. Later \citet{luo2021policy} attained the optimal $\tilO(\sqrt{T})$ dependence on the number of episodes by combining a dilated exploration bonus with a refined $Q$-function estimator, achieving $\tilde{\mathcal{O}}(\sqrt{H^3SAT})$ regret under known transitions and $\tilde{\mathcal{O}}(\sqrt{H^4S^2AT})$ under unknown transitions. Compared with global optimization, these guarantees incur an additional factor of $H$ in the leading term, and closing this dependence gap remains open. Subsequent work has extended policy optimization to more challenging feedback models, including delayed and aggregate feedback \citep{lancewicki2022learning,jin2022near,lancewicki2023delay,lancewicki2025near}.

In parallel, in the stochastic setting, both \emph{model-based} algorithms, which learn the transition dynamics and construct confidence sets over the transition and loss functions \citep{jaksch2010near,azar2017minimax}, and \emph{value-based} methods, which add exploration bonuses directly to the $Q$-function \citep{jin2018q,zanette2019tighter}, have been developed and achieve near-optimal regret guarantees.

\paragraph{Best-of-both-worlds algorithms.}
The \emph{best-of-both-worlds} guarantee, which aims to achieve near-optimal regret in both adversarial and stochastic regimes with a single algorithm, was first investigated in the multi-armed bandit setting by \citet{bubeck2012best}.
Subsequent research has refined the analysis through a variety of techniques \citep{seldin2014one,auer2016algorithm,seldin2017improved,wei2018more,zimmert2021tsallis,masoudian2021improved,ito2021parameter}.
A prominent line of work builds on follow-the-regularized-leader (FTRL), showing that suitable regularization yields algorithms that are automatically adaptive between adversarial and stochastic regimes \citep{wei2018more,zimmert2021tsallis,ito2021parameter}.
In these algorithms, stochastic-regime bounds are obtained via the self-bounding technique \citep{zimmert2021tsallis,masoudian2021improved}, which also extends naturally to adversarially corrupted stochastic settings.
FTRL-based approaches have also been developed in other settings, including linear bandits \citep{lee2021achieving,dann2023blackbox,ito2023best},
contextual bandits \citep{dann2023blackbox},
combinatorial semi-bandits~\citep{tsuchiya2023further}, and online learning with feedback graphs~\citep{erez2021towards,ito2022nearly}.

Beyond multi-armed bandits, best-of-both-worlds algorithms have been extended to MDPs. For global optimization, \citet{jin2020simultaneously,jin2021best} developed best-of-both-worlds algorithms.
In particular, \citet{jin2021best} introduced the loss-shifting technique, which served as a key component in obtaining stochastic regret bounds. 
This idea has been further applied to more challenging settings, including adversarial transitions and aggregate feedback \citep{jin2023no,ito2025adapting}.
For policy optimization, \citet{dann2023best} established best-of-both-worlds guarantees under bandit feedback, covering Tsallis entropy, Shannon entropy, and log-barrier regularizers.

\paragraph{Data-dependent bounds in the adversarial regime.}
The worst-case analysis, which is driven by the worst-case instance within a problem class, can be overly pessimistic for practical environments. 
Accordingly, in the broader online learning literature--including learning with expert advice \citep{littlestone1994weighted}, multi-armed bandits \citep{auer2002nonstochastic}, and online convex optimization \citep{zinkevich2003online}--it has been shown that regret can often be upper bounded by refined data-dependent complexity measures \citep{cesa1996worst,allenberg2006hannan,neu2015first}.
There are several data-dependent complexity measures for adversarial regimes. 
The \emph{first-order} complexity $L^\star$ scales with the cumulative loss of the best action, yielding $\tilO(\sqrt{L^\star})$-type regret \citep{allenberg2006hannan,neu2015first,zimmert2021tsallis,ito2021parameter}. 
The \emph{second-order} complexity $Q_\infty$ quantifies the magnitude of loss fluctuations, leading to $\tilO(\sqrt{Q_\infty})$-type regret \citep{hazan2011better,wei2018more,ito2022adversarially}, path-length complexity $V_1=\sum_{t=1}^{T-1}\|\ell_{t+1}-\ell_t\|_1$ depends on the cumulative variation of the loss sequence, giving $\tilO(\sqrt{V_1})$-type regret \citep{wei2018more,ito2022adversarially}. 

Extending data-dependent complexity measures from bandits to Markov decision processes remains an active and challenging direction. 
For MDPs, first-order regret complexity was introduced by \citet{lee2020bias}, who showed that under unknown transitions one can achieve a first-order bound $\tilde{\mathcal{O}}(\sqrt{HS^2A L^\star})$ in global optimization. 
For policy optimization, \citet{dann2023best} established best-of-both-worlds guarantees with a first-order bound under known transitions. 

In contrast, second-order and path-length measures have been much less studied in MDPs. In this work, we introduce these notions (see \cref{def:first-order_bound,def:second-order_bound,def:path-length_bound}) and prove regret bounds for them when the transition is known. Our results cover both global optimization and policy optimization in the best-of-both-worlds setting.
The case with unknown transitions is still largely open. The main obstacle is the error caused by estimating the transition kernel. It is not clear how to control this error with data-dependent quantities. As a result, even first-order guarantees in the best-of-both-worlds regime are not known when transitions are unknown. For policy optimization with unknown transitions, it is not even known whether any data-dependent guarantee is possible without best-of-both-worlds adaptivity \cite{dann2023best}.

\paragraph{Variance-dependent bounds in the stochastic regime.}
In stochastic regimes, variance-aware and gap-dependent bounds originate from UCB-V \citep{audibert2007tuning}, which augments UCB with empirical variance-based bonuses and yields tighter regret when variances are small.
In the context of best-of-both-worlds algorithms, FTRL-based methods can incorporate variance information into the optimization, achieving variance-dependent regret guarantees while retaining robustness in adversarial settings \citep{ito2022adversarially,tsuchiya2023further,ito2023best}.

For episodic stochastic MDPs, \citet{zanette2019tighter} were among the first to obtain variance-dependent guarantees for model-based optimistic methods, introducing a maximum per-step conditional variance. Using this type of variance, \citet{simchowitz2019non} derive variance-aware gap-dependent guarantees. 
From a value-based perspective, the Monotonic Value Propagation (MVP) algorithm provides a baseline via optimistic value iteration with Bernstein-type bonuses \citep{zhang2021reinforcement}, and subsequent works establish variance-dependent bounds in terms of the maximum total variance \citep{zhou2023sharp,zhang2024settling,zheng2025gap}. 
Continuing this line, \citet{chen2025sharp} derive gap-dependent guarantees by the maximum conditional total variance, explicitly conditioning variance on the state.
Related variance-based guarantees have also been explored in linear contextual bandits and linear MDPs \citep{wagenmaker2022first,zhao2023variance}.
To our knowledge, variance-dependent guarantees have not been established within FTRL-based analyses for MDPs.

\subsection{Comparison with Existing Variance Measures}
\label{app:comparison_variance}
We restate our variance measures in \cref{def:total_variance,def:variance-to-go}.
\begin{equation}
\V = 
\max_{\pi\in\Pi}\sum_{s,a} q^{\pi} (s,a)\sigma^2(s,a) \in \brk*{0,\frac{H}{4}},
\end{equation}
\begin{equation}
\Varmax(s) = \max_{\pi\in\Pi,a\in\calA}\sum_{s',a'} q^{\pi}(s',a'\mid s,a)\sigma^2(s',a') \in \brk*{0,\frac{H}{4}}.
\end{equation}

Most model-based and value-based algorithms focus on the stochastic setting with unknown transitions and without corruption. 
In particular, \citet{zanette2019tighter,simchowitz2019non} introduce
\begin{align}
\Var^\star(s,a) = \sigma^2(s,a) + \Var_{s'\sim P(\cdot\mid s,a)}[V^{\pi^*}(s')]  \in \brk*{0,\frac{H^2}{4}},
\end{align}
and $\Q^\star = \max_{s,a}\Var^\star(s,a)$, where $\pi^*$ differs slightly from our definition of $\pist$, in that it is defined as a policy that simultaneously achieves the minimum of $Q(s,a)$ and $V(s)$ for all state-action pairs $(s,a)$ with uncorrupted loss.

Subsequent works further aggregate $\Var^\star$ along trajectories and consider total-variance measures.
\citet{zhou2023sharp,zhang2024settling} define an unconditional total variance, while \citet{chen2025sharp} introduce a conditional total variance,
\begin{align}
\Var_{\max} &= \max_{\pi\in\Pi} \sum_{s,a} q^{\pi}(s,a)\Var^\star(s,a) \in \brk*{0,\frac{H^3}{4}},\\
\Var_{\max}^c &= \max_{\pi\in\Pi, s\in\calS} \sum_{s',a'} \bar q^\pi(s',a'\mid s)\Var^\star(s',a') \in \brk*{0,\frac{H^3}{4}},
\end{align}
where $\bar q^\pi(s',a' \mid s)$ denotes the occupancy of $(s',a')$ over the entire trajectory conditioned on visiting state $s$.
In our setting, $\V$ and $\Varmax$ are analogous to  $\Var_{\max}$ and $\Var_{\max}^c$, respectively, and can be interpreted as natural variance measures for MDPs with known transitions.
Since we consider known transitions, the second term of $\Var^\star(s,a)$ is unnecessary and can be omitted. 
Moreover, $\Varmax$ is defined using conditional occupancy measures $q^{\pi}(s',a'\mid s,a)$ and captures variance only after visiting $(s,a)$, whereas $\Var_{\max}^c$ aggregates variance over the entire trajectory by conditioning on the event of visiting state $s$.
As a consequence, our variance measures are $H^2$-sharper than those based on $\Var^\star(s,a)$.
Moreover, unlike $\Var_{\max}^c$, our $\Varmax(s)$ is state-dependent, resulting in a more refined and potentially smaller regret bound.

The correspondence between $\V$ and $\Var_{\max}$, $\Varmax$ and $\Var_{\max}^c$ can also be seen from regret bounds.
Our variance-dependent leading term scales as $O(\sqrt{\V T})$, mirroring the $O(\sqrt{\Var_{\max}T})$ dependence in \citet{zhou2023sharp}.
In gap-dependent regret bounds, variance typically appears as a multiplicative coefficient of the suboptimality gap:
in our bound this coefficient is $H\Varmax(s)$ (whose worst-case value matches that of~\citealt{dann2023best}),
while \citet{simchowitz2019non} use $H\Q^\star$ and \citet{chen2025sharp} use $\min\{H^2,\Var_{\max}^c\}$.

\section{Regret Analysis of Optimistic Follow-the-Regularized-Leader}
In this section, we provide a regret analysis of optimistic follow-the-regularized-leader (OFTRL) for the MDP setting.
General OFTRL bounds of this type appear in \citet[Lemma 1]{ito2022adversarially} and \citet[Lemma 16]{tsuchiya2024corrupted}. For completeness, we restate the argument here. 
We then specialize the bound to our two instances, \cref{lem:OFTRL_log_barrier_occup} for global optimization and \cref{lem:OFTRL_log_barrier_policy} for policy optimization. These lemmas will be used in \cref{app:global_opt_proofs,app:policy_opt_proofs}, respectively.
Given a strictly convex function $\psi$, we use $D_{\psi}(y,x)\coloneqq \psi(y)-\psi(x)-\inpr*{\nabla\psi(x), y-x}$ to denote the Bregman divergence induced by $\psi$.

\begin{lem} 
\label{lem:OFTRL_general}
Let $\set{\ell_t}_{t=1}^T$ be a sequence of loss vectors.
Suppose that $p_t$ is defined by the OFTRL algorithm over a convex set $\calK$ and differentiable regularizers $\{\psi_t\}_{t=1}^{T+1}$ and loss predictions $\{m_t\}_{t=1}^{T+1}$:
\begin{align}
    p_t = \argmin_{p\in\calK}\set*{ \inpr*{ p, \sum_{\tau=1}^{t-1}\ell_\tau + m_t} + \psi_t(p) } 
    .
    \label{def:oftrl}
\end{align}
Then, for any $u\in\calK$ it holds  that
\begin{align}
    \sum_{t=1}^{T}\inpr*{ p_t-u, \ell_t }  
    &\leq \psi_{1}(u) - \psi_1(p_1) + \sum_{t=1}^T(\psi_{t+1}(u) - \psi_t(u) + \psi_t(p_{t+1}) - \psi_{t+1}(p_{t+1})) \\
    &\qquad + \sum_{t=1}^{T}\prn*{\inpr*{ p_t-p_{t+1}, \ell_t - m_t } -  D_{\psi_t}(p_{t+1}, p_t)} + \inpr*{ u - p_{T+1}, m_{T+1}}
    .
\end{align}
\end{lem}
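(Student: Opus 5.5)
The plan is the standard "be-the-leader" telescoping argument for OFTRL, carried out with the potential $F_t(p) \coloneqq \inpr{p, \sum_{\tau=1}^{t-1}\ell_\tau} + \psi_t(p)$. By definition, $p_t$ minimizes $F_t(p) + \inpr{p, m_t}$ over $\calK$. The first step is to decompose the regret as
\begin{equation}
\inpr{p_t - u, \ell_t} = \inpr{p_t - p_{t+1}, \ell_t - m_t} + \inpr{p_t - p_{t+1}, m_t} + \inpr{p_{t+1} - u, \ell_t}.
\end{equation}
The first term already appears on the right-hand side of the claim. What remains is to show that the sum of the last two terms is controlled by the penalty terms, the boundary term $\inpr{u - p_{T+1}, m_{T+1}}$, and a negative $-D_{\psi_t}(p_{t+1}, p_t)$.

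The key step is a first-order optimality estimate for $p_t$. Since $\psi_t$ is convex and differentiable and $p_t$ minimizes the convex function $G_t(p) \coloneqq F_t(p) + \inpr{p, m_t}$ over $\calK$, we have $\inpr{\nabla G_t(p_t), p - p_t} \ge 0$ for all $p \in \calK$. The function $G_t$ differs from $\psi_t$ only by a linear term, so its Bregman divergence equals $D_{\psi_t}$. This yields
\begin{equation}
G_t(p_{t+1}) \ge G_t(p_t) + D_{\psi_t}(p_{t+1}, p_t).
\end{equation}
Writing this out gives
\begin{equation}
F_t(p_t) + \inpr{p_t - p_{t+1}, m_t} \le F_t(p_{t+1}) - D_{\psi_t}(p_{t+1}, p_t).
\end{equation}
Next, use $F_{t+1}(p) = F_t(p) + \inpr{p, \ell_t} + \psi_{t+1}(p) - \psi_t(p)$ at $p = p_{t+1}$ to rewrite $F_t(p_{t+1})$. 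Summing over $t = 1, \dots, T$, the terms $F_t(p_t)$ telescope, which gives
\begin{equation}
\sum_{t}\inpr{p_t - p_{t+1}, m_t} + \sum_t \inpr{p_{t+1}, \ell_t} \le F_{T+1}(p_{T+1}) - F_1(p_1) + \sum_t\bigl(\psi_t(p_{t+1}) - \psi_{t+1}(p_{t+1})\bigr) - \sum_t D_{\psi_t}(p_{t+1}, p_t).
\end{equation}

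Finally, handle the endpoint. The optimality of $p_{T+1}$ for $G_{T+1}$ gives $F_{T+1}(p_{T+1}) \le F_{T+1}(u) + \inpr{u - p_{T+1}, m_{T+1}}$. Expanding $F_{T+1}(u) = \inpr{u, \sum_t \ell_t} + \psi_{T+1}(u)$ and writing $\psi_{T+1}(u) = \psi_1(u) + \sum_t(\psi_{t+1}(u) - \psi_t(u))$, the term $\inpr{u, \sum_t \ell_t}$ moves to the left and pairs with $\sum_t \inpr{p_{t+1}, \ell_t}$. Together with $F_1(p_1) = \psi_1(p_1)$, this produces exactly the claimed bound.

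There is no real obstacle. The only care needed is bookkeeping: making sure the $m_t$ cross terms cancel correctly, using the "$+m_t$" in the optimality condition for $p_t$ and the "$+m_{T+1}$" only at the endpoint. One should also confirm that the Bregman inequality is valid at a constrained minimizer, which holds via the variational inequality. If the regularizer is infinite on the boundary, as with the log-barrier, the minimizers are interior and differentiability is unproblematic.
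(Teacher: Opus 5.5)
Your proposal is correct and is essentially the paper's argument: at each step it uses the same first-order-optimality/Bregman inequality $G_t(p_{t+1}) \ge G_t(p_t) + D_{\psi_t}(p_{t+1},p_t)$, it telescopes the potentials, and it compares $p_{T+1}$ with $u$ at the endpoint. The only difference is bookkeeping: the paper folds $m_t$ into its potential and then rearranges $\sum_t \inpr{p_{t+1}, m_t - m_{t+1}}$ by summation by parts, whereas you keep $m_t$ outside $F_t$, so the cross terms $\inpr{p_t - p_{t+1}, m_t}$ appear directly and cancel against your initial decomposition.
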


\begin{proof}
Define $\tilde{\psi}_t(p) = \psi_t(p) + \inpr*{ p,m_t }$, and let 
\begin{align}
    F_t(p) = \inpr*{ p, \sum_{\tau=1}^{t-1}\ell_\tau + m_t}+\psi_t(p) = \inpr*{ p, \sum_{\tau=1}^{t-1}\ell_\tau}+\tilde{\psi}_t(p).
\end{align}
Since $-\sum_{t=1}^T\inpr*{ u, \ell_t} = \tilpsi_{T+1}(u) - F_{T+1}(u)$, we have
\begin{align}
    \sum_{t=1}^T\inpr*{ p_t-u, \ell_t} 
    &= \sum_{t=1}^T\inpr*{ p_t, \ell_t} + \tilpsi_{T+1}(u) - F_{T+1}(u)\\
    &= \sum_{t=1}^T\inpr*{ p_t, \ell_t} + \tilpsi_{T+1}(u) - F_{T+1}(u) - F_1(p_1) + F_1(p_1) - F_{T+1}(p_{T+1}) + F_{T+1}(p_{T+1})\\
    &= \sum_{t=1}^T\inpr*{ p_t, \ell_t} + \tilpsi_{T+1}(u) - F_{T+1}(u) - F_1(p_1) + \sum_{t=1}^T(F_t(p_t)-F_{t+1}(p_{t+1})) + F_{T+1}(p_{T+1})\\
    &\leq \tilpsi_{T+1}(u) - \tilpsi_1(p_1) + \sum_{t=1}^T(F_t(p_t)-F_{t+1}(p_{t+1}) + \inpr*{ p_t, \ell_t})
    ,
    \label{eq:OFTRL1}
\end{align}
where the last inequality follows from $F_{T+1}(p_{T+1}) - F_{T+1}(u) \leq 0$ because $p_{T+1}$ is the minimizer of $F_{T+1}$ and $F_1(p_1) = \tilpsi_1(p_1)$ by definition.

Hence,
\begin{align}
    &F_t(p_t)-F_{t+1}(p_{t+1}) + \inpr*{ p_t, \ell_t} \\
    &= F_t(p_t)-F_{t}(p_{t+1}) + F_{t}(p_{t+1})-F_{t+1}(p_{t+1}) + \inpr*{ p_t, \ell_t}\\
    &= F_t(p_t)-F_{t}(p_{t+1}) - \inpr*{ p_{t+1}, \ell_t} + \tilpsi_t(p_{t+1}) - \tilpsi_{t+1}(p_{t+1}) + \inpr*{ p_t, \ell_t} \\
    &= F_t(p_t)-F_{t}(p_{t+1}) + \inpr*{ p_t - p_{t+1}, \ell_t} + \psi_t(p_{t+1}) - \psi_{t+1}(p_{t+1}) + \inpr*{ p_{t+1}, m_t - m_{t+1} } \\
    & \leq -D_{F_t}(p_{t+1}, p_t) + \inpr*{ p_t - p_{t+1}, \ell_t} + \psi_t(p_{t+1}) - \psi_{t+1}(p_{t+1}) + \inpr*{ p_{t+1}, m_t - m_{t+1} } , \label{eq:OFTRL_sum}
\end{align}
where the last inequality follows from first-order optimality. 
Since $F_t$ is convex and differentiable and $p_t\in\arg\min_{p\in\calK}F_t(p)$, we have
$\langle\nabla F_t(p_t),\,y-p_t\rangle\ge 0$ for all $y\in\calK$.
Using the definition of the Bregman divergence
$D_{F_t}(p_{t+1},p_t)
= F_t(p_{t+1})-F_t(p_t)-\langle p_{t+1}-p_t,\,\nabla F_t(p_t)\rangle$, 
we obtain
\begin{align}
F_t(p_t)-F_t(p_{t+1})
= -D_{F_t}(p_{t+1},p_t)-\langle\nabla F_t(p_t),\,p_{t+1}-p_t\rangle
\le -D_{F_t}(p_{t+1},p_t).
\end{align}

Then, from \cref{eq:OFTRL1,eq:OFTRL_sum} and using that adding a linear term does not change the Bregman divergence ($D_{F_t}= D_{\psi_t}$), we obtain
\begin{align}
    \sum_{t=1}^T\inpr*{ p_t-u, \ell_t} &\leq \tilpsi_{T+1}(u) - \tilpsi_1(p_1) + \sum_{t=1}^T(\psi_t(p_{t+1}) - \psi_{t+1}(p_{t+1}) -D_{\psi_t}(p_{t+1}, p_t)) \\
    &\qquad + \sum_{t=1}^T \inpr*{ p_t - p_{t+1}, \ell_t} + \sum_{t=1}^T \inpr*{ p_{t+1}, m_t - m_{t+1} } \\
    &= \psi_{T+1}(u) - \psi_1(p_1) + \sum_{t=1}^T(\psi_t(p_{t+1}) - \psi_{t+1}(p_{t+1}) -D_{\psi_t}(p_{t+1}, p_t))\\
    &\qquad + \sum_{t=1}^T \inpr*{ p_t - p_{t+1}, \ell_t} + \sum_{t=1}^T \inpr*{ p_{t+1} - p_t, m_t } + \inpr*{ u - p_{T+1}, m_{T+1}} \\
    &= \psi_{1}(u) - \psi_1(p_1) + \sum_{t=1}^T(\psi_{t+1}(u) - \psi_t(u) + \psi_t(p_{t+1}) - \psi_{t+1}(p_{t+1}) -D_{\psi_t}(p_{t+1}, p_t))\\
    &\qquad + \sum_{t=1}^T \inpr*{ p_t - p_{t+1}, \ell_t - m_t } + \inpr*{ u - p_{T+1}, m_{T+1}}
    ,
\end{align}
which completes the proof.
\end{proof}

\begin{lem}
\label{lem:OFTRL_stab_bregman}
Let $D_{\phi}$ denote the Bregman divergence associated with $\phi(x)=-\ln x$ and define $g(x)=x-\ln(1+x)$.
Then, for any $x > 0$ and $a \geq -1/x$, it holds that
\begin{align} 
\max_{y \in \Rp} \set*{a(x-y) - D_{\phi}(y,x)} &= g(ax).
\end{align}
\end{lem}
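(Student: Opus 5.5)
This is a one-dimensional concave maximization, so I will write out the Bregman divergence explicitly, locate the stationary point, and substitute it back. For $\phi(x)=-\ln x$ we have $\phi'(x)=-1/x$, so
\begin{align}
D_\phi(y,x) = -\ln y + \ln x + \frac{y-x}{x} = \frac{y}{x} - 1 - \ln\frac{y}{x}.
\end{align}
It is convenient to set $r = y/x \in \Rp$. The objective then becomes
\begin{align}
f(r) = ax(1-r) - (r - 1 - \ln r) = (ax+1)(1-r) + \ln r,
\end{align}
a function of the single variable $r>0$, with $x$ and $a$ entering only through $c \coloneqq ax+1$. The hypothesis $a\ge -1/x$ says exactly that $c \ge 0$.

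Next I will argue that $f$ is strictly concave on $\Rp$, since $f''(r) = -1/r^2<0$, so any stationary point is the global maximizer. The stationarity condition $f'(r) = -c + 1/r = 0$ gives $r^\star = 1/c = 1/(1+ax)$, which lies in $\Rp$ whenever $c>0$. Substituting,
\begin{align}
f(r^\star) = c\prn*{1-\tfrac1c} - \ln c = ax - \ln(1+ax) = g(ax).
\end{align}
This proves the claim for $a > -1/x$.

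The only delicate point is the boundary case $c=0$, i.e.\ $a=-1/x$. There $f(r)=\ln r$ is unbounded above and the maximum is not attained, while $g(ax)=g(-1)=-1-\ln 0=+\infty$. I will therefore state that at the boundary both sides equal $+\infty$, reading the maximum as a supremum and using the convention $-\ln 0=+\infty$. Apart from this, the argument is a direct calculation with no real obstacle.
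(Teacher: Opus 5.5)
Your proof is correct and takes the same approach as the paper, which maximizes over $y$ directly and defers the details to Ito et al.\ (2022, Lemma 5). The substitution $r=y/x$, strict concavity, and the stationary point $r^\star=1/(1+ax)$ give exactly $g(ax)$. Your treatment of the boundary case $a=-1/x$, reading the maximum as a supremum equal to $+\infty$ on both sides, is also fine; that case never arises in the paper's applications, since there $ax\ge -1/2$.
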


\begin{proof}
This can be proven by simply considering the worst-case w.r.t.~$y$ and the proof can be found \textit{e.g.,}~in \citet[Lemma 5]{ito2022adversarially}.
\end{proof}

The following lemma will be used in the regret analysis for global optimization. There are a few prior works that analyze global optimization with time-varying log-barrier learning rates (see \citealt[Appendix C.7.2]{jin2023no} for a related approach).
\begin{lem}[OFTRL with global optimization]
\label{lem:OFTRL_log_barrier_occup}  
Suppose that a sequence of occupancy measures 
$p_1, \dots, p_T \in \Omega(P)$
is given by OFTRL in \cref{def:oftrl} with regularizer $\psi_t$ given by 
$\psi_t(p) = \sum_{s,a} \frac{1}{\eta_t(s, a)}\ln\prn[\big]{\frac{1}{p(s,a)}}$ as in \cref{eq:epoch_FTRL},
for some nonincreasing learning rate $\eta_t(s, a)$ with $\eta_1(s, a)=\eta_1$ for all $s,a$,
and let losses $\set{\ell_t}_{t=1}^T$ and loss predictions $\set{m_t}_{t=1}^{T+1}$ satisfy
\begin{align}
    \eta_t(s,a) p_t(s,a)(\ell_t(s,a) - m_t(s,a)) &\geq -\frac{1}{2}
    \label{eq:OFTRL_constraint}
\end{align}
 for all $t,s,a$.
Then, for any $u\in\Omega(P)$, it holds that
\begin{align}
    \sum_{t=1}^T \inpr*{ p_t-u, \ell_t} 
    &\leq  \frac{SA\ln(SAT)}{\eta_1} + \sum_{t=1}^T \sum_{s,a} \prn*{\frac{1}{\eta_{t+1}(s,a)} - \frac{1}{\eta_t(s,a)}}\ln(SAT)\\
    &\qquad +  \sum_{t=1}^T  \sum_{s,a}  \eta_t(s,a)p_t(s,a)^2(\ell_t(s,a)-m_t(s,a))^2\\
    &\qquad + \frac{1}{T}\sum_{t=1}^T \inpr*{ -u + \frac{1}{SA}\sum_{s,a} p^{\max}_{s,a}, \ell_t } + 2H\nrm*{m_{T+1}}_{\infty}, 
\end{align}
where $p^{\max}_{s,a}$ denotes the occupancy measure induced by a policy that maximizes the probability of visiting state-action pair $(s,a)$ in transition $P$.
\end{lem}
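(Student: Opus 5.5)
The plan is to apply \cref{lem:OFTRL_general} with the log-barrier regularizers, but against a smoothed comparator rather than $u$ itself. The comparator $u$ may have zero entries, where $\psi_t(u)=\infty$. So we first replace $u$ by
\[
\tilde u = \Bigl(1-\tfrac1T\Bigr)u + \tfrac{1}{T}\cdot\tfrac{1}{SA}\sum_{s,a}p^{\max}_{s,a}\in\Omega(P),
\]
using convexity of $\Omega(P)$. We then write
\[
\sum_t\inpr{p_t-u,\ell_t}=\sum_t\inpr{p_t-\tilde u,\ell_t}+\sum_t\inpr{\tilde u-u,\ell_t}.
\]
The second sum equals exactly $\frac1T\sum_t\inpr{-u+\frac{1}{SA}\sum_{s,a}p^{\max}_{s,a},\ell_t}$, which is the fourth term of the claim. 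The key property is that $\tilde u(s,a)\ge \frac{1}{TSA}p^{\max}_{s,a}(s,a)$. For reachable $(s,a)$ we take the lower bound $p^{\max}_{s,a}(s,a)$ as positive. Unreachable pairs can be removed from the problem, since every occupancy measure vanishes there.

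Next we apply \cref{lem:OFTRL_general} with $u=\tilde u$ and bound each term.

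\textbf{Penalty terms.} We have $\psi_1(\tilde u)-\psi_1(p_1)=\frac1{\eta_1}\sum_{s,a}\ln\frac{p_1(s,a)}{\tilde u(s,a)}$. Since $p_1(s,a)\le 1$ and $\tilde u(s,a)\gtrsim 1/(SAT)$, this is at most $\frac{SA\ln(SAT)}{\eta_1}$, modulo the $p^{\max}(s,a)$ factor, which we absorb as above. Similarly,
\[
\psi_{t+1}(\tilde u)-\psi_t(\tilde u)=\sum_{s,a}\Bigl(\tfrac1{\eta_{t+1}}-\tfrac1{\eta_t}\Bigr)\ln\tfrac1{\tilde u(s,a)}\le\sum_{s,a}\Bigl(\tfrac1{\eta_{t+1}}-\tfrac1{\eta_t}\Bigr)\ln(SAT).
\]
The terms $\psi_t(p_{t+1})-\psi_{t+1}(p_{t+1})=-\sum_{s,a}\bigl(\tfrac1{\eta_{t+1}}-\tfrac1{\eta_t}\bigr)\ln\tfrac1{p_{t+1}(s,a)}$ are nonpositive, because the learning rates are nonincreasing and $p_{t+1}\le1$, so we drop them.

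\textbf{Stability terms.} The log-barrier is separable, so $D_{\psi_t}(p_{t+1},p_t)=\sum_{s,a}\frac1{\eta_t(s,a)}D_\phi(p_{t+1}(s,a),p_t(s,a))$. We relax the maximization over $p_{t+1}\in\Omega(P)$ to a coordinatewise maximization over $\Rp$. For each coordinate we apply \cref{lem:OFTRL_stab_bregman} with $a=\eta_t(s,a)(\ell_t-m_t)(s,a)$ and $x=p_t(s,a)$. This gives the bound $\frac1{\eta_t}g(\eta_t p_t(\ell_t-m_t))$. The lemma's hypothesis $ax\ge -1$ holds by \cref{eq:OFTRL_constraint}, which in fact gives $ax\ge-1/2$. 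Using the elementary inequality $g(z)=z-\ln(1+z)\le z^2$ for $z\ge-1/2$, we obtain $\eta_t(s,a)p_t(s,a)^2(\ell_t-m_t)^2$, as claimed.

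\textbf{Boundary term.} We bound $\inpr{\tilde u-p_{T+1},m_{T+1}}\le(\|\tilde u\|_1+\|p_{T+1}\|_1)\|m_{T+1}\|_\infty=2H\|m_{T+1}\|_\infty$, since every occupancy measure has total mass $H$.

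The main obstacle is the stability step: confirming that the coordinatewise relaxation is legitimate, which holds because the penalty decomposes and $\Omega(P)\subset\Rp^{SA}$, and that $g(z)\le z^2$ holds on the whole range $z\ge-1/2$. The latter is a routine calculus check, and it also works for large positive $z$ because $g(z)\le z\le z^2$ once $z\ge1$.
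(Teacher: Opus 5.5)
Your skeleton is the same as the paper's: the smoothed comparator $\tilde u$, \cref{lem:OFTRL_general}, the coordinatewise stability bound via \cref{lem:OFTRL_stab_bregman} and $g(z)\le z^2$, and H\"older for the boundary term. However, the penalty step fails as written. All you know is $\tilde u(s,a)\ge p^{\max}_{s,a}(s,a)/(SAT)$, not $\tilde u(s,a)\gtrsim 1/(SAT)$. The quantity $p^{\max}_{s,a}(s,a)$ is fixed by the kernel $P$ and can be arbitrarily small, e.g.\ if every action leads to $s$ with probability $\epsilon$. Mere positivity does not let you absorb it into $\ln(SAT)$.

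Concretely, your claim $\psi_{t+1}(\tilde u)-\psi_t(\tilde u)\le\sum_{s,a}\bigl(\frac{1}{\eta_{t+1}(s,a)}-\frac{1}{\eta_t(s,a)}\bigr)\ln(SAT)$ is false in general. Every occupancy measure puts at most $p^{\max}_{s,a}(s,a)$ on $(s,a)$, so if $u(s,a)=0$ then $\tilde u(s,a)\le p^{\max}_{s,a}(s,a)/T$. Hence $\ln\frac{1}{\tilde u(s,a)}\ge\ln\frac{T}{p^{\max}_{s,a}(s,a)}>\ln(SAT)$ as soon as $p^{\max}_{s,a}(s,a)<1/(SA)$. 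Having dropped the terms $\psi_t(p_{t+1})-\psi_{t+1}(p_{t+1})$, your argument only yields the lemma with $\ln(SAT)$ replaced by $\ln\bigl(SAT/\min_{s,a}p^{\max}_{s,a}(s,a)\bigr)$. That is a transition-dependent quantity that is not in the statement. The same issue affects your initial term, where you used $p_1(s,a)\le 1$.

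The missing idea is that the terms you discard are exactly what cancels the $p^{\max}$ dependence. Keep them and pair them with the comparator terms:
\[
\psi_{t+1}(\tilde u)-\psi_t(\tilde u)+\psi_t(p_{t+1})-\psi_{t+1}(p_{t+1})
=\sum_{s,a}\Bigl(\frac{1}{\eta_{t+1}(s,a)}-\frac{1}{\eta_t(s,a)}\Bigr)\ln\frac{p_{t+1}(s,a)}{\tilde u(s,a)}
\le\sum_{s,a}\Bigl(\frac{1}{\eta_{t+1}(s,a)}-\frac{1}{\eta_t(s,a)}\Bigr)\ln\frac{SAT\,p_{t+1}(s,a)}{p^{\max}_{s,a}(s,a)}.
\]
Then use $p_{t+1}(s,a)\le p^{\max}_{s,a}(s,a)$. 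This holds because $p_{t+1}\in\Omega(P)$ is the occupancy measure of some policy and $p^{\max}_{s,a}$ maximizes the visitation probability of $(s,a)$, so you get exactly the $\ln(SAT)$ factor.

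Likewise, bound the initial term with $p_1(s,a)\le p^{\max}_{s,a}(s,a)$ instead of $p_1(s,a)\le 1$. This gives $\frac{1}{\eta_1}\sum_{s,a}\ln\frac{p_1(s,a)}{\tilde u(s,a)}\le\frac{SA\ln(SAT)}{\eta_1}$. This is how the paper's proof handles the penalty, and with this change the rest of your argument goes through unchanged.
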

\begin{proof}
    Let
    \begin{align}
    u' = \prn*{1-\frac{1}{T}}u + \frac{1}{TSA}\sum_{s,a} p^{\max}_{s,a}.
    \end{align}
    Then we have $u'\in \Omega(P)$, since $u\in\Omega(P)$, each $p^{\max}_{s,a}\in\Omega(P)$, and $\Omega(P)$ is convex. We also define $\phi(x) = \ln\prn*{\frac{1}{x}}$.
    Thus, by the definition of the regularizer, we have
    \begin{align}
        \psi_t(u') &= \sum_{s,a} \frac{1}{\eta_t(s,a)}\ln\prn*{\frac{1}{u'(s,a)}} \leq \sum_{s,a} \frac{1}{\eta_t(s,a)}\ln\prn*{\frac{SAT}{p^{\max}_{s,a}(s,a)}}.
    \end{align}
    First, for any $u \in \Omega(P)$, we decompose the regret as 
    \begin{align}
        \sum_{t=1}^T \inpr*{ p_t-u, \ell_t}  
        &= \sum_{t=1}^T \inpr*{ p_t-u', \ell_t} + \sum_{t=1}^T \inpr*{ u'-u, \ell_t} \\
        &= \sum_{t=1}^T \inpr*{ p_t-u', \ell_t} + \frac{1}{T}\sum_{t=1}^T \inpr*{ -u + \frac{1}{SA}\sum_{s,a} p^{\max}_{s,a}, \ell_t }.\label{eq:OFTRL_decom_occup}
    \end{align}
    Using \cref{lem:OFTRL_general}, the first term in the last equality is upper-bounded by
    \begin{align}
        \sum_{t=1}^{T}\inpr*{ p_t-u', \ell_t }  
        &\leq \underbrace{\psi_{1}(u') - \psi_1(p_1) + \sum_{t=1}^T(\psi_{t+1}(u') - \psi_t(u') + \psi_t(p_{t+1}) - \psi_{t+1}(p_{t+1}))}_{\text{penalty-term}} \\
        &\qquad + \underbrace{\sum_{t=1}^{T}\prn*{\inpr*{ p_t-p_{t+1}, \ell_t - m_t } -  D_{\psi_t}(p_{t+1}, p_t)} + \inpr*{ u' - p_{T+1}, m_{T+1}}}_{\text{stability-term}}
        .
    \end{align}
    The penalty-term can be upper bounded by
    \begin{align}
        \text{penalty-term} &\leq \sum_{s,a}\frac{1}{\eta_1}\ln\prn*{\frac{p_1(s,a)}{u'(s,a)}} + \sum_{t=1}^T \sum_{s,a} \prn*{\frac{1}{\eta_{t+1}(s,a)} - \frac{1}{\eta_t(s,a)}}\prn*{\phi(u'(s,a)) - \phi(p_{t+1}(s,a))}\\
        &\leq \sum_{s,a}\frac{1}{\eta_1}\ln\prn*{\frac{p_1(s,a)}{u'(s,a)}} + \sum_{t=1}^T \sum_{s,a}\prn*{\frac{1}{\eta_{t+1}(s,a)} - \frac{1}{\eta_t(s,a)}}\ln\prn*{\frac{p_{t+1}(s,a)}{u'(s,a)}}\\
        &\leq \sum_{s,a}\frac{1}{\eta_1}\ln\prn*{\frac{SATp_1(s,a)}{p^{\max}_{s,a}(s,a)}} + \sum_{t=1}^T \sum_{s,a}\prn*{\frac{1}{\eta_{t+1}(s,a)} - \frac{1}{\eta_t(s,a)}}\ln\prn*{\frac{SATp_{t+1}(s,a)}{p^{\max}_{s,a}(s,a)}}\\
        &\leq \frac{SA\ln\prn*{SAT}}{\eta_1} + \sum_{t=1}^T \sum_{s,a}\prn*{\frac{1}{\eta_{t+1}(s,a)} - \frac{1}{\eta_t(s,a)}}\ln\prn*{SAT},
        \label{eq:OFTRL_log_penalty}
    \end{align}
    where the last inequality follows from $p_\tau(s,a) \leq p^{\max}_{s,a}(s,a)$ for all $\tau \leq T + 1$.
    
    Next, we bound the stability term. The Bregman divergence $D_{\psi_t}(p_{t+1},p_t)$ can be written as
    \begin{align}
        D_{\psi_t}(p_{t+1}(s,a), p_t(s,a)) &= \sum_{s,a} \frac{1}{\eta_t(s,a)}D_{\phi}(p_{t+1}(s,a), p_t(s,a))
        ,
    \end{align}
    where we recall that $\phi(x)=-\ln x$ and $g(x)=x-\ln(1+x)$. 
    By using \cref{lem:OFTRL_stab_bregman}, we have
    \begin{align}
        &\inpr*{ p_t- p_{t+1}, \ell_t - m_t} - D_{\psi_t}(p_{t+1}, p_t) \\
        &\leq \sum_{s,a} \prn*{(\ell_t(s,a) - m_t(s,a))(p_t(s,a)-p_{t+1}(s,a)) - \frac{1}{\eta_t(s,a)}D_{\phi}(p_{t+1}(s,a), p_t(s,a))}\\
        &\leq \sum_{s,a}\frac{1}{\eta_t(s,a)} g(\eta_t(s,a)(\ell_t(s,a)-m_t(s,a))p_t(s,a)) \tag{by \cref{lem:OFTRL_stab_bregman} and \cref{eq:OFTRL_constraint}}\\
        &\leq \sum_{s,a}\eta_t(s,a)p_t(s,a)^2(\ell_t(s,a)-m_t(s,a))^2,
    \end{align}
    where the last inequality follows from
    $g(x) = x - \ln(1+x) \leq x^2$ for $x\geq -\frac{1}{2}$ and \cref{eq:OFTRL_constraint}.
    
    Therefore,
    \begin{align}
        \text{stability-term} 
        &=  \sum_{t=1}^{T}\prn*{\inpr*{ p_t-p_{t+1}, \ell_t - m_t} -  D_{\psi_t}(p_{t+1}, p_t)} + \inpr*{ u - p_{T+1}, m_{T+1}} \\
        &\leq \sum_{t=1}^T  \sum_{s,a}  \eta_t(s,a)p_t(s,a)^2(\ell_t(s,a)-m_t(s,a))^2 + 2H\nrm*{m_{T+1}}_{\infty}, \label{eq:OFTRL_log_stability}
    \end{align}
    where the last bound follows from H\"older's inequality 
    $\inpr*{u-p_{T+1},\,m_{T+1}}\leq \nrm*{u-p_{T+1}}_{1}\,\nrm*{m_{T+1}}_{\infty}\le 2H\nrm*{m_{T+1}}_{\infty}$,
    since $u,p_{T+1}\in\Omega(P)$ imply $\nrm*{u-p_{T+1}}_{1}\le 2H$.
    Combining \cref{eq:OFTRL_decom_occup,eq:OFTRL_log_penalty,eq:OFTRL_log_stability} completes the proof.
\end{proof}

The following lemma will be used in the regret analysis for policy optimization.
\begin{lem}[OFTRL with policy optimization]
\label{lem:OFTRL_log_barrier_policy}  
Suppose that a sequence of probability vectors
$p_1, \dots, p_T \in \triangle(\calA)$
is given by OFTRL in \cref{def:oftrl} with regularizer $\psi_t(p)$ in \cref{eq:regularizer_policy} for some nonincreasing learning rate $\eta_t(a)$ with $\eta_1(a)=\eta_1$ for all $a$, and let losses $\set{\ell_t}_{t=1}^T$, loss predictions $\set{m_t}_{t=1}^{T+1}$ and $\set{x_t}_{t=1}^T$ be such that
\begin{align}
    \eta_t(a) p_t(a)(\ell_t(a) - m_t(a) + x_t) &\geq -\frac{1}{2} \label{eq:OFTRL_constraint_policy}
\end{align}
for all $t,a$.
Then for any $u\in\triangle(\mathcal{A})$, the OFTRL algorithm achieves
\begin{align}
    \sum_{t=1}^T \inpr*{ p_t-u, \ell_t} 
    &\leq  \frac{A\ln(AT^2)}{\eta_1} + \sum_{t=1}^T \sum_a \prn*{\frac{1}{\eta_{t+1}(a)} - \frac{1}{\eta_t(a)}}\ln(AT^2)\\
    &\qquad +  \sum_{t=1}^T  \sum_a  \eta_t(a)p_t(a)^2(\ell_t(a)-m_t(a) + x_t)^2\\
    &\qquad + \frac{1}{T^2}\sum_{t=1}^T \inpr*{ -u + \frac{1}{A}\one, \ell_t } + 2\nrm*{m_{T+1}}_{\infty}. 
\end{align}
\end{lem}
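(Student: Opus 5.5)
The plan mirrors the proof of \cref{lem:OFTRL_log_barrier_occup}, specialized to the simplex and with a shift $x_t$ absorbed into the stability analysis. The first step is to fix a comparator $u' = (1-\frac{1}{T^2})u + \frac{1}{T^2 A}\one \in \triangle(\calA)$. Then $u'(a) \ge \frac{1}{AT^2}$ for every $a$, and the regret decomposes as
\begin{equation}
\sum_{t=1}^T \inpr*{p_t-u,\ell_t} = \sum_{t=1}^T \inpr*{p_t-u',\ell_t} + \frac{1}{T^2}\sum_{t=1}^T \inpr*{-u+\frac{1}{A}\one,\ell_t}.
\end{equation}
The second sum is exactly the mixing term in the statement. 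We apply \cref{lem:OFTRL_general} to the first sum with comparator $u'$.

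For the penalty part of \cref{lem:OFTRL_general}, we collect terms coordinatewise. Since $\eta_t(a)$ is nonincreasing, each coefficient $\frac{1}{\eta_{t+1}(a)}-\frac{1}{\eta_t(a)}$ is nonnegative. We multiply it by $\phi(u'(a))-\phi(p_{t+1}(a)) = \ln\frac{p_{t+1}(a)}{u'(a)} \le \ln(AT^2)$, using $p_{t+1}(a)\le 1$ and $u'(a)\ge 1/(AT^2)$. The initial term gives $\psi_1(u')-\psi_1(p_1) \le \frac{A\ln(AT^2)}{\eta_1}$ by the same bound. This yields the first two terms of the claim.

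For the stability part, the key observation is that on the simplex $\inpr{p_t-p_{t+1},x_t\one} = 0$. We can therefore replace $\ell_t-m_t$ by $\ell_t-m_t+x_t\one$ without changing $\inpr{p_t-p_{t+1},\ell_t-m_t}$. We then bound coordinatewise, using that $D_{\psi_t}$ separates as $\sum_a \frac{1}{\eta_t(a)}D_\phi(p_{t+1}(a),p_t(a))$, and dropping the simplex constraint on $p_{t+1}$ only increases the maximum. \cref{lem:OFTRL_stab_bregman} applies with coefficient $\eta_t(a)(\ell_t(a)-m_t(a)+x_t)$, since the constraint \cref{eq:OFTRL_constraint_policy} ensures this coefficient is at least $-1/(2p_t(a)) \ge -1/p_t(a)$. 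Each coordinate contributes $\frac{1}{\eta_t(a)}g(\eta_t(a)p_t(a)(\ell_t(a)-m_t(a)+x_t))$. Since $g(z)\le z^2$ for $z\ge -1/2$, this is at most $\eta_t(a)p_t(a)^2(\ell_t(a)-m_t(a)+x_t)^2$.

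Finally, the leftover term $\inpr{u'-p_{T+1},m_{T+1}}$ is at most $\nrm{u'-p_{T+1}}_1\nrm{m_{T+1}}_\infty \le 2\nrm{m_{T+1}}_\infty$ by H\"older's inequality. Summing the three pieces gives the claim.

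The only delicate step is the stability step: one must notice that the constant shift $x_t$ is invisible to the simplex, so that the Bregman bound can be applied with the shifted vector. This is what makes the hypothesis \cref{eq:OFTRL_constraint_policy} the correct one. The rest is routine and parallels \cref{lem:OFTRL_log_barrier_occup}.
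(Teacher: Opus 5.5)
Your proposal is correct and follows the paper's proof step for step. It uses the same mixed comparator $u'=(1-\frac{1}{T^2})u+\frac{1}{AT^2}\one$, the same application of \cref{lem:OFTRL_general}, the same coordinatewise penalty bound via $u'(a)\ge 1/(AT^2)$ and $p_{t+1}(a)\le 1$, and the same trick of adding $x_t\one$ (invisible to $p_t-p_{t+1}$ on the simplex) before invoking \cref{lem:OFTRL_stab_bregman} and $g(z)\le z^2$ for $z\ge -1/2$. Using $u'$ rather than $u$ in the final H\"older step is in fact the exact form of the leftover term from the general lemma; the bound $2\nrm{m_{T+1}}_\infty$ is the same either way.
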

\begin{proof}
    Let $u'=\prn*{1-\frac{1}{T^2}}u + \frac{1}{AT^2}\one \in  \triangle(\calA)$ and $\phi(x) = \ln\prn*{\frac{1}{x}}$.
    Then, we have
    \begin{align}
        \psi_t(u') &= \sum_{a} \frac{1}{\eta_t(a)}\ln\prn*{\frac{1}{u'(a)}} \leq \sum_{a} \frac{1}{\eta_t(a)}\ln\prn*{AT^2} \leq \sum_{a} \frac{1}{\eta_t(a)}\ln\prn*{AT^2}.
    \end{align}
    First, for any $u \in  \triangle(\calA)$, we decompose the regret as 
    \begin{align}
        \sum_{t=1}^T \inpr*{ p_t-u, \ell_t}  
        &= \sum_{t=1}^T \inpr*{ p_t-u', \ell_t} + \sum_{t=1}^T \inpr*{ u'-u, \ell_t} \\
        &= \sum_{t=1}^T \inpr*{ p_t-u', \ell_t} + \frac{1}{T^2}\sum_{t=1}^T \inpr*{ -u + \frac{1}{A}\one, \ell_t }.\label{eq:OFTRL_decomp_policy}
    \end{align}
    For the first term, using \cref{lem:OFTRL_general}, we obtain
    \begin{align}
        \sum_{t=1}^{T}\inpr*{ p_t-u', \ell_t }  
        &\leq \underbrace{\psi_{1}(u') - \psi_1(p_1) + \sum_{t=1}^T(\psi_{t+1}(u') - \psi_t(u') + \psi_t(p_{t+1}) - \psi_{t+1}(p_{t+1}))}_{\text{penalty-term}} \\
        &\qquad + \underbrace{\sum_{t=1}^{T}\prn*{\inpr*{ p_t-p_{t+1}, \ell_t - m_t } -  D_{\psi_t}(p_{t+1}, p_t)} + \inpr*{ u' - p_{T+1}, m_{T+1}}}_{\text{stability-term}}
        .
    \end{align}
    
    Then, 
    \begin{align}
        \text{penalty-term} &\leq \frac{A\ln(AT^2)}{\eta_1} + \sum_{t=1}^T \sum_a \prn*{\frac{1}{\eta_{t+1}(a)} - \frac{1}{\eta_t(a)}}\prn*{\phi(u'(a)) - \phi(p_{t+1}(a))}\\
        &\leq \frac{A\ln(AT^2)}{\eta_1} + \sum_{t=1}^T \sum_a \prn*{\frac{1}{\eta_{t+1}(a)} - \frac{1}{\eta_t(a)}}\ln(AT^2),
        \label{eq:OFTRL_log_penalty_policy}
    \end{align}
    where the last inequality follows from $ \phi(p_{t+1}(a)) \geq 0$.
    
    Next, we bound the stability term. The Bregman divergence $D_{\psi_t}(p_{t+1},p_t)$ can be written as
    \begin{align}
        D_{\psi_t}(p_{t+1}(a), p_t(a)) &= \sum_{a} \frac{1}{\eta_t(a)}D_{\phi}(p_{t+1}(a), p_t(a))
        ,
    \end{align}
    where $D_{\phi}$ denotes the Bregman divergence associated with $\phi(x)=-\ln (x)$. 
    Recall that $g(x)=x-\ln(1+x)$ by \cref{lem:OFTRL_stab_bregman}, we have
    \begin{align}
        &\inpr*{ p_t- p_{t+1}, \ell_t - m_t} - D_{\psi_t}(p_{t+1}, p_t) \\
        &= \inpr*{ p_t- p_{t+1}, \ell_t - m_t + x_t\one} - D_{\psi_t}(p_{t+1}, p_t) \\
        &\leq \sum_{a} \prn*{(\ell_t(a) - m_t(a) + x_t)(p_t(a)-p_{t+1}(a) ) - \frac{1}{\eta_t(a)}D_{\phi}(p_{t+1}(a), p_t(a))}\\
        &\leq \sum_{a}\frac{1}{\eta_t(a)} g(\eta_t(a)(\ell_t(a)-m_t(a) + x_t)p_t(a)) \tag{by \cref{lem:OFTRL_stab_bregman} and \cref{eq:OFTRL_constraint_policy}}\\
        &\leq \sum_{a}\eta_t(a)p_t(a)^2(\ell_t(a)-m_t(a)+x_t)^2,
    \end{align}
    where the last inequality follows from
    $g(x) = x - \ln(1+x) \leq x^2$ for $x\geq -\frac{1}{2}$ and \cref{eq:OFTRL_constraint_policy}.
    
    Therefore,
    \begin{align}
        \text{stability-term} 
        &=  \sum_{t=1}^{T}\prn*{\inpr*{ p_t-p_{t+1}, \ell_t - m_t} -  D_{\psi_t}(p_{t+1}, p_t)} + \inpr*{ u - p_{T+1}, m_{T+1}} \\
        &\leq \sum_{t=1}^T  \sum_a  \eta_t(a)p_t(a)^2(\ell_t(a)-m_t(a) + x_t)^2 + 2\nrm*{m_{T+1}}_{\infty}, \label{eq:OFTRL_log_stability_policy}
    \end{align}
    where the last bound follows from H\"older's inequality 
    $\inpr*{u-p_{T+1},\,m_{T+1}}\leq \nrm*{u-p_{T+1}}_{1}\,\nrm*{m_{T+1}}_{\infty}\le 2\nrm*{m_{T+1}}_{\infty}$,
    since $u,p_{T+1}\in \triangle(\calA)$ implies $\nrm*{u-p_{T+1}}_{1}\le 2$.
    
    Combining \cref{eq:OFTRL_decomp_policy,eq:OFTRL_log_penalty_policy,eq:OFTRL_log_stability_policy} completes the proof.
\end{proof}

\section{Regret Analysis of Global Optimization (deferred from \Cref{sec:global_optimization})}
\label{app:global_opt_proofs}
In this section, we provide the details omitted from \cref{sec:global_optimization}
and complete the regret analysis for \cref{thm:Occup_OPT,thm:Occup_OPT2}.

\subsection{Auxiliary Lemmas}
We first note the unbiasedness of the estimator in \cref{eq:loss_estimate_occup}.
For any state-action pair $(s,a)$,
\begin{align}
    \E_t{\brk*{\hat{\ell}_t(s,a)}} = m_t(s,a) + \frac{\ell_t(s,a) - m_t(s,a)}{q^{\pi_t}(s,a)}\E_t\brk*{\I_t(s,a)} = \ell_t(s,a).
\end{align}
Hence,
using this and $V^\pi(s_0;\ell_t)=\sum_{s,a}q^\pi(s,a)\ell_t(s,a)=\langle q^\pi,\ell_t\rangle$,
we can rewrite the regret as follows:
\begin{align}
\Reg_T
&= \E\brk*{\sum_{t=1}^T V^{\pi_t}(s_0;\ell_t)-\sum_{t=1}^T V^{\pio}(s_0;\ell_t)}\\
&= \E\brk*{\sum_{t=1}^T \inpr*{q^{\pi_t}-\qst, \ell_t}}\\
&= \E\brk*{\sum_{t=1}^T \inpr*{q^{\pi_t}-\qst, \hat{\ell}_t}}. 
\label{eq:regret_unbiased_hatell}
\end{align}

We also recall the loss-shifting technique introduced by \citet{jin2021best}, which is useful to prove logarithmic regret bounds in the stochastic regime.
\begin{lem}[special case of {\citealt[Lemma A.1.1]{jin2021best}}]
	\label{lem:loss_shifting_invariant}
	Fix the transition function $P$. 
    For any policy $\pi$ and loss function $\mathring \ell \colon \calS \times \calA \rightarrow \R$, define the invariant function $g \colon \calS \times \calA \rightarrow \R$ as
	\begin{equation}
	g^{ \pi }(s,a;\mathring{\ell}) \coloneq  Q^{\pi}(s,a;\mathring{\ell}) - V^{\pi}(s;\mathring{\ell}) - {\mathring \ell}(s,a). \label{eq:invariant_func}
	\end{equation}
	Then, it holds for any policy $\pi'$ that 
	\begin{align}
	\inpr*{q^{P, \pi'}(\cdot, \cdot), g^{\pi}(\cdot, \cdot;\mathring{\ell})}
	\coloneq \sum_{s, a} q^{\pi'}(s,a)\, g^{\pi }(s,a;\mathring{\ell})
	=  - V^{\pi}(s_0;\mathring{\ell}),
	\end{align}
	where $ V^{\pi}(s_0;\mathring{\ell})$ only depends on $\pi$ and $\mathring  \ell$ (but not $\pi'$). 
\end{lem}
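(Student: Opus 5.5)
The statement to prove is \cref{lem:loss_shifting_invariant}: $\sum_{s,a} q^{\pi'}(s,a)\,g^\pi(s,a;\mathring\ell) = -V^\pi(s_0;\mathring\ell)$ for every policy $\pi'$. The plan is a direct telescoping argument over the layers, using only the Bellman recursion and the flow-conservation property of occupancy measures under the known transition $P$.

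First I rewrite $g^\pi$ using the definition of $Q^\pi$. Since $Q^\pi(s,a;\mathring\ell)=\mathring\ell(s,a)+\E_{s'\sim P(\cdot\mid s,a)}[V^\pi(s';\mathring\ell)]$, we get
\begin{equation}
g^\pi(s,a;\mathring\ell)=\sum_{s'}P(s'\mid s,a)V^\pi(s';\mathring\ell)-V^\pi(s;\mathring\ell).
\end{equation}
Thus $g^\pi$ is the one-step ``expected next value minus current value.'' The loss term cancels completely, so only $V^\pi$ appears.

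Next I take the inner product with $q^{\pi'}$ and split by layers $h=0,\dots,H-1$. Consider the first part, $\sum_{s\in\calS_h,a}q^{\pi'}(s,a)\sum_{s'}P(s'\mid s,a)V^\pi(s')$. By flow conservation, $\sum_{s\in\calS_h,a}q^{\pi'}(s,a)P(s'\mid s,a)=q^{\pi'}(s')$ for $s'\in\calS_{h+1}$. So this part equals $\sum_{s'\in\calS_{h+1}}q^{\pi'}(s')V^\pi(s')$.

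For the second part, note that $\sum_a q^{\pi'}(s,a)=q^{\pi'}(s)$. It therefore equals $\sum_{s\in\calS_h}q^{\pi'}(s)V^\pi(s)$.

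Writing $W_h=\sum_{s\in\calS_h}q^{\pi'}(s)V^\pi(s;\mathring\ell)$, the total is $\sum_{h=0}^{H-1}(W_{h+1}-W_h)=W_H-W_0$. Here $W_H=V^\pi(s_H)=0$ by the terminal condition, and $W_0=q^{\pi'}(s_0)V^\pi(s_0)=V^\pi(s_0;\mathring\ell)$ since $q^{\pi'}(s_0)=1$. This gives $-V^\pi(s_0;\mathring\ell)$, which does not depend on $\pi'$.

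The only step needing care is the flow-conservation identity. It follows from the layered structure: each trajectory visits exactly one state per layer, so the probability of reaching $s'\in\calS_{h+1}$ is the sum over the predecessor pairs $(s,a)\in\calS_h\times\calA$. The one subtlety is the convention that $s_H$ is excluded from $\calS$. Since $V^\pi(s_H)=0$, transitions into $s_H$ contribute nothing, so no real obstacle arises.
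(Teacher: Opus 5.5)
Your proof is correct and complete. The loss term cancels in $g^\pi$, flow conservation in the layered MDP turns the inner product into the telescoping sum $\sum_{h}(W_{h+1}-W_h)$, and the boundary values $W_H=0$ and $W_0=V^\pi(s_0;\mathring\ell)$ give the claim. You also handle the convention $s_H\notin\calS$ correctly.

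The paper gives no proof of its own; it simply imports the identity from \citet{jin2021best}. A shorter route uses the performance-difference lemma, which the paper invokes elsewhere. It gives $V^{\pi'}(s_0;\mathring\ell)-V^{\pi}(s_0;\mathring\ell)=\sum_{s,a}q^{\pi'}(s,a)\bigl(Q^\pi(s,a;\mathring\ell)-V^\pi(s;\mathring\ell)\bigr)$. Combining this with $V^{\pi'}(s_0;\mathring\ell)=\sum_{s,a}q^{\pi'}(s,a)\mathring\ell(s,a)$ yields $\sum_{s,a}q^{\pi'}(s,a)g^\pi(s,a;\mathring\ell)=-V^\pi(s_0;\mathring\ell)$ in one line.

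Your layer-by-layer telescoping is exactly the argument that proves that lemma in a layered MDP, so the two routes agree in substance. Yours is self-contained and makes explicit that only linearity in $\mathring\ell$ is used, so it applies directly to the signed, possibly large $\till_t$ used in the paper. The performance-difference route avoids the layer bookkeeping.
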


The following lemma extends \citet[Lemma A.1.2]{jin2021best} from standard FTRL to OFTRL. It immediately follows from \cref{lem:loss_shifting_invariant}.
\begin{lem}
	\label{lem:invariant_with_oftrl}
	Consider the occupancy measure $q^{\pi_t}$ selected by OFTRL with regularizer $\psi_t$, loss sequence $\set{\hat\ell_\tau}_{\tau < t}$, and predictor $m_t$ over the decision set $\Omega(P)$. Then,
	\begin{align}
	q^{\pi_t} = \argmin_{q \in \Omega(P)} \set*{ \inpr*{q, \sum_{\tau=1}^{t-1} \hat\ell_\tau + m_t} + \psi_t(q) }
    = \argmin_{q \in \Omega(P)} \set*{ \inpr*{q, \sum_{\tau=1}^{t-1} (\hat\ell_\tau + g_\tau) + m_t} + \psi_t(q) }.
	\end{align}
	for any sequence of invariant functions $\set{g_\tau}_{\tau < t}$ which are constructed with hypothesized losses $\set{\mathring{\ell}_\tau}_{\tau < t}$ and policies $\set{\pi'_\tau}_{\tau < t}$. 
\end{lem}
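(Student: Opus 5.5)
The plan is to reduce the claim to \cref{lem:loss_shifting_invariant}. The only point to check is that adding $\sum_{\tau<t} g_\tau$ to the cumulative loss changes the OFTRL objective on $\Omega(P)$ by a constant independent of $q$. A constant shift does not move the argmin, which gives the stated equality.

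The first step is to fix $t$ and expand the two objectives for an arbitrary $q\in\Omega(P)$. They are
\begin{align}
F(q) = \inpr*{q, \sum_{\tau=1}^{t-1}\hat\ell_\tau + m_t} + \psi_t(q)
\quad\text{and}\quad
F'(q) = F(q) + \sum_{\tau=1}^{t-1}\inpr*{q, g_\tau}.
\end{align}
Each $q\in\Omega(P)$ is the occupancy measure $q^{\pi'}$ of some policy $\pi'$, namely $\pi'(a\mid s)\propto q(s,a)$. This is the standard correspondence between $\Omega(P)$ and policies under the known transition $P$, and it is the one already used in \cref{alg:OFTRL_GO}.

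The second step applies \cref{lem:loss_shifting_invariant} to each $\tau<t$. Here $g_\tau=g^{\pi'_\tau}(\cdot,\cdot;\mathring\ell_\tau)$, and the lemma gives $\inpr{q^{\pi'},g_\tau}=-V^{\pi'_\tau}(s_0;\mathring\ell_\tau)$. The right-hand side depends only on the hypothesized policy $\pi'_\tau$ and the loss $\mathring\ell_\tau$, not on $\pi'$. Summing over $\tau$ gives $F'(q)=F(q)-c_t$ with $c_t=\sum_{\tau<t}V^{\pi'_\tau}(s_0;\mathring\ell_\tau)$, where $c_t$ is constant over $\Omega(P)$.

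The last step concludes that the minimizers coincide. The minimizer is unique because the log-barrier $\psi_t$ is strictly convex and finite only on the relative interior, so the argmin equality holds as sets and as points.

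The only potentially delicate step is the identification of every element of $\Omega(P)$ with some policy's occupancy measure. I would state this briefly using the layered structure: the flow constraints of $\Omega(P)$ ensure that $q(s)=\sum_a q(s,a)$ equals the inflow from the previous layer, so the induced policy reproduces $q$. Everything else is immediate.
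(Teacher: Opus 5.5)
Your proposal is correct and matches the paper's argument. The paper simply states that the lemma follows immediately from \cref{lem:loss_shifting_invariant}: $\sum_{\tau<t}\inpr{q, g_\tau}$ is the same constant for every $q\in\Omega(P)$, so the two OFTRL objectives differ by a $q$-independent constant and share the same minimizer. Your identification of each $q\in\Omega(P)$ with some policy's occupancy measure supplies the detail the paper leaves implicit; equivalently, you could note that the invariance identity uses only the flow constraints defining $\Omega(P)$.
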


Since we use the OFTRL framework, we slightly modify the loss-shifting construction of \citet{jin2021best}.
In their analysis, the invariant function $g^{\pi_t}(s,a;\hat\ell)$ is defined using the estimated loss $\hat\ell_t$. 
By contrast, as can be seen from \cref{lem:OFTRL_log_barrier_occup,lem:OFTRL_log_barrier_policy}, the stability of OFTRL is controlled by the shifted loss $\till_t \coloneq \hat{\ell}_t - m_t$.
Accordingly, we construct the invariant function from $\hat\ell_t - m_t$ so that the invariance property in \cref{lem:invariant_with_oftrl} holds under OFTRL.

To this end, we define $\till_t$ by
\begin{align}
    \till_t(s,a) \coloneq \frac{\I_t(s,a)\prn*{\ell_t(s,a) - m_t(s,a)}}{q^{\pi_t}(s,a)} = \hat\ell_t(s,a) - m_t(s,a).
\end{align}
We then define the corresponding loss-shifting (invariant) function induced by $\till_t$ as
\begin{align}
    g_t(s,a) \coloneq Q^{\pi_t}(s,a;\till_t) - V^{\pi_t}(s;\till_t) - \till_t(s,a).
\end{align}
In what follows, we collect several basic properties of $g_t$ for bounding the regret. All of them hold for an arbitrary loss prediction $m_t\in[0,1]^{S\times A}$.

\begin{lem}
	\label{lem:loss_shifting_estimator1} 
    For any loss prediction $m_t\in[0,1]^{S\times A}$, it holds that
	\begin{align}
		\hatl_t(s,a) + g_t(s,a) - m_t(s,a) \geq \frac{-H}{q^{\pi_t}(s, a)}
	\end{align} 
	for all state-action pairs $(s,a)$.
\end{lem}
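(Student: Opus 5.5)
Since $\hatl_t(s,a)-m_t(s,a)=\till_t(s,a)$, the definition of $g_t$ gives the exact cancellation
\begin{align}
\hatl_t(s,a)+g_t(s,a)-m_t(s,a)=\till_t(s,a)+g_t(s,a)=Q^{\pi_t}(s,a;\till_t)-V^{\pi_t}(s;\till_t).
\end{align}
It therefore suffices to show $Q^{\pi_t}(s,a;\till_t)-V^{\pi_t}(s;\till_t)\ge -H/q^{\pi_t}(s,a)$. I would bound the two terms separately: a lower bound on $Q^{\pi_t}(s,a;\till_t)$ and an upper bound on $V^{\pi_t}(s;\till_t)$.

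For the $Q$ term, expand $Q^{\pi_t}(s,a;\till_t)=\sum_{s',a'}q^{\pi_t}(s',a'\mid s,a)\,\till_t(s',a')$ using the conditional occupancy. Each entry satisfies $\till_t(s',a')\ge -m_t(s',a')/q^{\pi_t}(s',a')\ge -1/q^{\pi_t}(s',a')$, and the negative part is nonzero only on the realized trajectory, i.e.\ only when $\I_t(s',a')=1$. Since at most one pair per layer is visited, we get
\begin{align}
Q^{\pi_t}(s,a;\till_t)\ge -\sum_{h'\ge h(s)}\frac{q^{\pi_t}(s_{t,h'},a_{t,h'}\mid s,a)}{q^{\pi_t}(s_{t,h'},a_{t,h'})}.
\end{align}
The key fact is $q^{\pi_t}(s',a')\ge q^{\pi_t}(s,a)\,q^{\pi_t}(s',a'\mid s,a)$, which holds because reaching $(s',a')$ through $(s,a)$ is one way of reaching $(s',a')$. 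Hence each summand is at most $1/q^{\pi_t}(s,a)$, and the sum over at most $H-h(s)$ layers gives $Q^{\pi_t}(s,a;\till_t)\ge -(H-h(s))/q^{\pi_t}(s,a)$.

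The $V$ term is the main obstacle. $V^{\pi_t}(s;\till_t)$ can be large and positive: if $\ell_t$ exceeds $m_t$ on a rarely visited pair in the trajectory, a large positive $\till_t$ there enters $V^{\pi_t}(s;\till_t)$. For $a$ itself, the contribution $\pi_t(a\mid s)Q^{\pi_t}(s,a;\till_t)$ appears on both sides of the difference. So I would write
\begin{align}
Q-V=(1-\pi_t(a\mid s))\,Q^{\pi_t}(s,a;\till_t)-\sum_{a'\neq a}\pi_t(a'\mid s)\,Q^{\pi_t}(s,a';\till_t).
\end{align}
The positive parts of the $Q^{\pi_t}(s,a';\till_t)$ with $a'\ne a$ are still problematic. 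I expect the intended argument tolerates this as follows. The lemma is only needed where the quantity multiplies $q^{\pi_t}(s,a)\eta_t$ in the constraint \cref{eq:OFTRL_constraint}, which is a lower bound. I would therefore check whether the sign structure of the trajectory indicators closes the gap. On the realized trajectory only one action $a_{t,h(s)}$ is taken at $s$. If $a_{t,h(s)}=a$, then for every $a'\ne a$ the terms $Q^{\pi_t}(s,a';\till_t)$ involve $\till_t$ only through downstream pairs, where the positive parts are unbounded. I would handle these by applying the same layerwise lower bound to the downstream negative parts and by recombining positive downstream contributions through the Bellman identity $V^{\pi_t}(s)=\sum_{a'}\pi_t(a'\mid s)Q^{\pi_t}(s,a')$.

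Concretely, I would try to show that the positive downstream parts appear in $Q^{\pi_t}(s,a;\till_t)$ with weight $q^{\pi_t}(\cdot\mid s,a)$, and in $V^{\pi_t}(s;\till_t)$ with weight $q^{\pi_t}(\cdot\mid s)$. Each downstream term carries the factor $1/q^{\pi_t}(\cdot)$, and the marginal occupancy satisfies $q^{\pi_t}(\cdot)\ge q^{\pi_t}(s)\,q^{\pi_t}(\cdot\mid s)$, so each positive contribution is bounded by $1/q^{\pi_t}(s)$. That would yield a bound of the form $-H/q^{\pi_t}(s,a)$ after combining $(H-h(s))$ and $h(s)$-type counts. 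If this fails, I would fall back on proving the bound for the $\min$ used in $\zeta_t$ only.
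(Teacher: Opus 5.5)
Your cancellation $\hatl_t+g_t-m_t=Q^{\pi_t}(s,a;\till_t)-V^{\pi_t}(s;\till_t)$ matches the paper's opening. So do your decomposition into $(1-\pi_t(a\mid s))Q^{\pi_t}(s,a;\till_t)-\sum_{b\neq a}\pi_t(b\mid s)Q^{\pi_t}(s,b;\till_t)$ and your lower bound $Q^{\pi_t}(s,a;\till_t)\ge -H/q^{\pi_t}(s,a)$ via $q^{\pi_t}(s',a')\ge q^{\pi_t}(s,a)\,q^{\pi_t}(s',a'\mid s,a)$.

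The gap is the second half. You leave it as a tentative plan, and the plan rests on a misdiagnosis. The positive parts are not uncontrolled. Since $\ell_t,m_t\in[0,1]$, also $\ell_t-m_t\le 1$, so $\till_t(s',a')\le \I_t(s',a')/q^{\pi_t}(s',a')$. This is the exact mirror of the bound you used on the negative side. Use $\pi_t(b\mid s)=q^{\pi_t}(s,b)/q^{\pi_t}(s)$ together with $\sum_{b\neq a}q^{\pi_t}(s,b)\,q^{\pi_t}(s',a'\mid s,b)\le q^{\pi_t}(s',a')$, which holds because these are disjoint ways of reaching $(s',a')$. Then one gets directly
\begin{align}
\sum_{b\neq a}\pi_t(b\mid s)Q^{\pi_t}(s,b;\till_t)\le\frac{1}{q^{\pi_t}(s)}\sum_{s',a'}\frac{\sum_{b\neq a}q^{\pi_t}(s,b)\,q^{\pi_t}(s',a'\mid s,b)}{q^{\pi_t}(s',a')}\,\I_t(s',a')\le\frac{H}{q^{\pi_t}(s)},
\end{align}
with no case analysis on $a_{t,h(s)}$ and no Bellman recombination.

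The closing step is also not a matter of ``combining $(H-h(s))$ and $h(s)$-type counts''. Your last paragraph is phrased in terms of $V$. If you bound all of $V^{\pi_t}(s;\till_t)$ by $H/q^{\pi_t}(s)$ and keep the full lower bound on $Q^{\pi_t}(s,a;\till_t)$, you only get $-H/q^{\pi_t}(s,a)-H/q^{\pi_t}(s)$, which is too weak in general. What closes the argument is pairing the bound above with the factor $(1-\pi_t(a\mid s))$ and the identity $q^{\pi_t}(s,a)=q^{\pi_t}(s)\pi_t(a\mid s)$:
\begin{align}
-(1-\pi_t(a\mid s))\frac{H}{q^{\pi_t}(s,a)}-\frac{H}{q^{\pi_t}(s)}=-\frac{H}{q^{\pi_t}(s,a)}\bigl((1-\pi_t(a\mid s))+\pi_t(a\mid s)\bigr)=-\frac{H}{q^{\pi_t}(s,a)}.
\end{align}
Your fallback of proving the bound only for the $\min$ inside $\zeta_t$ would not rescue the lemma either. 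The lemma is used to verify the pointwise constraint \cref{eq:OFTRL_constraint} for the shifted loss $\hatl_t+g_t$ itself.
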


\begin{proof}
Fix any state-action pair $(s,a)$. By the definitions of $g_t$ and $\till_t$, we have
\begin{align}
     \hatl_t(s,a) + g_t(s,a) - m_t(s,a)
    &= \hatl_t(s,a) + Q^{\pi_t}(s,a;\till_t) - V^{\pi_t}(s;\till_t) - \till_t(s,a) - m_t(s,a)\\
    &= Q^{\pi_t}(s,a;\till_t) - V^{\pi_t}(s;\till_t)\\
    &= (1 - \pi_t(a\mid s))Q^{\pi_t}(s,a;\till_t) - \sum_{b\neq a} \pi_t(b \mid s)Q^{\pi_t}(s,b;\till_t), \label{eq:loss_shifting_decomp1}
\end{align}
where the last line uses $V^{\pi_t}(s;\till_t)=\sum_{b}\pi_t(b\mid s)Q^{\pi_t}(s,b;\till_t)$.

We first lower bound $Q^{\pi_t}(s,a;\till_t)$. By the definition of the $Q$-function,
\begin{align}
    Q^{\pi_t}(s,a;\till_t) &= \sum_{h = h(s)}^{H-1}\sum_{(s',a') \in \calS_h \times  \calA}q^{\pi_t}(s',a'\mid s,a)\till_t(s',a')\\
    &= \sum_{h = h(s)}^{H-1}\sum_{(s',a') \in \calS_h \times  \calA}q^{\pi_t}(s',a'\mid s,a)\frac{\I_t(s',a')(\ell_t(s',a') - m_t(s',a'))}{q^{\pi_t}(s',a')}\\
    &\geq \frac{-1}{q^{\pi_t}(s,a)}\sum_{h = h(s)}^{H-1}\sum_{(s',a') \in \calS_h \times  \calA}\frac{q^{\pi_t}(s,a) q^{\pi_t}(s',a'\mid s,a)}{q^{\pi_t}(s',a')}\I_t(s',a')\\
    &\geq \frac{-1}{q^{\pi_t}(s,a)}\sum_{h = h(s)}^{H-1}\sum_{(s',a') \in \calS_h \times  \calA} \I_t(s',a') \geq \frac{-H}{q^{\pi_t}(s,a)},
\end{align}
where the third line uses $\ell_t(s',a') - m_t(s',a') \geq -1$,
the fourth line uses that $\frac{q^{\pi_t}(s,a)q^{\pi_t}(s',a'\mid s,a)}{q^{\pi_t}(s',a')}\le 1$,
and the last inequality follows from
$\sum_{(s',a')\in\calS_h\times\calA}\I_t(s',a')\le 1$ for each $h$.

Next, we evaluate the second term of \cref{eq:loss_shifting_decomp1}. 
By the similar argument as above, we have
\begin{align}
    \sum_{b\neq a} \pi_t(b \mid s)Q^{\pi_t}(s,b;\till_t) &= \sum_{b\neq a} \pi_t(b \mid s) \sum_{h = h(s)}^{H-1}\sum_{(s',a') \in \calS_h \times  \calA}q^{\pi_t}(s',a'\mid s,b)\till_t(s',a')\\
    &= \sum_{h = h(s)}^{H-1}\sum_{(s',a') \in \calS_h \times  \calA} \sum_{b\neq a} \pi_t(b \mid s) q^{\pi_t}(s',a'\mid s,b) \frac{\I_t(s',a')(\ell_t(s',a') - m_t(s',a'))}{q^{\pi_t}(s',a')}\\
    &\leq \frac{1}{q^{\pi_t}(s)}\sum_{h = h(s)}^{H-1}\sum_{(s',a') \in \calS_h \times  \calA} \sum_{b\neq a} \frac{q^{\pi_t}(s,b) q^{\pi_t}(s',a'\mid s,b)}{q^{\pi_t}(s',a')}\I_t(s',a')\\
    &\leq \frac{1}{q^{\pi_t}(s)}\sum_{h = h(s)}^{H-1}\sum_{(s',a') \in \calS_h \times  \calA} \I_t(s',a') \leq \frac{H}{q^{\pi_t}(s)}
    .
\end{align}
Combining the two bounds yields
\begin{align}
    \hatl_t(s,a) + g_t(s,a) - m_t(s,a) &\geq (1 - \pi_t(a \mid s))\frac{-H}{q^{\pi_t}(s,a)} - \frac{H}{q^{\pi_t}(s)} \\
    &= \frac{-H}{q^{\pi_t}(s,a)}\prn*{(1 - \pi_t(a \mid s)) + \pi_t(a \mid s)} = \frac{-H}{q^{\pi_t}(s,a)}.
\end{align}
\end{proof}

\begin{lem}
	\label{lem:loss_shifting_estimator2} 
    For an arbitrary loss prediction $m_t\in[0,1]^{S\times A}$, it holds that
	\begin{align}
		\E_t\brk*{ \prn*{ \hatl_t(s,a) + g_t(s,a) - m_t(s,a) }^2 } \leq \frac{2H^2}{q^{\pi_t}(s,a)}\prn*{1 - \pi_t(a\mid s)}
	\end{align} 
	for all state-action pairs $(s,a)$.
\end{lem}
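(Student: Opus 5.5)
The plan is to reuse the decomposition \cref{eq:loss_shifting_decomp1} from the proof of \cref{lem:loss_shifting_estimator1}:
\[
\hatl_t(s,a) + g_t(s,a) - m_t(s,a) = (1-\pi_t(a\mid s))\,Q^{\pi_t}(s,a;\till_t) - Y_t(s,a),
\]
where $Y_t(s,a) \coloneq \sum_{b\neq a}\pi_t(b\mid s)\,Q^{\pi_t}(s,b;\till_t)$. By $(x-y)^2\le 2x^2+2y^2$, it suffices to show two bounds:
\[
\E_t\brk*{Q^{\pi_t}(s,a;\till_t)^2}\le \frac{H^2}{q^{\pi_t}(s,a)},
\qquad
\E_t\brk*{Y_t(s,a)^2}\le \frac{H^2(1-\pi_t(a\mid s))}{q^{\pi_t}(s)} .
\]
Granting these, the total is at most
\[
2H^2\prn*{\frac{(1-\pi_t(a\mid s))^2}{q^{\pi_t}(s,a)}+\frac{1-\pi_t(a\mid s)}{q^{\pi_t}(s)}}
=\frac{2H^2(1-\pi_t(a\mid s))}{q^{\pi_t}(s,a)}\bigl((1-\pi_t(a\mid s))+\pi_t(a\mid s)\bigr),
\]
which is the claim. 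Here we used $q^{\pi_t}(s,a)=q^{\pi_t}(s)\pi_t(a\mid s)$.

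Both bounds follow from one template. Write a generic quantity as $Z=\sum_{h=h(s)}^{H-1} X_h$ with
\[
X_h=\sum_{(s',a')\in\calS_h\times\calA} w(s',a')\,\frac{\I_t(s',a')(\ell_t(s',a')-m_t(s',a'))}{q^{\pi_t}(s',a')} .
\]
For $Q^{\pi_t}(s,a;\till_t)$ take $w(s',a')=q^{\pi_t}(s',a'\mid s,a)$. For $Y_t(s,a)$ take $w(s',a')=\sum_{b\neq a}\pi_t(b\mid s)\,q^{\pi_t}(s',a'\mid s,b)$. The argument then has three steps.

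First, Cauchy--Schwarz over the at most $H$ layers gives $Z^2\le H\sum_h X_h^2$.

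Second, at most one pair per layer is visited, so the cross terms vanish and $X_h^2\le \sum_{(s',a')\in\calS_h\times\calA} w(s',a')^2\,\I_t(s',a')/q^{\pi_t}(s',a')^2$, using $(\ell_t-m_t)^2\le 1$. Taking $\E_t$ and using $\E_t[\I_t(s',a')]=q^{\pi_t}(s',a')$ yields $\E_t[X_h^2]\le\sum_{(s',a')\in\calS_h\times\calA} w(s',a')^2/q^{\pi_t}(s',a')$.

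Third, we bound one factor of $w$ by the ratio of occupancies. For $Q^{\pi_t}(s,a;\till_t)$ we have $q^{\pi_t}(s,a)\,q^{\pi_t}(s',a'\mid s,a)\le q^{\pi_t}(s',a')$, so $w/q^{\pi_t}(s',a')\le 1/q^{\pi_t}(s,a)$. For $Y_t(s,a)$ we have $\sum_{b\neq a}q^{\pi_t}(s,b)\,q^{\pi_t}(s',a'\mid s,b)\le q^{\pi_t}(s',a')$, hence $w\le q^{\pi_t}(s',a')/q^{\pi_t}(s)$. In both cases $\E_t[X_h^2]\le c\sum_{(s',a')\in\calS_h\times\calA} w(s',a')$, with $c=1/q^{\pi_t}(s,a)$ or $c=1/q^{\pi_t}(s)$ respectively.

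Finally, sum $w$ over a layer. Conditional occupancies in a given layer sum to $1$. Therefore the layer sum is $1$ for $Q^{\pi_t}(s,a;\till_t)$ and $\sum_{b\neq a}\pi_t(b\mid s)=1-\pi_t(a\mid s)$ for $Y_t(s,a)$. Multiplying by $H$ from Cauchy--Schwarz and by $H$ layers gives the two target bounds.

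The main point needing care is the third step for $Y_t(s,a)$. We must retain the factor $1-\pi_t(a\mid s)$, because it is what combines with the first term to produce exactly $(1-\pi_t(a\mid s))/q^{\pi_t}(s,a)$. That is why we bound only one copy of $w$ by $q^{\pi_t}(s',a')/q^{\pi_t}(s)$ and keep the other copy, whose layer sum yields $1-\pi_t(a\mid s)$. The remaining steps are routine, since the two per-pair occupancy inequalities are the same ones used in \cref{lem:loss_shifting_estimator1}.
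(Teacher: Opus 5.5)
Your proposal is correct and follows essentially the same route as the paper: you use the same split of $Q^{\pi_t}(s,a;\till_t)-V^{\pi_t}(s;\till_t)$ into the $(1-\pi_t(a\mid s))\,Q^{\pi_t}(s,a;\till_t)$ term and the $\sum_{b\neq a}$ term, the same Cauchy--Schwarz over layers with one visited pair per layer, and the same final combination via $q^{\pi_t}(s,a)=q^{\pi_t}(s)\pi_t(a\mid s)$. As in the paper, you bound one copy of the weight by an occupancy ratio and keep the other, so that its layer sum produces the factor $1-\pi_t(a\mid s)$.
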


\begin{proof}
Fix any $(s,a)$. By the definitions of $g_t$ and $\till_t$, we have
\begin{align}
    \E_t\brk*{\prn*{ \hatl_t(s,a) + g_t(s,a) - m_t(s,a) }^2}
    &= \E_t\brk*{\prn*{ \hatl_t(s,a) + Q^{\pi_t}(s,a;\till_t) - V^{\pi_t}(s;\till_t) - \till_t(s,a) - m_t(s,a) }^2 }\\
    &= \E_t\brk*{\prn*{ Q^{\pi_t}(s,a;\till_t) - V^{\pi_t}(s;\till_t) }^2 }\\
    &= \E_t\brk*{\prn*{ (1 - \pi_t(a\mid s))Q^{\pi_t}(s,a;\till_t) - \sum_{b\neq a} \pi_t(b \mid s)Q^{\pi_t}(s,b;\till_t)}^2} \\
    &\leq 2\E_t\brk*{(1 - \pi_t(a\mid s))^2Q^{\pi_t}(s,a;\till_t)^2 + \prn*{\sum_{b\neq a} \pi_t(b \mid s)Q^{\pi_t}(s,b;\till_t) }^2}, \label{eq:loss_shifting_decomp}
\end{align}
where the last inequality follows from $(x-y)^2\leq 2(x^2+y^2)$.

By the definition of the $Q$-function, the first term in \cref{eq:loss_shifting_decomp} is evaluated as
\begin{align}
    \E_t\brk*{Q^{\pi_t}(s,a;\till_t)^2}
    &= \E_t\brk*{\prn*{\sum_{h = h(s)}^{H-1}\sum_{(s',a') \in \calS_h \times \calA}q^{\pi_t}(s',a'\mid s,a)\till_t(s',a')}^2} \\ 
    &= \E_t\brk*{\prn*{\sum_{h = h(s)}^{H-1}\sum_{(s',a') \in \calS_h \times  \calA}q^{\pi_t}(s',a'\mid s,a)\frac{\I_t(s',a')(\ell_t(s',a') - m_t(s',a'))}{q^{\pi_t}(s',a')}}^2} \\
    &\leq H\E_t\brk*{\sum_{h = h(s)}^{H-1}\sum_{(s',a') \in \calS_h \times  \calA}q^{\pi_t}(s',a'\mid s,a)^2\frac{\I_t(s',a')(\ell_t(s',a') - m_t(s',a'))^2}{q^{\pi_t}(s',a')^2}},
\end{align}
where the last inequality applies the Cauchy--Schwarz inequality across at most $H$ stages combined with the fact that $\sum_{(s,a) \in \calS_h\times \calA} \I_t(s,a) \leq 1$. Then, this can be further bounded as
 \begin{align}
    & H\E_t\brk*{\sum_{h = h(s)}^{H-1}\sum_{(s',a') \in \calS_h \times  \calA}q^{\pi_t}(s',a'\mid s,a)^2\frac{\I_t(s',a')(\ell_t(s',a') - m_t(s',a'))^2}{q^{\pi_t}(s',a')^2}}\\
    &\leq H\sum_{h = h(s)}^{H-1}\sum_{(s',a') \in \calS_h \times  \calA}\frac{q^{\pi_t}(s',a'\mid s,a)^2}{q^{\pi_t}(s',a')}\\
    &\leq \frac{H}{q^{\pi_t}(s,a)}\sum_{h = h(s)}^{H-1}\sum_{(s',a') \in \calS_h \times  \calA}\frac{q^{\pi_t}(s,a)q^{\pi_t}(s',a'\mid s,a)}{q^{\pi_t}(s',a')}\cdot q^{\pi_t}(s',a'\mid s,a)\\
    &\leq \frac{H}{q^{\pi_t}(s,a)}\sum_{h = h(s)}^{H-1}\sum_{(s',a') \in \calS_h \times  \calA} q^{\pi_t}(s',a'\mid s,a) \leq \frac{H^2}{q^{\pi_t}(s,a)}, 
\end{align}
where the last inequality follows from
$\sum_{h = h(s)}^{H-1}\sum_{(s',a') \in \calS_h \times \calA} q^{\pi_t}(s',a'\mid s,a)\leq H$.
Consequently, we obtain
\begin{align}
    \E_t\brk*{Q^{\pi_t}(s,a;\till_t)^2} \leq \frac{H^2}{q^{\pi_t}(s,a)}. \label{eq:first_term_bound_loss_shifting_estimator}
\end{align}
For the second term in \cref{eq:loss_shifting_decomp}, by repeating the similar arguments, we have
\begin{align}
    &\E_t\brk*{\prn*{\sum_{b\neq a} \pi_t(b \mid s)Q^{\pi_t}(s,b;\till_t) }^2} \\ 
    &= \E_t\brk*{\prn*{\sum_{h = h(s)}^{H-1}\sum_{(s',a') \in \calS_h \times  \calA}\prn*{\sum_{b\neq a} \pi_t(b \mid s) q^{\pi_t}(s',a'\mid s,b)}\till_t(s',a')}^2} \\
    &= \E_t\brk*{\prn*{\sum_{h = h(s)}^{H-1}\sum_{(s',a') \in \calS_h \times  \calA}\prn*{\sum_{b\neq a} \pi_t(b \mid s) q^{\pi_t}(s',a'\mid s,b)}\frac{\I_t(s',a')(\ell_t(s',a') - m_t(s',a'))}{q^{\pi_t}(s',a')}}^2} \\
    &\leq H\E_t\brk*{\sum_{h = h(s)}^{H-1}\sum_{(s',a') \in \calS_h \times  \calA}\prn*{\sum_{b\neq a} \pi_t(b \mid s) q^{\pi_t}(s',a'\mid s,b)}^2\frac{\I_t(s',a') (\ell_t(s',a') - m_t(s',a'))^2}{q^{\pi_t}(s',a')^2}},
\end{align}
where the last inequality applies the Cauchy--Schwarz inequality across at most $H$ stages, 
combined with the fact that $\sum_{(s,a) \in \calS_h\times \calA} \I_t(s,a) \leq 1$.
This can be further bounded as
 \begin{align}
    & H\E_t\brk*{\sum_{h = h(s)}^{H-1}\sum_{(s',a') \in \calS_h \times  \calA}\prn*{\sum_{b\neq a} \pi_t(b \mid s) q^{\pi_t}(s',a'\mid s,b)}^2\frac{\I_t(s',a') (\ell_t(s',a') - m_t(s',a'))^2}{q^{\pi_t}(s',a')^2}}\\
    &\leq H\sum_{h = h(s)}^{H-1}\sum_{(s',a') \in \calS_h \times  \calA}\frac{\prn*{\sum_{b\neq a} \pi_t(b \mid s) q^{\pi_t}(s',a'\mid s,b)}^2}{q^{\pi_t}(s',a')}\\
    &= \frac{H}{q^{\pi_t}(s)}\sum_{h = h(s)}^{H-1}\sum_{(s',a') \in \calS_h \times  \calA}\frac{\sum_{c\neq a} q^{\pi_t}(s,c) q^{\pi_t}(s',a'\mid s,c)}{q^{\pi_t}(s',a')}
    \prn*{\sum_{b\neq a} \pi_t(b \mid s) q^{\pi_t}(s',a'\mid s,b)}\\
    &\leq \frac{H}{q^{\pi_t}(s)}\sum_{h = h(s)}^{H-1}\sum_{(s',a') \in \calS_h \times  \calA}\sum_{b\neq a} \pi_t(b \mid s) q^{\pi_t}(s',a'\mid s,b)\\
    &= \frac{H}{q^{\pi_t}(s)}\sum_{b\neq a} \pi_t(b \mid s) \sum_{h = h(s)}^{H-1}\sum_{(s',a') \in \calS_h \times  \calA}q^{\pi_t}(s',a'\mid s,b)\\
    &\leq \frac{H^2}{q^{\pi_t}(s)}\sum_{b\neq a} \pi_t(b \mid s) = \frac{H^2}{q^{\pi_t}(s)}(1 - \pi_t(a \mid s)).
\end{align}
Consequently, we obtain
\begin{align}
    \E_t\brk*{\prn[\Bigg]{\sum_{b\neq a} \pi_t(b \mid s)Q^{\pi_t}(s,b;\till_t) }^2}
    &\leq \frac{H^2}{q^{\pi_t}(s)}(1 - \pi_t(a \mid s)) . \label{eq:second_term_bound_loss_shifting_estimator}
\end{align}

Finally, combining \cref{eq:loss_shifting_decomp,eq:first_term_bound_loss_shifting_estimator,eq:second_term_bound_loss_shifting_estimator}, we have
\begin{align}
    \E_t\brk*{\prn*{ \hatl_t(s,a) + g_t(s,a) - m_t(s,a) }^2}
    &\leq 2\prn*{(1 - \pi_t(a\mid s))^2\frac{H^2}{q^{\pi_t}(s,a)} + \frac{H^2}{q^{\pi_t}(s)}(1 - \pi_t(a \mid s))} \\
    &\leq \frac{2H^2}{q^{\pi_t}(s,a)}\prn*{(1 - \pi_t(a\mid s))^2 + \pi_t(a\mid s) (1 - \pi_t(a \mid s))} \\
    &= \frac{2H^2}{q^{\pi_t}(s,a)}\prn*{1 - \pi_t(a\mid s)},
\end{align}
which is the desired bound.
\end{proof}

We next extend \cref{lem:loss_shifting_estimator2} to derive a variance-aware upper bound. The additional terms that arise can be controlled, and become $\polylog(T)$ when $m_t$ is chosen as in \cref{def:predictor_sequence2} (see \cref{lem:lower_order_stochastic}).

\begin{lem}
	\label{lem:loss_shifting_estimator3} 
    Under the stochastic regime with adversarial corruption, for an arbitrary loss prediction $m_t\in[0,1]^{S\times A}$, it holds that
	\begin{align}
		&\E_t\brk*{ \prn*{ \hatl_t(s,a) + g_t(s,a) - m_t(s,a) }^2 }\\
        &\leq \frac{4H\Varmax(s)}{q^{\pi_t}(s,a)}(1 - \pi_t(a \mid s)) +  \frac{4H}{q^{\pi_t}(s,a)^2}\sum_{s',a'}q^{\pi_t}(s',a')\E_t\brk*{(\ell_t(s', a') - \ell'_t(s',a') + \mu(s',a') -  m_t(s',a'))^2}
	\end{align} 
	for all state-action pairs $(s,a)$, where we recall that $\Varmax(s)$ is defined in \cref{def:variance-to-go}.
\end{lem}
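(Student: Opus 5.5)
We follow the proof of \cref{lem:loss_shifting_estimator2}, starting from the identity
\begin{align}
\hatl_t(s,a)+g_t(s,a)-m_t(s,a)=Q^{\pi_t}(s,a;\till_t)-V^{\pi_t}(s;\till_t)=\sum_{s',a'} w(s',a')\,\till_t(s',a'),
\end{align}
where $w(s',a')\coloneq q^{\pi_t}(s',a'\mid s,a)-\sum_b\pi_t(b\mid s)q^{\pi_t}(s',a'\mid s,b)$ is supported on layers $h\ge h(s)$. The new step is to split the residual inside $\till_t$ into a pure-noise part and a bias part:
\begin{align}
\ell_t(s',a')-m_t(s',a')=\underbrace{(\ell'_t(s',a')-\mu(s',a'))}_{\epsilon_t(s',a')}+\underbrace{(\ell_t(s',a')-\ell'_t(s',a')+\mu(s',a')-m_t(s',a'))}_{d_t(s',a')}.
\end{align}
Writing $X=\sum_{s',a'}w\,\I_t\epsilon_t/q^{\pi_t}$ and $Y=\sum_{s',a'}w\,\I_t d_t/q^{\pi_t}$, we use $(X+Y)^2\le 2X^2+2Y^2$. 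The two target terms, each carrying the factor $4H$, correspond to $2\E_t[X^2]$ and $2\E_t[Y^2]$.

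\textbf{Bias part $Y$.} We apply Cauchy--Schwarz over the at most $H$ layers, using that at most one pair per layer is visited:
\begin{align}
Y^2\le H\sum_{s',a'}w(s',a')^2\,\frac{\I_t(s',a')\,d_t(s',a')^2}{q^{\pi_t}(s',a')^2}.
\end{align}
Next we bound $|w(s',a')|\le 1/q^{\pi_t}(s,a)$. Both pieces satisfy this: $q^{\pi_t}(s',a'\mid s,a)\le 1$, and $\sum_b\pi_t(b\mid s)q^{\pi_t}(s',a'\mid s,b)\le q^{\pi_t}(s',a')/q^{\pi_t}(s)$, which is at most $1/q^{\pi_t}(s)\le 1/q^{\pi_t}(s,a)$. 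Then $\E_t[\I_t\,d_t^2]$ has to be handled carefully, because $d_t$ may be correlated with the trajectory only through $\ell'_t$ and the corruption. Both are fixed before the episode's randomness, so $\E_t[\I_t(s',a')d_t(s',a')^2]=q^{\pi_t}(s',a')\E_t[d_t(s',a')^2]$. Combining these facts gives
\begin{align}
2\E_t[Y^2]\le \frac{2H}{q^{\pi_t}(s,a)^2}\sum_{s',a'}q^{\pi_t}(s',a')\,\E_t[d_t(s',a')^2].
\end{align}
This is within the stated bound.

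\textbf{Noise part $X$.} We again split $X=(1-\pi_t(a\mid s))X_a-\sum_{b\ne a}\pi_t(b\mid s)X_b$, where $X_b$ uses weights $q^{\pi_t}(\cdot\mid s,b)$. This gives $X^2\le 2(1-\pi_t(a\mid s))^2X_a^2+2\bigl(\sum_{b\neq a}\pi_tX_b\bigr)^2$, exactly as in \cref{lem:loss_shifting_estimator2}. For each square we use Cauchy--Schwarz across layers as before, which yields factors of the form $H\sum_{s',a'} (\text{weight})^2\,\sigma^2(s',a')/q^{\pi_t}(s',a')$. We then bound $(\text{weight})^2/q^{\pi_t}(s',a')$ as in \cref{lem:loss_shifting_estimator2}, namely by $q^{\pi_t}(s',a'\mid s,a)/q^{\pi_t}(s,a)$, and respectively by $\sum_{b\ne a}\pi_t(b\mid s)q^{\pi_t}(s',a'\mid s,b)/q^{\pi_t}(s)$. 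The crucial difference from \cref{lem:loss_shifting_estimator2} is that the bound $\sum_{s',a'}q^{\pi_t}(s',a'\mid s,b)\le H$ is replaced by
\begin{align}
\sum_{s',a'}q^{\pi_t}(s',a'\mid s,b)\,\sigma^2(s',a')\le\Varmax(s).
\end{align}
This holds by the definition in \cref{def:variance-to-go}, with $\pi=\pi_t$ and first action $b$. The same algebra as before, $(1-\pi)^2/q(s,a)+(1-\pi)/q(s)\le(1-\pi)/q(s,a)$, then gives $\E_t[X^2]$ of order $H\Varmax(s)(1-\pi_t(a\mid s))/q^{\pi_t}(s,a)$, and the constant $4$ is attainable by tracking constants.

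\textbf{Main obstacle.} The conditional expectation of products in $X^2$ is the delicate point. After applying Cauchy--Schwarz per layer, we need $\E_t[\I_t(s',a')\epsilon_t(s',a')^2]=q^{\pi_t}(s',a')\sigma^2(s',a')$. This requires that $\ell'_t$ is independent of the realized trajectory and of $\calF_{t-1}$, which holds since $\ell'_t$ is i.i.d.\ and drawn fresh each episode. The constant bookkeeping must also be arranged so that the final factor is $4H$. We expect this to work because the Cauchy--Schwarz step avoids cross-layer covariance entirely. The price is the factor $H$, which is consistent with \cref{rem:Occup_OPT2}: there the factor is saved under cross-layer uncorrelatedness, which suggests the authors used exactly this Cauchy--Schwarz step.
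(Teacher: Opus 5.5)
Your route uses the same ingredients as the paper's proof, with the two splittings in the opposite order. The paper first writes $Q^{\pi_t}(s,a;\till_t)-V^{\pi_t}(s;\till_t)=(1-\pi_t(a\mid s))Q^{\pi_t}(s,a;\till_t)-\sum_{b\neq a}\pi_t(b\mid s)Q^{\pi_t}(s,b;\till_t)$, and only then splits each piece into the noise $\kappa_t$ and the bias $\lambda_t$ (your $\epsilon_t$ and $d_t$). You split noise and bias first, and treat the bias part through the combined weight $w$. Your noise part is correct and gives exactly $4H\Varmax(s)(1-\pi_t(a\mid s))/q^{\pi_t}(s,a)$. Done correctly, your ordering would even give the bias term with $2H$ instead of $4H$.

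The bias step as written does not go through. After Cauchy--Schwarz and $\E_t[\I_t(s',a')d_t(s',a')^2]=q^{\pi_t}(s',a')\E_t[d_t(s',a')^2]$, you have $\E_t[Y^2]\le H\sum_{s',a'}\frac{w(s',a')^2}{q^{\pi_t}(s',a')}\E_t[d_t(s',a')^2]$. Your bound $|w(s',a')|\le 1/q^{\pi_t}(s,a)$ then only yields $\frac{H}{q^{\pi_t}(s,a)^2}\sum_{s',a'}\E_t[d_t(s',a')^2]/q^{\pi_t}(s',a')$. This has $1/q^{\pi_t}(s',a')$ where the statement needs $q^{\pi_t}(s',a')$, and it can be arbitrarily larger.

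What you need is $|w(s',a')|\le q^{\pi_t}(s',a')/q^{\pi_t}(s,a)$. The first piece satisfies $q^{\pi_t}(s',a'\mid s,a)\le q^{\pi_t}(s',a')/q^{\pi_t}(s,a)$, because $q^{\pi_t}(s,a)\,q^{\pi_t}(s',a'\mid s,a)\le q^{\pi_t}(s',a')$; the bound $q^{\pi_t}(s',a'\mid s,a)\le 1$ is too weak. For the second piece you already wrote $\sum_b\pi_t(b\mid s)q^{\pi_t}(s',a'\mid s,b)\le q^{\pi_t}(s',a')/q^{\pi_t}(s)\le q^{\pi_t}(s',a')/q^{\pi_t}(s,a)$; you just must not discard the factor $q^{\pi_t}(s',a')$. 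Since $w$ is a difference of these two nonnegative quantities, you get $w(s',a')^2/q^{\pi_t}(s',a')\le q^{\pi_t}(s',a')/q^{\pi_t}(s,a)^2$. Hence $2\E_t[Y^2]\le \frac{2H}{q^{\pi_t}(s,a)^2}\sum_{s',a'}q^{\pi_t}(s',a')\E_t[d_t(s',a')^2]$, as you claimed. These are exactly the inequalities the paper uses for its $\lambda$-terms.
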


\begin{proof}
Fix any $(s,a)$. Define 
$\kappa_t(s,a) = \ell'_t(s,a) - \mu(s,a)$ and $\lambda_t(s,a) = \ell_t(s, a) - \ell'_t(s,a) + \mu(s,a) -  m_t(s,a)$ so that 
\begin{align}
    \ell_t(s,a) - m_t(s,a) = \kappa_t(s,a) + \lambda_t(s,a).
\end{align}
By the definitions of $g_t$ and $\till_t$, we have
\begin{align}
    \E_t\brk*{\prn*{ \hatl_t(s,a) + g_t(s,a) - m_t(s,a) }^2}
    &= \E_t\brk*{\prn*{ \hatl_t(s,a) + Q^{\pi_t}(s,a;\till_t) - V^{\pi_t}(s;\till_t) - \till_t(s,a) - m_t(s,a) }^2 }\\
    &= \E_t\brk*{\prn*{ Q^{\pi_t}(s,a;\till_t) - V^{\pi_t}(s;\till_t) }^2 }\\
    &= \E_t\brk*{\prn*{ (1 - \pi_t(a\mid s))Q^{\pi_t}(s,a;\till_t) - \sum_{b\neq a} \pi_t(b \mid s)Q^{\pi_t}(s,b;\till_t)}^2} \\
    &\leq 2\E_t\brk*{(1 - \pi_t(a\mid s))^2Q^{\pi_t}(s,a;\till_t)^2 + \prn*{\sum_{b\neq a} \pi_t(b \mid s)Q^{\pi_t}(s,b;\till_t) }^2}, \label{eq:loss_shifting_decomp_sto}
\end{align}
where the last inequality follows from $(x-y)^2\leq 2(x^2+y^2)$.

Then, we bound the two expectations on the right-hand side of \cref{eq:loss_shifting_decomp_sto} in turn.
\paragraph{Bounding $\E_t\brk*{Q^{\pi_t}(s,a;\till_t)^2}$ (the first term in \cref{eq:loss_shifting_decomp_sto}).}
Using the definition of the $Q$-function,  we obtain
\begin{align}
    &\E_t\brk*{Q^{\pi_t}(s,a;\till_t)^2}\\
    &= \E_t\brk*{\prn*{\sum_{s',a'}q^{\pi_t}(s',a'\mid s,a)\till_t(s',a')}^2} \\ 
    &= \E_t\brk*{\prn*{\sum_{s',a'}q^{\pi_t}(s',a'\mid s,a)\frac{\I_t(s',a')(\ell_t(s',a') - m_t(s',a'))}{q^{\pi_t}(s',a')}}^2} \\
    &= \E_t\brk*{\prn*{\sum_{s',a'}\frac{q^{\pi_t}(s',a'\mid s,a)}{q^{\pi_t}(s',a')}\I_t(s',a')(\kappa_t(s',a') + \lambda_t(s',a'))}^2} \\
    &\leq 2\E_t\brk*{\prn*{\sum_{s',a'}\frac{q^{\pi_t}(s',a'\mid s,a)}{q^{\pi_t}(s',a')}\I_t(s',a')\kappa_t(s',a')}^2} + 2\E_t\brk*{\prn*{\sum_{s',a'}\frac{q^{\pi_t}(s',a'\mid s,a)}{q^{\pi_t}(s',a')}\I_t(s',a')\lambda_t(s',a')}^2}, \label{eq:Q-func_decomp_sto}
\end{align}
where the last inequality follows from $(x+y)^2\leq 2(x^2+y^2)$.

For the first term ($\kappa$-term) of \cref{eq:Q-func_decomp_sto}, by using  $\E_t\brk*{\kappa_t(s,a)^2} = \sigma^2(s,a)$, we have
\begin{align}
    2\E_t\brk*{\prn*{\sum_{s',a'}\frac{q^{\pi_t}(s',a'\mid s,a)}{q^{\pi_t}(s',a')}\I_t(s',a')\kappa_t(s',a')}^2} 
    &\leq 2H\E_t\brk*{\sum_{s',a'}\frac{q^{\pi_t}(s',a'\mid s,a)^2}{q^{\pi_t}(s',a')^2}\I_t(s',a')\kappa_t(s',a')^2}\\
    &= 2H\sum_{s',a'}\frac{q^{\pi_t}(s',a'\mid s,a)^2}{q^{\pi_t}(s',a')}\sigma^2(s',a')\\
    &\leq \frac{2H}{q^{\pi_t}(s,a)}\sum_{s',a'}\frac{q^{\pi_t}(s,a)q^{\pi_t}(s',a'\mid s,a)}{q^{\pi_t}(s',a')}q^{\pi_t}(s',a'\mid s,a)\sigma^2(s',a')\\
    &\leq \frac{2H}{q^{\pi_t}(s,a)}\sum_{s',a'}q^{\pi_t}(s',a'\mid s,a)\sigma^2(s',a')\\
    &\leq \frac{2H\Varmax(s)}{q^{\pi_t}(s,a)}, \label{eq:Q-func_kappa}
\end{align}
where the first inequality applies the Cauchy--Schwarz inequality across at most $H$ stages, combined with the fact that $\sum_{(s,a) \in \calS_h\times \calA} \I_t(s,a) \leq 1$.

Similarly, for the second term (the $\lambda$-term) of \cref{eq:Q-func_decomp_sto}, we have
\begin{align}
    &2\E_t\brk*{\prn*{\sum_{s',a'}\frac{q^{\pi_t}(s',a'\mid s,a)}{q^{\pi_t}(s',a')}\I_t(s',a')\lambda_t(s',a')}^2}\\
    &\leq 2H\E_t\brk*{\sum_{s',a'}\frac{q^{\pi_t}(s',a'\mid s,a)^2}{q^{\pi_t}(s',a')^2}\I_t(s',a')\lambda_t(s',a')^2} \\ 
    &= 2H\sum_{s',a'}\frac{q^{\pi_t}(s',a'\mid s,a)^2}{q^{\pi_t}(s',a')}\E_t\brk*{\lambda_t(s',a')^2}\\
    &= \frac{2H}{q^{\pi_t}(s,a)}\sum_{s',a'}\frac{q^{\pi_t}(s,a)q^{\pi_t}(s',a'\mid s,a)}{q^{\pi_t}(s',a')}q^{\pi_t}(s',a'\mid s,a)\E_t\brk*{\lambda_t(s',a')^2}\\
    &\leq \frac{2H}{q^{\pi_t}(s,a)}\sum_{s',a'}q^{\pi_t}(s',a'\mid s,a)\E_t\brk*{\lambda_t(s',a')^2}\\
    &\leq \frac{2H}{q^{\pi_t}(s,a)^2}\sum_{s',a'}q^{\pi_t}(s',a')\E_t\brk*{\lambda_t(s',a')^2}, \label{eq:Q-func_lambda}
\end{align}
where the first inequality applies the Cauchy--Schwarz inequality, and the last inequality follows from $q^{\pi_t}(s',a'\mid s,a)\leq \frac{q^{\pi_t}(s',a')}{q^{\pi_t}(s,a)}$.

Consequently, combining \cref{eq:Q-func_decomp_sto,eq:Q-func_kappa,eq:Q-func_lambda} yields
\begin{align}
    &\E_t\brk*{Q^{\pi_t}(s,a;\till_t)^2}\leq \frac{2H\Varmax(s)}{q^{\pi_t}(s,a)} + \frac{2H}{q^{\pi_t}(s,a)^2}\sum_{s',a'}q^{\pi_t}(s',a')\E_t\brk*{\lambda_t(s',a')^2}. \label{eq:Q-func_result}
\end{align}
\paragraph{Bounding $\E_t\brk*{\prn*{\sum_{b\neq a} \pi_t(b \mid s)Q^{\pi_t}(s,b;\till_t) }^2}$ (the second term in \cref{eq:loss_shifting_decomp_sto}).} By repeating the similar arguments,
\begin{align}
    \E_t\brk*{\prn*{\sum_{b\neq a} \pi_t(b \mid s)Q^{\pi_t}(s,b;\till_t) }^2} 
    &= \E_t\brk*{\prn*{\sum_{s',a'}\prn*{\sum_{b\neq a} \pi_t(b \mid s) q^{\pi_t}(s',a'\mid s,b)}\till_t(s',a')}^2} \\
    &= \E_t\brk*{\prn*{\sum_{s',a'}\prn*{\sum_{b\neq a} \pi_t(b \mid s) q^{\pi_t}(s',a'\mid s,b)}\frac{\I_t(s',a')(\ell_t(s',a') - m_t(s',a'))}{q^{\pi_t}(s',a')}}^2} \\
    &= \E_t\brk*{\prn*{\sum_{s',a'}\frac{\sum_{b\neq a} \pi_t(b \mid s) q^{\pi_t}(s',a'\mid s,b)}{q^{\pi_t}(s',a')}\I_t(s',a') (\kappa_t(s',a') + \lambda_t(s',a'))}^2}\\
    &\leq 2\E_t\brk*{\prn*{\sum_{s',a'}\frac{\sum_{b\neq a} \pi_t(b \mid s) q^{\pi_t}(s',a'\mid s,b)}{q^{\pi_t}(s',a')}\I_t(s',a') \kappa_t(s',a')}^2}\\
    &\qquad + 2\E_t\brk*{\prn*{\sum_{s',a'}\frac{\sum_{b\neq a} \pi_t(b \mid s) q^{\pi_t}(s',a'\mid s,b)}{q^{\pi_t}(s',a')}\I_t(s',a') \lambda_t(s',a')}^2}, \label{eq:V-func_decomp_sto}
\end{align}

For the first term (the $\kappa$-term) in \cref{eq:V-func_decomp_sto}, 
repeating the same argument as in \cref{eq:Q-func_kappa}, we obtain
\begin{align}
    &2\E_t\brk*{\prn*{\sum_{s',a'}\frac{\sum_{b\neq a} \pi_t(b \mid s) q^{\pi_t}(s',a'\mid s,b)}{q^{\pi_t}(s',a')}\I_t(s',a') \kappa_t(s',a')}^2}\\ 
    &\leq 2H\E_t\brk*{\sum_{s',a'}\frac{\prn*{\sum_{b \neq a} \pi_t(b \mid s)q^{\pi_t}(s',a'\mid s,b)}^2}{q^{\pi_t}(s',a')^2}\I_t(s',a')\kappa_t(s',a')^2}\\
    &= 2H\sum_{s',a'}\frac{\prn*{\sum_{b \neq a} \pi_t(b \mid s)q^{\pi_t}(s',a'\mid s,b)}^2}{q^{\pi_t}(s',a')}\sigma^2(s',a')\\
    &\leq \frac{2H}{q^{\pi_t}(s)}\sum_{s',a'}\frac{\sum_{b \neq a} q^{\pi_t}(s,b)q^{\pi_t}(s',a'\mid s,b)}{q^{\pi_t}(s',a')}\prn*{\sum_{b \neq a} \pi_t(b \mid s)q^{\pi_t}(s',a'\mid s,b)}\sigma^2(s',a')\\
    &\leq \frac{2H}{q^{\pi_t}(s)}\sum_{s',a'}\sum_{b \neq a} \pi_t(b \mid s)q^{\pi_t}(s',a'\mid s,b)\sigma^2(s',a')\\
    &\leq \frac{2H}{q^{\pi_t}(s)}\sum_{b \neq a} \pi_t(b \mid s)\sum_{s',a'}q^{\pi_t}(s',a'\mid s,b)\sigma^2(s',a')\\
    &\leq \frac{2H}{q^{\pi_t}(s)}\sum_{b \neq a} \pi_t(b \mid s)\Varmax(s)\\
    &\leq \frac{2H\Varmax(s)}{q^{\pi_t}(s)}(1 - \pi_t(a \mid s)). \label{eq:V-func_kappa}
\end{align}
The second term (the $\lambda$-term) in \cref{eq:V-func_decomp_sto} can be bounded similarly by
\begin{align}
    &2\E_t\brk*{\prn*{\sum_{s',a'}\frac{\sum_{b\neq a} \pi_t(b \mid s) q^{\pi_t}(s',a'\mid s,b)}{q^{\pi_t}(s',a')}\I_t(s',a') \lambda_t(s',a')}^2}\\
    &\leq 2H\E_t\brk*{\sum_{s',a'}\frac{\prn*{\sum_{b \neq a} \pi_t(b \mid s)q^{\pi_t}(s',a'\mid s,b)}^2}{q^{\pi_t}(s',a')^2}\I_t(s',a')\lambda_t(s',a')^2}\\
    &= 2H\sum_{s',a'}\frac{\prn*{\sum_{b \neq a} \pi_t(b \mid s)q^{\pi_t}(s',a'\mid s,b)}^2}{q^{\pi_t}(s',a')}\E_t\brk*{\lambda_t(s',a')^2}\\
    &= \frac{2H}{q^{\pi_t}(s)}\sum_{s',a'}\frac{\sum_{c \neq a} q^{\pi_t}(s,c)q^{\pi_t}(s',a'\mid s,c)}{q^{\pi_t}(s',a')}
    \prn*{\sum_{b \neq a} \pi_t(b \mid s)q^{\pi_t}(s',a'\mid s,b)}\E_t\brk*{\lambda_t(s',a')^2}\\
    &\leq \frac{2H}{q^{\pi_t}(s)}\sum_{s',a'}\sum_{b \neq a} \pi_t(b \mid s)q^{\pi_t}(s',a'\mid s,b)\E_t\brk*{\lambda_t(s',a')^2}\\
    &\leq \frac{2H}{q^{\pi_t}(s)^2}\sum_{s',a'}q^{\pi_t}(s',a')\E_t\brk*{\lambda_t(s',a')^2}, \label{eq:V-func_lambda}
\end{align}
where the last inequality follows from $\sum_{b \neq a} \pi_t(b \mid s)q^{\pi_t}(s',a'\mid s,b)\leq \frac{q^{\pi_t}(s',a')}{q^{\pi_t}(s)}$.

Consequently, combining \cref{eq:V-func_decomp_sto,eq:V-func_kappa,eq:V-func_lambda} yields
\begin{align}
   \E_t\brk*{\prn*{\sum_{b\neq a} \pi_t(b \mid s)Q^{\pi_t}(s,b;\till_t) }^2}
    &\leq \frac{2H\Varmax(s)}{q^{\pi_t}(s)}(1 - \pi_t(a \mid s)) + \frac{2H}{q^{\pi_t}(s)^2}\sum_{s',a'}q^{\pi_t}(s',a')\E_t\brk*{\lambda_t(s',a')^2}. \label{eq:V-func_result}
\end{align}
Therefore, by \cref{eq:loss_shifting_decomp_sto,eq:Q-func_result,eq:V-func_result},
\begin{align}
    &\E_t\brk*{\prn*{ \hatl_t(s,a) + g_t(s,a) - m_t(s,a) }^2}\\
    &\leq \frac{4H\Varmax(s)}{q^{\pi_t}(s,a)}(1 - \pi_t(a\mid s))^2 + \frac{4H(1 - \pi_t(a\mid s))^2}{q^{\pi_t}(s,a)^2}\sum_{s',a'}q^{\pi_t}(s',a')\E_t\brk*{\lambda_t(s',a')^2}\\
    &\qquad + \frac{4H\Varmax(s)}{q^{\pi_t}(s)}(1 - \pi_t(a \mid s)) + \frac{4H}{q^{\pi_t}(s)^2}\sum_{s',a'}q^{\pi_t}(s',a')\E_t\brk*{\lambda_t(s',a')^2}\\
    &= \frac{4H\Varmax(s)}{q^{\pi_t}(s,a)}(1 - \pi_t(a \mid s))(1 - \pi_t(a\mid s) + \pi_t(a \mid s))\\
    &\qquad +  \frac{4H((1 - \pi_t(a\mid s))^2 + \pi_t(a\mid s)^2)}{q^{\pi_t}(s,a)^2}\sum_{s',a'}q^{\pi_t}(s',a')\E_t\brk*{\lambda_t(s',a')^2}\\
    &\leq \frac{4H\Varmax(s)}{q^{\pi_t}(s,a)}(1 - \pi_t(a \mid s)) +  \frac{4H}{q^{\pi_t}(s,a)^2}\sum_{s',a'}q^{\pi_t}(s',a')\E_t\brk*{\lambda_t(s',a')^2},
\end{align}
and this completes the proof.
\end{proof}

\begin{cor}\label{cor:loss_shifting_estimator4} 
    In the stochastic regime with adversarial corruption, suppose that the uncorrupted losses are generated independently and are uncorrelated across layers. Then, it holds that
    \begin{align}
		&\E_t\brk*{ \prn*{ \hatl_t(s,a) + g_t(s,a) - m_t(s,a) }^2 }\\
        &\leq \frac{4\Varmax(s)}{q^{\pi_t}(s,a)}(1 - \pi_t(a \mid s)) +  \frac{4H}{q^{\pi_t}(s,a)^2}\sum_{s',a'}q^{\pi_t}(s',a')\E_t\brk*{(\ell_t(s', a') - \ell'_t(s',a') + \mu(s',a') -  m_t(s',a'))^2}
	\end{align} 
	for all state-action pairs $(s,a)$.
\end{cor}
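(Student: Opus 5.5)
The plan is to rerun the proof of \cref{lem:loss_shifting_estimator3} verbatim and change only the two $\kappa$-terms. There the factor $H$ came from applying Cauchy--Schwarz across layers. The $\lambda$-terms are left untouched, since they already carry the factor $4H$ that appears in the corollary. As before, I would start from the decomposition \cref{eq:loss_shifting_decomp_sto}, write $\ell_t-m_t=\kappa_t+\lambda_t$, and split each of the two squared expectations into a $\kappa$-part and a $\lambda$-part using $(x+y)^2\le 2(x^2+y^2)$. The $\lambda$-parts are then bounded exactly as in \cref{eq:Q-func_lambda,eq:V-func_lambda}.

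For the $\kappa$-part of the first term, I expand the square
\[
\E_t\Bigl[\Bigl(\sum_{s',a'}\frac{q^{\pi_t}(s',a'\mid s,a)}{q^{\pi_t}(s',a')}\I_t(s',a')\kappa_t(s',a')\Bigr)^2\Bigr]
\]
into diagonal and cross terms and do not use Cauchy--Schwarz.

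\textbf{Diagonal terms.} Since $\I_t(s',a')^2=\I_t(s',a')$, these give $\sum_{s',a'}\frac{q^{\pi_t}(s',a'\mid s,a)^2}{q^{\pi_t}(s',a')}\sigma^2(s',a')$. This is at most $\Varmax(s)/q^{\pi_t}(s,a)$ by the same chain as in \cref{eq:Q-func_kappa}, now without the leading $H$.

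\textbf{Cross terms, same layer.} For two distinct pairs in the same layer, $\I_t(s',a')\I_t(s'',a'')=0$, so these vanish.

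\textbf{Cross terms, different layers.} For pairs in different layers, say $h(s')<h(s'')$, I condition on the trajectory indicators. The uncorrupted noise $\kappa_t(s'',a'')$ has mean zero. Because the uncorrupted losses are generated independently of the learner's trajectory and are uncorrelated across layers, this gives
\[
\E_t[\I_t(s',a')\I_t(s'',a'')\kappa_t(s',a')\kappa_t(s'',a'')]=\E_t[\I_t(s',a')\I_t(s'',a'')]\,\E[\kappa_t(s',a')\kappa_t(s'',a'')]=0.
\]
Here I use that in the known-transition model the visitation indicators depend only on $\pi_t$ and $P$, not on the losses. Hence only the diagonal survives, and the $\kappa$-contribution is $2\Varmax(s)/q^{\pi_t}(s,a)$.

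The same argument applies to the second term, with weights $\sum_{b\neq a}\pi_t(b\mid s)q^{\pi_t}(s',a'\mid s,b)$ in place of the conditional occupancy. This yields $2\Varmax(s)(1-\pi_t(a\mid s))/q^{\pi_t}(s)$ as in \cref{eq:V-func_kappa}, again without $H$.

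Finally, I recombine exactly as at the end of the proof of \cref{lem:loss_shifting_estimator3}. Using $(1-\pi)^2+\pi(1-\pi)=1-\pi$ and $(1-\pi)^2+\pi^2\le 1$ gives the stated bound, with coefficient $4$ on the variance term and $4H$ on the $\lambda$-term.

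The main obstacle is justifying that the cross terms vanish. The products mix indicators, which depend on the policy's randomness, with noise, which is the corruption-free draw $\ell'_t$. I need that $\kappa_t$ is independent of $\calF_{t-1}$ and of the realized trajectory, so that the conditional expectation factorizes. If the corrupted $\ell_t$ were what determined visitation, this would fail. It does not fail here, because transitions do not depend on losses. The remaining point to state carefully is the precise reading of the assumption: pairwise uncorrelatedness of $\ell'_t(s',a')$ and $\ell'_t(s'',a'')$ across distinct layers.
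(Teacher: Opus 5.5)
Your proposal is correct and follows essentially the same route as the paper. Both rerun \cref{lem:loss_shifting_estimator3} unchanged on the $\lambda$-terms and replace the Cauchy--Schwarz step in the two $\kappa$-terms by the fact that every cross term $\E_t[\I_t(s_1,a_1)\I_t(s_2,a_2)\kappa_t(s_1,a_1)\kappa_t(s_2,a_2)]$ vanishes. Your split into same-layer pairs (mutually exclusive indicators) and different-layer pairs (independence of $\ell'_t$ from the trajectory plus uncorrelatedness) spells out what the paper asserts in one line, and you correctly state the resulting identity in expectation rather than pathwise as the paper writes it.
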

\begin{proof}
    This corollary can be viewed as a simple variant of~\cref{lem:loss_shifting_estimator3}.
    Let $\kappa_t(s,a) \coloneqq \ell'_t(s,a)-\mu(s,a)$.
    Since the uncorrupted losses are generated independently and are uncorrelated
    across layers, it holds that for any $(s_1,a_1)\neq(s_2,a_2)$,
    \begin{align}
        \E_t\brk*{\I_t(s_1,a_1)\I_t(s_2,a_2)\kappa_t(s_1,a_1)\kappa_t(s_2,a_2)} = 0.
    \end{align}
    Then, for any function $\alpha : \calS\times \calA\to\R$, we have
    \begin{align}
    \prn*{\sum_{s,a}\alpha(s,a)\I_t(s,a)\kappa_t(s,a)}^2
    &= \sum_{s_1,a_1}\sum_{s_2,a_2} \alpha(s_1,a_1)\alpha(s_2,a_2)\I_t(s_1,a_1)\I_t(s_2,a_2)\kappa_t(s_1,a_1)\kappa_t(s_2,a_2)\\
    &= \sum_{s,a} \alpha(s,a)^2\I_t(s,a)\kappa_t(s,a)^2. \label{eq:uncorrupt_kappa}
    \end{align}
    Thus, for the first term ($\kappa$-term) of \cref{eq:Q-func_decomp_sto}, we have
    \begin{align}
        2\E_t\brk*{\prn*{\sum_{s',a'}\frac{q^{\pi_t}(s',a'\mid s,a)}{q^{\pi_t}(s',a')}\I_t(s',a')\kappa_t(s',a')}^2} 
        &\leq 2\E_t\brk*{\sum_{s',a'}\frac{q^{\pi_t}(s',a'\mid s,a)^2}{q^{\pi_t}(s',a')^2}\I_t(s',a')\kappa_t(s',a')^2} \tag{by \cref{eq:uncorrupt_kappa}}\\
        &= 2\sum_{s',a'}\frac{q^{\pi_t}(s',a'\mid s,a)^2}{q^{\pi_t}(s',a')}\sigma^2(s',a')\\
        &\leq \frac{2}{q^{\pi_t}(s,a)}\sum_{s',a'}\frac{q^{\pi_t}(s,a)q^{\pi_t}(s',a'\mid s,a)}{q^{\pi_t}(s',a')}q^{\pi_t}(s',a'\mid s,a)\sigma^2(s',a')\\
        &\leq \frac{2}{q^{\pi_t}(s,a)}\sum_{s',a'}q^{\pi_t}(s',a'\mid s,a)\sigma^2(s',a')\\
        &\leq \frac{2\Varmax(s)}{q^{\pi_t}(s,a)}. \label{eq:Q-func_kappa_uncorrupt}
    \end{align}

For the first term (the $\kappa$-term) in \cref{eq:V-func_decomp_sto}, repeating the same argument as in \cref{eq:Q-func_kappa_uncorrupt}, we also obtain
\begin{align}
    &2\E_t\brk*{\prn*{\sum_{s',a'}\frac{\sum_{b\neq a} \pi_t(b \mid s) q^{\pi_t}(s',a'\mid s,b)}{q^{\pi_t}(s',a')}\I_t(s',a') \kappa_t(s',a')}^2}\\ 
    &\leq 2\E_t\brk*{\sum_{s',a'}\frac{\prn*{\sum_{b \neq a} \pi_t(b \mid s)q^{\pi_t}(s',a'\mid s,b)}^2}{q^{\pi_t}(s',a')^2}\I_t(s',a')\kappa_t(s',a')^2}\tag{by \cref{eq:uncorrupt_kappa}}\\
    &\leq \frac{2\Varmax(s)}{q^{\pi_t}(s)}(1 - \pi_t(a \mid s)). \label{eq:V-func_kappa_uncorrupt}
\end{align}
Therefore, compared with \cref{lem:loss_shifting_estimator3}, we obtain an
$H$-times sharper bound in \cref{eq:Q-func_kappa_uncorrupt,eq:V-func_kappa_uncorrupt}
than \cref{eq:Q-func_kappa,eq:V-func_kappa}. As a consequence, the corresponding
$\Varmax$ term is also improved by a factor of $H$.
\end{proof}

\begin{lem}
\label{lem:learning_rate_occup}
Suppose that the learning rates are updated according to \cref{eq:eta_update_occup}. Then, it holds that 
\begin{align}
    \eta_t(s,a) &\leq \frac{\sqrt{\ln(T)}}{\sqrt{2H^2\ln(T) + \sum_{\tau = 1}^t\zeta_\tau(s,a)}}
\end{align}
for any episode $t$ and state-action pair $(s,a)$.
\end{lem}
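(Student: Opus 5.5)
The plan is to square the learning-rate recursion \cref{eq:eta_update_occup}, telescope it, and then absorb the single extra term $\zeta_t(s,a)$ (the sum in the statement runs up to $\tau=t$) using a uniform bound on $\zeta_t$. Throughout, $(s,a)$ is fixed and I write $\eta_t=\eta_t(s,a)$ and $\zeta_t=\zeta_t(s,a)$.

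\textbf{Step 1 (squaring the update).} From $\frac{1}{\eta_{t+1}}=\frac{1}{\eta_t}+\frac{\eta_t\zeta_t}{\ln(T)}$ and $\zeta_t\ge 0$, squaring gives
\begin{align}
\frac{1}{\eta_{t+1}^2}=\frac{1}{\eta_t^2}+\frac{2\zeta_t}{\ln(T)}+\frac{\eta_t^2\zeta_t^2}{\ln^2(T)}\ \ge\ \frac{1}{\eta_t^2}+\frac{2\zeta_t}{\ln(T)}.
\end{align}

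\textbf{Step 2 (telescoping).} Starting from $1/\eta_1^2=4H^2$ and summing the inequality of Step 1 yields
\begin{align}
\frac{1}{\eta_t^2}\ \ge\ 4H^2+\frac{2}{\ln(T)}\sum_{\tau=1}^{t-1}\zeta_\tau .
\end{align}
This already gives the claimed form, except that the sum stops at $t-1$ rather than $t$.

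\textbf{Step 3 (absorbing $\zeta_t$).} This is the only step that needs care. I would show $\zeta_t\le 1$ directly from the definition. Since the minimum is at most its first argument,
\begin{align}
\zeta_t\le q^{\pi_t}(s,a)^2(\hat\ell_t(s,a)-m_t(s,a))^2=\I_t(s,a)(\ell_t(s,a)-m_t(s,a))^2\le 1,
\end{align}
using $\ell_t,m_t\in[0,1]$. Because $T\ge 2$ and $H\ge 1$, we have $2H^2\ln(T)\ge 2\ln 2>1\ge\zeta_t$. Therefore
\begin{align}
4H^2\ln(T)+2\sum_{\tau<t}\zeta_\tau\ \ge\ 2H^2\ln(T)+\zeta_t+\sum_{\tau<t}\zeta_\tau=2H^2\ln(T)+\sum_{\tau=1}^{t}\zeta_\tau .
\end{align}

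\textbf{Conclusion.} Multiplying the bound of Step 2 by $\ln(T)$ gives $\ln(T)/\eta_t^2\ge 4H^2\ln(T)+2\sum_{\tau<t}\zeta_\tau$. Combining this with Step 3 and taking reciprocal square roots yields
\begin{align}
\eta_t\le\frac{\sqrt{\ln(T)}}{\sqrt{2H^2\ln(T)+\sum_{\tau=1}^t\zeta_\tau}},
\end{align}
which is the claim.
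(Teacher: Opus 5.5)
Your proof is correct and follows the paper's argument essentially step for step: square the update, telescope from $1/\eta_1^2 = 4H^2$, and absorb the extra $\zeta_t$ term using $\zeta_t \le 1 \le 2H^2\ln(T)$. Your explicit check that $\zeta_t \le \I_t(s,a)(\ell_t(s,a)-m_t(s,a))^2 \le 1$ supplies a justification for a bound the paper only asserts.
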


\begin{proof}
By the update rule of the learning rate \cref{eq:eta_update_occup},
\begin{align}
    \frac{1}{\eta_{t + 1}(s,a)^2}
    &= \prn*{\frac{1}{\eta_{t}(s,a)} + \frac{\eta_t(s,a)}{\ln(T)}\zeta_t(s,a)}^2 \\
    &\geq \frac{1}{\eta_{t}(s,a)^2} + \frac{2}{\ln(T)}\zeta_t(s,a).
\end{align}
Repeatedly applying the above inequality yields
\begin{align}
    \frac{1}{\eta_{t}(s,a)^2} \geq \frac{1}{\eta_1^2} + \sum_{\tau = 1}^{t-1}\frac{2}{\ln(T)}\zeta_\tau(s,a).
\end{align}
Taking reciprocals and then taking square roots yields
\begin{align}
    \eta_t(s,a) &\leq \frac{1}{\sqrt{\eta_1^{-2} + \sum_{\tau = 1}^{t-1}\frac{2}{\ln(T)}\zeta_\tau(s,a)}} \\
    &\leq \frac{\sqrt{\ln(T)}}{\sqrt{2}\sqrt{\frac{1}{2}\eta_1^{-2}\ln(T) + \sum_{\tau = 1}^{t - 1}\zeta_\tau(s,a)}}\\
    &\leq \frac{\sqrt{\ln(T)}}{\sqrt{\frac{1}{2}\eta_1^{-2}\ln(T) + \sum_{\tau = 1}^{t}\zeta_\tau(s,a)}},
\end{align}
where the last inequality follows from $\zeta_t(s,a)\leq 1 \leq \frac{1}{2}\eta_1^{-2}\ln(T)=2H^2\ln(T)$ for $T\geq 2$.
Finally, using $\frac{1}{2}\eta_1^{-2}\ln(T)=2H^2\ln(T)$, we obtain
\begin{align}
    \eta_t(s,a)
    \leq \frac{\sqrt{\ln(T)}}{\sqrt{2H^2\ln(T) + \sum_{\tau = 1}^{t}\zeta_\tau(s,a)}}.
\end{align}
\end{proof}

\subsection{Common Regret Analysis}

\begin{lem}
\label{lem:occup_regterm}
\cref{alg:OFTRL_GO} guarantees 
\begin{align}
     \Reg_T \lesssim HSA\ln(T) + \sum_{s,a} \sqrt{\ln(T)\E\brk*{\sum_{t = 1}^{T}\zeta_t(s,a)}},
\end{align}
where $\zeta_t(s,a) = q^{\pi_t}(s,a)^2\min\set*{(\hat\ell_t(s,a)-m_t(s,a))^2, (\hat\ell_t(s,a) + g_t(s,a)-m_t(s,a))^2}$.
\end{lem}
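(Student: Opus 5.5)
We start from \cref{eq:regret_unbiased_hatell}, which writes $\Reg_T=\E[\sum_t\inpr{q^{\pi_t}-\qst,\hat\ell_t}]$. The key device is to run \cref{lem:OFTRL_log_barrier_occup} twice, on two loss sequences that produce the same iterates, and take the pointwise minimum of the stability terms. By \cref{lem:invariant_with_oftrl}, the iterates of \cref{alg:OFTRL_GO} coincide with OFTRL run on the shifted losses $\hat\ell_\tau+g_\tau$ with the same predictors $m_t$. By \cref{lem:loss_shifting_invariant}, $\inpr{q^{\pi_t}-\qst,g_t}=0$, so the regret is the same whether measured against $\hat\ell_t$ or $\hat\ell_t+g_t$.

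The difficulty is that the OFTRL stability term is not decomposable into a per-$(s,a)$ minimum. It must be bounded by one inequality per sequence. To get around this, we re-derive the proof of \cref{lem:OFTRL_log_barrier_occup} inside a single sequence, namely the shifted one. We choose the shift coordinatewise. The stability bound
\[
\inpr{p_t-p_{t+1},c_t-m_t}-D_{\psi_t}(p_{t+1},p_t)
\]
only uses that $\inpr{p_t-p_{t+1},c_t}$ is the same for $c_t=\hat\ell_t$ and $c_t=\hat\ell_t+g_t$, which holds because $p_t,p_{t+1}\in\Omega(P)$ and $\inpr{q,g_t}$ is constant over $\Omega(P)$. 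We therefore write the inner product with whichever shift we like, but it must be the same vector for all coordinates. Hence we argue differently. For each $t$, let $c_t$ be either $\hat\ell_t$ or $\hat\ell_t+g_t$, chosen to minimize the total stability $\sum_{s,a}\eta_t p_t^2(c_t-m_t)^2$ under the constraint check. Then we bound that total by $\sum_{s,a}$ of the corresponding terms. The result is not quite a per-coordinate minimum.

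The cleaner route, which we would try first, handles the two regimes differently and only needs the bound in expectation. In the adversarial analysis we use $c=\hat\ell$. In the stochastic analysis we use $c=\hat\ell+g$. Then
\[
\E[\text{stability}]\le \sum_{s,a}\E\Big[\sum_t\eta_t(s,a)\,\zeta^{(i)}_t(s,a)\Big],
\]
with $\zeta^{(i)}$ the respective term. The learning rate in \cref{eq:eta_update_occup} uses the minimum $\zeta_t\le\zeta^{(i)}_t$. So we need $\sum_t\eta_t\zeta^{(i)}_t$ to be controlled by $\sqrt{\ln T\sum\zeta_t}$, which requires comparing against the minimum.

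Since that comparison is the real obstacle, we would instead aim to prove the stated bound via the penalty side. With $\eta_t$ from \cref{eq:eta_update_occup}, the penalty is
\[
\sum_t\Big(\tfrac1{\eta_{t+1}}-\tfrac1{\eta_t}\Big)\ln(SAT)\lesssim \sum_t\eta_t\zeta_t=\tfrac1{\eta_{T+1}}\ln T,
\]
and by \cref{lem:learning_rate_occup} and a telescoping bound, $\tfrac1{\eta_{T+1}}\lesssim \sqrt{\sum_t\zeta_t/\ln T}+H$. The stability term must then be shown to be $\lesssim\sum_t\eta_t\zeta_t$ up to lower-order terms.

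We must also check the constraint \cref{eq:OFTRL_constraint}. For $c=\hat\ell$, it holds since $\eta_t p_t(\hat\ell_t-m_t)\ge -\eta_1\ge-1/(2H)$. For $c=\hat\ell+g$, it holds via \cref{lem:loss_shifting_estimator1}, since $\eta_t p_t\cdot(-H/p_t)\ge -H\eta_1=-1/2$. Finally, the remaining terms are lower order. The initial penalty $SA\ln(SAT)/\eta_1=2HSA\ln(SAT)$, the $2H\|m_{T+1}\|_\infty\le 2H$ term, and the $\frac1T$ mixing term, which is at most $H$, together give $HSA\ln T$. Taking expectations and applying Jensen's inequality per $(s,a)$ then yields the claimed $\sum_{s,a}\sqrt{\ln T\,\E\sum_t\zeta_t}$.

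The main obstacle I expect is reconciling the min inside $\zeta_t$ with a stability bound that is a single sum. My plan there is to note that the stability term is $\sum_{s,a}\eta_t^{-1}g(\eta_t p_t\tilde c_t)$ for any fixed shift. I would then show that the exact minimizer $p_{t+1}$ over $\Omega(P)$ incurs stability bounded by that of each choice, hence by their minimum summed coordinatewise, using that the shift is a constant on the feasible set.
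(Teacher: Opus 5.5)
You have the same outline as the paper: the regret identity, the two constraint checks via $\eta_1=1/(2H)$ and \cref{lem:loss_shifting_estimator1}, the penalty/stability balance through \cref{lem:learning_rate_occup} and the telescoping bound $\sum_t\eta_t\zeta_t\le 2\sqrt{\ln(T)\sum_t\zeta_t}$, the lower-order terms, and Jensen per $(s,a)$. However, the step you correctly flag as the main obstacle is never closed, and the resolution in your last paragraph does not follow. Let $X_1=\sum_{s,a}\eta_t(s,a)q^{\pi_t}(s,a)^2(\hat\ell_t(s,a)-m_t(s,a))^2$ and let $X_2$ be the same sum with $\hat\ell_t+g_t$ in place of $\hat\ell_t$. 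Knowing that the round-$t$ stability (even its maximum over $y\in\Omega(P)$) is at most $X_1$ and at most $X_2$ gives only $\min\{X_1,X_2\}$. That quantity is \emph{at least} $\sum_{s,a}\eta_t(s,a)\zeta_t(s,a)$, so the inequality runs the wrong way. Your earlier remark that ``the result is not quite a per-coordinate minimum'' was right: shifting by $g_t$ on only some coordinates destroys the invariance $\inpr{p_t-p_{t+1},g_t}=0$.

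In fact, no argument through $\max_{y\in\Omega(P)}\set{\inpr{q^{\pi_t}-y,\hat\ell_t-m_t}-D_{\psi_t}(y,q^{\pi_t})}$ can give the coordinatewise minimum. Take $H=1$ with two actions, $q^{\pi_t}=(p_1,p_2)$, action $a_1$ played with $\delta=\ell_t(a_1)-m_t(a_1)$, and write $\eta_i=\eta_t(s_0,a_i)$. Then $\sum_{s,a}\eta_t\zeta_t=\eta_1\delta^2p_2^2$. To leading order in the learning rates, the maximum equals $\frac{\delta^2}{2}\cdot\frac{\eta_1\eta_2p_2^2}{\eta_1p_1^2+\eta_2p_2^2}$, which is about $\eta_1\delta^2/2$ when $\eta_2p_2^2\gg\eta_1p_1^2$. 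For example, $p_2=0.1$, $\eta_1=5\cdot10^{-4}$, $\eta_2=1/2$, $\delta=1$ gives roughly $2.3\cdot10^{-4}$ against $5\cdot10^{-6}$. The penalty equals $\sum_t\sum_{s,a}\eta_t\zeta_t$ up to the $\ln(SAT)/\ln(T)$ factor, with $\zeta_t$ the coordinatewise minimum. The stability may exceed it, so without this step your penalty--stability balance does not go through.

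The paper's proof does not supply this step either. After the same two constraint checks, it invokes \cref{lem:OFTRL_log_barrier_occup} with $\ell_t\in\set{\hat\ell_t,\hat\ell_t+g_t}$ together with \cref{lem:invariant_with_oftrl}. It then writes $\min\set{(\hat\ell_t-m_t)^2,(\hat\ell_t+g_t-m_t)^2}$ inside the sum over $(s,a)$ with no further argument. What those two lemmas actually give is the per-round $\min\{X_1,X_2\}$, i.e., one invariant shift chosen for the whole vector. A complete proof therefore needs either a genuinely new argument for the coordinatewise minimum or a change in how $\zeta_t$ is defined. In the latter case, the downstream uses of $\zeta_t$ in the proofs of \cref{thm:Occup_OPT,thm:Occup_OPT2} must be rechecked.
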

\begin{proof}
From \cref{eq:regret_unbiased_hatell}, we have 
$
\Reg_T
=
\E\brk[\Big]{\sum_{t=1}^T \inpr{q^{\pi_t}-\qst, \hat{\ell}_t}},
$
and we will apply \cref{lem:OFTRL_log_barrier_occup} with $p_t = q^{\pi_t}$ and $\ell_t \in \set{\hat{\ell}_t, \hat{\ell}_t + g_t}$ combined with \Cref{lem:invariant_with_oftrl}.
To do so, we will check the conditions of \Cref{lem:OFTRL_log_barrier_occup}.
For any $(s,a)$, we have
\begin{align}
    \eta_t(s,a)q^{\pi_t}(s,a)(\hatl_t(s,a) - m_t(s,a))
    &= \eta_t(s,a)q^{\pi_t}(s,a)\frac{\I_t(s,a)(\ell_t(s,a) - m_t(s,a))}{q^{\pi_t}(s,a)} \\
    &\geq -\eta_t(s,a) \tag{by $\ell_t(s,a) - m_t(s,a) \geq -1$}\\
    &\geq -\eta_1 \geq -\frac{1}{2}, \tag{by $\eta_1 = \frac{1}{2H}$}
\end{align}
and
\begin{align}
    \eta_t(s,a)q^{\pi_t}(s,a)(\hatl_t(s,a) + g_t(s,a) - m_t(s,a))
    &\geq \eta_t(s,a)q^{\pi_t}(s,a)\frac{-H}{q^{\pi_t}(s, a)} \tag{by \cref{lem:loss_shifting_estimator1}} \\
    &= -H\eta_t(s,a) \\
    &\geq -H\eta_1 = -\frac{1}{2}. \tag{by $\eta_1 = \frac{1}{2H}$}
\end{align}
Moreover, define
\begin{align}
    \tilq = \frac{1}{SA}\sum_{s,a} q^{\max}_{s,a}
    \in \Omega(P),
\end{align}
where $q^{\max}_{s,a}$ denotes the occupancy measure induced by a policy that maximizes the probability of visiting the state-action pair $(s,a)$ under transition kernel $P$.

Therefore, by \cref{lem:OFTRL_log_barrier_occup,lem:invariant_with_oftrl}, we obtain
\begin{align}
    &\E\brk*{\sum_{t=1}^T\inpr*{q^{\pi_t} - \qst, \hatl_t}}\\
    &\leq \frac{SA\ln(SAT)}{\eta_1} 
    + \E\brk*{\sum_{t=1}^T \sum_{s,a} \prn*{\frac{1}{\eta_{t+1}(s,a)} - \frac{1}{\eta_t(s,a)}}\ln(SAT)}\\
    &\qquad +  \E\brk*{\sum_{t=1}^T  \sum_{s,a}  \eta_t(s,a)q^{\pi_t}(s,a)^2 \min\set*{(\hat\ell_t(s,a)-m_t(s,a))^2, (\hat\ell_t(s,a) + g_t(s,a)-m_t(s,a))^2}}\\
    &\qquad + \E\brk*{\frac{1}{T}\sum_{t=1}^T \inpr*{ -\qst + \tilq, \hat\ell_t }} + 2H\E\brk*{\nrm*{m_{T+1}}_{\infty}}\\
    &\leq \frac{3SA\ln(T)}{\eta_1} + 2H + 2H + 3\E\brk*{\sum_{t=1}^T \sum_{s,a} \prn*{\frac{1}{\eta_{t+1}(s,a)} - \frac{1}{\eta_t(s,a)}}\ln(T)} 
    \tag{by $T\geq S$, $T\geq A$, and $\nrm*{m_{T+1}}_{\infty} \leq 1$} \\
    &\qquad +  \E\brk*{\sum_{t=1}^T  \sum_{s,a}  \eta_t(s,a)\zeta_t(s,a)}\\
    &\leq \frac{3SA\ln(T)}{\eta_1} + 4H + 4\E\brk*{\sum_{t=1}^T \sum_{s,a}  \eta_t(s,a)\zeta_t(s,a)}.
\end{align}
Here, the second inequality follows from
\begin{align}
    \E\brk*{\frac{1}{T}\sum_{t=1}^T \inpr*{ -\qst + \tilq, \hat\ell_t }} 
    = 
    \frac{1}{T}\inpr*{ -\qst + \tilq, \sum_{t=1}^T\ell_t } 
    \leq \frac{1}{T}\nrm*{-\qst + \tilq}_1\nrm*{\sum_{t=1}^T\ell_t }_\infty \leq \frac{2HT}{T} = 2H
    ,
\end{align}
where we used $\nrm*{-\qst + \tilq}_1\leq 2H$ and $\nrm*{\sum_{t=1}^T\ell_t }_\infty\leq T$,
and the last inequality follows from the update rule of the learning rate \cref{eq:eta_update_occup} and the definition of $\zeta_t(s,a)$.

It remains to bound $\sum_{t=1}^T \sum_{s,a} \eta_t(s,a)\zeta_t(s,a)$.
From \cref{lem:learning_rate_occup},
\begin{align}
    &\sum_{t=1}^T \sum_{s,a} \eta_t(s,a)\zeta_t(s,a)\\
    &\leq \sqrt{\ln(T)}\sum_{t=1}^T \sum_{s,a} \frac{\zeta_t(s,a)}{\sqrt{2H^2\ln(T) + \sum_{\tau = 1}^t\zeta_\tau(s,a)}}\\
    &\leq 2\sqrt{\ln(T)}\sum_{t=1}^T \sum_{s,a} \prn*{\sqrt{2H^2\ln(T) + \sum_{\tau = 1}^{t}\zeta_\tau(s,a)} - \sqrt{2H^2\ln(T) + \sum_{\tau = 1}^{t - 1}\zeta_\tau(s,a)}}\\
    &= 2\sqrt{\ln(T)}\sum_{s,a} \prn*{\sqrt{2H^2\ln(T) + \sum_{\tau = 1}^{T}\zeta_\tau(s,a)} - \sqrt{2H^2\ln(T)}} \\
    &\leq 2\sqrt{\ln(T)}\sum_{s,a} \sqrt{\sum_{t = 1}^{T}\zeta_t(s,a)},
\end{align}
where the second inequality follows from
\begin{align}
    &2\prn*{\sqrt{2H^2\ln(T) + \sum_{\tau = 1}^{t}\zeta_\tau(s,a)} - \sqrt{2H^2\ln(T) + \sum_{\tau = 1}^{t - 1}\zeta_\tau(s,a)}}\\
    &= \frac{2\zeta_t(s,a)}{\sqrt{2H^2\ln(T) + \sum_{\tau = 1}^{t}\zeta_\tau(s,a)} + \sqrt{2H^2\ln(T) + \sum_{\tau = 1}^{t - 1}\zeta_\tau(s,a)}}\\
    &\geq \frac{\zeta_t(s,a)}{\sqrt{2H^2\ln(T) + \sum_{\tau = 1}^{t}\zeta_\tau(s,a)}}.
\end{align}
Therefore, combining the above argument with \cref{eq:regret_unbiased_hatell}, we obtain
\begin{align}
     \Reg_T &= \E\brk*{\sum_{t=1}^T\inpr*{q^{\pi_t} - \qst, \hatl_t}} \\
     &\leq \frac{3SA\ln(T)}{\eta_1} + 4H + 8\sum_{s,a} \sqrt{\ln(T)\E\brk*{\sum_{t = 1}^{T}\zeta_t(s,a)}}\\
     &\lesssim HSA\ln(T) + \sum_{s,a} \sqrt{\ln(T)\E\brk*{\sum_{t = 1}^{T}\zeta_t(s,a)}},
\end{align}
which completes the proof.
\end{proof}

\subsection{Proof of \protect\cref{thm:Occup_OPT}}

Now we are ready to prove \cref{thm:Occup_OPT}.
\begin{thm}[Restatement of \cref{thm:Occup_OPT}]
\cref{alg:OFTRL_GO} with the loss prediction $m_t$ defined in \cref{def:predictor_sequence} guarantees
\begin{align}
    \Reg_T \lesssim \sqrt{SA\ln(T)\min\set*{L^\star, HT - L^\star, Q_{\infty}, V_1}} + HSA\ln(T).
\end{align}
Under the stochastic regime with adversarial corruption, it simultaneously ensures
\begin{align}
    \Reg_T \lesssim \sqrt{SA\ln(T)\prn*{\V T + \calC}} + HSA\ln(T),
\end{align}
and
\begin{align}
    \Reg_T \lesssim U + \sqrt{U\calC} + HSA\ln(T),
\end{align}
where $U = \sum_{s}\sum_{a\neq\pi^\star(s)}\frac{H^2\ln(T)}{\Delta(s,a)}$.
\end{thm}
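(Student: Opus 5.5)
Everything will be routed through \cref{lem:occup_regterm}, which already gives
\begin{align}
\Reg_T \lesssim HSA\ln(T) + \sum_{s,a}\sqrt{\ln(T)\,\E\brk*{\textstyle\sum_t \zeta_t(s,a)}} .
\end{align}
So the task reduces to bounding $Z(s,a)\coloneqq\E[\sum_t\zeta_t(s,a)]$ in each regime. By Cauchy--Schwarz, $\sum_{s,a}\sqrt{Z(s,a)}\le\sqrt{SA\sum_{s,a}Z(s,a)}$. For the adversarial and gap-independent bounds I will use the first branch of the min, $\zeta_t\le q^{\pi_t}(s,a)^2(\hatl_t-m_t)^2=\I_t(s,a)(\ell_t(s,a)-m_t(s,a))^2$. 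Hence it suffices to show that
\begin{align}
\E\brk*{\textstyle\sum_t\sum_{s,a}q^{\pi_t}(s,a)(\ell_t(s,a)-m_t(s,a))^2}
\end{align}
is $O(\min\{L^\star,HT-L^\star,Q_\infty,V_1\})$ plus lower-order terms, and $O(\V T+\calC)$ plus lower-order terms in the corrupted stochastic regime.

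\emph{First-order bound.} Since $m_t\in[0,1]$ and $\ell_t\in[0,1]$, we have $(\ell_t-m_t)^2\le \ell_t+m_t$, or, more carefully, the gradient-descent predictor gives $\I_t(\ell_t-m_t)^2\lesssim \I_t\ell_t+$ (telescoping terms). I would use the standard self-bounding route: $\sum_t\langle q^{\pi_t},\ell_t\rangle=L^\star+\Reg_T$, so $\Reg_T\lesssim\sqrt{SA\ln T(L^\star+\Reg_T)}+HSA\ln T$, which solves to the $L^\star$ bound. For $HT-L^\star$ I apply the same argument with $1-\ell_t$, using $(\ell-m)^2\le(1-\ell)+(1-m)$ together with the symmetric predictor analysis.

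\emph{Second-order and path-length bounds.} These follow the \citet{ito2021parameter,tsuchiya2023further} analysis of the update \cref{def:predictor_sequence}. Viewing $m_t(s,a)$ as online gradient descent with step $\xi$ on the squared loss, observed only when $\I_t(s,a)=1$, gives
\begin{align}
\textstyle\sum_t\I_t(s,a)(\ell_t-m_t)^2\lesssim \sum_t\I_t(s,a)(\ell_t-\ell^\star)^2+O(1)
\end{align}
for any fixed $\ell^\star(s,a)$. Summing over the layer, and noting that at most one pair per layer is visited, yields $\sum_h\|\ell_t(h)-\ell^\star(h)\|_\infty^2$; this gives $Q_\infty$ plus $O(SA)$. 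Similarly, tracking $|m_{t+1}-\ell_{t+1}|$ against the drift gives $\sum_t\I_t(\ell_t-m_t)^2\lesssim\sum_t|\ell_{t+1}(s,a)-\ell_t(s,a)|+O(1)$, since unvisited rounds leave $m$ fixed and the drift accumulates; summing over $(s,a)$ gives $V_1$. The gap-independent stochastic bound follows by comparing against $\ell^\star=\mu$: $\E(\ell_t-\mu)^2\le 2\sigma^2+2\|\ell_t-\ell'_t\|^2$, so the sum is at most $2\V T+2\calC$.

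\emph{Gap-dependent bound.} I use the second branch of the min together with \cref{lem:loss_shifting_estimator2}: $\E_t\zeta_t\le 2H^2q^{\pi_t}(s,a)(1-\pi_t(a\mid s))$. Then
\begin{align}
\Reg_T\lesssim\sum_{s,a}\sqrt{H^2\ln T\,\E\textstyle\sum_t q^{\pi_t}(s,a)(1-\pi_t(a\mid s))}+HSA\ln T.
\end{align}
The sum over $a=\pi^\star(s)$ is handled by $q^{\pi_t}(s,\pi^\star(s))(1-\pi_t)\le \sum_{b\ne\pi^\star(s)}q^{\pi_t}(s,b)$. This gives, up to constants,
\begin{align}
\textstyle\sum_s\sum_{a\neq\pi^\star(s)}\sqrt{H^2\ln T\,\E\sum_t q^{\pi_t}(s,a)}
\end{align}
times $\sqrt{A}$ absorbed by Cauchy--Schwarz. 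I then invoke the self-bounding lower bound
\begin{align}
\Reg_T\ge\E\textstyle\sum_t\sum_s\sum_{a\ne\pi^\star(s)}q^{\pi_t}(s,a)\Delta(s,a)-2\calC,
\end{align}
which is the performance-difference decomposition with the gap defined through $Q^{\pi^\star}$. Applying AM--GM as $\sqrt{xy}\le x/(2\lambda)+\lambda y/2$ per pair and optimizing $\lambda$ yields $U+\sqrt{U\calC}$.

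I expect the main obstacle to be the path-length step: controlling $\I_t(\ell_t-m_t)^2$ by $V_1$ when $m$ is updated only on visits, and ensuring that the $A$ factor does not get squared when combining with the Cauchy--Schwarz over $(s,a)$. My first attempt is a per-pair potential argument on $(m_t-\ell_t)^2$ with $\xi=1/4$, which should give a contraction on visit rounds.
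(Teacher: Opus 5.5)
Your proposal is correct and follows the paper's proof essentially step for step. It matches on the reduction via \cref{lem:occup_regterm} and Cauchy--Schwarz to $\E[\sum_t\sum_{s,a}\I_t(s,a)(\ell_t(s,a)-m_t(s,a))^2]$, on the per-pair potential argument for the gradient-descent predictor compared against $m^*\equiv 0$, $m^*\equiv 1$, a fixed $\ell^\star$, $m^*_t=\ell_t$, or $\mu$ (the paper's \cref{lem:m_to_mstar,lem:general_predict_result}), and on the gap-dependent bound via the second branch of the min, \cref{lem:loss_shifting_estimator2}, and the self-bounding constraint. One small correction: no $\sqrt{A}$ factor arises for the $a=\pi^\star(s)$ term, since $\sqrt{\sum_{b\neq\pi^\star(s)}x_b}\le\sum_{b\neq\pi^\star(s)}\sqrt{x_b}$ costs only a factor $2$, whereas a genuine $\sqrt{A}$ (Cauchy--Schwarz points the other way here) would inflate $U$ by a factor $A$.
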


\begin{proof}
We start from \cref{lem:occup_regterm}, which gives
\begin{align}
    \Reg_T
    \lesssim HSA\ln(T) + \sum_{s,a}\sqrt{\ln(T)\E\brk*{\sum_{t = 1}^{T}\zeta_t(s,a)}} 
    .
    \label{eq:reg_bound_occup_in_proof}
\end{align}
By the definition of $\zeta_t(s,a)$, we have
\begin{align}
    &\sum_{s,a}\sqrt{\ln(T)\E\brk*{\sum_{t = 1}^{T}\zeta_t(s,a)}}\\
    &= \sum_{s,a}\sqrt{\ln(T)\E\brk*{\sum_{t = 1}^{T}q^{\pi_t}(s,a)^2\min\set*{(\hat\ell_t(s,a)-m_t(s,a))^2, (\hat\ell_t(s,a) + g_t(s,a)-m_t(s,a))^2}}}\\
    &\leq \sqrt{SA\ln(T)\E\brk*{\sum_{t = 1}^{T}\sum_{s,a}q^{\pi_t}(s,a)^2(\hatl_t(s,a)-m_t(s,a))^2}} \tag{by the Cauchy--Schwarz inequality}\\
    &= \sqrt{SA\ln(T)\E\brk*{\sum_{t = 1}^{T}\sum_{s,a}\I_t(s,a)(\ell_t(s,a)-m_t(s,a))^2}}, \label{eq:common_bound_occup}
\end{align}
where the last equality uses $(\hat\ell_t(s,a)-m_t(s,a))^2
= \frac{\I_t(s,a)(\ell_t(s,a)-m_t(s,a))^2}{q^{\pi_t}(s,a)^2}$.

\paragraph{1. Bounds for the adversarial regime.}
By \cref{lem:general_predict_result}, we can evaluate \cref{eq:common_bound_occup} as
\begin{align}
    &\sqrt{SA\ln(T)\E\brk*{\sum_{t = 1}^{T}\sum_{s,a}\I_t(s,a)(\ell_t(s,a)-m_t(s,a))^2}} \\
    &\lesssim \sqrt{SA\ln(T)\prn*{\min\set*{L^\star  + \Reg_T, HT - L^\star  - \Reg_T, Q_{\infty}, V_1} +  SA}}\\
    &\leq \sqrt{SA\ln(T)\min\set*{L^\star  + \Reg_T, HT - L^\star  - \Reg_T, Q_{\infty}, V_1}} + SA\sqrt{\ln(T)}.
\end{align}
Absorbing the lower-order term into $HSA\ln(T)$, we obtain
\begin{align}
    \Reg_T &\lesssim \sqrt{SA\ln(T)\prn*{L^\star  + \Reg_T}} + HSA\ln(T) 
    ,
    \label{eq:occup_mid_regret_first1}\\
    \Reg_T &\lesssim \sqrt{SA\ln(T)\prn*{HT - L^\star  - \Reg_T}} + HSA\ln(T) , \label{eq:occup_mid_regret_first2}\\
    \Reg_T &\lesssim \sqrt{SA\ln(T)Q_{\infty}} + HSA\ln(T),  \label{eq:occup_regret_second}\\
    \Reg_T &\lesssim \sqrt{SA\ln(T)V_1} + HSA\ln(T).
    \label{eq:occup_regret_path}
\end{align}
From \cref{eq:occup_mid_regret_first1}, 
\begin{align}
    \Reg_T &\leq c\sqrt{SA\ln(T)L^\star} + c\sqrt{SA\ln(T)\Reg_T} + cHSA\ln(T) \tag{for some absolute constant $c$}\\ 
    &\leq c\sqrt{SA\ln(T)L^\star} + \frac{c^2}{2}SA\ln(T) + \frac{1}{2}\Reg_T + cHSA\ln(T) \\
    &\leq \frac{1}{2}\Reg_T + O(\sqrt{SA\ln(T)L^\star} + HSA\ln(T)),
\end{align}
where the second line follows from the AM--GM inequality. Therefore, 
\begin{align}
    \Reg_T &\lesssim \sqrt{SA\ln(T)L^\star} + HSA\ln(T) 
    . 
    \label{eq:occup_regret_first1}
\end{align}
From \cref{eq:occup_mid_regret_first2}, we also have
\begin{align}
    \Reg_T &\lesssim \sqrt{SA\ln(T)\prn*{HT - L^\star  - \Reg_T}} + HSA\ln(T)\\
    &\leq \sqrt{SA\ln(T)\prn*{HT - L^\star}} + HSA\ln(T) \label{eq:occup_regret_first2}
\end{align}

Combining \cref{eq:occup_regret_first1,eq:occup_regret_first2,eq:occup_regret_second,eq:occup_regret_path}, we obtain
\begin{align}
    \Reg_T \lesssim \sqrt{SA\ln(T)\min\set*{L^\star, HT - L^\star, Q_{\infty}, V_1}} + HSA\ln(T)
\end{align}

\paragraph{2. Stochastic variance bound.}
In the stochastic regime, combining \cref{eq:reg_bound_occup_in_proof,eq:common_bound_occup} with \cref{lem:general_predict_result} implies
\begin{align}
    \Reg_T \lesssim \sqrt{SA\ln(T)\prn*{\V T + \calC}} + HSA\ln(T).
\end{align}

\paragraph{3. Stochastic gap-dependent bound.}
We can evaluate \cref{eq:common_bound_occup} as
\begin{align}
    &\sum_{s,a}\sqrt{\ln(T)\E\brk*{\sum_{t = 1}^{T}\zeta_t(s,a)}}\\
    &= \sum_{s,a}\sqrt{\ln(T)\E\brk*{\sum_{t = 1}^{T}q^{\pi_t}(s,a)^2\min\set*{(\hat\ell_t(s,a)-m_t(s,a))^2, (\hat\ell_t(s,a) + g_t(s,a)-m_t(s,a))^2}}}\\
    &\leq \sum_{s,a}\sqrt{\ln(T)\E\brk*{\sum_{t = 1}^{T}q^{\pi_t}(s,a)^2(\hat\ell_t(s,a) + g_t(s,a)-m_t(s,a))^2}}\\
    &\leq \sum_{s,a}\sqrt{\ln(T)\E\brk*{\sum_{t = 1}^{T}q^{\pi_t}(s,a)^2 {\frac{2H^2}{q^{\pi_t}(s,a)}\prn*{1 - \pi_t(a\mid s)}}}} \tag{by \cref{lem:loss_shifting_estimator2}}\\
    &= \sqrt{2}H\sum_{s,a}\sqrt{\ln(T)\E\brk*{\sum_{t = 1}^{T}q^{\pi_t}(s,a)\prn*{1 - \pi_t(a\mid s)}}}\\
    &\leq \sqrt{2}H\sum_{s}\sum_{a \neq \pist(s)}\sqrt{\ln(T)\E\brk*{\sum_{t = 1}^{T}q^{\pi_t}(s,a)}} + \sqrt{2}H\sum_{s}\sqrt{\ln(T)\E\brk*{\sum_{t = 1}^{T}q^{\pi_t}(s)\prn*{1 - \pi_t(\pist(s)\mid s)}}}\\
    &= \sqrt{2}H\sum_{s}\sum_{a \neq \pist(s)}\sqrt{\ln(T)\E\brk*{\sum_{t = 1}^{T}q^{\pi_t}(s,a)}} + \sqrt{2}H\sum_{s}\sqrt{\ln(T)\sum_{t = 1}^{T}\E\brk*{\sum_{a \neq \pist(s)}q^{\pi_t}(s)\pi_t(a\mid s)}}\\
    &\leq 2\sqrt{2}H\sum_{s}\sum_{a \neq \pist(s)}\sqrt{\ln(T)\E\brk*{\sum_{t = 1}^{T}q^{\pi_t}(s,a)}}.
\end{align}
Hence, combining this with \cref{eq:reg_bound_occup_in_proof}, we obtain
\begin{align}
    \Reg_T \lesssim H\sum_{s}\sum_{a \neq \pist(s)}\sqrt{\ln(T)\E\brk*{\sum_{t = 1}^{T}q^{\pi_t}(s,a)}} + HSA\ln(T).
\end{align}
Finally, applying \cref{general_self_bounding} to the last inequality yields
\begin{align}
    \Reg_T \lesssim U + \sqrt{U\calC} + HSA\ln(T),
\end{align}
where $U = \sum_{s}\sum_{a\neq\pi^\star(s)}\frac{H^2\ln(T)}{\Delta(s,a)}$.
\end{proof}

\subsection{Proof of \protect\cref{thm:Occup_OPT2}}
Here we provide the proof of \cref{thm:Occup_OPT2}.
\begin{thm}[Restatement of \cref{thm:Occup_OPT2}]
\cref{alg:OFTRL_GO} with the loss prediction $m_t$ defined in \cref{def:predictor_sequence2} guarantees
\begin{align}
    \Reg_T \lesssim \sqrt{SA\ln(T)\min\set*{L^\star, HT - L^\star, Q_{\infty}}} + HSA\ln(T).
\end{align}
Under the stochastic regime with adversarial corruption, it simultaneously ensures
\begin{align}
    \Reg_T \lesssim \sqrt{SA\ln(T)\prn*{\V T + \calC}} + HSA\ln(T),
\end{align}
and
\begin{align}
    \Reg_T \lesssim \Uvar + \sqrt{\Uvar\calC} + \sqrt{H S^2 A^2 \calC}\ln(T) + H^{\frac12}S^{\frac32}A^{\frac32}\ln^{\frac32}(T),
\end{align}
where $\Uvar = \sum_{s}\sum_{a\neq\pi^\star(s)}\frac{H\Varmax(s)\ln(T)}{\Delta(s,a)}$.
\end{thm}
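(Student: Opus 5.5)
The plan mirrors the proof of \cref{thm:Occup_OPT}, starting from \cref{lem:occup_regterm}:
\[
\Reg_T \lesssim HSA\ln(T) + \sum_{s,a}\sqrt{\ln(T)\E\brk*{\sum_{t=1}^T\zeta_t(s,a)}},
\]
which holds for any $m_t\in[0,1]^{S\times A}$.

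\textbf{Adversarial and gap-independent bounds.} First I bound the minimum in $\zeta_t$ by its first argument. Cauchy--Schwarz then gives the bound \cref{eq:common_bound_occup}, namely $\sqrt{SA\ln(T)\E[\sum_t\sum_{s,a}\I_t(s,a)(\ell_t(s,a)-m_t(s,a))^2]}$. Next I bound the observed prediction error of the empirical-mean predictor \cref{def:predictor_sequence2}.
\begin{itemize}
\item Since $m_t(s,a)$ is the running mean of the observed losses at $(s,a)$, it minimizes the observed squared error up to the usual follow-the-leader overhead of $O(\ln T)$ per pair. This gives $\sum_t\I_t(s,a)(\ell_t-m_t)^2\lesssim \min_{c}\sum_t\I_t(s,a)(\ell_t(s,a)-c)^2 + \ln T$.
\item Taking $c=\ell^\star(s,a)$ and using that at most one pair per layer is visited yields the $Q_\infty$ bound.
\item Taking $c=0$ or $c=1$ yields $L^\star+\Reg_T$ and $HT-L^\star-\Reg_T$, as in \cref{lem:general_predict_result}.
\item Taking $c=\mu(s,a)$ in the stochastic regime yields $\V T+\calC$, after splitting off corruption via $(\ell_t-\mu)^2\le 2(\ell'_t-\mu)^2+2|\ell_t-\ell'_t|$ and using $\E_t\I_t=q^{\pi_t}$.
\end{itemize}
The self-bounding AM--GM step from \cref{thm:Occup_OPT} then gives the first two displayed bounds, with the extra $SA\ln T$ absorbed into $HSA\ln T$. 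No path-length bound is claimed, since the empirical mean does not track drift.

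\textbf{Gap-dependent variance bound.} Here I use the second argument of the minimum in $\zeta_t$ together with \cref{lem:loss_shifting_estimator3}. This gives
\[
\E_t[\zeta_t(s,a)]\le 4H\Varmax(s)\,q^{\pi_t}(s,a)(1-\pi_t(a\mid s)) + 4H\sum_{s',a'}q^{\pi_t}(s',a')\E_t[\lambda_t(s',a')^2],
\]
with $\lambda_t=\ell_t-\ell'_t+\mu-m_t$. I split the square root into two terms.
\begin{itemize}
\item The first term is handled exactly as in Part~3 of \cref{thm:Occup_OPT}, with $H^2$ replaced by $H\Varmax(s)$. Moving the optimal-action contribution to the suboptimal actions gives $\sum_s\sum_{a\ne\pist(s)}\sqrt{H\Varmax(s)\ln(T)\E\sum_t q^{\pi_t}(s,a)}$.
\item The second term becomes $SA\sqrt{H\ln(T)\,\E\sum_t\sum_{s',a'}q^{\pi_t}(s',a')\lambda_t(s',a')^2}$, since the inner sum does not depend on $(s,a)$.
\end{itemize}

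\textbf{Main obstacle.} The key step is bounding $\E\sum_t\sum_{s',a'}q^{\pi_t}(s',a')\lambda_t(s',a')^2$. I would bound $\lambda_t^2\le 2|\ell_t-\ell'_t|+2(\mu-m_t)^2$. The corruption part contributes $O(\calC)$. The estimation part is controlled by concentration of the empirical mean, $(\mu-m_t)^2\lesssim \ln(T)/N_{t-1}(s,a)+(\text{corruption seen so far})/N_{t-1}(s,a)$, via Bernstein/Hoeffding with a union bound at confidence $1/T$. Summing $q^{\pi_t}(s,a)/\max\{1,N_{t-1}(s,a)\}$ over $t$ gives $O(\ln T)$ per pair, by a standard martingale argument relating $\sum_t q^{\pi_t}$ to $\sum_t\I_t$. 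This is the content of the forward-referenced \cref{lem:lower_order_stochastic}. The resulting total is $O(SA\ln^2T+\calC\ln T)$.

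Multiplying by $SA\sqrt{H\ln T}$ produces the $\sqrt{HS^2A^2\calC}\ln T$ term. It also produces $\sqrt{H}S^{3/2}A^{3/2}\ln^{3/2}T$, using $\sqrt{S^2A^2\cdot SA}$. Finally, I apply the self-bounding technique \cref{general_self_bounding} to the suboptimal-action term, with coefficients $H\Varmax(s)$ in place of $H^2$. This yields $\Uvar+\sqrt{\Uvar\calC}$, and adding the lower-order terms completes the bound.
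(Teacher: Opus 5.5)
Your proposal is correct and follows the paper's proof essentially step for step. It uses \cref{lem:occup_regterm}, Cauchy--Schwarz and the follow-the-leader comparison of \cref{lem:m_to_mstar2,lem:general_predict_result2} (with $m^*\in\{0,1,\ell^\star,\mu\}$) for the first two bounds, and the second branch of $\zeta_t$ with \cref{lem:loss_shifting_estimator3}, \cref{lem:lower_order_stochastic}, and \cref{general_self_bounding} for the gap-dependent bound. Your treatment of the $\lambda_t$ term also matches the paper's split through the uncorrupted empirical mean. One detail: for the corruption part, replace $q^{\pi_t}$ by $\I_t$ while the multiplier (corruption seen so far over $\max\{1,N_{t-1}\}$) is still predictable, and only then bound pathwise, as in \cref{lem:bound_bias_corrupt}; bounding by the total corruption first would break the swap.
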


\begin{proof}
The proof follows the same argument as \cref{thm:Occup_OPT}. 
The main differences are that the stochastic gap-dependent bound becomes variance-aware, at the cost of not deriving a path-length bound.

We start from \cref{lem:occup_regterm}, which gives
\begin{align}
    \Reg_T
    \lesssim HSA\ln(T) + \sum_{s,a}\sqrt{\ln(T)\E\brk*{\sum_{t = 1}^{T}\zeta_t(s,a)}} \label{eq:reg_bound_occup_in_proof2}.
\end{align}
By the definition of $\zeta_t(s,a)$, the same argument as in \cref{eq:common_bound_occup} yields
\begin{align}
    \sum_{s,a}\sqrt{\ln(T)\E\brk*{\sum_{t = 1}^{T}\zeta_t(s,a)}}
    &\leq \sqrt{SA\ln(T)\E\brk*{\sum_{t = 1}^{T}\sum_{s,a}\I_t(s,a)(\ell_t(s,a)-m_t(s,a))^2}}, \label{eq:common_bound_occup2}
\end{align}

\paragraph{1. Bounds for the adversarial regime.}
Applying \cref{lem:general_predict_result2} to \cref{eq:common_bound_occup2} gives
\begin{align}
    &\sqrt{SA\ln(T)\E\brk*{\sum_{t = 1}^{T}\sum_{s,a}\I_t(s,a)(\ell_t(s,a)-m_t(s,a))^2}} \\
    &\lesssim \sqrt{SA\ln(T)\prn*{\min\set*{L^\star  + \Reg_T, HT - L^\star  - \Reg_T, Q_{\infty}} +  SA\ln(T) + SA}}\\
    &\lesssim \sqrt{SA\ln(T)\min\set*{L^\star  + \Reg_T, HT - L^\star  - \Reg_T, Q_{\infty}}} + SA\ln(T).
\end{align}
Absorbing the lower-order term into $HSA\ln(T)$, we obtain
\begin{align}
    \Reg_T &\lesssim \sqrt{SA\ln(T)\prn*{L^\star  + \Reg_T}} + HSA\ln(T) \\
    \Reg_T &\lesssim \sqrt{SA\ln(T)\prn*{HT - L^\star  - \Reg_T}} + HSA\ln(T) \\
    \Reg_T &\lesssim \sqrt{SA\ln(T)Q_{\infty}} + HSA\ln(T)
\end{align}
Applying the same calculation as in \cref{eq:occup_mid_regret_first1,eq:occup_mid_regret_first2,eq:occup_regret_second} gives
\begin{align}
    \Reg_T \lesssim \sqrt{SA\ln(T)\min\set*{L^\star, HT - L^\star, Q_{\infty}}} + HSA\ln(T).
\end{align}

\paragraph{2. Stochastic variance bound.}
Under the stochastic regime, \cref{lem:general_predict_result2} further implies
\begin{align}
    \Reg_T \lesssim \sqrt{SA\ln(T)\prn*{\V T + \calC}} + HSA\ln(T).
\end{align}

\paragraph{3. Stochastic gap-dependent bound.}
Moreover, by using \cref{lem:lower_order_stochastic,lem:loss_shifting_estimator3}, we have
\begin{align}
    &\sum_{s,a}\sqrt{\ln(T)\E\brk*{\sum_{t = 1}^{T}\zeta_t(s,a)}}\\
    &= \sum_{s,a}\sqrt{\ln(T)\E\brk*{\sum_{t = 1}^{T}q^{\pi_t}(s,a)^2\min\set*{(\hat\ell_t(s,a)-m_t(s,a))^2, (\hat\ell_t(s,a) + g_t(s,a)-m_t(s,a))^2}}}\\
    &\leq \sum_{s,a}\sqrt{\ln(T)\E\brk*{\sum_{t = 1}^{T}q^{\pi_t}(s,a)^2(\hat\ell_t(s,a) + g_t(s,a)-m_t(s,a))^2}} \label{eq:before_variance_occup}\\
    &\leq \sum_{s,a}\sqrt{\ln(T)\E\brk*{\sum_{t = 1}^{T}4H\Varmax(s)q^{\pi_t}(s,a)\prn*{1 - \pi_t(a\mid s)}}} \label{eq:after_variance_occup}\\
    &\qquad + \sum_{s,a}\sqrt{\ln(T)\E\brk*{4H\sum_{t = 1}^{T}\sum_{s',a'}q^{\pi_t}(s',a')(\ell_t(s', a') - \ell'_t(s',a') + \mu(s',a') -  m_t(s',a'))^2}}\tag{by \cref{lem:loss_shifting_estimator3}}\\
    &\lesssim \sum_{s,a}\sqrt{\ln(T)\E\brk*{\sum_{t = 1}^{T}H\Varmax(s)q^{\pi_t}(s,a)\prn*{1 - \pi_t(a\mid s)}}} + \sum_{s,a}\sqrt{H\ln(T)\prn*{SA\ln^2(T) + \calC\ln(T)}} \tag{by \cref{lem:lower_order_stochastic}}\\
    &\lesssim \sum_{s,a}\sqrt{\ln(T)\E\brk*{\sum_{t = 1}^{T}H\Varmax(s)q^{\pi_t}(s,a)\prn*{1 - \pi_t(a\mid s)}}} + \sqrt{H S^2 A^2 \calC}\ln(T) + H^{\frac12}S^{\frac32}A^{\frac32}\ln^{\frac32}(T).\label{eq:variance-aware_occup}
\end{align}
Further, the first term in \cref{eq:variance-aware_occup} can be rewritten as
\begin{align}
    &\sum_{s,a}\sqrt{\ln(T)\E\brk*{\sum_{t = 1}^{T}H\Varmax(s)q^{\pi_t}(s,a)\prn*{1 - \pi_t(a\mid s)}}} \\
    &\leq \sum_{s}\sqrt{H\Varmax(s)}\sum_{a \neq \pist(s)}\sqrt{\ln(T)\E\brk*{\sum_{t = 1}^{T}q^{\pi_t}(s,a)}}\\  
    &\qquad +\sum_{s}\sqrt{H\Varmax(s)}\sqrt{\ln(T)\E\brk*{\sum_{t = 1}^{T}q^{\pi_t}(s)\prn*{1 - \pi_t(\pist(s)\mid s)}}}\\
    &= \sum_{s}\sqrt{H\Varmax(s)}\sum_{a \neq \pist(s)}\sqrt{\ln(T)\E\brk*{\sum_{t = 1}^{T}q^{\pi_t}(s,a)}}\\  
    &\qquad + \sum_{s}\sqrt{H\Varmax(s)}\sqrt{\ln(T)\sum_{t = 1}^{T}\E\brk*{\sum_{a \neq \pist(s)}q^{\pi_t}(s)\pi_t(a\mid s)}}\\
    &\leq \sum_{s}2\sqrt{H\Varmax(s)}\sum_{a \neq \pist(s)}\sqrt{\ln(T)\E\brk*{\sum_{t = 1}^{T}q^{\pi_t}(s,a)}}.
\end{align}
Hence, combining the last inequality with \cref{eq:reg_bound_occup_in_proof2}, we obtain
\begin{align}
    \Reg_T \lesssim \sum_{s}\sqrt{H\Varmax(s)}\sum_{a \neq \pist(s)}\sqrt{\ln(T)\E\brk*{\sum_{t = 1}^{T}q^{\pi_t}(s,a)}} + \sqrt{H S^2 A^2 \calC}\ln(T) + H^{\frac12}S^{\frac32}A^{\frac32}\ln^{\frac32}(T).
\end{align}
Finally, applying \cref{general_self_bounding} to the last inequality yields
\begin{align}
    \Reg_T \lesssim \Uvar + \sqrt{\Uvar C} + \sqrt{H S^2 A^2 \calC}\ln(T) + H^{\frac12}S^{\frac32}A^{\frac32}\ln^{\frac32}(T),
\end{align}
where $\Uvar = \sum_{s}\sum_{a\neq\pi^\star(s)}\frac{H\Varmax(s)\ln(T)}{\Delta(s,a)}$.
\end{proof}

\begin{rem}[Restatement of \cref{rem:Occup_OPT2}] 
    In the stochastic regime with adversarial corruption, suppose that the uncorrupted losses are generated independently and are uncorrelated across layers, \cref{thm:Occup_OPT2} improves by a factor of $H$ to $\Uvar = \sum_{s}\sum_{a\neq\pi^\star(s)}\frac{\Varmax(s)\ln(T)}{\Delta(s,a)}$.
\end{rem}
\begin{proof}
    In the proof of \cref{thm:Occup_OPT2}, applying \cref{cor:loss_shifting_estimator4} to \cref{eq:before_variance_occup} yields the following inequality in place of \cref{eq:after_variance_occup}:
    \begin{align}
        &\sum_{s,a}\sqrt{\ln(T)\E\brk*{\sum_{t = 1}^{T}q^{\pi_t}(s,a)^2(\hat\ell_t(s,a) + g_t(s,a)-m_t(s,a))^2}}\\
        &\leq \sum_{s,a}\sqrt{\ln(T)\E\brk*{\sum_{t = 1}^{T}4\Varmax(s)q^{\pi_t}(s,a)\prn*{1 - \pi_t(a\mid s)}}}\\
        &\qquad + \sum_{s,a}\sqrt{\ln(T)\E\brk*{4H\sum_{t = 1}^{T}\sum_{s',a'}q^{\pi_t}(s',a')(\ell_t(s', a') - \ell'_t(s',a') + \mu(s',a') -  m_t(s',a'))^2}}.
    \end{align}
    Compared to \cref{eq:after_variance_occup}, this bound is improved by a factor of $H$, and can be interpreted as replacing the $H\Varmax(s)$ term by $\Varmax(s)$.
    The remainder of the proof follows by the same steps as in \cref{thm:Occup_OPT2}.
\end{proof}
\section{Regret Analysis of Policy Optimization (deferred from \Cref{sec:policy_optimization})}
\label{app:policy_opt_proofs}
In this section, we provide the missing details from \cref{sec:policy_optimization} and present the full regret analysis leading to the proof of \cref{thm:Policy_OPT,thm:Policy_OPT2}.

Throughout this section, sums over $t=1,\ldots,T$ are taken over the augmented sequence including both real and virtual episodes, whereas the data-dependent complexity measures in the main statements are defined only over real episodes. We denote by $\calT_r=\{t\in[T]:Y_t=1\}$ and $\calT_v=\{t\in[T]:Y_t=0\}$ the sets of real and virtual episodes, with cardinalities $\abs{\calT_r}$ and $\abs{\calT_v}$, respectively.

\subsection{Auxiliary Lemmas}
Building on the policy optimization framework of \citet{luo2021policy} and \citet{dann2023best}, we use the following key lemma to derive our regret bounds.

\begin{lem}[Restatement of \cref{lem:dilated_bonus_main}]
\label{lem:dilated_bonus}
    Suppose that $b_t(s)$ is a nonnegative loss function and that, for all $s,a$, 
    \begin{align}
        B_t(s,a) = b_t(s) + \prn*{1 + \frac{1}{H}}\E_{s'\sim P(\cdot \mid s, a), a'\sim\pi_t(\cdot \mid s')}\brk*{B_t(s',a')}
        .
    \end{align}
    Suppose also that for some $J(s) \geq 0$ it holds that
    \begin{align}
        &\E\brk*{\sum_s \qst(s)\sum_{t,a} \prn*{\pi_t(a\mid s) - \pio(a\mid s) } \prn*{Q^{\pi_t}(s,a;\ell_t) -  B_t(s,a)}}\\
        &\leq \sum_{s} \qst(s) J(s) + \E\brk*{\sum_{t = 1}^T\sum_s \qst(s) b_t(s)} + \E\brk*{\frac{1}{H} \sum_{t = 1}^T\sum_s\sum_a \qst(s) \pi_t(a\mid s)B_t(s,a)}
        .\label{eq:key_lemma_ineq}
    \end{align}
    Then, 
    \begin{align}
        \Reg_T \leq \sum_{s} \qst(s) J(s) + 3 \, \E\brk*{\sum_{t = 1}^T V^{\pi_t}(s_0;b_t)}.
    \end{align}
\end{lem}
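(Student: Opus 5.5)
We will follow the dilated-bonus argument of \citet{luo2021policy}, organized around the performance-difference lemma. Write $X_t(s) := \sum_a(\pi_t(a\mid s)-\pio(a\mid s))Q^{\pi_t}(s,a;\ell_t)$. The performance-difference lemma gives $\Reg_T=\E\brk{\sum_t\sum_s \qst(s)X_t(s)}$. Adding and subtracting the bonus, we have
\[
X_t(s) = \sum_a(\pi_t(a\mid s)-\pio(a\mid s))\prn{Q^{\pi_t}(s,a;\ell_t)-B_t(s,a)} + \sum_a(\pi_t(a\mid s)-\pio(a\mid s))B_t(s,a).
\]
Bound the first part using the hypothesis \cref{eq:key_lemma_ineq}. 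This gives
\[
\Reg_T\le \sum_s\qst(s)J(s)+\E\brk*{\sum_{t,s}\qst(s)\prn*{b_t(s)+\prn*{1+\tfrac1H}\sum_a\pi_t(a\mid s)B_t(s,a)-\sum_a\pio(a\mid s)B_t(s,a)}}.
\]

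Next, define $\hat V_t(s):=\sum_a\pi_t(a\mid s)B_t(s,a)$, and apply the bonus recursion \cref{def:dilated_bonus} at $\pio$:
\[
\sum_a\pio(a\mid s)B_t(s,a)=b_t(s)+\prn*{1+\tfrac1H}\sum_a\pio(a\mid s)\E_{s'\sim P(\cdot\mid s,a)}\hat V_t(s').
\]
Substituting this, the $b_t(s)$ terms cancel, and the bracket becomes $(1+\frac1H)\prn{\hat V_t(s)-\E_{s'\sim\pio\text{-step}}\hat V_t(s')}$. Summing with weights $\qst(s)$ over layers telescopes, because the occupancy flow satisfies $\sum_{s\in\calS_{h}}\qst(s)\E_{s'}[\cdot]=\sum_{s'\in\calS_{h+1}}\qst(s')[\cdot]$. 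All interior layers cancel, the terminal contribution vanishes, and only $(1+\frac1H)\hat V_t(s_0)$ survives. Hence
\[
\Reg_T\le\sum_s\qst(s)J(s)+\prn*{1+\tfrac1H}\E\brk*{\sum_t\hat V_t(s_0)}.
\]

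It remains to bound $\hat V_t(s_0)$ by $V^{\pi_t}(s_0;b_t)$. Unrolling the recursion under $\pi_t$ gives
\[
\hat V_t(s_0)=\sum_{s}q^{\pi_t}(s)\prn*{1+\tfrac1H}^{h(s)}b_t(s).
\]
Since $h(s)\le H-1$ and $b_t\ge0$, we get $\hat V_t(s_0)\le (1+\frac1H)^{H-1}V^{\pi_t}(s_0;b_t)$. Combining,
\[
\prn*{1+\tfrac1H}\hat V_t(s_0)\le\prn*{1+\tfrac1H}^H V^{\pi_t}(s_0;b_t)\le e\,V^{\pi_t}(s_0;b_t)\le 3V^{\pi_t}(s_0;b_t),
\]
which yields the claim.

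The main obstacle is the telescoping step. We need to track the $(1+\frac1H)$ factors carefully: the hypothesis contributes $\frac1H\hat V_t$ at the same state, while the recursion contributes $(1+\frac1H)$ on the next-state term. We must check that these combine into exactly $(1+\frac1H)(\hat V_t(s)-\E\hat V_t(s'))$ rather than leaving a residual of the wrong sign. If the factors do not match exactly, we will fall back on a layer-by-layer induction showing that $\sum_{s\in\calS_h}\qst(s)(\cdots)$ is controlled by the $\pi_t$-occupancy-weighted bonuses.
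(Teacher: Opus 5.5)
Your proposal is correct. It is essentially the standard argument behind \citet[Lemma B.2]{luo2021policy}, which the paper cites rather than reproves: substitute the bonus recursion at $\pio$, telescope the $\qst$-weighted differences down to $(1+\frac1H)\sum_a\pi_t(a\mid s_0)B_t(s_0,a)$, and unroll under $\pi_t$ to obtain the factor $(1+\frac1H)^H\le e\le 3$. Your own computation already shows that the $(1+\frac1H)$ factors cancel exactly, given the implicit boundary convention $B_t(s_H,\cdot)=0$, so the fallback layer-by-layer induction is not needed.
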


\cref{lem:dilated_bonus} reduces the regret analysis to proving \cref{eq:key_lemma_ineq} for an appropriate bonus $b_t(s)$ and its dilated version $B_t(s,a)$.
Here $B_t(s,a)$ is the exploration bonus in $Q$-space, and $b_t(s)$ is the one-step bonus that generates it.
Once \eqref{eq:key_lemma_ineq} is established, the regret is controlled by the cumulative values $\sum_{t}V^{\pi_t}(s_0;b_t)$.

To show \cref{eq:key_lemma_ineq}, we choose $b_t$ in \cref{def:bonus_term} and decompose the LHS of \cref{eq:key_lemma_ineq} as
\begin{align}
    &\sum_s \qst(s) \sum_{t,a} \prn*{\pi_t(a\mid s) -  \pio(a\mid s) } \prn*{Q^{\pi_t}(s,a;\ell_t) -  B_t(s,a)}\\
    &= \sum_s \qst(s)  \underbrace{\sum_{t,a}  \prn*{\pi_t(a\mid s) -  \pio(a\mid s)} \prn*{\hat{Q}_t(s,a) -  B_t(s,a)}}_{\regterm(s)}\\
    &\qquad\qquad + \sum_s \qst(s) \underbrace{\sum_{t,a} \prn*{\pi_t(a\mid s) -  \pio(a\mid s)} \prn*{Q^{\pi_t}(s,a;\ell_t) -  \hat{Q}_t(s,a)}}_{\biasterm(s)}.
    \label{eq:decompose_PO}
\end{align}

\begin{lem}\label{lem:expect_hatQ_proof}
    It holds that
    \begin{align}
        \E_t\brk*{\hat{Q}_t(s,a)} =  Q^{\pi_t}(s, a; m_t)  + \frac{q^{\pi_t}(s)}{q_t(s)}Q^{\pi_t}(s, a; \ell_t-m_t)Y_t  - \frac{\gamma_t H}{q_t(s)}.
    \end{align}
    for all state-action pairs $(s,a)$.
\end{lem}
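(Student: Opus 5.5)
The plan is to compute $\E_t[\hat Q_t(s,a)]$ directly from \cref{eq:Q-estimate_policy}, term by term, conditioning on $\calF_{t-1}$. Given $\calF_{t-1}$, the policy $\pi_t$, the prediction $m_t$, the indicator $Y_t$, the rate $\gamma_t$ and hence $q_t(s)$ are all deterministic. The first term $Q^{\pi_t}(s,a;m_t)$ and the last term $-\gamma_t H/q_t(s)$ therefore pass through the conditional expectation unchanged. If $Y_t=0$, the middle term vanishes identically, and the claim holds with the $Y_t$ factor killing the middle contribution. It remains to treat $Y_t=1$ and show
\begin{align}
\E_t\brk*{\I_t(s,a)\prn*{L_{t,h(s)}-M_{t,h(s)}}} = q^{\pi_t}(s)\pi_t(a\mid s)\, Q^{\pi_t}(s,a;\ell_t-m_t).
\end{align}
Dividing by $q_t(s)\pi_t(a\mid s)$ then yields the coefficient $q^{\pi_t}(s)/q_t(s)$.

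For this identity, I condition on the event $\set{\I_t(s,a)=1}$. Because of the layered structure, this event is exactly $\set{(s_{t,h(s)},a_{t,h(s)})=(s,a)}$, and under $\pi_t$ it has probability $q^{\pi_t}(s,a)=q^{\pi_t}(s)\pi_t(a\mid s)$. On this event,
\begin{align}
L_{t,h(s)}-M_{t,h(s)}=\sum_{h'=h(s)}^{H-1}(\ell_t-m_t)(s_{t,h'},a_{t,h'}),
\end{align}
which is a function of the trajectory suffix starting at $(s,a)$. Here $\ell_t$ is fixed at the start of episode $t$ and $m_t$ is $\calF_{t-1}$-measurable, so both are independent of the learner's fresh randomness in episode $t$. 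By the Markov property, the conditional expectation of the suffix sum given $(s_{t,h(s)},a_{t,h(s)})=(s,a)$ equals
\begin{align}
\sum_{s',a'}q^{\pi_t}(s',a'\mid s,a)(\ell_t-m_t)(s',a') = Q^{\pi_t}(s,a;\ell_t-m_t),
\end{align}
by linearity of the $Q$-function in the loss and its occupancy representation, as used in the proof of \cref{lem:loss_shifting_estimator1}. Multiplying by the probability of the event gives the displayed identity.

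The only delicate point is the measurability and independence bookkeeping. One must be sure that $\ell_t$ may depend on the past but not on the current trajectory, and must also handle the virtual-episode convention, in which $\ell_t\equiv0$ and no trajectory is drawn. Both are covered by the definitions of the adversarial regime in \cref{subsec:regimes} and of $Y_t$ in \cref{alg:OFTRL_PO}, so I expect no real obstacle.
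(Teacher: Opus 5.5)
Your proposal is correct and follows the same route as the paper. You pull out the $\calF_{t-1}$-measurable terms (with $Y_t$ killing the middle term in virtual episodes), condition the importance-weighted term on $\I_t(s,a)=1$, which has probability $q^{\pi_t}(s)\pi_t(a\mid s)$, and identify the conditional mean of $L_{t,h(s)}-M_{t,h(s)}$ with $Q^{\pi_t}(s,a;\ell_t-m_t)$. Your use of the unsmoothed $q^{\pi_t}(s,a)$ as the visitation probability is the correct reading of the paper's intermediate step, which writes $q_t(s,a)$ there, apparently as a typo.
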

\begin{proof}
By the definition of \cref{eq:Q-estimate_policy}, we have
        \begin{align}
        \E_t\brk*{\hat{Q}_t(s,a)} &= \E_t\brk*{Q^{\pi_t}(s,a;m_t) + \frac{\I_t(s,a)(L_{t,h(s)} - M_{t,h(s)})}{q_t(s)\pi_t(a \mid s)}Y_t - \frac{\gamma_t H}{q_t(s)}}\\
        &=Q^{\pi_t}(s, a; m_t) + q_t(s,a)\E_t\brk*{\frac{L_{t,h(s)} - M_{t,h(s)}}{q_t(s)\pi_t(a \mid s)} \relmiddle| \I_t(s,a) = 1}Y_t - \frac{\gamma_t H}{q_t(s)}\\
        &=Q^{\pi_t}(s, a; m_t) + \frac{q^{\pi_t}(s,a)}{q_t(s)\pi_t(a\mid s)}Q^{\pi_t}(s, a; \ell_t-m_t)Y_t  - \frac{\gamma_t H}{q_t(s)}\\
        &= Q^{\pi_t}(s, a; m_t)  + \frac{q^{\pi_t}(s)}{q_t(s)}Q^{\pi_t}(s, a; \ell_t-m_t)Y_t  - \frac{\gamma_t H}{q_t(s)}. \label{eq:expect_hatQ}
    \end{align}
\end{proof}

\begin{lem}\label{lem:bound_eta_pi_B_known}
The variables $b_t(s)$ in \cref{def:bonus_term} and $B_t(s,a)$ in \cref{def:dilated_bonus} satisfy
\begin{align}
    \eta_t(s,a)\pi_t(a\mid s)B_t(s,a) \leq \frac{1}{5H}, \quad
    B_t(s,a) \leq \frac{2\sqrt{HS}}{\gamma_t} + 15H^2
    \label{eq:bound_eta_pi_B_known}
\end{align}
for any episode $t$ and state-action pair $(s,a)$.
\end{lem}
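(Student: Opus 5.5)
The plan is to first obtain a pointwise bound on the local bonus $b_t(s)$, then unroll the dilated recursion \cref{def:dilated_bonus}. Unrolling gives
\[
B_t(s,a)\le \prn*{1+\tfrac1H}^{H}\sum_{s',a'}q^{\pi_t}(s',a'\mid s,a)\,b_t(s')\le e\sum_{s'}q^{\pi_t}(s'\mid s,a)\,b_t(s').
\]
I will split $b_t(s')$ into its learning-rate part $b^{(1)}_t(s')=6\sum_{a'}(\eta_{t+1}^{-1}-\eta_t^{-1})\ln(T)$ and its optimism part $b^{(2)}_t(s')=5\gamma_tH/q_t(s')$, and treat the two contributions to $B_t$ separately. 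The $b^{(2)}$ part is the easy one. Since $q_t(s')\ge\gamma_t$, we have $b^{(2)}_t(s')\le 5H$, and the conditional occupancies sum to at most $H$ over layers. So this part of $B_t(s,a)$ is at most $5eH^2\le 15H^2$. Combined with $\eta_t(s,a)\le\eta_1=1/(180H^3)$ and $\pi_t(a\mid s)\le1$, its contribution to $\eta_t\pi_tB_t$ is at most $1/(12H)$.

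For $b^{(1)}$ I will distinguish real and virtual episodes, always using that the current episode passed (or failed) the test in \cref{line:virtual_episode}.

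In a real episode, \cref{eq:eta_update_policy} gives $b^{(1)}_t(s')=6\sum_{a'}\eta_t(s',a')\zeta_t(s',a')/q_t(s')^2$. I will bound $\zeta_t(s',a')\le (\I_t(s',a')-\pi_t(a'\mid s')\I_t(s'))^2H^2$ and use
\[
\sum_{a'}(\I_t(s',a')-\pi_t(a'\mid s')\I_t(s'))^2\le 2\I_t(s')\le 2.
\]
Together with $\eta_t(s',a')/q_t(s')\le 1/(18\sqrt{H^3S})$ (from $Y_t=1$), this gives
\[
q_t(s')\,b^{(1)}_t(s')\le \frac{12H^2}{18\sqrt{H^3S}}=\frac23\sqrt{H/S}.
\]
In a virtual episode only the single pair $(s_t^\dagger,a_t^\dagger)$ is changed, and
\[
b^{(1)}_t(s_t^\dagger)=\frac{6}{324H\,\eta_t(s_t^\dagger,a_t^\dagger)}.
\]
The failed test says $\eta_t(s_t^\dagger,a_t^\dagger)/q_t(s_t^\dagger)>1/(18\sqrt{H^3S})$, so $q_t(s_t^\dagger)\,b^{(1)}_t(s_t^\dagger)\le\sqrt{HS}/3$, and $b^{(1)}_t$ vanishes at every other state. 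In both cases, therefore, $\sum_{s'}q_t(s')\,b^{(1)}_t(s')\le \frac23\sqrt{HS}$ after summing over the at most $S$ states.

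Next I transfer this to $\pi_t(a\mid s)B_t(s,a)$ using $q_t(s)\pi_t(a\mid s)q^{\pi_t}(s'\mid s,a)\le q^{\pi_t}(s')+\gamma_t\le q_t(s')$. This yields
\[
q_t(s)\,\pi_t(a\mid s)\,B^{(1)}_t(s,a)\le e\sum_{s'}q_t(s')\,b^{(1)}_t(s')\le \tfrac{2e}{3}\sqrt{HS}.
\]
Multiplying by $\eta_t(s,a)/q_t(s)\le 1/(18\sqrt{H^3S})$ gives a contribution of at most $e/(27H)\approx 0.1/H$. Adding the $1/(12H)$ from the $b^{(2)}$ part stays below $1/(5H)$, which is the first inequality of \cref{eq:bound_eta_pi_B_known}.

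One subtlety remains. In a real episode the bound $\eta_t/q_t\le 1/(18\sqrt{H^3S})$ is guaranteed, but at a virtual episode it fails. So for the $\eta\pi B$ claim I expect to need either that the claim is only used at real episodes, or to apply the per-state bounds above (which used the test only at $s'$, not at $s$) together with an alternative bound on $\eta_t(s,a)/q_t(s)$. Checking which is intended is the main obstacle I anticipate.

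For the second inequality of \cref{eq:bound_eta_pi_B_known}, I will bound $b^{(1)}$ directly instead of weighting it by $q_t(s)\pi_t(a\mid s)$. Using $q^{\pi_t}(s'\mid s,a)\le1$ and $1/q_t(s')\le 1/\gamma_t$, the real-episode bound gives
\[
e\sum_{s'}b^{(1)}_t(s')\le e\,S\cdot\tfrac23\sqrt{H/S}\,/\gamma_t\lesssim \sqrt{HS}/\gamma_t.
\]
The virtual-episode bound gives $e\sqrt{HS}/(3\gamma_t)$. Tracking constants, these are at most $2\sqrt{HS}/\gamma_t$, and adding the $15H^2$ from $b^{(2)}$ completes the proof.
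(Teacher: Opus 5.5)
\paragraph{Gap: the first inequality at virtual episodes.} Your real-episode argument and your proof of the second inequality (real and virtual cases) are sound. The statement, however, asserts $\eta_t(s,a)\pi_t(a\mid s)B_t(s,a)\le\frac{1}{5H}$ for \emph{every} episode, and you leave the virtual case open. Neither of your two proposed escapes works.

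The option ``only used at real episodes'' is not available. In \cref{lem:regterm_policy}, the constraint of \cref{lem:OFTRL_log_barrier_policy} must be checked at every $t$. At a virtual episode ($Y_t=0$) the term $-\eta_t(s,a)\pi_t(a\mid s)B_t(s,a)$ is still present. Likewise, the stability bound $\eta_t\pi_t^2B_t^2\le\pi_tB_t/(5H)$ is applied to all $t$.

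The other option fails with the bound you derived. Once you relax to $q_t(s_t^\dagger)b^{(1)}_t(s_t^\dagger)\le\sqrt{HS}/3$, you would need $\eta_t(s,a)/q_t(s)\lesssim 1/\sqrt{H^3S}$. That is exactly what fails in a virtual episode, already at $(s_t^\dagger,a_t^\dagger)$. The only a priori bound, $\eta_t(s,a)/q_t(s)\le\eta_1/\gamma_t$, grows linearly in $t$.

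\paragraph{The missing step.} Do not discard the dependence on the threshold quantity. Let $R_t=\max_{s',a'}\eta_t(s',a')/q_t(s')$. In a virtual episode you have exactly
$q_t(s_t^\dagger)\,b^{(1)}_t(s_t^\dagger)=\frac{q_t(s_t^\dagger)}{54H\,\eta_t(s_t^\dagger,a_t^\dagger)}=\frac{1}{54HR_t}$.
Moreover, $\eta_t(s,a)/q_t(s)\le R_t$ holds in every episode by definition of the maximum, not by the test. Your transfer step $q_t(s)\pi_t(a\mid s)q^{\pi_t}(s'\mid s,a)\le q_t(s')$ then gives
\[
\eta_t(s,a)\pi_t(a\mid s)B^{(1)}_t(s,a)\le e\,\frac{\eta_t(s,a)}{q_t(s)}\cdot\frac{1}{54HR_t}\le\frac{e}{54H}.
\]
Adding the $\frac{1}{12H}$ from the $b^{(2)}$ part keeps the total below $\frac{1}{5H}$. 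The point is that the virtual-episode bonus is inversely proportional to precisely the quantity $R_t$ that bounds $\eta_t/q_t$, so $R_t$ cancels; this is how the paper closes the case.

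Apart from this step, your route matches the paper's. You transfer via $q_t(s)\pi_t(a\mid s)q^{\pi_t}(s'\mid s,a)\le q_t(s')$ where the paper uses $q^{\pi_t}(s'\mid s,a)/q_t(s')\le 1/(q^{\pi_t}(s,a)+\gamma_t)$, and you use $e$ where the paper uses $3$.
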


\begin{proof}
Let $R_t = \max_{s,a}\frac{\eta_t(s,a)}{q_t(s)}$.
We first note that the dilated bonus-to-go $B_t(s,a)$ is bounded via the dilated recursion.
Unrolling it for at most $H$ steps and using $(1+1/H)^H \le 3$, we obtain
\begin{align}
    B_t(s,a) \leq 3\sum_{h=h(s)}^{H-1}\sum_{s'\in\mathcal S_h} q^{\pi_t}(s'\mid s,a)\,b_t(s'). \label{eq:dilated_bonus_upper}
\end{align}

Then, we first consider the case when $t$ is a real episode.
In real episodes, by the definition of $b_t$ and the learning-rate update,
\begin{align}
    b_t(s) 
    &= 6\sum_a  \prn*{\frac{1}{\eta_{t+1}(s, a)} - \frac{1}{\eta_t(s, a)}}\ln(T) + 5\frac{\gamma_t H}{q_t(s)}\\
    &= 6\sum_{a} \frac{\eta_t(s,a)\zeta_t(s,a)}{q_t(s)^2} + 5H \\
    &= 6\sum_{a}  \frac{\eta_t(s,a)\prn*{\I_t(s,a)-\pi_t(a\mid s)\I_t(s)}^2\prn*{L_{t,h(s)}-M_{t,h(s)}}^2}{q_t(s)^2} + 5H \\
    &\leq \frac{6H^2}{q_t(s)^2}\max_a\eta_t(s,a)
     \sum_a\prn*{\I_t(s,a)-\pi_t(a\mid s)\I_t(s)}^2 + 5H\\ 
    &\leq \frac{12H^2}{q_t(s)^2}\max_a\eta_t(s,a) + 5H\\ 
    &\leq \frac{12H^2}{q_t(s)}\max_{a}\frac{\eta_t(s, a)}{q_t(s)} + 5H. \label{eq:bonus_term_upper_real}
\end{align}

Using \cref{eq:bonus_term_upper_real,eq:dilated_bonus_upper}, we have
\allowdisplaybreaks
\begin{align}
    B_t(s,a) &\leq 3\sum_{h = h(s)}^{H-1}\sum_{s' \in \calS_h} q^{\pi_t}(s'\mid s,a)b_t(s')\\ 
    &\leq 36H^2R_t \prn*{\sum_{h = h(s)}^{H-1}\sum_{s' \in \calS_h}  q^{\pi_t}(s'\mid s,a)\frac{1}{q_t(s')}} + 15H^2 \\
    &\leq 36H^2R_t \prn*{\sum_{h = h(s)}^{H-1}\sum_{s' \in \calS_h} q^{\pi_t}(s'\mid s,a)\frac{1}{q^{\pi_t}(s,a) q^{\pi_t}(s'\mid s,a) + \gamma_t}} + 15H^2\\
    &\leq 36H^2R_t \prn*{\sum_{h = h(s)}^{H-1}\sum_{s' \in \calS_h}\frac{1}{q^{\pi_t}(s,a) + \gamma_t}} + 15H^2 \\
    &\leq 36H^2SR_t\cdot\frac{1}{q^{\pi_t}(s,a) + \gamma_t} + 15H^2 \label{eq:bound_B_real}\\
    & \leq \frac{2\sqrt{HS}}{\gamma_t} + 15H^2,
\end{align}
where in the last inequality we used $R_t \leq \frac{1}{18\sqrt{H^3S}}$ that holds in real episodes.
This is the desired second inequality in \cref{eq:bound_eta_pi_B_known}.

By using \cref{eq:bound_B_real}, we also have
\begin{align}
    \eta_t(s,a)\pi_t(a \mid s)B_t(s,a) 
    &\leq 36H^2SR_t\cdot\frac{\eta_t(s,a)\pi_t(a\mid s)}{q^{\pi_t}(s,a) + \gamma_t} + 15\eta_1 H^2\\ 
    &\leq 36H^2SR_t\cdot\frac{\eta_t(s,a)}{q_t(s)} + 15\eta_1 H^2\\ 
    &\leq 36H^2SR_t^2 + 15\eta_1 H^2\\ 
    &\leq \frac{1}{9H} + \frac{1}{12H} \leq \frac{1}{5H},
\end{align}
where we used $\frac{\eta_t(s,a)}{q_t(s)}\leq R_t$ and $\eta_1\leq \frac{1}{180H^3}$.
This is the desired first inequality in \cref{eq:bound_eta_pi_B_known}.

We next consider the case when $t$ is a virtual episode.
In a virtual episode, only the single pair $(s_t^\dagger,a_t^\dagger)$ is updated, and thus
\begin{align}
        b_t(s) &= 6\sum_a  \prn*{\frac{1}{\eta_{t+1}(s, a)} - \frac{1}{\eta_t(s, a)}}\ln(T) + 5\frac{\gamma_t H}{q_t(s)}\\
        &= \sum_{a} \frac{\ind\{(s^\dagger_t, a^\dagger_t) = (s,a)\}}{54\eta_t(s,a)H} + 5H \\
        &= \sum_{a} \frac{\ind\{(s^\dagger_t, a^\dagger_t) = (s,a)\}}{54Hq_t(s)} \cdot \frac{1}{\max_{s', a'}\frac{\eta_t(s', a')}{q_t(s')}} + 5H 
        \tag{since $(s^\dagger_t, a^\dagger_t) \in \argmax_{s,a} \frac{\eta_t(s,a)}{q_t(s)}$}
        \\
        &= \frac{\ind\{s^\dagger_t = s\}}{q_t(s)}\cdot \frac{1}{54H\max_{s', a'}\frac{\eta_t(s', a')}{q_t(s')}} + 5H. \label{eq:bonus_term_upper_virtual}
\end{align}

Using \cref{eq:bonus_term_upper_virtual,eq:dilated_bonus_upper}, we have
\begin{align}
    B_t(s,a) &\leq 3\sum_{h = h(s)}^{H-1}\sum_{s' \in \calS_h} q^{\pi_t}(s'\mid s,a)b_t(s')\\ 
    &\leq \frac{1}{18HR_t} {\sum_{h = h(s)}^{H-1}\sum_{s' \in \calS_h}  q^{\pi_t}(s'\mid s,a)\frac{\ind\set{s^\dagger_t = s'}}{q_t(s')}} + 15H^2\\
    &\leq \frac{1}{18HR_t} {\sum_{h = h(s)}^{H-1}\sum_{s' \in \calS_h} q^{\pi_t}(s'\mid s,a)\frac{\ind\set{s_t^\dagger = s'}}{q^{\pi_t}(s,a) q^{\pi_t}(s'\mid s,a) + \gamma_t}} + 15H^2\\
    &\leq \frac{1}{18HR_t} {\sum_{h = h(s)}^{H-1}\sum_{s' \in \calS_h}\frac{\ind\set{s_t^\dagger = s'}}{q^{\pi_t}(s,a) + \gamma_t}} + 15H^2 \\
    &\leq \frac{1}{18HR_t} \frac{1}{q^{\pi_t}(s)\pi_t(a\mid s) + \gamma_t} + 15H^2 \label{eq:bound_B_virtual}\\
    & \leq \frac{\sqrt{HS}}{\gamma_t} + 15H^2 , \\
\end{align}
where in the last inequality we used $R_t > \frac{1}{18\sqrt{H^3S}}$ in a virtual episode.
This is the desired second inequality in \cref{eq:bound_eta_pi_B_known}.

By using \cref{eq:bound_B_virtual}, we also have
\begin{align}
    \eta_t(s,a)\pi_t(a \mid s)B_t(s,a) 
    &\leq \frac{1}{18HR_t} \frac{\eta_t(s,a)\pi_t(a\mid s)}{q^{\pi_t}(s)\pi_t(a\mid s) + \gamma_t} + 15\eta_1 H^2\\ 
    &\leq \frac{1}{18HR_t} \frac{\eta_t(s,a)}{q_t(s)} + 15\eta_1 H^2\\ 
    &\leq \frac{1}{18H} + \frac{1}{12H} \leq \frac{1}{5H},
\end{align}
where we used $\frac{\eta_t(s,a)}{q_t(s)}\leq R_t$ and $\eta_1= \frac{1}{180H^3}$.
This is the desired first inequality in \cref{eq:bound_eta_pi_B_known}.
\end{proof}

\begin{lem}[{\citealt[Lemma G.3]{dann2023best}}]
\label{lem:eta_lemma_normal}
    Let $\eta_1>0, \eta_2, \eta_3, \ldots$ be updated by 
    \begin{align}
        \frac{1}{\eta_{t+1}} = \frac{1}{\eta_t} + \eta_t \phi_t\qquad \forall t\geq 1
    \end{align}
    with $0\leq \phi_t\leq \eta_t^{-2}$. 
    Then,
    \begin{align}
        \frac{1}{\eta_{t+1}} \geq \frac{1}{2}\sqrt{\sum_{\tau=1}^{t+1} \phi_\tau}. 
    \end{align}
\end{lem}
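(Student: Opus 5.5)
We prove the claim by induction on $t$, comparing $1/\eta_{t+1}^2$ with the running sum $\Phi_{t+1}\coloneqq\sum_{\tau=1}^{t+1}\phi_\tau$. Squaring the update gives
\begin{align}
\frac{1}{\eta_{t+1}^2} = \frac{1}{\eta_t^2} + 2\phi_t + \eta_t^2\phi_t^2 \ge \frac{1}{\eta_t^2} + 2\phi_t .
\end{align}
Telescoping then yields $1/\eta_{t+1}^2 \ge 1/\eta_1^2 + 2\sum_{\tau=1}^{t}\phi_\tau$, exactly as in the proof of \cref{lem:learning_rate_occup}. The target inequality $1/\eta_{t+1}\ge \frac12\sqrt{\Phi_{t+1}}$ is equivalent to $4/\eta_{t+1}^2 \ge \Phi_{t+1}$. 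Since $4\cdot 2\sum_{\tau\le t}\phi_\tau$ already dominates $\sum_{\tau\le t}\phi_\tau$, it suffices to control the single extra term $\phi_{t+1}$, which is not yet in the telescoped sum.

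For that term we use the hypothesis $\phi_{t+1}\le \eta_{t+1}^{-2}$. This gives $\Phi_{t+1}=\Phi_t+\phi_{t+1}\le \Phi_t + 1/\eta_{t+1}^2$. Combining with the telescoped bound $\Phi_t\le \frac12(1/\eta_{t+1}^2 - 1/\eta_1^2)\le \frac{1}{2\eta_{t+1}^2}$, we get
\begin{align}
\Phi_{t+1}\le \frac{3}{2}\cdot\frac{1}{\eta_{t+1}^2}\le \frac{4}{\eta_{t+1}^2}.
\end{align}
Taking square roots gives the claim. The constant $\tfrac12$ has slack: even the cruder estimate $\Phi_t\le 1/\eta_{t+1}^2$ would suffice.

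The only delicate point is the index shift. The sum runs up to $t+1$, while the recursion only feeds in $\phi_1,\dots,\phi_t$. The step that needs care is therefore invoking the constraint $\phi_\tau\le\eta_\tau^{-2}$ at index $\tau=t+1$, which is why the lemma assumes it for all $t$. Otherwise the argument is a direct unrolling with no case analysis.
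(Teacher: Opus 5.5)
Your proof is correct. The paper gives no proof of its own here, since the lemma is imported from \citet{dann2023best}, but your square-and-telescope computation is the same one the paper uses in \cref{lem:learning_rate_occup}. The difference is that you absorb the leftover term $\phi_{t+1}$ via the hypothesis $\phi_{t+1}\le\eta_{t+1}^{-2}$ rather than via the size of $1/\eta_1$, which even gives the constant $\sqrt{2/3}$ in place of $\tfrac12$.
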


\begin{lem}\label{lem:learning_rate_policy}
Suppose that the learning rates are updated according to \cref{eq:eta_update_policy}. Then, it holds
    \begin{align}
        \eta_t(s,a) \leq \frac{2\sqrt{\ln(T)}}{\sqrt{\sum_{\tau\leq t: \tau\in\mathcal{T}_r} \frac{\zeta_\tau(s,a)}{q_\tau(s)^2} }}
    \end{align}
for any episode $t$ and state-action pair $(s,a)$.

\end{lem}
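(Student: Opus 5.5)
We apply \cref{lem:eta_lemma_normal} to the update rule \cref{eq:eta_update_policy} after discarding the virtual-episode updates. Fix $(s,a)$. The learning rate is nonincreasing in every episode: in a real episode the added term $\eta_t(s,a)\zeta_t(s,a)/(q_t(s)^2\ln T)$ is nonnegative, and in a virtual episode $1/\eta_t(s,a)$ is multiplied by a factor at least $1$. Since virtual episodes only increase $1/\eta$, the real-episode increments accumulate from a value of $1/\eta$ that is at least as large as in a chain without virtual episodes. It therefore suffices to prove the bound for an auxiliary sequence that skips the virtual updates, and then compare.

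Define an auxiliary sequence $\tilde\eta$ indexed by real episodes only. Set $1/\tilde\eta_{1}=1/\eta_1$ and $1/\tilde\eta_{k+1}=1/\tilde\eta_k+\tilde\eta_k\phi_k$ with $\phi_k=\zeta_{\tau_k}(s,a)/(q_{\tau_k}(s)^2\ln T)$, where $\tau_k$ is the $k$-th real episode. By induction, $1/\eta_{\tau}(s,a)\ge 1/\tilde\eta_k$ whenever $\tau$ lies after the $(k-1)$-th real episode. This uses the fact that $x\mapsto x+\phi/x$ is increasing for $x\ge\sqrt{\phi}$, which follows from the condition $\phi_k\le\tilde\eta_k^{-2}$ below. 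Applying \cref{lem:eta_lemma_normal} to $\tilde\eta$ gives $1/\tilde\eta_{k+1}\ge\frac12\sqrt{\sum_{j\le k+1}\phi_j}$. Multiplying through by $\sqrt{\ln T}$ gives the stated form, where the sum runs over real $\tau\le t$. The inclusion of index $t$ itself is exactly what the lemma's "$t+1$" index provides, once we shift by one.

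The main obstacle is verifying the hypothesis $\phi_k\le\tilde\eta_k^{-2}$, i.e.\ $\eta_t(s,a)^2\zeta_t(s,a)\le q_t(s)^2\ln T$ in real episodes. By \cref{eq:zeta_policy}, $\zeta_t(s,a)\le H^2$, since $|\I_t(s,a)-\pi_t(a\mid s)\I_t(s)|\le1$ and $|L_{t,h}-M_{t,h}|\le H$. In a real episode $Y_t=1$, so $\eta_t(s,a)/q_t(s)\le 1/(18\sqrt{H^3S})$. Hence $\eta_t^2\zeta_t/q_t(s)^2\le H^2/(324H^3S)\le 1\le\ln T$ for $T\ge 3$; the case $T=2$ holds up to the constant. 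If this direct route to monotone comparison through the virtual updates is delicate, an alternative is to use $\eta_t\le\eta_1$ and the real-episode bound $\eta_t(s,a)/q_t(s)\le 1/(18\sqrt{H^3S})$ directly. With that bound we can repeat the squaring argument from \cref{lem:learning_rate_occup}: $1/\eta_{t+1}^2\ge1/\eta_t^2+2\zeta_t/(q_t(s)^2\ln T)$ in real episodes and $1/\eta_{t+1}^2\ge 1/\eta_t^2$ in virtual ones. Summing these gives $\eta_t\le\sqrt{\ln T}/\sqrt{2\sum_{\tau<t,\tau\in\calT_r}\zeta_\tau/q_\tau^2}$. The term at $\tau=t$ is then absorbed using the hypothesis bound $\zeta_t/q_t^2\le\eta_t^{-2}\ln T$, which costs the factor $2$.
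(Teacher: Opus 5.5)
Your first route has a gap. Two of its steps need the hypothesis $\phi_k\le\tilde\eta_k^{-2}$ for the \emph{auxiliary} sequence. The comparison $1/\eta_\tau(s,a)\ge 1/\tilde\eta_k$ needs $x\mapsto x+\phi_k/x$ to be increasing from the smaller point $x=1/\tilde\eta_k$ onward. The application of \cref{lem:eta_lemma_normal} to $\tilde\eta$ needs it directly. What you actually check is $\phi_k\le\eta_{\tau_k}(s,a)^{-2}$ for the true learning rate, via $\eta_{\tau_k}(s,a)/q_{\tau_k}(s)\le 1/(18\sqrt{H^3S})$. Since $\tilde\eta_k\ge\eta_{\tau_k}(s,a)$, which is the very inequality your comparison asserts, this is the weaker of the two conditions and does not imply the one you need. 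The gap is not cosmetic. $\tilde\eta_k$ is never shrunk by virtual episodes, and virtual episodes exist precisely to enforce that ratio bound. So nothing controls $\tilde\eta_k/q_{\tau_k}(s)$, and $\phi_k\tilde\eta_k^2$ can be far above $1$ when $q_{\tau_k}(s)$ is close to $\gamma_{\tau_k}$.

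Your fallback, however, is a complete and correct proof, and you should present it alone. Squaring the real update gives $\eta_{t+1}(s,a)^{-2}\ge\eta_t(s,a)^{-2}+2\zeta_t(s,a)/(q_t(s)^2\ln T)$, and virtual steps give $\eta_{t+1}(s,a)^{-2}\ge\eta_t(s,a)^{-2}$. Summing, $\sum_{\tau<t,\tau\in\calT_r}\zeta_\tau(s,a)/q_\tau(s)^2\le\frac12\eta_t(s,a)^{-2}\ln T$. If $t$ is real, add $\zeta_t(s,a)/q_t(s)^2\le\eta_t(s,a)^{-2}\ln T$. This holds because $\eta_t^2\zeta_t/q_t(s)^2\le 1/(324HS)\le\ln 2$, so no caveat at $T=2$ is needed. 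The result is the claim with constant $\sqrt{3/2}\le 2$.

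The paper argues differently. It writes the virtual update in the same additive form $1/\eta_{t+1}=1/\eta_t+\eta_t\phi_t$ with $\phi_t=\ind\{(s_t^\dagger,a_t^\dagger)=(s,a)\}/(324H\ln(T)\,\eta_t(s,a)^2)\le\eta_t(s,a)^{-2}$. It then applies \cref{lem:eta_lemma_normal} to the entire augmented sequence and drops the nonnegative virtual terms from the sum. This avoids comparing two sequences altogether, which is exactly where your first route fails, and it is short given the cited lemma. But it needs $\phi_\tau\le\eta_\tau^{-2}$ at every $\tau\le t$, virtual ones included. Your fallback is self-contained and uses virtual episodes only through $\eta_{t+1}\le\eta_t$. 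It invokes the real-episode ratio bound only at the last index and gets a slightly better constant.
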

\begin{proof}
    Let $\phi_t(s,a)=\frac{\zeta_t(s,a)}{q_t(s)^2\ln(T)}$ in real episodes and $\phi_t(s,a)=\frac{\I\{(s_t^\dagger,a_t^\dagger)=(s,a)\}}{324\eta_t(s,a)^2H\ln(T)}$ in virtual episodes. Then the update rule of learning rates can be written as
    \begin{align}
        \frac{1}{\eta_{t+1}(s,a)} = \frac{1}{\eta_t(s,a)} + \eta_t(s,a) \phi_t(s,a).
    \end{align}
    To apply \cref{lem:eta_lemma_normal}, it suffices to show that $\phi_t(s,a)\le \frac{1}{\eta_t(s,a)^2}$. 
    This is clear for virtual episodes. For real episodes, 
    \begin{align}
        \phi_t(s,a)\eta_t(s,a)^2 
        = \frac{\eta_t(s,a)^2\zeta_t(s,a)}{q_t(s)^2\ln(T)}
        \leq \frac{H^2}{\ln(T)}\prn*{\frac{\eta_t(s,a)}{q_t(s)}}^2
        \leq \frac{H^2}{\ln(T)}\cdot \frac{1}{18^2 H^3S} 
        \leq 1,
    \end{align}
    which follows from $\zeta_t(s,a) \leq H^2$ and $\frac{\eta_t(s,a)}{q_t(s)}\leq \frac{1}{18\sqrt{H^3S}}$ in real episodes.

    Then, by \cref{lem:eta_lemma_normal}, we have
    \begin{align}
        \eta_t(s,a) \leq \frac{2}{\sqrt{\sum_{\tau\leq t} \phi_\tau }}
        \leq \frac{2\sqrt{\ln(T)}}{\sqrt{\sum_{\tau\leq t: \tau\in\mathcal{T}_r} \frac{\zeta_\tau(s,a)}{q_\tau(s)^2} }}
        .
    \end{align}
\end{proof}

\begin{lem}
\label{lem:bound_virtual_episode}
The number of virtual episodes $\abs{\mathcal{T}_v}$ is upper bounded by
\begin{align}
        \abs{\mathcal{T}_v} \lesssim  HSA \ln^2(T).
    \end{align}
\end{lem}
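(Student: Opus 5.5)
The plan is a potential argument on the learning rates: every virtual episode shrinks one learning rate by a fixed multiplicative factor, and the learning rates are bounded below. In a virtual episode $t$, the rule in \cref{eq:eta_update_policy} multiplies $1/\eta_t(s_t^\dagger,a_t^\dagger)$ by $1+\frac{1}{324H\ln(T)}$ and leaves every other pair unchanged. In real episodes every $1/\eta_t(s,a)$ is nondecreasing. Hence, for each pair $(s,a)$, if $n(s,a)$ denotes the number of virtual episodes with $(s_t^\dagger,a_t^\dagger)=(s,a)$, then
\[
\frac{1}{\eta_{T+1}(s,a)}\ \ge\ 180H^3\Bigl(1+\frac{1}{324H\ln(T)}\Bigr)^{n(s,a)} .
\]
Since $\abs{\calT_v}=\sum_{s,a}n(s,a)$, it suffices to show that each $n(s,a)\lesssim H\ln^2(T)$. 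This follows if $1/\eta_{T+1}(s,a)\le \mathrm{poly}(H,S,T)$, because then
\[
n(s,a)\lesssim H\ln(T)\cdot\ln\bigl(\mathrm{poly}(H,S,T)\bigr)\lesssim H\ln^2(T),
\]
using $H\le S\le T$.

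Rather than bound $1/\eta$ globally, I would argue that a pair cannot be the argmax once its learning rate is small. If $(s,a)=(s_t^\dagger,a_t^\dagger)$ in a virtual episode, then
\[
\frac{\eta_t(s,a)}{q_t(s)}>\frac{1}{18\sqrt{H^3S}} .
\]
Since $q_t(s)\ge\gamma_t=\sqrt{HS}/t\ge\sqrt{HS}/T$, this requires $\eta_t(s,a)>\frac{\sqrt{HS}}{18T\sqrt{H^3S}}=\frac{1}{18HT}$. Here $t$ ranges over the augmented index set, which is still $O(T)$ once the final count is known; I would assume $\abs{\calT_v}\le T$ a priori and conclude by self-consistency, or simply use $\gamma_t\ge\sqrt{HS}/(2T)$. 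So each time $(s,a)$ is selected, its current $1/\eta_t(s,a)$ is below $18HT$. The selections of $(s,a)$ then form a chain
\[
180H^3\Bigl(1+\frac{1}{324H\ln T}\Bigr)^{k-1}\le \frac{1}{\eta}<18HT ,
\]
where $k$ is the number of selections so far. Real-episode updates only increase $1/\eta$, so they help. This gives
\[
n(s,a)\le 1+324H\ln(T)\ln(T/(10H^2))\lesssim H\ln^2(T).
\]
Summing over the $SA$ pairs gives $\abs{\calT_v}\lesssim HSA\ln^2(T)$.

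The main obstacle is the lower bound on $q_t(s)$ in terms of the augmented index. The exploration rate $\gamma_t$ is indexed by the augmented counter, so I need $t\le T+\abs{\calT_v}$ to be controlled without circularity. I would handle this by noting that the argument bounds the number of virtual episodes among the first $t$ indices by $HSA\ln^2(t)$ for every $t$. Since $t\ge\max\{S,A\}$ keeps the logarithms comparable, the total index stays $O(T+HSA\ln^2T)$, and $\ln$ of it is $O(\ln T)$ under the standing assumption $T\ge\max\{2,S,A\}$, with $HSA\ln^2T$ absorbed when it is at most polynomial in $T$.
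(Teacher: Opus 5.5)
Your proposal is correct and follows the paper's argument. In both, each virtual episode multiplies $1/\eta_t$ of the selected pair by $1+\frac{1}{324H\ln(T)}$, and being selected forces $\eta_t(s,a) > q_t(s)/(18\sqrt{H^3S}) \ge \gamma_t/(18\sqrt{H^3S})$, so each pair is selected only $O(H\ln^2(T))$ times. Your self-consistency treatment of the augmented index in $\gamma_t$ is somewhat more careful than the paper, which lets $T$ absorb the virtual episodes; separately, using $\ln(1+x)\ge x/2$ only changes your constant $324$ by a harmless factor of $2$.
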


\begin{proof}
By the definition of virtual episodes, whenever $t\in\mathcal{T}_v$, there exists a pair $(s,a)$ such that $\frac{\eta_t(s,a)}{q_t(s)} > \frac{1}{18\sqrt{H^3S}}$.
Moreover, in virtual episodes, the corresponding learning rate will shrink by a factor of $\prn*{1+\frac{1}{324H\ln (T)}}$ for a state-action pair $(s_t^\dagger,a_t^\dagger)$.
Hence, for each fixed $(s,a)$, the number of virtual updates on this pair is at most the number of multiplicative shrink steps needed to reduce $\eta_t(s,a)$ from its initial value $\eta_1$ to $\frac{\max_{t \in [T]} q_t(s)}{18\sqrt{H^3S}} \geq \frac{\gamma_T}{18\sqrt{H^3S}}$. 
Hence,
\begin{align}
    |\mathcal{T}_v|
    \lesssim
    SA \cdot
    \frac{\log \prn*{\frac{\eta_1}{\gamma_T / (18\sqrt{H^3 S}) }}}
    {\log (1 + \frac{1}{H \log T}) }
    \lesssim 
    SA
    \cdot \frac{\log \frac{\sqrt{H^3S}T\eta_1}{\sqrt{HS}}}{\log\prn*{1+\frac{1}{H\ln T}}} \lesssim HSA \ln^2(T)
    ,
\end{align}
where we used $\ln(1+x)\ge x/2$ for $x\in(0,1]$ and $\gamma_T=\frac{\sqrt{HS}}{T}$.
\end{proof}

\begin{lem}
\label{lem:zeta_decompose}
It holds that 
\begin{align}
    &\E_t\brk*{\zeta_t(s,a)} \\
    &\leq 2H\Varmax(s) q^{\pi_t}(s, a)(1 - \pi_t(a\mid s)) + 2H\sum_{s',a'}q^{\pi_t}(s',a')\E_t\brk*{(\ell_t(s', a') - \ell'_t(s',a') + \mu(s',a') - m_t(s',a'))^2}
\end{align}
for all state-action pairs $(s,a)$.
\end{lem}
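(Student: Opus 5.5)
We bound $\zeta_t(s,a)=(\I_t(s,a)-\pi_t(a\mid s)\I_t(s))^2(L_{t,h(s)}-M_{t,h(s)})^2$ by mimicking the proof of \cref{lem:loss_shifting_estimator3}, with trajectory suffix sums in place of importance-weighted $Q$-functions. Write $h=h(s)$. Since the weight $(\I_t(s,a)-\pi_t(a\mid s)\I_t(s))^2$ vanishes unless $s$ is visited, we condition on $s_{t,h}=s$, an event of probability $q^{\pi_t}(s)$. Given $s$ is visited, the weight equals $(1-\pi_t(a\mid s))^2$ if $a$ is played and $\pi_t(a\mid s)^2$ if some $b\neq a$ is played. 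Hence
\begin{align}
\E_t[\zeta_t(s,a)]
&= q^{\pi_t}(s,a)(1-\pi_t(a\mid s))^2\,\E_t\brk*{(L_{t,h}-M_{t,h})^2\relmiddle| (s,a)}\\
&\qquad + \pi_t(a\mid s)^2\sum_{b\neq a} q^{\pi_t}(s,b)\,\E_t\brk*{(L_{t,h}-M_{t,h})^2\relmiddle|(s,b)},
\end{align}
where conditioning on $(s,b)$ means conditioning on the pair being visited at layer $h$.

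Next, decompose $\ell_t-m_t=\kappa_t+\lambda_t$ with $\kappa_t=\ell'_t-\mu$ and $\lambda_t=\ell_t-\ell'_t+\mu-m_t$, as in \cref{lem:loss_shifting_estimator3}. Then $L_{t,h}-M_{t,h}=\sum_{h'\ge h}(\kappa_t+\lambda_t)(s_{t,h'},a_{t,h'})$. By $(x+y)^2\le 2x^2+2y^2$ followed by Cauchy--Schwarz over the at most $H$ stages,
\[
\E_t\brk*{(L_{t,h}-M_{t,h})^2\relmiddle|(s,b)} \le 2H\sum_{s',a'} q^{\pi_t}(s',a'\mid s,b)\,\brk*{\sigma^2(s',a')+\E_t[\lambda_t(s',a')^2]}.
\]
Here we use that the loss noise is independent of the trajectory given the visited pairs, so $\E_t[\kappa_t(s',a')^2]=\sigma^2(s',a')$. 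The $\sigma^2$ part is at most $\Varmax(s)$ by \cref{def:variance-to-go}, since the maximum there is taken over the first action.

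For the variance part, sum the two contributions:
\[
2H\Varmax(s)\brk*{q^{\pi_t}(s,a)(1-\pi_t(a\mid s))^2+\pi_t(a\mid s)^2 q^{\pi_t}(s)(1-\pi_t(a\mid s))}.
\]
Since $q^{\pi_t}(s)\pi_t(a\mid s)=q^{\pi_t}(s,a)$, the bracket equals $q^{\pi_t}(s,a)(1-\pi_t(a\mid s))\brk*{(1-\pi_t(a\mid s))+\pi_t(a\mid s)}=q^{\pi_t}(s,a)(1-\pi_t(a\mid s))$. This gives exactly the first term of the claim.

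For the $\lambda$ part, multiply out the occupancies. The terms are $q^{\pi_t}(s,b)\,q^{\pi_t}(s',a'\mid s,b)\le q^{\pi_t}(s',a')$ summed over $b$, together with weights $(1-\pi)^2\le1$ and $\pi^2\le1$. Because the events $\{(s,b)\text{ and }(s',a')\text{ visited}\}$ are disjoint over $b$, we have $\sum_b q^{\pi_t}(s,b)q^{\pi_t}(s',a'\mid s,b)\le q^{\pi_t}(s',a')$, which yields $2H\sum_{s',a'}q^{\pi_t}(s',a')\E_t[\lambda_t(s',a')^2]$.

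\textbf{Main obstacle.} The delicate step is the conditional-independence claim. Corruption $\ell_t$ may depend on the history, but not on the current trajectory, so $\kappa_t$ and $\lambda_t$ at $(s',a')$ have conditional second moments that do not depend on which path reached $(s',a')$. This lets us replace the conditional expectations by $\sigma^2$ and $\E_t[\lambda_t^2]$. A cross term $2\kappa\lambda$ never needs to be handled, because the split $(x+y)^2\le2x^2+2y^2$ is applied before taking expectations.
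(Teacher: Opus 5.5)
Your proposal is correct and follows essentially the same route as the paper's proof. Both condition on which action is taken at $s$, split $\ell_t-m_t=\kappa_t+\lambda_t$, apply $(x+y)^2\le 2x^2+2y^2$ and Cauchy--Schwarz over at most $H$ stages, and bound the $\sigma^2$ part by $\Varmax(s)$. Both then recombine using $\sum_{b\neq a}q^{\pi_t}(s,b)=q^{\pi_t}(s)(1-\pi_t(a\mid s))$ and $\sum_b q^{\pi_t}(s,b)\,q^{\pi_t}(s',a'\mid s,b)\le q^{\pi_t}(s',a')$. Your explicit remark that, given $\calF_{t-1}$, the trajectory is independent of the current (possibly corrupted) losses is exactly what justifies replacing the conditional second moments by $\sigma^2$ and $\E_t[\lambda_t^2]$; the paper uses this step without stating it.
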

\begin{proof}
Fix any $(s,a)$. Define 
$\kappa_t(s,a) = \ell'_t(s,a) - \mu(s,a), \lambda_t(s,a) = \ell_t(s, a) - \ell'_t(s,a) + \mu(s,a) -  m_t(s,a)$ so that 
\begin{align}
    \ell_t(s,a) - m_t(s,a) = \kappa_t(s,a) + \lambda_t(s,a).
\end{align}
Conditioning on which action is taken at state $s$ in episode $t$, we write
    \begin{align}
        \E_t\brk*{\zeta_t(s,a)}
        &= \E_t\brk*{(\I_t(s,a) - \pi_t(a\mid s) \I_t(s))^2(L_{t,h(s)} - M_{t,h(s)})^2}\\
        &= q^{\pi_t}(s,a)\E_t\brk*{(\I_t(s,a) - \pi_t(a\mid s)\I_t(s))^2(L_{t,h(s)} - M_{t,h(s)})^2 \relmiddle| \I_t(s,a) = 1}\\
        &\qquad + \sum_{b\neq a}q^{\pi_t}(s,b)\E_t\brk*{(\I_t(s,a) - \pi_t(a\mid s)\I_t(s))^2(L_{t,h(s)} - M_{t,h(s)})^2 \relmiddle| \I_t(s,b) = 1}\\
        &= q^{\pi_t}(s,a)(1 - \pi_t(a\mid s))^2\E_t\brk*{(L_{t,h(s)} - M_{t,h(s)})^2 \relmiddle| \I_t(s,a) = 1} \\
        &\qquad + \sum_{b\neq a}q^{\pi_t}(s,b)\pi_t(a\mid s)^2\E_t\brk*{(L_{t,h(s)} - M_{t,h(s)})^2\relmiddle| \I_t(s,b) = 1}. \label{eq:zeta_decomp_policy}
    \end{align}
By the definitions of $L_{t,h(s)}$ and $M_{t,h(s)}$, we have
\begin{align}
    L_{t,h(s)} - M_{t,h(s)}
    &= \sum_{h'=h(s)}^{H-1}\prn*{\ell_t(s_{t,h'}, a_{t,h'}) - m_t(s_{t,h'}, a_{t,h'})}\\
    &= \sum_{h'=h(s)}^{H-1}\prn*{\kappa_t(s_{t,h'}, a_{t,h'}) + \lambda_t(s_{t,h'}, a_{t,h'})}.
\end{align}
Then, for any $b\in\calA$ we obtain
    \begin{align}
         &\E_t\brk*{\prn*{L_{t,h(s)} - M_{t,h(s)}}^2 \relmiddle| \I_t(s,b) = 1}\\ &= \E_t\brk*{\prn*{\sum_{h'=h(s)}^{H-1}\prn*{\kappa_t(s_{t,h'}, a_{t,h'}) + \lambda_t(s_{t,h'}, a_{t,h'})}}^2\relmiddle| \I_t(s,b) = 1}\\
         &\leq 2\E_t\brk*{\prn*{\sum_{h'=h(s)}^{H-1}\kappa_t(s_{t,h'}, a_{t,h'})}^2 \relmiddle| \I_t(s,b) = 1} + 2\E_t\brk*{\prn*{\sum_{h'=h(s)}^{H-1}\lambda_t(s_{t,h'}, a_{t,h'})}^2\relmiddle| \I_t(s,b) = 1}, \label{eq:L-M_decomp}
    \end{align}
where we used $(x+y)^2\le 2(x^2+y^2)$ for $x,y \in\R$.
By the Cauchy--Schwarz inequality, the first term in \cref{eq:L-M_decomp} is evaluated as
\begin{align}
    2\E_t\brk*{\prn*{\sum_{h'=h(s)}^{H-1}\kappa_t(s_{t,h'}, a_{t,h'})}^2 \relmiddle| \I_t(s,b) = 1} 
    &\leq 2H\E_t\brk*{\sum_{h'=h(s)}^{H-1}\kappa_t(s_{t,h'}, a_{t,h'})^2 \relmiddle| \I_t(s,b) = 1}\\
    &= 2H\sum_{s',a'}q^{\pi_t}(s',a'\mid s, b) \sigma^2(s', a')\\
    &\leq 2H\Varmax(s). \label{eq:zeta_correlate}
\end{align}
For the second term in \cref{eq:L-M_decomp}, the same argument yields
    \begin{align}
        2\E_t\brk*{\prn*{\sum_{h'=h(s)}^{H-1}\lambda_t(s_{t,h'}, a_{t,h'})}^2\relmiddle| \I_t(s,b) = 1} 
        &\leq 2H\E_t\brk*{\sum_{h'=h(s)}^{H-1}\lambda_t(s_{t,h'}, a_{t,h'})^2\relmiddle| \I_t(s,b) = 1}\\
        &\leq 2H\sum_{s',a'}q^{\pi_t}(s',a'\mid s, b)\E_t\brk*{\lambda_t(s', a')^2}.
    \end{align}
Combining the above two bounds, we obtain
    \begin{align}
         \E_t\brk*{\prn*{L_{t,h(s)} - M_{t,h(s)}}^2 \relmiddle| \I_t(s,b) = 1}
         \leq 2H\Varmax(s) + 2H\sum_{s',a'}q^{\pi_t}(s',a'\mid s, b)\E_t\brk*{\lambda_t(s', a')^2}. \label{eq:L-M_cond_bound}
    \end{align}
Thus, combining \cref{eq:L-M_cond_bound} with \cref{eq:zeta_decomp_policy} yields
        \begin{align}
        \E_t\brk*{\zeta_t(s,a)} 
        &\leq q^{\pi_t}(s,a)(1 - \pi_t(a\mid s))^2\prn*{2H\Varmax(s) + 2H\sum_{s',a'}q^{\pi_t}(s',a'\mid s, a)\E_t\brk*{\lambda_t(s', a')^2}} \\
        &\qquad + \sum_{b\neq a}q^{\pi_t}(s,b)\pi_t(a\mid s)^2\prn*{2H\Varmax(s) + 2H\sum_{s',a'}q^{\pi_t}(s',a'\mid s, b)\E_t\brk*{\lambda_t(s', a')^2}}\\
        &= 2H\Varmax(s) q^{\pi_t}(s,a)(1 - \pi_t(a\mid s))^2 + 2H(1 - \pi_t(a\mid s))^2\sum_{s',a'}q^{\pi_t}(s,a)q^{\pi_t}(s',a'\mid s, a)\E_t\brk*{\lambda_t(s', a')^2} \\
        &\qquad + 2H\Varmax(s)\sum_{b\neq a}q^{\pi_t}(s,b)\pi_t(a\mid s)^2+ 2H\pi_t(a\mid s)^2\sum_{s',a'}\sum_{b\neq a}q^{\pi_t}(s,b)q^{\pi_t}(s',a'\mid s, b)\E_t\brk*{\lambda_t(s', a')^2}\\
        &\leq 2H\Varmax(s) q^{\pi_t}(s)\pi_t(a\mid s)(1 - \pi_t(a\mid s)) + 2H\sum_{s',a'}q^{\pi_t}(s',a')\E_t\brk*{\lambda_t(s', a')^2},
    \end{align}
    which completes the proof.
\end{proof}

\begin{cor}\label{cor:zeta_decompose2} 
    Under the stochastic regime with adversarial corruption, suppose that the uncorrupted losses are generated independently and are uncorrelated across layers. Then, it holds that
    \begin{align}
        &\E_t\brk*{\zeta_t(s,a)} \\
        &\leq 2\Varmax(s) q^{\pi_t}(s, a)(1 - \pi_t(a\mid s)) + 2H\sum_{s',a'}q^{\pi_t}(s',a')\E_t\brk*{(\ell_t(s', a') - \ell'_t(s',a') + \mu(s',a') - m_t(s',a'))^2}
\end{align}
	for all state-action pairs $(s,a)$.
\end{cor}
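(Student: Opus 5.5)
The proof will rerun the argument of \cref{lem:zeta_decompose}, changing only the $\kappa$-part of \cref{eq:L-M_decomp}. There we bounded $\bigl(\sum_{h'}\kappa_t(s_{t,h'},a_{t,h'})\bigr)^2$ by Cauchy--Schwarz, which cost a factor $H$. Under the independence/uncorrelatedness assumption we will instead expand the square exactly, as in the proof of \cref{cor:loss_shifting_estimator4}. The steps are as follows.

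First, keep the conditional decomposition \cref{eq:zeta_decomp_policy} and the split $\ell_t-m_t=\kappa_t+\lambda_t$, together with $(x+y)^2\le 2(x^2+y^2)$ as in \cref{eq:L-M_decomp}. The $\lambda$-part is unchanged and still gives $2H\sum_{s',a'}q^{\pi_t}(s',a'\mid s,b)\E_t[\lambda_t(s',a')^2]$.

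Second, for the $\kappa$-part, write $\sum_{h'\ge h(s)}\kappa_t(s_{t,h'},a_{t,h'})=\sum_{s',a'}\I_t(s',a')\kappa_t(s',a')$, with the sum restricted to layers $\ge h(s)$. By the identity \cref{eq:uncorrupt_kappa}, the cross terms $\E_t[\I_t(s_1,a_1)\I_t(s_2,a_2)\kappa_t(s_1,a_1)\kappa_t(s_2,a_2)\mid \I_t(s,b)=1]$ vanish for $(s_1,a_1)\neq(s_2,a_2)$. Here we need the vanishing to survive the conditioning on $\I_t(s,b)=1$ and on the trajectory. This holds because, given $\calF_{t-1}$, the uncorrupted loss $\ell'_t$ is independent of the learner's fresh randomness and uncorrelated across layers, and because visiting $(s,b)$ is determined by the trajectory alone. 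Consequently
\[
2\E_t\Bigl[\Bigl(\sum_{h'}\kappa_t\Bigr)^2\Bigm|\I_t(s,b)=1\Bigr]=2\sum_{s',a'}q^{\pi_t}(s',a'\mid s,b)\sigma^2(s',a')\le 2\Varmax(s),
\]
which replaces \cref{eq:zeta_correlate}.

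Third, substitute the result into \cref{eq:zeta_decomp_policy} exactly as before. We use $(1-\pi_t(a\mid s))^2q^{\pi_t}(s,a)+\pi_t(a\mid s)^2\sum_{b\ne a}q^{\pi_t}(s,b)=q^{\pi_t}(s,a)(1-\pi_t(a\mid s))$ for the variance term. For the $\lambda$ term we use $\sum_b q^{\pi_t}(s,b)q^{\pi_t}(s',a'\mid s,b)\le q^{\pi_t}(s',a')$ with coefficients at most $1$. This yields the stated bound.

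The main obstacle is the second step: justifying that conditioning on the visited pair $(s,b)$ and on the path does not reintroduce correlations. The point is that $\kappa_t$ has conditional mean zero given the trajectory and $\calF_{t-1}$, which holds because the corruption enters only through $\lambda_t$.
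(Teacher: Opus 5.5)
Your proposal is correct and follows the paper's proof essentially verbatim: rerun \cref{lem:zeta_decompose}, replace the Cauchy--Schwarz bound \cref{eq:zeta_correlate} on the $\kappa$-part by the exact expansion with vanishing cross terms, and substitute back into \cref{eq:zeta_decomp_policy}. Your justification that the cross terms still vanish after conditioning on $\I_t(s,b)=1$ is more careful than the paper's, which states \cref{eq:uncorrupt_kappa} as a pointwise identity. That justification uses two facts: the trajectory is independent of $\ell'_t$ given $\calF_{t-1}$, and distinct visited pairs lie in distinct layers. Note only that what this step needs is conditional uncorrelatedness of the $\kappa$'s given the trajectory, which your step two supplies, and not merely the conditional mean zero you cite at the end.
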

\begin{proof}
    This corollary can be viewed as a simple variant of \cref{lem:zeta_decompose}.
    Let $\kappa_t(s,a) \coloneqq \ell'_t(s,a)-\mu(s,a)$.
    Since the uncorrupted losses are generated independently and are uncorrelated
    across layers, it holds that for any $(s_1,a_1)\neq(s_2,a_2)$,
    \begin{align}
        \E_t\brk*{\I_t(s_1,a_1)\I_t(s_2,a_2)\kappa_t(s_1,a_1)\kappa_t(s_2,a_2)} = 0.
    \end{align}
    Then, for any function $\alpha : \calS\times \calA\to\R$, we have
    \begin{align}
    \prn*{\sum_{s,a}\alpha(s,a)\I_t(s,a)\kappa_t(s,a)}^2
    &= \sum_{s_1,a_1}\sum_{s_2,a_2} \alpha(s_1,a_1)\alpha(s_2,a_2)\I_t(s_1,a_1)\I_t(s_2,a_2)\kappa_t(s_1,a_1)\kappa_t(s_2,a_2)\\
    &= \sum_{s,a} \alpha(s,a)^2\I_t(s,a)\kappa_t(s,a)^2. \label{eq:uncorrupt_kappa_policy}
    \end{align}
    Thus, for the first term in \cref{eq:L-M_decomp} is evaluated as
    \begin{align}
        2\E_t\brk*{\prn*{\sum_{h'=h(s)}^{H-1}\kappa_t(s_{t,h'}, a_{t,h'})}^2 \relmiddle| \I_t(s,b) = 1} 
        &\leq 2\E_t\brk*{\sum_{h'=h(s)}^{H-1}\kappa_t(s_{t,h'}, a_{t,h'})^2 \relmiddle| \I_t(s,b) = 1}\\
        &= 2\sum_{s',a'}q^{\pi_t}(s',a'\mid s, b) \sigma^2(s', a')\\
        &\leq 2\Varmax(s). \label{eq:zeta_uncorrelate}
    \end{align}
Therefore, compared with \cref{lem:zeta_decompose}, we obtain an
$H$-times sharper bound in \cref{eq:zeta_uncorrelate}
than \cref{eq:zeta_correlate}. As a consequence, the corresponding $\Varmax$ term is also improved by a factor of $H$.
\end{proof}

\begin{lem}
\label{lem:L-M_to_l-m}
For each state-action pair $(s,a)$, it holds that
\begin{align}
    \E\brk*{\sum_{t=1}^T\sum_{s,a}\I_t(s,a)(L_{t,h(s)} - M_{t,h(s)})^2} \leq H^2\E\brk*{\sum_{t=1}^T\sum_{s,a}\I_t(s,a)(\ell_t(s,a) - m_t(s,a))^2}
    .
\end{align}
\end{lem}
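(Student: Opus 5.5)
Both sides of the inequality are sums along the realized trajectory of episode $t$, so the plan is to prove it pathwise for each fixed $t$ and then take expectations.

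Fix an episode $t$. In each layer $h$ exactly one pair is visited, namely $(s_{t,h},a_{t,h})$ when the episode is real. Hence $\sum_{s,a}\I_t(s,a)F(s,a)=\sum_{h=0}^{H-1}F(s_{t,h},a_{t,h})$ for any function $F$. Write $x_{h'} \coloneqq \ell_t(s_{t,h'},a_{t,h'})-m_t(s_{t,h'},a_{t,h'})$. Then the left-hand summand of episode $t$ becomes
\begin{align}
\sum_{h=0}^{H-1}\prn*{\sum_{h'=h}^{H-1}x_{h'}}^2 .
\end{align}
In a virtual episode no trajectory is observed, so $\I_t\equiv 0$, or equivalently the losses are zero and the indicator is multiplied by $Y_t$. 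In that case both sides vanish and such episodes need no further treatment.

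Next, bound each suffix sum by the Cauchy--Schwarz inequality over at most $H$ terms:
\begin{align}
\prn*{\sum_{h'=h}^{H-1}x_{h'}}^2\le (H-h)\sum_{h'=h}^{H-1}x_{h'}^2\le H\sum_{h'=0}^{H-1}x_{h'}^2 .
\end{align}
Summing over the $H$ values of $h$ then gives at most $H^2\sum_{h'}x_{h'}^2$. By the same layer identity, this equals $H^2\sum_{s,a}\I_t(s,a)(\ell_t(s,a)-m_t(s,a))^2$.

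Finally, sum over $t$ and take expectations to obtain the claim.

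There is no real obstacle in this argument. The only point needing care is the identification of $\sum_{s,a}\I_t(s,a)$ with a sum over layers, which relies on the layered structure guaranteeing exactly one visit per layer. A finer count would give $\sum_h (H-h)\le H^2/2$, but the stated $H^2$ bound suffices.
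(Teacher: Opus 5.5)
Your proof is correct and is essentially the paper's argument: rewrite the indicator-weighted sums as sums over the $H$ layers of the realized trajectory, apply Cauchy--Schwarz to each suffix sum to get at most $H\sum_{h'}x_{h'}^2$, and sum over the $H$ layers to obtain the factor $H^2$. One small slip in your closing aside: the finer count gives $\sum_{h=0}^{H-1}(H-h)=H(H+1)/2$, not $\le H^2/2$, though this does not affect the stated bound.
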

\begin{proof}
By the definitions of $L_{t,h(s)}$ and $M_{t,h(s)}$, we have
\begin{align}
    L_{t,h(s)} - M_{t,h(s)}
    = \sum_{h'=h(s)}^{H-1}\prn*{\ell_t(s_{t,h'}, a_{t,h'}) - m_t(s_{t,h'}, a_{t,h'})}.
\end{align}
Hence, we have
\begin{align}
    &\E\brk*{\sum_{t=1}^T\sum_{s,a}\I_t(s,a)\prn*{L_{t,h(s)} - M_{t,h(s)}}^2} \\
    &= \E\brk*{\sum_{t=1}^T\sum_{s,a}\I_t(s,a)\prn*{\sum_{h'=h(s)}^{H-1}\prn*{\ell_t(s_{t,h'}, a_{t,h'}) - m_t(s_{t,h'}, a_{t,h'})}}^2} \\
    &\leq H\E\brk*{\sum_{t=1}^T\sum_{s,a}\I_t(s,a)\sum_{h'=h(s)}^{H-1}\prn*{\ell_t(s_{t,h'}, a_{t,h'}) - m_t(s_{t,h'}, a_{t,h'})}^2} \tag{by the Cauchy–Schwarz inequality}\\
    &\leq H\E\brk*{\sum_{t=1}^T\sum_{h=0}^{H-1}\sum_{(s,a) \in \calS_h \times \calA}\I_t(s,a)\sum_{h'=0}^{H-1}\prn*{\ell_t(s_{t,h'}, a_{t,h'}) - m_t(s_{t,h'}, a_{t,h'})}^2}\\
    &= H\E\brk*{\sum_{t=1}^T\sum_{h=0}^{H-1}\sum_{h'=0}^{H-1}\prn*{\ell_t(s_{t,h'}, a_{t,h'}) - m_t(s_{t,h'}, a_{t,h'})}^2\sum_{(s,a)\in \calS_h \times \mathcal {A}}\I_t(s,a)},
\end{align}
where the last equality rearranges the summations.

Since for each fixed $(t,h)$ exactly one state-action pair is visited, we have
$\sum_{(s,a)\in \calS_h \times \calA}\I_t(s,a) = 1$, and then, 
\begin{align}
    & H\E\brk*{\sum_{t=1}^T\sum_{h=0}^{H-1}\sum_{h'=0}^{H-1}\prn*{\ell_t(s_{t,h'}, a_{t,h'}) - m_t(s_{t,h'}, a_{t,h'})}^2\sum_{(s,a)\in \calS_h \times \mathcal {A}}\I_t(s,a)}\\
    &= H\E\brk*{\sum_{t=1}^T\sum_{h=0}^{H-1}\sum_{h'=0}^{H-1}\prn*{\ell_t(s_{t,h'}, a_{t,h'}) - m_t(s_{t,h'}, a_{t,h'})}^2} \tag{$\sum_{(s,a)\in \calS_h \times \mathcal {A}}\I_t(s,a) = 1$}\\
    &= H^2\E\brk*{\sum_{t=1}^T\sum_{h'=0}^{H-1}\prn*{\ell_t(s_{t,h'}, a_{t,h'}) - m_t(s_{t,h'}, a_{t,h'})}^2}\\
    &= H^2\E\brk*{\sum_{t=1}^T\sum_{s,a}\I_t(s,a)(\ell_t(s,a) - m_t(s,a))^2},
\end{align}
which completes the proof.
\end{proof}

\subsection{Common Regret Analysis}

Now we are ready to upper bound the RHS of \cref{eq:decompose_PO}.
We first consider the bias term, $\biasterm(s)$.

\begin{lem}\label{lem:biasterm_policy}
    For each state $s \in \calS$, it holds that 
    \begin{align}
        \mathbb{E} \brk*{ \biasterm(s) } \leq 2\sum_{t=1}^T \frac{\gamma_t H}{q_t(s)} + H^2SA\ln^2(T).
    \end{align}
\end{lem}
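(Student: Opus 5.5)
We split the sum over episodes defining $\biasterm(s)$ into real episodes $t\in\calT_r$ and virtual episodes $t\in\calT_v$, and handle each with the conditional expectation formula of \cref{lem:expect_hatQ_proof}. Since $\pi_t$ and $\pio$ are $\calF_{t-1}$-measurable, the tower rule gives
\begin{align}
\E\brk*{\biasterm(s)} = \E\brk*{\sum_{t}\sum_a \prn*{\pi_t(a\mid s)-\pio(a\mid s)}\prn*{Q^{\pi_t}(s,a;\ell_t)-\E_t\brk*{\hatQ_t(s,a)}}}.
\end{align}

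\emph{Real episodes.} For $Y_t=1$, \cref{lem:expect_hatQ_proof} yields
\begin{align}
Q^{\pi_t}(s,a;\ell_t)-\E_t\brk*{\hatQ_t(s,a)} = \frac{\gamma_t}{q_t(s)}Q^{\pi_t}(s,a;\ell_t-m_t) + \frac{\gamma_t H}{q_t(s)},
\end{align}
using $1-q^{\pi_t}(s)/q_t(s)=\gamma_t/q_t(s)$. Since $\ell_t-m_t\in[-1,1]$, the $Q$-function of the difference lies in $[-H,H]$, so this quantity lies in $[0, 2\gamma_tH/q_t(s)]$. We then use
\begin{align}
\sum_a\prn*{\pi_t(a\mid s)-\pio(a\mid s)}x_a \le \max_a x_a - \min_a x_a.
\end{align}
A cruder alternative is to drop the $-\pio$ part and bound $\sum_a \pi_t(a\mid s)x_a \le \max_a x_a$, which uses nonnegativity of $x_a$ for the $\pio$ term. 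Either way, the per-episode contribution is at most $2\gamma_tH/q_t(s)$. Summing over real episodes gives the first term of the claimed bound.

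\emph{Virtual episodes.} For $Y_t=0$ we have $\ell_t=0$, so $Q^{\pi_t}(s,a;\ell_t)=0$. Then
\begin{align}
Q^{\pi_t}(s,a;\ell_t)-\E_t\brk*{\hatQ_t(s,a)} = -Q^{\pi_t}(s,a;m_t)+\frac{\gamma_tH}{q_t(s)}.
\end{align}
The constant $\gamma_tH/q_t(s)$ does not depend on $a$, so it cancels against $\sum_a(\pi_t(a\mid s)-\pio(a\mid s))=0$. The remaining term satisfies $|\sum_a(\pi_t-\pio)Q^{\pi_t}(s,a;m_t)|\le H$ because $m_t\in[0,1]$. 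Each virtual episode therefore contributes at most $H$. By \cref{lem:bound_virtual_episode}, $\abs{\calT_v}\lesssim HSA\ln^2(T)$, so the total virtual contribution is at most $H\cdot HSA\ln^2(T)=H^2SA\ln^2(T)$ up to constants.

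\emph{Main obstacle.} The one point needing care is that the membership of $t$ in $\calT_r$ or $\calT_v$ must be $\calF_{t-1}$-measurable for the conditioning to be legitimate. This holds because $Y_t$ is determined by $\eta_t$ and $\pi_t$. A second subtlety is that the constant in \cref{lem:bound_virtual_episode} is absorbed into the stated $H^2SA\ln^2(T)$ term, possibly after adjusting $\lesssim$ constants. Everything else is direct bookkeeping. Adding the two contributions gives the bound stated in \cref{lem:biasterm_policy}.
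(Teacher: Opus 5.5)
Your proof is correct and follows essentially the same route as the paper. Both use \cref{lem:expect_hatQ_proof} to show $0\le \E_t[Q^{\pi_t}(s,a;\ell_t)-\hatQ_t(s,a)]\le 2\gamma_tH/q_t(s)$ in real episodes and a contribution of at most $H$ per virtual episode, then apply \cref{lem:bound_virtual_episode}. The only differences are cosmetic: you cancel the action-independent $\gamma_tH/q_t(s)$ term in virtual episodes, where the paper bounds the $\pi_t$-part by it, and, like the paper, you absorb the constant hidden in $\abs{\calT_v}\lesssim HSA\ln^2(T)$.
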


\begin{proof}
From the definition of the $Q$-function estimator, we have
\begin{align}
   &\E_t\brk*{Q^{\pi_t}(s,a;\ell_t) -  \hat{Q}_t(s,a)}\\
   &= \E_t\brk*{Q^{\pi_t}(s,a;\ell_t) - \prn*{Q^{\pi_t}(s,a; m_t) + \frac{ q^{\pi_t}(s)}{q_t(s)}(Q^{\pi_t}(s,a;\ell_t) - Q^{\pi_t}(s,a; m_t))Y_t - \frac{\gamma_t}{q_t(s)}H}} 
   \tag{by \cref{lem:expect_hatQ_proof}} \\
   &= \E_t\brk*{\frac{ q_t(s) -  q^{\pi_t}(s)Y_t}{q_t(s)}\prn*{Q^{\pi_t}(s,a;\ell_t) - Q^{\pi_t}(s,a; m_t)} + \frac{\gamma_t}{q_t(s)}H}\\
   &= \begin{cases}
        - Q^{\pi_t}(s,a; m_t) + \dfrac{\gamma_t}{q_t(s)}H& \text {if } Y_t = 0\\[2ex]
       \dfrac{\gamma_t}{q_t(s)}\prn*{Q^{\pi_t}(s,a;\ell_t) - Q^{\pi_t}(s,a; m_t) + H}& \text {if } Y_t = 1
   \end{cases}.
\end{align}
When $Y_t = 1$, since $\ell_t,m_t\in[0,1]^{S \times A}$, we have
\begin{align}
    0 \leq Q^{\pi_t}(s,a;\ell_t - m_t) + H \leq 2H.
\end{align}
Therefore,
\begin{align}
    0 \leq \E_t\brk*{Q^{\pi_t}(s,a;\ell_t) -  \hat{Q}_t(s,a)} \leq \frac{2\gamma_t H}{q_t(s)}.
\end{align}
When $Y_t = 0$, we use $Q^{\pi_t}(s,a; m_t) \leq H$ and obtain
\begin{align}
    -H \leq \E_t\brk*{Q^{\pi_t}(s,a;\ell_t) -  \hat{Q}_t(s,a)} \leq \frac{\gamma_t H}{q_t(s)}.
\end{align}
Using this bound, we obtain
\begin{align}
    \mathbb{E} \brk*{\biasterm(s) }
    &= \mathbb{E}\brk*{\sum_{t=1}^T \sum_{a} \prn*{\pi_t(a\mid s) -  \pio(a\mid s)} \prn*{Q^{\pi_t}(s,a;\ell_t) -  \hat{Q}_t(s,a)}} \\
    &\leq \mathbb{E}\brk*{2\sum_{t=1}^T \sum_{a} \pi_t(a\mid s) \frac{\gamma_t H}{q_t(s)} } + \mathbb{E}\brk*{\sum_{t\in \calT_v} \sum_{a} \pio(a\mid s) H }\\
    &= 2\sum_{t=1}^T \frac{\gamma_t H}{q_t(s)} + H\abs*{\calT_v} \leq 2\sum_{t=1}^T \frac{\gamma_t H}{q_t(s)} + H^2SA\ln^2(T),
\end{align}
where the last inequality follows from \cref{lem:bound_virtual_episode}, which guarantees that the number of virtual episodes satisfies $\abs{\calT_v}\le HSA\ln^2(T)$.
\end{proof}

We next consider $\regterm(s)$.
\begin{lem}\label{lem:regterm_policy}
    For each state $s \in \calS$, it holds that 
    \begin{align}
        \mathbb{E} \brk*{\regterm(s) } &\leq O\prn*{H^3A\ln(T)} + 6\,\mathbb{E}\brk*{\sum_{t=1}^T\sum_a  \prn*{\frac{1}{\eta_{t+1}(s, a)} - \frac{1}{\eta_t(s, a)}}\ln(T)} \\
        &\qquad + \mathbb{E}\brk*{\frac{1}{H}\sum_{t=1}^T\sum_a \pi_t(a \mid s) B_t(s,a)} + 3\sum_{t=1}^T \frac{\gamma_t H}{q_t(s)}
        .
    \end{align}
\end{lem}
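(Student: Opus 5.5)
We fix a state $s$ and apply \cref{lem:OFTRL_log_barrier_policy} with $p_t=\pi_t(\cdot\mid s)$, loss vector $\hat Q_t(s,\cdot)-B_t(s,\cdot)$, prediction $Q^{\pi_t}(s,\cdot;m_t)$, comparator $u=\pio(\cdot\mid s)$, and learning rates $\eta_t(s,\cdot)$. The key design choice is the scalar shift $x_t$. We take
\[
x_t=-\frac{\I_t(s)(L_{t,h(s)}-M_{t,h(s)})Y_t}{q_t(s)}+\frac{\gamma_tH}{q_t(s)}.
\]
With this choice the shifted loss $\hat Q_t(s,a)-Q^{\pi_t}(s,a;m_t)+x_t$ collapses to $\frac{(\I_t(s,a)-\pi_t(a\mid s)\I_t(s))(L_{t,h(s)}-M_{t,h(s)})Y_t}{q_t(s)\pi_t(a\mid s)}$, and the $-B_t(s,a)$ part remains.

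\textbf{Step 1: constraint \cref{eq:OFTRL_constraint_policy}.} We verify $\eta_t\pi_t(a\mid s)(\cdot)\ge -1/2$ separately for the two parts.
\begin{itemize}
\item For the importance-weighted part, $|L-M|\le H$, so it is at least $-H\eta_t(s,a)/q_t(s)$. In real episodes $\eta_t/q_t(s)\le 1/(18\sqrt{H^3S})$, which makes this $\ge -1/18$.
\item For the bonus part, \cref{lem:bound_eta_pi_B_known} gives $\eta_t\pi_tB_t\le 1/(5H)$.
\end{itemize}

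\textbf{Step 2: penalty term.} It yields $A\ln(AT^2)/\eta_1=O(H^3A\ln T)$ plus $\sum_t\sum_a(\eta_{t+1}^{-1}-\eta_t^{-1})\ln(AT^2)\le 3\sum(\cdot)\ln T$. The tail term $\frac1{T^2}\sum_t\langle -u+\one/A,\hat Q_t-B_t\rangle$ and the term $2\|m_{T+1}\|_\infty$ in $Q$-space (at most $2H$) are lower order. This uses $|\hat Q_t|\lesssim H/\gamma_t\le HT$, $B_t\lesssim \sqrt{HS}T+H^2$, and the standing assumption $T\ge\max\{2,S,A\}$ (so $S\le T$), giving $O(H^{3/2}S^{1/2})$ after dividing by $T^2$ and summing $T$ terms. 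These go into $O(H^3A\ln T)$, perhaps after enlarging constants.

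\textbf{Step 3: stability term.} Using $(x+y)^2\le 2x^2+2y^2$, it splits into:
\begin{itemize}
\item $2\sum_a\eta_t\pi_t^2\cdot\frac{(\I_t(s,a)-\pi_t\I_t(s))^2(L-M)^2}{q_t(s)^2\pi_t^2}=2\sum_a\eta_t\zeta_t/q_t(s)^2$ in real episodes. By \cref{eq:eta_update_policy} this equals $2\ln T\sum_a(\eta_{t+1}^{-1}-\eta_t^{-1})$. Together with the penalty this gives at most $5$--$6$ times the learning-rate increments; in virtual episodes the term is zero, and the increments are still nonnegative.
\item $2\sum_a\eta_t\pi_t^2B_t^2\le \frac{2}{5H}\sum_a\pi_tB_t$, again by \cref{lem:bound_eta_pi_B_known}. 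We need a coefficient at most $1/H$, so any constant slack must fit. If the factor $2$ is troublesome, we use $(x+y)^2\le(1+c)x^2+(1+1/c)y^2$ to rebalance.
\end{itemize}

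\textbf{Step 4: the shift term in expectation.} The term $\sum_t\langle\pi_t-\pio,x_t\one\rangle$ vanishes, so the shift costs nothing. However, the OFTRL bound controls $\langle\pi_t-\pio,\hat Q_t-B_t\rangle$ directly, which is what $\regterm(s)$ is. The $3\sum_t\gamma_tH/q_t(s)$ term then arises when we replace $\hat Q_t$ in the tail/virtual corrections, or from bounding the $-\gamma_tH/q_t(s)$ offset inside the constraint and the stability cross terms (its contribution to the stability term is of order $\eta_t\gamma_t^2H^2/q_t^2\le\gamma_tH/q_t$).

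\textbf{Main obstacle.} I expect the main obstacle to be virtual episodes. There the importance-weighted part vanishes, but $\eta_t/q_t(s)$ may be large, so the constraint must be checked using only the bonus part and the constant offset $\gamma_tH/q_t(s)$. I would drop that offset into $x_t$ and bound it through the $\gamma_tH/q_t(s)$ allowance.
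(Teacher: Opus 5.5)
Your overall route is the paper's. You apply \cref{lem:OFTRL_log_barrier_policy} at state $s$ with loss $\hat Q_t(s,\cdot)-B_t(s,\cdot)$, prediction $Q^{\pi_t}(s,\cdot;m_t)$ and a scalar shift $x_t$. You check the constraint using the real-episode condition and \cref{lem:bound_eta_pi_B_known}. You convert the importance-weighted stability part into learning-rate increments via $\zeta_t$, and absorb $\eta_t\pi_t^2B_t^2$ into $\frac1H\pi_tB_t$.

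The only design difference is the shift. The paper takes $x_t=-\I_t(s)(L_{t,h(s)}-M_{t,h(s)})Y_t/q_t(s)$ and leaves the offset $-\gamma_tH/q_t(s)$ inside the loss. It then splits the stability term three ways and pays $3\eta_t(s,a)\pi_t(a\mid s)^2\gamma_t^2H^2/q_t(s)^2\le 3\pi_t(a\mid s)\gamma_tH/q_t(s)$. That is exactly where $3\sum_t\gamma_tH/q_t(s)$ and the constant $6$ come from.

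The offset does not depend on $a$, so folding $+\gamma_tH/q_t(s)$ into $x_t$ is legitimate; the lemma only uses $\langle p_t-p_{t+1},\one\rangle=0$. It buys a slightly stronger bound: constant $5$ on the increments, $\frac{2}{5H}$ on the bonus term, and no $\gamma_t$ term at all. In the statement that term is therefore pure nonnegative slack. Your Step 4 speculation about where it "arises" is unnecessary. The virtual-episode obstacle you anticipate also does not exist: with your $x_t$ the shifted loss there is exactly $-B_t(s,a)$, and the constraint is immediate from $\eta_t\pi_tB_t\le\frac1{5H}$.

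The step that is wrong as written is the tail term $\frac{1}{T^2}\sum_t\langle-\pio(\cdot\mid s)+\frac1A\one,\hat Q_t(s,\cdot)-B_t(s,\cdot)\rangle$ in Step 2. There are two problems.

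\begin{enumerate}
\item $\hat Q_t(s,a)$ is not pointwise $O(H/\gamma_t)$. The importance weight is $1/(q_t(s)\pi_t(a\mid s))$, and nothing bounds $\pi_t(a\mid s)$ from below, so the $\frac1A\one$ component is unbounded. You must bound this term in expectation. Since $\pio$ is fixed and $\pi_t$ is $\calF_{t-1}$-measurable, you may replace $\hat Q_t$ by $\E_t[\hat Q_t(s,a)]$, which lies in $[-H,H]$ by \cref{lem:expect_hatQ_proof}.
\item You overestimate $B_t$. With $\gamma_t=\sqrt{HS}/t$ one has $2\sqrt{HS}/\gamma_t=2t$, so $B_t(s,a)\le 2t+15H^2$, not something of order $\sqrt{HS}\,T$.
\end{enumerate}

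With both corrections the tail is $O(1+H^2/T)$, as in the paper. Your estimate instead leaves a term polynomial in $S$ (you write $O(H^{3/2}S^{1/2})$). That is not $O(H^3A\ln T)$ once $S\gg H^3A^2\ln^2 T$, which is permitted since only $S\le T$ is assumed. So ``enlarging constants'' would not close the step.
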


\begin{proof}
We will apply \cref{lem:OFTRL_log_barrier_policy} with $p_t = \pi_t(\cdot \mid s)$ and $\ell_t = \hat{Q}_t(s,a) - B_t(s,a)$ for each $s \in \calS$.
To do so, in what follows, we will check the conditions of \Cref{lem:OFTRL_log_barrier_policy}.
Let
\begin{align}
    \tilQ(s,a) = Q^{\pi_t}(s,a;m_t) + \frac{\I_t(s,a)(L_{t,h(s)} - M_{t,h(s)})}{q_t(s)\pi_t(a\mid s)}Y_t.
\end{align}
Then, we have 
$\hat{Q}_t(s,a) = \tilQ(s,a) - \frac{\gamma_t H}{q_t(s)}$.
Define
\begin{align}
x_t = \inpr*{-\pi_t(\cdot \mid s) , \tilQ_t(s,\cdot) - Q^{\pi_t}(s, \cdot;m_t)} = -\dfrac{\I_t(s)(L_{t,h(s)} - M_{t,h(s)})}{q_t(s)}Y_t
\end{align}
and verify that for all $(s,a)$, $\eta_t(s,a)\pi_t(a\mid s)\prn*{\hat{Q}_t(s,a)-B_t(s,a) - Q^{\pi_t}(s, a; m_t) + x_t}\geq -{1}/{2}$.
Recall that in a virtual episode we have $Y_t=0$ and $\ell_t(s,a)=0$ for all state-action pairs $(s,a)$. Hence,
\begin{align}
    &\eta_t(s,a)\pi_t(a\mid s)\prn*{\hat{Q}_t(s,a)-B_t(s,a)-Q^{\pi_t}(s, a; m_t) + x_t} \\ 
    &= \eta_t(s,a)\pi_t(a\mid s) \prn*{\frac{\I_t(s,a)(L_{t,h(s)} - M_{t,h(s)})}{q_t(s)\pi_t(a\mid s)}Y_t-B_t(s,a) - \frac{\gamma_t H}{q_t(s)} -\dfrac{\I_t(s)(L_{t,h(s)} - M_{t,h(s)})}{q_t(s)}Y_t} \\
    &\geq -\frac{\eta_t(s,a)}{q_t(s)}M_{t,h(s)}Y_t -\eta_t(s, a)\pi_t(a\mid s) B_t(s,a) - \eta_t(s,a)H -\frac{\eta_t(s,a)}{q_t(s)}L_{t,h(s)}Y_t\\
    &\geq -\frac{2\eta_t(s,a)}{q_t(s)}HY_t-\eta_t(s,a)\pi_t(a\mid s) B_t(s,a) - \eta_1 H \\
    &\geq -\frac{1}{9\sqrt{HS}} - \frac{1}{5H} - \frac{1}{180H^2} \\
    &\geq -\frac{1}{2}, 
\end{align}
where the bounds in the third and fourth lines use $0\leq L_{t,h(s)}\leq H$ and $0\leq M_{t,h(s)}\leq H$, and the fifth line uses
$\frac{\eta_t(s,a)}{q_t(s)} \leq \frac{1}{18\sqrt{H^3S}}$ in real episodes together with
$\eta_t(s,a)\pi_t(a\mid s)B_t(s,a)\leq \frac{1}{5H}$ from \cref{lem:bound_eta_pi_B_known}.

Hence, by \cref{lem:OFTRL_log_barrier_policy}, we obtain
\begin{align}
        & \E[\regterm(s)]\\
        &\leq \frac{A\ln(AT^2)}{\eta_1} + \E\brk*{\sum_{t=1}^T \sum_a \prn*{\frac{1}{\eta_{t+1}(s, a)} - \frac{1}{\eta_t(s, a)}}\ln(AT^2)} \\
        &\qquad + \E\brk*{\sum_{t=1}^T \sum_a  \eta_t(s, a)\pi_t(a \mid s)^{2}\prn*{\prn*{\tilQ_t(s, a) - Q^{\pi_t}(s, a; m_t)} - B_t(s, a) - \frac{\gamma_t H}{q_t(s)} + x_t}^2} \\
        &\qquad + \E\brk*{\frac{1}{T^2}\sum_{t=1}^T \inpr*{ -\pio(\cdot \mid s) + \frac{1}{A}\one, \hatQ_t(s, \cdot) - B_t(s, \cdot) }} + 2\E\brk*{\nrm*{Q^{\pi_t}(s, \cdot; m_{T+1})}_{\infty}} \\
        &\leq \frac{3A\ln(T)}{\eta_1} + 4 + \frac{30H^2}{T} + 2H + 3\E\brk*{\sum_{t=1}^T\sum_a \prn*{\frac{1}{\eta_{t+1}(s, a)} - \frac{1}{\eta_t(s, a)}}\ln(T)} \tag{by $T \geq A$ and $\nrm*{Q^{\pi_t}(s, \cdot; m_{T+1})}_{\infty} \leq H$} \\
        &\qquad + 3\E\brk*{\sum_{t=1}^T \sum_a  \eta_t(s, a)\pi_t(a\mid s)^{2}\prn*{\tilQ_t(s, a)  - Q^{\pi_t}(s, a; m_t) +x_t}^2}\\
        &\qquad + 3\E\brk*{\sum_{t=1}^T \sum_a  \eta_t(s, a)\pi_t(a\mid s)^{2} \prn*{B_t(s, a)^2 + \frac{\gamma_t^2 H^2}{q_t(s)^2}}} \\ 
        &\leq O(H^3A\ln(T)) + 3\E\brk*{\sum_{t=1}^T\sum_a \prn*{\frac{1}{\eta_{t+1}(s, a)} - \frac{1}{\eta_t(s, a)}}\ln(T)}  \\
        &\qquad + 3\E\brk[\Bigg]{\underbrace{\sum_{t=1}^T \sum_a  \eta_t(s, a)\pi_t(a\mid s)^{2}\prn*{\tilQ_t(s, a)  - Q^{\pi_t}(s, a; m_t) +x_t}^2}_{\text{stability-term-2}}}\\
        &\qquad + \mathbb{E}\brk*{\frac{1}{H}\sum_{t=1}^T\sum_a \pi_t(a \mid s) B_t(s,a)} + 3\mathbb{E}\brk*{\sum_{t=1}^T \sum_a \eta_t(s,a)\pi_t(a\mid s) \frac{\gamma_t H^2}{q_t(s)}}.
        \label{eq:regterm_policy_mid}
    \end{align}
    Here, the second inequality follows from 
    \begin{align}
        &\E\brk*{\frac{1}{T^2}\sum_{t=1}^T \inpr*{ -\pio(\cdot \mid s) + \frac{1}{A}\one, \hatQ_t(s, \cdot) - B_t(s, \cdot) }} \\
        &= \frac{1}{T^2}\inpr*{ -\pio(\cdot \mid s) + \frac{1}{A}\one, \E\brk*{\sum_{t=1}^T\E_t\brk*{\hatQ_t(s, \cdot) - B_t(s, \cdot)} }} \\
        &\leq \frac{1}{T^2}\nrm*{-\pio(\cdot \mid s) + \frac{1}{A}\one}_1\nrm*{\E\brk*{\sum_{t=1}^T\E_t\brk*{\hatQ_t(s, \cdot) - B_t(s, \cdot)} }}_\infty \\
        &\leq \frac{2T(2T + 15H^2)}{T^2} = 4 + \frac{30H^2}{T}
        ,
    \end{align}
    where we used $\nrm*{-\pio(\cdot \mid s) + \frac{1}{A}\one}_1\leq 2$,
    $\abs*{\E_t[\hatQ_t(s,a)]} \leq H$ and $B_t(s,a) \leq \frac{2\sqrt{HS}}{\gamma_t} + 15H^2$ from \cref{lem:bound_eta_pi_B_known}, which together imply $\nrm*{\E\brk*{\sum_{t=1}^T\E_t\brk*{\hatQ_t(s, \cdot) - B_t(s, \cdot)} }}_\infty \leq T(2T + 15H^2)$,
    and the last inequality follows from $\gamma_t \leq q_t(s)$ and \cref{lem:bound_eta_pi_B_known}.
    We further evaluate the \text{stability-term-2} in the last inequality as
    \begin{align}
        &\eta_t(s, a)\pi_t(a\mid s)^{2}\prn*{\tilQ_t(s, a)  - Q^{\pi_t}(s, a; m_t) + x_t}^2   \\
        &= \eta_t(s,a)\pi_t(a\mid s)^2\prn*{\frac{\I_t(s,a) (L_{t,h(s)} - M_{t,h(s)})}{q_t(s) \pi_t(a\mid s)} - \frac{\I_t(s) (L_{t,h(s)} - M_{t,h(s)})}{q_t(s)}}^2Y_t  \\
        &= \eta_t(s,a)\prn*{\frac{\I_t(s,a) (L_{t,h(s)} - M_{t,h(s)})}{q_t(s)} - \frac{\pi_t(a\mid s) \I_t(s) (L_{t,h(s)} - M_{t,h(s)})}{q_t(s)}}^2Y_t  \\
        &= \frac{\eta_t(s,a)}{q_t(s)^2}(\I_t(s,a) - \pi_t(a\mid s) \I_t(s))^2 (L_{t,h(s)} - M_{t,h(s)})^2Y_t\\
        &= \frac{\eta_t(s,a)\zeta_t(s,a)}{q_t(s)^2} Y_t
        ,
    \end{align}
    where the last equality follows from the definition of $\zeta_t(s,a) = (\I_t(s,a) - \pi_t(a\mid s) \I_t(s))^2 (L_{t,h(s)} - M_{t,h(s)})^2$.
    Then, \text{stability-term-2} is evaluated as
    \begin{align}
        \text{stability-term-2} &= \sum_{t=1}^T \sum_a  \eta_t(s, a)\pi_t(a\mid s)^{2}\prn*{\tilQ_t(s, a)  - Q^{\pi_t}(s, a; m_t) + x_t}^2 \\
        &= \sum_{t=1}^T \sum_a \frac{\eta_t(s,a)\zeta_t(s,a)}{q_t(s)^2}Y_t \\
        &\leq \sum_{t=1}^T \sum_a\prn*{\frac{1}{\eta_{t+1}(s,a)} - \frac{1}{\eta_t(s,a)}}\ln(T),
    \end{align}
    where the last inequality follows from \cref{eq:eta_update_policy}.
    Then, together with \cref{eq:regterm_policy_mid} and $\eta_t(s,a) \leq \frac{1}{H}$, we obtain
    \begin{align}
        \E[\regterm(s)] &\leq O\prn*{H^3A\ln(T)} + 6\mathbb{E}\brk*{\sum_{t=1}^T\sum_a  \prn*{\frac{1}{\eta_{t+1}(s, a)} - \frac{1}{\eta_t(s, a)}}\ln(T)} \\
        &\qquad + \mathbb{E}\brk*{\frac{1}{H}\sum_{t=1}^T\sum_a \pi_t(a \mid s) B_t(s,a)} + 3\E\brk*{\sum_{t=1}^T \sum_a \pi_t(a\mid s) \frac{\gamma_t H}{q_t(s)}}\\
        &= O\prn*{H^3A\ln(T)} + 6\mathbb{E}\brk*{\sum_{t=1}^T\sum_a  \prn*{\frac{1}{\eta_{t+1}(s, a)} - \frac{1}{\eta_t(s, a)}}\ln(T)} \\
        &\qquad + \mathbb{E}\brk*{\frac{1}{H}\sum_{t=1}^T\sum_a \pi_t(a \mid s) B_t(s,a)} + 3\sum_{t=1}^T \frac{\gamma_t H}{q_t(s)}.
    \end{align}
\end{proof}

\begin{lem}\label{lem:regret_to_bt}
\cref{alg:OFTRL_PO} guarantees
    \begin{align}
        \Reg_T \leq O\prn*{H^3S A\ln^2(T)} + 3\E\brk*{\sum_{t = 1}^T V^{\pi_t}(s_0;b_t)},
    \end{align}
    where $b_t$ is defined in \cref{def:bonus_term}.
\end{lem}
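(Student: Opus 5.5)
The plan is to verify the hypothesis of \cref{lem:dilated_bonus} with an explicit $J(s)$ and then invoke that lemma. We start from the decomposition \cref{eq:decompose_PO}: for each $s$, the left-hand side of \cref{eq:key_lemma_ineq} (before weighting by $\qst(s)$) splits as $\regterm(s)+\biasterm(s)$. We take expectations and add the bounds of \cref{lem:biasterm_policy,lem:regterm_policy}. This yields
\begin{align}
\E\brk*{\regterm(s)+\biasterm(s)}
&\le O\prn*{H^3A\ln(T)} + H^2SA\ln^2(T) + 6\,\E\brk*{\sum_{t,a}\prn*{\frac{1}{\eta_{t+1}(s,a)}-\frac{1}{\eta_t(s,a)}}\ln(T)}\\
&\qquad + 5\sum_{t=1}^T\frac{\gamma_t H}{q_t(s)} + \E\brk*{\frac{1}{H}\sum_{t,a}\pi_t(a\mid s)B_t(s,a)}.
\end{align}

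The middle two terms are exactly $\E\brk{\sum_t b_t(s)}$ by the definition \cref{def:bonus_term}. Here the constants $6$ and $5$ are precisely the ones chosen in $b_t$: $2\gamma_tH/q_t(s)$ comes from the bias term and $3\gamma_tH/q_t(s)$ from the regularization term. The last term is the $\frac1H\sum\pi_tB_t$ term required by \cref{eq:key_lemma_ineq}. Hence we may set $J(s)=O(H^3A\ln(T)+H^2SA\ln^2(T))$, a deterministic constant. We multiply by $\qst(s)$ and sum over $s$, noting that all quantities other than $J$ appear identically on both sides, so that \cref{eq:key_lemma_ineq} holds. \cref{lem:dilated_bonus} then gives $\Reg_T\le \sum_s\qst(s)J(s)+3\E\brk{\sum_tV^{\pi_t}(s_0;b_t)}$.

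Finally, we use $\sum_s\qst(s)\le H$ because each layer carries total mass $1$. This gives $\sum_s\qst(s)J(s)\lesssim H^4A\ln(T)+H^3SA\ln^2(T)$. The first piece needs care to fit the claimed $O(H^3SA\ln^2(T))$. I would use $H\le S$ to get $H^4A\ln T\le H^3SA\ln^2 T$.

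One more check is needed: the regret over the augmented sequence, including virtual episodes, must dominate the real-episode regret. Virtual episodes have zero loss, so their contribution to the comparator term is $0$ and to the learner's term is nonnegative. Thus restricting to real episodes only decreases the regret, and the bound transfers.

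The main obstacle I expect is bookkeeping rather than a new idea. One must confirm that the constants in \cref{lem:biasterm_policy,lem:regterm_policy} match $b_t$ exactly, in particular that $2+3\le5$ for the $\gamma_t$ terms. One must also confirm that the virtual-episode bias $H|\calT_v|$ is absorbed into $J(s)$ uniformly in $s$, so that weighting by $\qst(s)$ costs only a factor $H$.
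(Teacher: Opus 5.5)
Your proposal is correct and follows essentially the same route as the paper. Both add the bounds of \cref{lem:biasterm_policy,lem:regterm_policy}, recognize the $6(\cdot)\ln T$ and $(2+3)\gamma_tH/q_t(s)$ terms as exactly $\E[\sum_t b_t(s)]$, weight by $\qst(s)$ using $\sum_s\qst(s)\le H$ (with $H\le S$ to merge $H^4A\ln T$ into $H^3SA\ln^2 T$), and invoke \cref{lem:dilated_bonus}. Your remark on virtual episodes is harmless; their contribution is in fact exactly zero for both the learner and the comparator, since $\ell_t\equiv 0$ there.
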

\begin{proof}
By the definition of the regret decomposition in \cref{eq:decompose_PO},
\begin{align}
    &\E\brk*{\sum_s \qst(s)\sum_{t,a} \prn*{\pi_t(a\mid s) -  \pio(a\mid s) } \prn*{Q^{\pi_t}(s,a;\ell_t) -  B_t(s,a)}}\\
    &= \E\brk*{\sum_s \qst(s) \cdot \regterm(s)} + \E\brk*{\sum_s \qst(s) \cdot \biasterm(s)}\\
    &\leq O\prn*{H^4A\ln(T)} + 6\mathbb{E}\brk*{\sum_{t=1}^T\sum_s \qst(s)\sum_a  \prn*{\frac{1}{\eta_{t+1}(s, a)} - \frac{1}{\eta_t(s, a)}}\ln(T)} \\
    &\qquad + \mathbb{E}\brk*{\frac{1}{H}\sum_{t=1}^T\sum_s \qst(s)\sum_a \pi_t(a \mid s) B_t(s,a)} + 5\sum_{t=1}^T \sum_s \qst(s) \frac{\gamma_t H}{q_t(s)} + H^3SA\ln^2(T)\\
    &= O\prn*{H^3S A\ln^2(T)} + \E\brk*{\sum_{t = 1}^T\sum_s \qst(s) b_t(s)} + \E\brk*{\frac{1}{H} \sum_{t = 1}^T\sum_s \qst(s)\sum_a \pi_t(a\mid s)B_t(s,a)},
\end{align}
where we used \cref{lem:biasterm_policy,lem:regterm_policy} and the definition of $b_t$.
Combining the last inequality with \cref{lem:dilated_bonus} completes the proof.
\end{proof}

\begin{lem}\label{lem:log_barrier_bt_known}
    It holds that
    \begin{align}
        &\E\brk*{\sum_{t=1}^T V^{\pi_t}(s_0;b_t)}\lesssim \ln(T)\sum_{s,a}\sqrt{\E\brk*{\sum_{t\in\mathcal{T}_r} \zeta_t(s,a)}} + H^{\frac{3}{2}}S^{\frac{3}{2}}A\ln^2(T).  
    \end{align}
\end{lem}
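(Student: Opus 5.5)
Since $V^{\pi_t}(s_0;b_t)=\sum_s q^{\pi_t}(s)\,b_t(s)$, I will split $b_t$ into its two components from \cref{def:bonus_term} and bound each separately. One is the learning-rate increment $6\sum_a(\eta_{t+1}(s,a)^{-1}-\eta_t(s,a)^{-1})\ln(T)$; the other is the exploration term $5\gamma_tH/q_t(s)$.

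\textbf{Exploration term.} Because $q^{\pi_t}(s)\le q_t(s)$, this part contributes at most $5H\sum_t\gamma_t S=5H S\sqrt{HS}\sum_t 1/t\lesssim H^{3/2}S^{3/2}\ln(T)$. This is already within the claimed lower-order term.

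\textbf{Increment term, virtual episodes.} In a virtual episode only $(s_t^\dagger,a_t^\dagger)$ is updated, so \cref{eq:bonus_term_upper_virtual} gives
\[
q^{\pi_t}(s)\,b_t(s)\lesssim \frac{1}{H}\cdot\frac{q^{\pi_t}(s^\dagger_t)}{q_t(s^\dagger_t)\max_{s',a'}\eta_t(s',a')/q_t(s')}\lesssim \frac{1}{H R_t}.
\]
Since $R_t>1/(18\sqrt{H^3S})$ in virtual episodes, each virtual episode contributes at most $O(\sqrt{HS})$. Multiplying by $|\calT_v|\lesssim HSA\ln^2(T)$ from \cref{lem:bound_virtual_episode} gives $O(H^{3/2}S^{3/2}A\ln^2(T))$. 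This is exactly the claimed lower-order term, which confirms the bookkeeping.

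\textbf{Increment term, real episodes.} In real episodes the increment equals $\eta_t(s,a)\zeta_t(s,a)/(q_t(s)^2\ln T)$. The contribution is therefore $6\sum_{s,a}\sum_{t\in\calT_r}q^{\pi_t}(s)\,\eta_t(s,a)\zeta_t(s,a)/q_t(s)^2$. I will use $q^{\pi_t}(s)\le q_t(s)$ to reduce this to $\sum_t \eta_t(s,a)\,\zeta_t(s,a)/q_t(s)$.

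This is the main obstacle: one factor of $1/q_t(s)$ remains, while \cref{lem:learning_rate_policy} controls $\eta_t$ only through $\sum_\tau \zeta_\tau/q_\tau^2$. My first attempt is to keep both $q^{\pi_t}(s)$ and the $q_t(s)^{-2}$ factor and set $x_t=\zeta_t(s,a)/q_t(s)^2$. The summand is then $q^{\pi_t}(s)\,\eta_t x_t\le q^{\pi_t}(s)\cdot 2\sqrt{\ln T}\,x_t/\sqrt{\sum_{\tau\le t}x_\tau}$, where the $\tau=t$ term inside the root is permitted because of the inclusive sum in \cref{lem:learning_rate_policy}. The difficulty is that the weight $q^{\pi_t}(s)\le 1$ varies with $t$.

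To handle it, I will write $q^{\pi_t}(s)x_t=\zeta_t/q_t(s)\cdot q^{\pi_t}(s)/q_t(s)$ and apply a weighted version of the telescoping inequality. Concretely, set $y_t=q^{\pi_t}(s)^2x_t\le\zeta_t$, so that $y_t\le q_t(s)^2x_t$. Then $q^{\pi_t}(s)\,x_t/\sqrt{\sum_{\tau\le t} x_\tau}$ should be bounded by $y_t/\sqrt{\sum_{\tau\le t}y_\tau}$, and summing telescopes to $2\sqrt{\sum_t y_t}\le 2\sqrt{\sum_t\zeta_t}$. The key step is the inequality $\sum_{\tau\le t}y_\tau\le q^{\pi_t}(s)^2\sum_{\tau\le t}x_\tau$, and it can fail because $q^{\pi_\tau}(s)$ varies with $\tau$. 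If it fails, I will fall back on Cauchy--Schwarz, $\sum_t q^{\pi_t}\eta_t x_t\le\sqrt{\sum_t \eta_t x_t}\sqrt{\sum_t q^{\pi_t}(s)^2\eta_t x_t}$, together with the bound $\sum_t\eta_tx_t\le \eta_{T+1}^{-1}\ln T$, and absorb the resulting term self-referentially.

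Given this, I take expectation, apply Jensen to move $\E$ inside the square root, and obtain $\ln(T)\sum_{s,a}\sqrt{\E[\sum_{t\in\calT_r}\zeta_t(s,a)]}$. Adding the three contributions then gives the claimed bound.
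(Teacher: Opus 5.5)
Your bounds for the exploration term and for the virtual episodes match the paper's. So does your reduction of the real-episode part to $\sum_{t\in\mathcal{T}_r}\eta_t(s,a)\zeta_t(s,a)/q_t(s)$. The gap is the real-episode term itself, which is the heart of the lemma, and neither of your two routes bounds it.

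\textbf{The weighted telescoping.} It needs $\sum_{\tau\le t}y_\tau\le q^{\pi_t}(s)^2\sum_{\tau\le t}x_\tau$. This fails as soon as $q^{\pi_t}(s)$ decreases: already at $t=2$ it reduces to $q^{\pi_1}(s)\le q^{\pi_2}(s)$ whenever $x_1>0$. Worse, the bound it targets, $\sqrt{\ln T}\sqrt{\sum_t\zeta_t}$, is false as a deterministic statement about such sequences. Suppose $q_t(s)$ halves from one block of rounds to the next over $K\asymp\log T$ blocks, $\zeta_t$ stays constant, and $\eta_t$ follows the learning-rate update. Then the sum is of order $\sqrt{K}\cdot\sqrt{\ln T}\sqrt{\sum_t\zeta_t}$. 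This is why the statement carries $\ln T$ rather than $\sqrt{\ln T}$.

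\textbf{The fallback.} It is vacuous. Write $Z=\sum_t q^{\pi_t}(s)\eta_t x_t$. Your Cauchy--Schwarz step with $q^{\pi_t}(s)^2\le q^{\pi_t}(s)$ gives $Z\le\sqrt{\sum_t\eta_tx_t}\sqrt{Z}$. Absorption then only returns $Z\le\sum_t\eta_tx_t\le\ln(T)/\eta_{T+1}$, which is the trivial bound from $q^{\pi_t}(s)\le1$. That cumulative penalty is of order $\sqrt{\ln T\sum_t\zeta_t/q_t(s)^2}$, so it still carries the $1/q_t(s)^2$ weights you set out to remove.

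\textbf{The missing step.} Pair the leftover $1/q_t(s)$ with $\eta_t$ and leave $\sqrt{\zeta_t}$ unweighted. Let $x_t=\zeta_t(s,a)/q_t(s)^2$ and $X_t=\sum_{\tau\le t,\tau\in\mathcal{T}_r}x_\tau$. Write $\zeta_t/q_t(s)=\sqrt{x_t}\sqrt{\zeta_t}$, insert $\eta_t\le 2\sqrt{\ln T}/\sqrt{X_t}$ from \cref{lem:learning_rate_policy}, and apply Cauchy--Schwarz over $t$:
\[
\sum_{t\in\mathcal{T}_r}\frac{\eta_t(s,a)\zeta_t(s,a)}{q_t(s)}\le 2\sqrt{\ln T}\,\sqrt{\sum_{t\in\mathcal{T}_r}\frac{x_t}{X_t}}\,\sqrt{\sum_{t\in\mathcal{T}_r}\zeta_t(s,a)}.
\]
The middle factor is a log-sum, $\sum_t x_t/X_t\lesssim\ln T$. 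To make this rigorous, also keep the floor $\eta_t\le\eta_1$ in the denominator and use $x_t\le H^2/\gamma_t^2$, so the ratio inside the logarithm is polynomial in $T$. This costs exactly one more $\sqrt{\ln T}$ and yields $\ln(T)\sqrt{\sum_t\zeta_t(s,a)}$. Your Jensen step and the other two contributions then finish the proof.
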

\begin{proof}[Proof of \cref{lem:log_barrier_bt_known}]
    We use $\mathcal{T}_r$ and $\mathcal{T}_v$ to denote the set of real and virtual episodes, respectively. Then we have
    \allowdisplaybreaks
    \begin{align}
        &\sum_{t=1}^T  V^{\pi_t}(s_0;b_t)\\
        &= \sum_{t=1}^T\sum_s q^{\pi_t}(s)b_t(s)\\
        &= 6\sum_{t\in\mathcal{T}_r}\sum_{s,a} q^{\pi_t}(s) \prn*{\frac{1}{\eta_{t+1}(s,a)}-\frac{1}{\eta_t(s,a)}}\ln(T) 
        + 6\sum_{t\in\mathcal{T}_v}\sum_{s,a} q^{\pi_t}(s) \prn*{\frac{1}{\eta_{t+1}(s,a)}-\frac{1}{\eta_t(s,a)}}\ln(T) \\
        &\qquad + 5\sum_{t=1}^T\sum_{s} q^{\pi_t}(s)\frac{\gamma_t H}{q_t(s)}\\
        &\lesssim \sum_{t\in\mathcal{T}_r}\sum_{s,a} q^{\pi_t}(s) \frac{\eta_t(s,a)\zeta_t(s,a)}{q_t(s)^2} 
        + \sum_{t\in\mathcal{T}_v}\sum_{s,a} q^{\pi_t}(s) \frac{\I\{(s^\dagger_t, a^\dagger_t) = (s,a)\}}{\eta_t(s,a)H} + \sum_{t=1}^T\sum_{s} \gamma_t H\\
        &\leq \sum_{t\in\mathcal{T}_r}\sum_{s,a} \frac{\eta_t(s,a)\zeta_t(s,a)}{q_t(s)} + \sum_{t\in\mathcal{T}_v} \frac{q^{\pi_t} (s_t^\dagger)}{\eta_t(s_t^\dagger,a_t^\dagger)H} + HS\sum_{t=1}^T \gamma_t\\
        &\lesssim \sum_{t\in\mathcal{T}_r}\sum_{s,a}\frac{\eta_t(s,a)\zeta_t(s,a)}{q_t(s)} + \sum_{t\in\mathcal{T}_v} \sqrt{HS}+ H^{\frac{3}{2}}S^{\frac{3}{2}}\ln(T)
        \tag{by $\frac{\eta_t(s_t^\dagger, a_t^\dagger)}{q_t(s_t^\dagger)} > \frac{1}{18\sqrt{H^3S}}$ in virtual episodes} \\
        &\lesssim \sqrt{\ln(T)}\sum_{t\in\mathcal{T}_r}\sum_{s,a} \frac{\frac{\sqrt{\zeta_t(s,a)}}{q_t(s)}\times \sqrt{\zeta_t(s,a)}}{\sqrt{\sum_{\tau\leq t: \tau\in\mathcal{T}_r} \frac{\zeta_\tau(s,a)}{q_\tau(s)^2} }} + \sqrt{HS}\abs{\mathcal{T}_v} + H^{\frac{3}{2}}S^{\frac{3}{2}}\ln(T)  \tag{by \cref{lem:learning_rate_policy}}\\
        &\leq \sqrt{\ln(T)}\sum_{s,a}\sqrt{\sum_{t\in\mathcal{T}_r}\frac{\frac{\zeta_t(s,a)}{q_t(s)^2}}{ \sum_{\tau\leq t: \tau\in\mathcal{T}_r} \frac{\zeta_\tau(s,a)}{q_\tau(s)^2}}}\sqrt{\sum_{t\in\mathcal{T}_r} \zeta_t(s,a)} + H^{\frac{3}{2}}S^{\frac{3}{2}}A\ln^2(T) + H^{\frac{3}{2}}S^{\frac{3}{2}}\ln(T) \tag{by the Cauchy–Schwarz inequality and \cref{lem:bound_virtual_episode}}\\
        &\lesssim \ln(T)\sum_{s,a}\sqrt{\sum_{t\in\calT_{r}} \zeta_t(s,a)} + H^{\frac{3}{2}}S^{\frac{3}{2}}A\ln^2(T), 
    \end{align}
    where the last inequality follows from
    \begin{align}
        \sqrt{\sum_{t\in\mathcal{T}_r}\frac{\frac{\zeta_t(s,a)}{q_t(s)^2}}{ \sum_{\tau\leq t:\tau\in\mathcal{T}_r} \frac{\zeta_\tau(s,a)}{q_\tau(s)^2}}} 
        \leq \sqrt{1 + \ln \prn*{ \sum_{\tau\in\mathcal{T}_r} \frac{\zeta_\tau(s,a)}{q_\tau(s)^2}}}
        \leq \sqrt{1 + \ln \prn*{\sum_{\tau\in\mathcal{T}_r} \frac{H^2 T}{H}}}
        \lesssim \sqrt{\ln(T)}.
    \end{align}
    Combining the above arguments with $H\leq S$, \cref{lem:regret_to_bt,lem:log_barrier_bt_known}, we obtain
    \begin{align}
        \Reg_T \lesssim \sum_{s,a}\sqrt{\ln^2(T)\mathbb{E}\brk*{\sum_{t=1}^T \zeta_t(s,a)}} + H^{\frac52}S^{\frac32}A\ln^2(T). \label{eq:policy_reg_general}
    \end{align}
\end{proof}

\subsection{Proof of \protect\cref{thm:Policy_OPT}}

Here we provide the proof of \cref{thm:Policy_OPT}.

\begin{thm}[Restatement of \cref{thm:Policy_OPT}]
\cref{alg:OFTRL_PO} with the loss prediction $m_t$ defined in \cref{def:predictor_sequence} guarantees
\begin{align}
    \Reg_T &\lesssim \sqrt{H^2SA\ln^2(T)\min\set*{L^\star, HT - L^\star, Q_{\infty}, V_1}} + H^{\frac52}S^{\frac32}A\ln^2(T).
\end{align}
Under the stochastic regime with adversarial corruption, it simultaneously ensures
\begin{align}
    \Reg_T \lesssim \sqrt{H^2SA\ln^2(T)\prn*{\V T + \calC}} + H^{\frac52}S^{\frac32}A\ln^2(T),
\end{align}
and
\begin{align}
    \Reg_T \lesssim U + \sqrt{U\calC} + H^{\frac52}S^{\frac32}A\ln^2(T),
\end{align}
where $U = \sum_{s}\sum_{a\neq\pist(s)}\frac{H^2\ln^2(T)}{\Delta(s,a)}$.
\end{thm}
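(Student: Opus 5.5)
The proof starts from the general bound \cref{eq:policy_reg_general} obtained at the end of the proof of \cref{lem:log_barrier_bt_known}:
\[
\Reg_T \lesssim \sum_{s,a}\sqrt{\ln^2(T)\,\E\brk*{\textstyle\sum_{t\in\calT_r}\zeta_t(s,a)}} + H^{\frac52}S^{\frac32}A\ln^2(T).
\]
The task is then to bound $\E[\sum_t \zeta_t(s,a)]$ in two different ways: a data-dependent way for the adversarial and gap-independent stochastic bounds, and a gap-dependent way for the self-bounding argument.

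\textbf{Adversarial and variance bounds.}
\begin{itemize}
\item Since $(\I_t(s,a)-\pi_t(a\mid s)\I_t(s))^2\le \I_t(s,a)+\pi_t(a\mid s)^2\I_t(s)$, we get
\[
\sum_a \zeta_t(s,a)\le 2\,\I_t(s)\,(L_{t,h(s)}-M_{t,h(s)})^2 .
\]
\item Cauchy--Schwarz over $(s,a)$ then gives
\[
\sum_{s,a}\sqrt{\E\textstyle\sum_t\zeta_t(s,a)} \;\le\; \sqrt{2SA\,\E\brk*{\textstyle\sum_t\sum_{s,a}\I_t(s,a)(L_{t,h(s)}-M_{t,h(s)})^2}} .
\]
\item \cref{lem:L-M_to_l-m} bounds the inner expectation by $H^2\E[\sum_t\sum_{s,a}\I_t(s,a)(\ell_t(s,a)-m_t(s,a))^2]$. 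This is exactly the quantity handled in the global-optimization proof, so $\Reg_T\lesssim\sqrt{H^2SA\ln^2(T)\,X}+\text{lower order}$ with $X=\E[\sum_t\sum_{s,a}\I_t(s,a)(\ell_t-m_t)^2]$.
\item Apply the prediction lemma for \cref{def:predictor_sequence} (\cref{lem:general_predict_result}). It bounds $X$ by
\[
\min\set*{L^\star+\Reg_T,\;HT-L^\star-\Reg_T,\;Q_\infty,\;V_1}+SA
\]
in the adversarial regime, and by $\V T+\calC$ (plus lower order) in the corrupted stochastic regime.
\item Solve the resulting self-referential inequality for $L^\star$ by AM--GM, as in \cref{eq:occup_mid_regret_first1}, and drop $-\Reg_T$ for the $HT-L^\star$ branch. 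The extra $\sqrt{H^2SA\cdot SA}\ln T$ term is absorbed into $H^{5/2}S^{3/2}A\ln^2 T$ using $H\le S$.
\end{itemize}
A subtlety is that virtual episodes have $\ell_t=0$. This is harmless because the sums run over $\calT_r$ only, and the measures are defined on real episodes.

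\textbf{Gap-dependent bound.}
\begin{itemize}
\item Use \cref{lem:zeta_decompose} with $m_t$ bounded arbitrarily. Its variance part is at most $2H\cdot\frac{H}{4}\,q^{\pi_t}(s)\pi_t(a\mid s)(1-\pi_t(a\mid s))$. I would rather use a direct bound with no $\lambda$ term: $(L-M)^2\le H^2$ gives
\[
\E_t[\zeta_t(s,a)]\le H^2\,q^{\pi_t}(s)\,\pi_t(a\mid s)\,(1-\pi_t(a\mid s)),
\]
obtained by conditioning on the action at $s$ as in \cref{eq:zeta_decomp_policy}.
\item This yields
\[
\sum_{s,a}\sqrt{\E\textstyle\sum_t\zeta_t}\le H\sum_{s,a}\sqrt{\E\textstyle\sum_t q^{\pi_t}(s,a)(1-\pi_t(a\mid s))}.
\]
\item Split $a\ne\pist(s)$ from $a=\pist(s)$. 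The latter uses $q^{\pi_t}(s)(1-\pi_t(\pist(s)\mid s))=\sum_{a\ne\pist(s)}q^{\pi_t}(s,a)$ and $\sqrt{\sum_b x_b}\le\sum_b\sqrt{x_b}$, exactly as in the proof of \cref{thm:Occup_OPT}. This gives
\[
\Reg_T\lesssim H\ln(T)\sum_s\sum_{a\ne\pist(s)}\sqrt{\E\textstyle\sum_t q^{\pi_t}(s,a)}+H^{5/2}S^{3/2}A\ln^2T .
\]
\item Apply the self-bounding lemma \cref{general_self_bounding}, which relates $\Reg_T$ to $\sum_t q^{\pi_t}(s,a)\Delta(s,a)$ minus a corruption term, to obtain $U+\sqrt{U\calC}$ with $U=\sum\frac{H^2\ln^2 T}{\Delta}$.
\end{itemize}

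\textbf{Main obstacle.} The delicate step is making the self-bounding application go through with virtual episodes. The regret is defined over real episodes, while $\sum_t q^{\pi_t}(s,a)$ in the bound ranges over all episodes. By \cref{lem:bound_virtual_episode} the virtual ones contribute at most $HSA\ln^2T$ episodes, each with regret-type cost at most $H$. I would first try to charge this to the lower-order term and drop virtual episodes from the $q^{\pi_t}$ sums. If that fails, the fallback is to show directly that the self-bounding identity also holds when virtual episodes are included with zero loss, at an extra $H^2SA\ln^2T$ cost, which is still dominated by $H^{5/2}S^{3/2}A\ln^2T$.
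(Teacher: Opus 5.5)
Your proposal is correct and follows the paper's proof essentially step for step. The first two bounds use the same chain: $\sum_a(\I_t(s,a)-\pi_t(a\mid s)\I_t(s))^2\le 2\I_t(s)$, Cauchy--Schwarz over $(s,a)$, \cref{lem:L-M_to_l-m}, \cref{lem:general_predict_result}, AM--GM for the $L^\star$ branch, and dropping $-\Reg_T$ for the $HT-L^\star$ branch. The gap-dependent bound uses the same direct estimate $\E_t[\zeta_t(s,a)]\le H^2q^{\pi_t}(s)\pi_t(a\mid s)(1-\pi_t(a\mid s))$, the split at $\pist(s)$, and \cref{general_self_bounding}.

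Your virtual-episode concern is settled by your first option, which the paper leaves implicit. Your starting bound already sums only over $\calT_r$, and $Y_t$ is $\calF_{t-1}$-measurable, so the resulting $q^{\pi_t}$ sums involve only real episodes. Virtual episodes add zero loss for both learner and comparator, so \cref{lem:corruption_self-bounding} applies to the real episodes directly.
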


\begin{proof}
We start from \cref{eq:policy_reg_general},
\begin{align}
    \Reg_T
    \lesssim \sum_{s,a}\sqrt{\ln^2(T)\mathbb{E}\brk*{\sum_{t=1}^T \zeta_t(s,a)}} + H^{\frac52}S^{\frac32}A\ln^2(T). \label{eq:reg_bound_policy_in_proof}
\end{align}
By the definition of $\zeta_t(s,a)$, we have
\begin{align}   
        &\sum_{s,a}\sqrt{\ln^2(T)\E\brk*{\sum_{t=1}^T \zeta_t(s,a)} } \\
        &= \sum_{s,a}\sqrt{\ln^2(T)\E\brk*{\sum_{t=1}^T (\I_t(s,a) - \pi_t(a\mid s) \I_t(s))^2(L_{t,h(s)} - M_{t,h(s)})^2}}\\
        &\leq \sqrt{SA\ln^2(T)\E\brk*{\sum_{t=1}^T\sum_{s,a} (\I_t(s,a) - \pi_t(a\mid s) \I_t(s))^2(L_{t,h(s)} - M_{t,h(s)})^2}}\tag{by the Cauchy–Schwarz inequality} \\
        &\leq \sqrt{SA\ln^2(T)\E\brk*{\sum_{t \in \calT_r}\sum_{s,a} 2\I_t(s,a)(L_{t,h(s)} - M_{t,h(s)})^2}} \\
        &\lesssim \sqrt{H^2SA\ln^2(T)\E\brk*{\sum_{t \in \calT_r}\sum_{s,a}\I_t(s,a)(\ell_t(s,a) - m_t(s,a))^2}}, \label{eq:common_bound_policy}
\end{align}
where the third line uses 
$\sum_a(\I_t(s,a)-\pi_t(a\mid s)\I_t(s))^2\le 2\I_t(s)$
for each fixed state $s$, and the last inequality follows from \cref{lem:L-M_to_l-m}.

\paragraph{1. Bounds for the adversarial regime.}
Applying \cref{lem:general_predict_result} to \eqref{eq:common_bound_policy} yields
\begin{align}
    &\sqrt{H^2SA\ln^2(T)\E\brk*{\sum_{t = 1}^{T}\sum_{s,a}\I_t(s,a)(\ell_t(s,a)-m_t(s,a))^2}} \\
    &\lesssim \sqrt{H^2SA\ln^2(T)\prn*{\min\set*{L^\star  + \Reg_T, HT - L^\star  - \Reg_T, Q_{\infty}, V_1} +  SA}}\\
    &\leq \sqrt{H^2SA\ln^2(T)\min\set*{L^\star  + \Reg_T, HT - L^\star  - \Reg_T, Q_{\infty}, V_1}} + HSA\ln(T).
\end{align}
Then, we obtain
\begin{align}
    \Reg_T &\lesssim \sqrt{H^2SA\ln^2(T)\prn*{L^\star  + \Reg_T}} + H^{\frac52}S^{\frac32}A\ln^2(T) , \label{eq:policy_mid_regret_first1}\\
    \Reg_T &\lesssim \sqrt{H^2SA\ln^2(T)\prn*{HT - L^\star  - \Reg_T}} + H^{\frac52}S^{\frac32}A\ln^2(T) , \label{eq:policy_mid_regret_first2}\\
    \Reg_T &\lesssim \sqrt{H^2SA\ln^2(T)Q_{\infty}} + H^{\frac52}S^{\frac32}A\ln^2(T) ,  \label{eq:policy_regret_second}\\
    \Reg_T &\lesssim \sqrt{H^2SA\ln^2(T)V_1} + H^{\frac52}S^{\frac32}A\ln^2(T). \label{eq:policy_regret_path}
\end{align}
From \eqref{eq:policy_mid_regret_first1},
\begin{align}
    \Reg_T &\leq c\sqrt{H^2SA\ln^2(T)L^\star}+c\sqrt{H^2SA\ln^2(T)\Reg_T} + cH^{\frac52}S^{\frac32}A\ln^2(T) \tag{for some absolute constant $c$}\\ 
    &\leq c\sqrt{H^2SA\ln^2(T)L^\star} + \frac{c^2}{2}H^2SA\ln^2(T) + \frac{1}{2}\Reg_T + cH^{\frac52}S^{\frac32}A\ln^2(T) \\
    &\leq \frac{1}{2}\Reg_T + O\prn*{\sqrt{H^2SA\ln^2(T)L^\star} + H^{\frac52}S^{\frac32}A\ln^2(T)},
\end{align}
where the second line follows from the AM--GM inequality. Therefore, 
\begin{align}
    \Reg_T &\lesssim \sqrt{H^2SA\ln^2(T)L^\star} + H^{\frac52}S^{\frac32}A\ln^2(T). \label{eq:policy_regret_first1}
\end{align}
From \cref{eq:policy_mid_regret_first2}, we also have
\begin{align}
    \Reg_T &\lesssim \sqrt{H^2SA\ln^2(T)\prn*{HT - L^\star  - \Reg_T}} + H^{\frac52}S^{\frac32}A\ln^2(T)\\
    &\leq \sqrt{H^2SA\ln^2(T)\prn*{HT - L^\star}} + H^{\frac52}S^{\frac32}A\ln^2(T). \label{eq:policy_regret_first2}
\end{align}

Combining \cref{eq:policy_regret_first1,eq:policy_regret_first2,eq:policy_regret_second,eq:policy_regret_path}, we obtain
\begin{align}
    \Reg_T &\lesssim \sqrt{H^2SA\ln^2(T)\min\set*{L^\star, HT - L^\star, Q_{\infty}, V_1}} + H^{\frac52}S^{\frac32}A\ln^2(T).
\end{align}

\paragraph{2. Stochastic variance bound.}
Under the stochastic regime, \cref{lem:general_predict_result} further implies
\begin{align}
    \Reg_T \lesssim \sqrt{H^2SA\ln^2(T)\prn*{\V T + \calC}} + H^{\frac52}S^{\frac32}A\ln^2(T).
\end{align}

\paragraph{3. Stochastic gap-dependent bound.}
Moreover, we have
\begin{align}   
        &\sum_{s,a}\sqrt{\ln^2(T)\E\brk*{\sum_{t=1}^T \zeta_t(s,a)}} \\
        &= \sum_{s,a}\sqrt{\ln^2(T)\E\brk*{\sum_{t=1}^T (\I_t(s,a) - \pi_t(a\mid s) \I_t(s))^2(L_{t,h(s)} - M_{t,h(s)})^2}}\\
        &\leq H\sum_{s,a}\sqrt{\ln^2(T)\E\brk*{\sum_{t=1}^T (\I_t(s,a) - \pi_t(a\mid s) \I_t(s))^2}}\\
        &\leq H\sum_{s,a}\sqrt{\ln^2(T)\E\brk*{\sum_{t=1}^T q^{\pi_t}(s)\pi_t(a\mid s)(1 - \pi_t(a\mid s))}}\\
        &\leq 2H\sum_{s}\sum_{a\neq\pist(s)}\sqrt{\ln^2(T)\E\brk*{\sum_{t=1}^T q^{\pi_t}(s, a)}}
\end{align}

Therefore, together with \cref{eq:reg_bound_policy_in_proof}, we obtain
\begin{align}
    \Reg_T \lesssim H\sum_{s}\sum_{a \neq \pist(s)}\sqrt{\ln^2(T)\E\brk*{\sum_{t = 1}^{T}q^{\pi_t}(s,a)}} +  H^{\frac52}S^{\frac32}A\ln^2(T).
\end{align}
Finally, applying \cref{general_self_bounding} yields
\begin{align}
    \Reg_T \lesssim U + \sqrt{UC} + H^{\frac52}S^{\frac32}A\ln^2(T),
\end{align}
where $U = \sum_{s}\sum_{a\neq\pist(s)}\frac{H^2\ln^2(T)}{\Delta(s,a)}$.
\end{proof}

\subsection{Proof of \protect\cref{thm:Policy_OPT2}}
Here we provide the proof of \cref{thm:Policy_OPT2}.

\begin{thm}[Restatement of \cref{thm:Policy_OPT2}]
\cref{alg:OFTRL_PO} with the loss prediction $m_t$ defined in \cref{def:predictor_sequence2} guarantees
\begin{align}
    \Reg_T &\lesssim \sqrt{H^2SA\ln^2(T)\min\set*{L^\star, HT - L^\star, Q_{\infty}}} + H^{\frac52}S^{\frac32}A\ln^2(T).
\end{align}
Under the stochastic regime with adversarial corruption, it simultaneously ensures
\begin{align}
    \Reg_T \lesssim \sqrt{H^2SA\ln^2(T)\prn*{\V T + \calC}} + H^{\frac52}S^{\frac32}A\ln^2(T) ,
\end{align}
and
\begin{align}
    \Reg_T \lesssim \Uvar + \sqrt{\Uvar\calC}  + \sqrt{H S^2 A^2 \calC}\ln^{\frac32}(T) + H^{\frac12}S^{\frac32}A(H^2 + A^{\frac{1}{2}})\ln^2(T),
\end{align}
where $\Uvar = \sum_{s}\sum_{a\neq\pist(s)}\frac{H\Varmax(s)\ln^2(T)}{\Delta(s,a)}$.
\end{thm}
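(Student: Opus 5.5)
The plan mirrors the proof of \cref{thm:Policy_OPT}, replacing \cref{lem:general_predict_result} by \cref{lem:general_predict_result2} and the crude bound $\zeta_t(s,a)\le H^2(\cdot)$ by the variance-aware \cref{lem:zeta_decompose}. The starting point is \cref{eq:policy_reg_general}:
\[
\Reg_T\lesssim\sum_{s,a}\sqrt{\ln^2(T)\E\brk*{\textstyle\sum_t\zeta_t(s,a)}}+H^{\frac52}S^{\frac32}A\ln^2(T).
\]
This step is independent of the choice of $m_t$, since it only uses the OFTRL analysis, the dilated bonus, and the virtual-episode count.

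For the adversarial bounds and the gap-independent variance bound, apply Cauchy--Schwarz exactly as in \cref{eq:common_bound_policy}. Then use $\sum_a(\I_t(s,a)-\pi_t(a\mid s)\I_t(s))^2\le 2\I_t(s)$ and \cref{lem:L-M_to_l-m} to reduce to
\[
\sqrt{H^2SA\ln^2(T)\,\E\brk*{\textstyle\sum_{t\in\calT_r}\sum_{s,a}\I_t(s,a)(\ell_t(s,a)-m_t(s,a))^2}}.
\]
\cref{lem:general_predict_result2} bounds the inner expectation by $\min\set{L^\star+\Reg_T,\ HT-L^\star-\Reg_T,\ Q_\infty}+SA\ln(T)$ in general, and by $\V T+\calC$ plus lower-order terms in the corrupted stochastic regime. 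The extra $SA\ln T$ contributes $HSA\ln^{3/2}(T)$, which is absorbed. The first-order case then follows from the same AM--GM self-bounding step as \cref{eq:policy_regret_first1}, and the $HT-L^\star$ case from dropping $-\Reg_T$.

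For the gap-dependent bound, take expectations inside the square root and apply \cref{lem:zeta_decompose}. The first term, $2H\Varmax(s)q^{\pi_t}(s,a)(1-\pi_t(a\mid s))$, is split over $a\neq\pist(s)$ and $a=\pist(s)$, as in the proof of \cref{thm:Occup_OPT2}. This uses $q^{\pi_t}(s)(1-\pi_t(\pist(s)\mid s))=\sum_{a\ne\pist(s)}q^{\pi_t}(s,a)$ and gives
\[
\sum_s\sqrt{H\Varmax(s)}\sum_{a\ne\pist(s)}\sqrt{\ln^2(T)\E\brk*{\textstyle\sum_t q^{\pi_t}(s,a)}}.
\]
The second term is $2H\sum_{s',a'}q^{\pi_t}(s',a')\E_t[\lambda_t(s',a')^2]$ with $\lambda_t=\ell_t-\ell'_t+\mu-m_t$. \cref{lem:lower_order_stochastic} bounds its sum over $t$ by $O(SA\ln^2(T)+\calC\ln(T))$ for the empirical-mean predictor. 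Summing its square root over the $SA$ pairs gives
\[
SA\sqrt{H\ln^2(T)\,(SA\ln^2T+\calC\ln T)}\lesssim\sqrt{HS^2A^2\calC}\,\ln^{3/2}(T)+H^{\frac12}S^{\frac32}A^{\frac32}\ln^2(T).
\]
Added to the $H^{\frac52}S^{\frac32}A\ln^2(T)$ term, this gives the claimed $H^{\frac12}S^{\frac32}A(H^2+\sqrt A)\ln^2(T)$. Finally, apply the self-bounding lemma \cref{general_self_bounding} with coefficient $\sqrt{H\Varmax(s)}\ln(T)$ per $(s,a)$. This yields $\Uvar+\sqrt{\Uvar\calC}$ with $\Uvar=\sum_s\sum_{a\ne\pist(s)}H\Varmax(s)\ln^2(T)/\Delta(s,a)$.

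The main obstacle is the bookkeeping with virtual episodes. Here $\zeta_t$ and the predictor statistics are defined on real episodes, so \cref{lem:zeta_decompose} and \cref{lem:lower_order_stochastic} must be applied to the sum over $\calT_r$ only. It must also be checked that the self-bounding lemma's regret lower bound $\Reg_T\gtrsim\sum\Delta(s,a)\E\sum_t q^{\pi_t}(s,a)-\calC$ still holds up to the $O(H|\calT_v|)$ lower-order term. I would handle this by restricting all sums to $\calT_r$ and bounding the virtual-episode contribution by \cref{lem:bound_virtual_episode}.
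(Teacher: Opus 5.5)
Your proposal is correct and follows the paper's proof essentially step for step. You start from \cref{eq:policy_reg_general}, reduce via Cauchy--Schwarz and \cref{lem:L-M_to_l-m}, and invoke \cref{lem:general_predict_result2} for the adversarial and gap-independent variance bounds; you then use \cref{lem:zeta_decompose}, \cref{lem:lower_order_stochastic}, and \cref{general_self_bounding} for the variance-aware gap-dependent bound, and your extra care about restricting the $\zeta_t$ sums to $\calT_r$ matches what the paper does implicitly.
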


\begin{proof}
The proof follows the same template as \cref{thm:Policy_OPT}. The main differences are that we do not derive a path-length bound, and the stochastic gap-dependent bound is variance-aware.

We start from \cref{eq:policy_reg_general}, which gives
\begin{align}
    \Reg_T
    \lesssim \sum_{s,a}\sqrt{\ln^2(T)\mathbb{E}\brk*{\sum_{t=1}^T \zeta_t(s,a)}} + H^{\frac52}S^{\frac32}A\ln^2(T). \label{eq:reg_bound_policy_in_proof2}
\end{align}
By the definition of $\zeta_t(s,a)$, the same argument as in \cref{eq:common_bound_policy} yields
\begin{align}
    \sum_{s,a}\sqrt{\ln^2(T)\E\brk*{\sum_{t = 1}^{T}\zeta_t(s,a)}}
    &\lesssim \sqrt{H^2SA\ln^2(T)\E\brk*{\sum_{t=1}^T\sum_{s,a}\I_t(s,a)(\ell_t(s,a) - m_t(s,a))^2}}. \label{eq:common_bound_policy2}
\end{align}

\paragraph{1. Bounds for the adversarial regime.}
Applying \cref{lem:general_predict_result2} to \eqref{eq:common_bound_policy2} gives
\begin{align}
    &\sqrt{H^2SA\ln^2(T)\E\brk*{\sum_{t = 1}^{T}\sum_{s,a}\I_t(s,a)(\ell_t(s,a)-m_t(s,a))^2}} \\
    &\lesssim \sqrt{H^2SA\ln^2(T)\prn*{\min\set*{L^\star  + \Reg_T, HT - L^\star  - \Reg_T, Q_{\infty}} +  SA\ln(T) + SA}}\\
    &\lesssim \sqrt{H^2SA\ln^2(T)\min\set*{L^\star  + \Reg_T, HT - L^\star  - \Reg_T, Q_{\infty}}} + HSA\ln^{\frac32}(T).
\end{align}
Then, we obtain
\begin{align}
    \Reg_T &\lesssim \sqrt{H^2SA\ln^2(T)\prn*{L^\star  + \Reg_T}} + H^{\frac52}S^{\frac32}A\ln^2(T), \\
    \Reg_T &\lesssim \sqrt{H^2SA\ln^2(T)\prn*{HT - L^\star  - \Reg_T}} + H^{\frac52}S^{\frac32}A\ln^2(T), \\
    \Reg_T &\lesssim \sqrt{H^2SA\ln^2(T)Q_{\infty}} + H^{\frac52}S^{\frac32}A\ln^2(T).
\end{align}
Applying the same calculation as in \cref{eq:policy_mid_regret_first1,eq:policy_mid_regret_first2,eq:policy_regret_second} gives
\begin{align}
    \Reg_T \lesssim \sqrt{H^2SA\ln^2(T)\min\set*{L^\star, HT - L^\star, Q_{\infty}}} + H^{\frac52}S^{\frac32}A\ln^2(T).
\end{align}

\paragraph{2. Stochastic variance bound.}
Under the stochastic regime, \cref{lem:general_predict_result2} further implies
\begin{align}
    \Reg_T \lesssim \sqrt{H^2SA\ln^2(T)\prn*{\V T + \calC}} + H^{\frac52}S^{\frac32}A\ln^2(T).
\end{align}

\paragraph{3. Stochastic gap-dependent bound.}
Moreover, by using \cref{lem:zeta_decompose} and \cref{lem:lower_order_stochastic}, we have
\begin{align}   
        &\sum_{s,a}\sqrt{\ln^2(T)\E\brk*{\sum_{t=1}^T \zeta_t(s,a)}} \label{eq:before_variance_policy}\\
        &\leq \sum_{s,a}\sqrt{\ln^2(T)\E\brk*{\sum_{t=1}^T 2H\Varmax(s) q^{\pi_t}(s, a)(1 - \pi_t(a\mid s))}}\label{eq:after_variance_policy}\\
        &\qquad + \sum_{s,a}\sqrt{\ln^2(T)\E\brk*{\sum_{t=1}^T 2H\sum_{s',a'}q^{\pi_t}(s',a')\E_t\brk*{(\ell_t(s', a') - \ell'_t(s',a') + \mu(s',a') - m_t(s',a'))^2}}} \tag{by \cref{lem:zeta_decompose}}\\
        &\lesssim \sum_{s,a}\sqrt{\ln^2(T)\E\brk*{\sum_{t = 1}^{T}H\Varmax(s)q^{\pi_t}(s,a)\prn*{1 - \pi_t(a\mid s)}}} + \sum_{s,a}\sqrt{H\ln^2(T)\prn*{SA\ln^2(T) + \calC\ln(T)}} \tag{by \cref{lem:lower_order_stochastic}}\\
        &\leq \sum_{s,a}\sqrt{\ln^2(T)\E\brk*{\sum_{t = 1}^{T}H\Varmax(s)q^{\pi_t}(s,a)\prn*{1 - \pi_t(a\mid s)}}}  + \sqrt{H S^2 A^2 \calC}\ln^{\frac32}(T) + H^{\frac12}S^{\frac32}A^{\frac32}\ln^2(T).\\
        &\leq \sum_{s}2\sqrt{H\Varmax(s)}\sum_{a \neq \pist(s)}\sqrt{\ln^2(T)\E\brk*{\sum_{t = 1}^{T}q^{\pi_t}(s,a)}}  + \sqrt{H S^2 A^2 \calC}\ln^{\frac32}(T) + H^{\frac12}S^{\frac32}A^{\frac32}\ln^2(T), \label{eq:variance-aware_policy}
\end{align}
where the last inequality follows by the same argument as in \cref{eq:variance-aware_occup}.

Therefore, together with \cref{eq:reg_bound_policy_in_proof}, we obtain
\begin{align}
    \Reg_T \lesssim \sum_{s}\sqrt{H\Varmax(s)}\sum_{a \neq \pist(s)}\sqrt{\ln^2(T)\E\brk*{\sum_{t = 1}^{T}q^{\pi_t}(s,a)}} + \sqrt{H S^2 A^2 \calC}\ln^{\frac32}(T) + H^{\frac12}S^{\frac32}A(H^{2} + A^{\frac{1}{2}})\ln^2(T).
\end{align}
Finally, applying \cref{general_self_bounding} yields
\begin{align}
    \Reg_T \lesssim \Uvar + \sqrt{\Uvar\calC} + \sqrt{H S^2 A^2 \calC}\ln^{\frac32}(T) + H^{\frac12}S^{\frac32}A(H^{2} + A^{\frac{1}{2}})\ln^2(T),
\end{align}
where $\Uvar = \sum_{s}\sum_{a\neq\pist(s)}\frac{H\Varmax(s)\ln^2(T)}{\Delta(s,a)}$.
\end{proof}

\begin{rem}[Restatement of \cref{rem:Policy_OPT2}] 
    In the stochastic regime with adversarial corruption, suppose that the uncorrupted losses are generated independently and are uncorrelated across layers, \cref{thm:Policy_OPT2} improves by a factor of $H$ to $\Uvar = \sum_{s}\sum_{a\neq\pist(s)}\frac{\Varmax(s)\ln^2(T)}{\Delta(s,a)}$.
\end{rem}
\begin{proof}
    In the proof of \cref{thm:Policy_OPT2}, applying \cref{cor:zeta_decompose2} to \cref{eq:before_variance_policy} yields the following inequality in place of \cref{eq:after_variance_policy}:
    \begin{align}
        &\sum_{s,a}\sqrt{\ln^2(T)\E\brk*{\sum_{t=1}^T \zeta_t(s,a)}} \\
        &\leq \sum_{s,a}\sqrt{\ln^2(T)\E\brk*{\sum_{t=1}^T 2\Varmax(s) q^{\pi_t}(s, a)(1 - \pi_t(a\mid s))}}\\
        &\qquad + \sum_{s,a}\sqrt{\ln^2(T)\E\brk*{\sum_{t=1}^T 2H\sum_{s',a'}q^{\pi_t}(s',a')\E_t\brk*{(\ell_t(s', a') - \ell'_t(s',a') + \mu(s',a') - m_t(s',a'))^2}}}
    \end{align}
    Compared to \cref{eq:after_variance_policy}, this bound is improved by a factor of $H$, and can be interpreted as replacing the $H\Varmax(s)$ term by $\Varmax(s)$.
    The remainder of the proof follows by the same steps as in \cref{thm:Policy_OPT2}.
\end{proof}

\section{Auxiliary Lemmas}
This section provides auxiliary lemmas used in \cref{app:global_opt_proofs,app:policy_opt_proofs}.

\subsection{Concentration Bounds in the Stochastic Regime}
This section establishes the key properties of the loss prediction \cref{def:predictor_sequence2} under the stochastic regime with adversarial corruption.
In particular, we choose $m_t$ to be the empirical mean of the observed losses as follows:
\begin{align}
    m_t(s,a)=\frac{\sum_{\tau=1}^{t-1}\I_\tau(s,a)\,\ell_\tau(s,a)}{\max\{1,N_{t-1}(s,a)\}} .
\end{align}
Here, $N_{t-1}(s,a) = \sum_{\tau=1}^{t-1}\I_\tau(s,a)$ denotes the number of visits to the state-action pair $(s,a)$ up to episode $t-1$.
For the analysis, we also introduce the following corresponding empirical mean computed from the uncorrupted losses $\ell'$.
\begin{align}
    m'_t(s,a) = \frac{\sum_{\tau=1}^{t-1}\I_\tau(s,a)\,\ell'_\tau(s,a)}{\max\{1,N_{t-1}(s,a)\}}. \label{eq:uncorrupted_predictor}
\end{align}

\begin{lem}[{Bennett’s inequality, \citealt[Theorem 3]{maurer2009empirical}}]
\label{lem:bennett_normal}
Let $X_1, X_2, \dots, X_n$ be i.i.d.~random variables with values in $[0,1]$.
Then, with probability at least $1 - 2\delta$, it holds that
\begin{align}
    \left|\E\brk*{X_1} - \frac{1}{n}\sum_{t=1}^n X_i\right|
    \leq \sqrt{\frac{2\Var(X_1)\ln(1/\delta)}{n}} +\frac{\ln(1/\delta)}{3n},
\end{align}
where $\Var(X_1)$ is the variance of $X_1$.
\end{lem}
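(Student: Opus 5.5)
Since this is a classical concentration inequality (it is exactly \citet[Theorem 3]{maurer2009empirical}), the paper may simply cite it. For a self-contained argument I would use the Cram\'er--Chernoff method with Bennett's bound on the moment generating function, and then invert the resulting tail bound.

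\emph{Setup and MGF bound.} Let $\mu = \E[X_1]$, $\sigma^2 = \Var(X_1)$, and $Y_i = X_i - \mu$. Then $Y_i \le 1$, $\E[Y_i]=0$, and $\E[Y_i^2]=\sigma^2$; the case $\sigma=0$ is trivial because then $X_i=\mu$ almost surely. The function $x\mapsto (e^{\lambda x}-1-\lambda x)/x^2$ is nondecreasing in $x$ for $\lambda>0$. Hence, for $x\le 1$,
\[
e^{\lambda x}\le 1+\lambda x+x^2(e^\lambda-1-\lambda),
\]
and taking expectations and using $1+z\le e^z$ gives
\[
\E[e^{\lambda Y_i}]\le \exp\bigl(\sigma^2(e^\lambda-1-\lambda)\bigr).
\]

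\emph{Chernoff bound.} By independence and Markov's inequality, for every $\lambda>0$,
\[
\P\Bigl(\tfrac1n\textstyle\sum_i Y_i\ge\epsilon\Bigr)\le \exp\bigl(-n\lambda\epsilon+n\sigma^2(e^\lambda-1-\lambda)\bigr).
\]
Optimizing at $\lambda=\ln(1+\epsilon/\sigma^2)$ yields
\[
\P\Bigl(\tfrac1n\textstyle\sum_i Y_i\ge\epsilon\Bigr)\le \exp\bigl(-n\sigma^2 h(\epsilon/\sigma^2)\bigr),\qquad h(u)=(1+u)\ln(1+u)-u.
\]
Since $1-X_i\in[0,1]$ has the same variance, the same argument applied to $-Y_i$ bounds the lower tail identically.

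\emph{Inversion and union bound.} Set each tail probability equal to $\delta$. The upper deviation is then at most
\[
\epsilon=\sigma^2 h^{-1}\bigl(\ln(1/\delta)/(n\sigma^2)\bigr).
\]
The key analytic fact I would prove is
\[
h^{-1}(y)\le \sqrt{2y}+y/3\qquad\text{for all } y\ge 0.
\]
With $y=\ln(1/\delta)/(n\sigma^2)$ this gives $\epsilon\le\sqrt{2\sigma^2\ln(1/\delta)/n}+\ln(1/\delta)/(3n)$, as stated. A union bound over the two tails then gives probability at least $1-2\delta$.

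\emph{Main obstacle.} The delicate step is the inversion $h^{-1}(y)\le\sqrt{2y}+y/3$. The usual Bernstein relaxation $h(u)\ge u^2/(2(1+u/3))$ only gives the weaker additive term $2\ln(1/\delta)/(3n)$. To obtain the constant $1/3$, I would instead verify $h(\sqrt{2y}+y/3)\ge y$ directly. Substituting $u=\sqrt{2y}+y/3$, I would check that $\phi(y)=h(\sqrt{2y}+y/3)-y$ vanishes at $y=0$ and is nondecreasing, by comparing $h'(u)\,du/dy=\ln(1+u)\bigl(1/\sqrt{2y}+1/3\bigr)$ with $1$. If this direct calculus check turns out to be messy, I would fall back on the known statement of this inversion (Boucheron--Lugosi--Massart).
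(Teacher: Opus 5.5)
Your proof is correct. It is also more complete than the paper's treatment: the paper gives no proof and simply imports the statement from \citet[Theorem 3]{maurer2009empirical}.

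That theorem is stated one-sidedly: it bounds $\E[X_1]-\frac1n\sum_i X_i$ with probability $1-\delta$. The two-sided form with probability $1-2\delta$ used in the paper therefore needs exactly your reflection $X_i\mapsto 1-X_i$ (same variance, still in $[0,1]$) plus a union bound, which the paper leaves implicit.

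The step you flag as the main obstacle goes through cleanly with your own calculus plan. Write $w=\sqrt{2y}$, so that $u=w+w^2/6$ and $\frac{1}{\sqrt{2y}}+\frac13=\frac{3+w}{3w}$. Then $\phi'(y)\ge 0$ is equivalent to $\ln(1+w+w^2/6)\ge \frac{3w}{3+w}$. Both sides vanish at $w=0$. Comparing derivatives, $\frac{2(3+w)}{6+6w+w^2}$ versus $\frac{9}{(3+w)^2}$, reduces to $2(3+w)^3-9(6+6w+w^2)=9w^2+2w^3\ge 0$.

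Your fallback also works and avoids $h^{-1}$ altogether. Use $e^\lambda-1-\lambda\le \frac{\lambda^2}{2(1-\lambda/3)}$ for $\lambda\in(0,3)$, which follows from $k!\ge 2\cdot 3^{k-2}$. Then apply the sub-gamma Cram\'er--Chernoff inversion $\sqrt{2vt}+ct$ with $v=n\sigma^2$ and $c=1/3$. This is the Boucheron--Lugosi--Massart form of Bernstein's inequality.

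Your remark about the constant is right. The relaxation $h(u)\ge u^2/(2(1+u/3))$ inverts only to $y/3+\sqrt{y^2/9+2y}$, which is strictly larger than $\sqrt{2y}+y/3$. The additive $1/3$ therefore genuinely requires the sharper control of $h$ that either of your arguments provides.

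Citing keeps the paper short. Your derivation makes the lemma self-contained and makes the two-sided reduction explicit.
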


\begin{lem}\label{lem:bennett_union}
We have with probability at least $1 - 2\delta$,
\begin{align}
    \abs{\mu(s,a) - m'_t(s,a)}  \leq \sqrt{\frac{2\sigma^2(s,a)\ln(SAT/\delta)}{\max\set{1, N_{t-1}(s,a)}}} +\frac{\ln(SAT/\delta)}{3\max\set{1, N_{t-1}(s,a)}}
\end{align}
for all state-action pairs $(s,a)$ and $t \leq T$.
\end{lem}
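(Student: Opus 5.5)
The plan is to fix a pair $(s,a)$ and a round $t$, apply \cref{lem:bennett_normal} with confidence level $\delta/(SAT)$, and then take a union bound over all $(s,a)$ and all possible sample sizes.

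The one real difficulty is that $N_{t-1}(s,a)$ is random and depends on the learner's choices, so the visited samples are not a fixed i.i.d.\ collection. I handle this with the standard sequential (skeleton) argument. For each $(s,a)$, imagine an i.i.d.\ stack of uncorrupted draws $X_1^{(s,a)}, X_2^{(s,a)},\dots$ with the law of $\ell'(s,a)$ under $\calD$. The $k$-th time $(s,a)$ is visited, the value $\ell'_\tau(s,a)$ observed is $X_k^{(s,a)}$. This coupling is legitimate because $\ell'_\tau$ is drawn independently of the past and of whether $(s,a)$ is visited in episode $\tau$; the learner's policy depends only on $\calF_{\tau-1}$, and corruption affects only the observed $\ell_\tau$, not $\ell'_\tau$. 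With this coupling, $m'_t(s,a)$ equals the average of the first $n=N_{t-1}(s,a)$ stack entries whenever $n\ge 1$.

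For each fixed $(s,a)$ and each fixed $n\in\{1,\dots,T\}$, \cref{lem:bennett_normal} applied to $X_1^{(s,a)},\dots,X_n^{(s,a)}$ with $\delta' = \delta/(SAT)$ gives, with probability at least $1-2\delta'$,
\[
\abs[\Big]{\mu(s,a)-\tfrac1n\textstyle\sum_{i\le n}X_i^{(s,a)}} \le \sqrt{2\sigma^2(s,a)\ln(SAT/\delta)/n} + \ln(SAT/\delta)/(3n).
\]
A union bound over the $SA$ pairs and $T$ values of $n$ makes this hold simultaneously for all of them with probability at least $1-2\delta$. On this event, for any $t\le T$, I substitute the realized $n=N_{t-1}(s,a)\le T$.

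The case $N_{t-1}(s,a)=0$ is handled directly. Then $m'_t(s,a)=0$ and $\max\{1,N_{t-1}\}=1$, so $\abs{\mu-m'_t}\le 1\le \ln(SAT/\delta)/3$. This holds because $SAT\ge T^3\ge 8$ by the standing assumption $T\ge\max\{S,A,2\}$, and we may take $\delta\le e^{-3}/8$ or so; otherwise the claim is trivial. Combining the two cases gives the claimed bound.
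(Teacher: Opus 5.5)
Your main argument is correct. It is the paper's proof, \cref{lem:bennett_normal} at level $\delta/(SAT)$ plus a union bound over $SAT$ events, but done more carefully. The paper takes the union over $(s,a)$ and rounds $t$, which tacitly treats the adaptively determined sample size $N_{t-1}(s,a)$ as fixed. You instead take the union over $(s,a)$ and sample counts $n\le T$, and combine it with the optional-skipping fact that the values $\ell'_\tau(s,a)$ at successive visits to $(s,a)$ are i.i.d. That combination is exactly what justifies substituting the realized $N_{t-1}(s,a)$. The event count, and hence the $\ln(SAT/\delta)$ factor, is unchanged.

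The step that fails is your treatment of $N_{t-1}(s,a)=0$, a case the paper's one-line proof does not address either.

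\textbf{The inequality is backwards.} $T\ge\max\{S,A\}$ gives $SAT\le T^3$, not $SAT\ge T^3$. You also do not need it: since $SAT\ge 1$, the restriction $\delta\le e^{-3}$ alone gives $\ln(SAT/\delta)\ge 3$.

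\textbf{The complementary case is not trivial; the claim is false there.} At $t=1$ we have $m'_1\equiv 0$ deterministically. If some pair has $\ell'(s,a)\equiv 1$, so that $\mu(s,a)=1$ and $\sigma^2(s,a)=0$, the required inequality reads $1\le \ln(SAT/\delta)/3$. This fails with probability one whenever $\ln(SAT/\delta)<3$, while $1-2\delta>0$ for every $\delta<1/2$. For instance, $S=1$, $A=T=2$, $\delta=0.3$ gives $\ln(SAT/\delta)\approx 2.6$. So no argument can cover that regime.

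\textbf{The fix.} Either add the hypothesis $\ln(SAT/\delta)\ge 3$, or state the lemma only on $N_{t-1}(s,a)\ge 1$, where your argument is complete. In the second option, handle $N_{t-1}(s,a)=0$ directly in \cref{lem:bennett_square}. There it only needs $(\mu-m'_t)^2\le 1\le 2\ln(SAT/\delta)$. This holds for the choice $\delta=1/T$ used downstream, since then $SAT/\delta=SAT^2\ge 4$.
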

\begin{proof}
Apply \cref{lem:bennett_normal} with $\delta'=\delta/(SAT)$ and take a union bound over all state-action pairs $(s,a)\in\calS\times\calA$ and all $t\leq T$, which completes the proof.
\end{proof}

\begin{dfn}\label{dfn:good_event}
     Define $\calE$ to be the event that \cref{lem:bennett_union} holds.
     In this case, $\Pr(\calE) \geq 1 - 2\delta$.
\end{dfn}

\begin{lem}\label{lem:bennett_square}
On the event $\calE$ defined in \cref{dfn:good_event}, it holds that
\begin{align}
    \prn{\mu(s,a) - m'_t(s,a)}^2 \leq \frac{2\ln(SAT/\delta)}{\max\set{1, N_{t-1}(s,a)}}
\end{align}
for all state-action pairs $(s,a)$ and $t \leq T$.
\end{lem}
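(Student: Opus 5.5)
We derive \cref{lem:bennett_square} directly from the two-term bound of \cref{lem:bennett_union}, which holds on $\calE$ for every $(s,a)$ and $t\le T$. Write $n=\max\set{1,N_{t-1}(s,a)}\ge 1$ and $L=\ln(SAT/\delta)$. On $\calE$ we have
\begin{align}
    \abs{\mu(s,a)-m'_t(s,a)} \le \sqrt{\frac{2\sigma^2(s,a)L}{n}} + \frac{L}{3n}.
\end{align}
Squaring with $(x+y)^2\le 2x^2+2y^2$ gives
\begin{align}
    \prn{\mu(s,a)-m'_t(s,a)}^2 \le \frac{4\sigma^2(s,a)L}{n} + \frac{2L^2}{9n^2}.
\end{align}

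This is not yet of the stated form $2L/n$. The quadratic term $L^2/n^2$ is dominated by $L/n$ only when $n\ge L$. The leading constant is also off, since $\sigma^2(s,a)\le 1/4$ only gives $4\sigma^2 L/n\le L/n$.

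To handle this I split into two cases on $n$. The first case is $n< L$, or more simply $n\le 2L$. Here we do not use concentration at all. Both $\mu(s,a)$ and $m'_t(s,a)$ lie in $[0,1]$: the latter is an average of values in $[0,1]$, or equals $0$ when $N_{t-1}(s,a)=0$. Hence the squared difference is at most $1\le 2L/n$ whenever $n\le 2L$. The second case is $n\ge 2L$. There we use the crude, variance-free form of the bound: $\abs{\mu-m'_t}\le \sqrt{L/(2n)}+L/(3n)$, obtained from $\sigma^2\le 1/4$. Since $L/n\le 1/2$, we have $L/(3n)\le \tfrac13\sqrt{L/(2n)}$. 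So the deviation is at most $\tfrac43\sqrt{L/(2n)}$, and its square is at most $\tfrac{16}{9}\cdot\tfrac{L}{2n}\le 2L/n$.

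The main obstacle is this constant bookkeeping, specifically making sure the linear Bernstein term is absorbed without exceeding the constant $2$. The threshold $n\ge 2L$ achieves this. Note that $L\ge \ln T\ge \ln 2>0$ because $T\ge2$, so all quantities are well defined.

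Combining the two cases gives $\prn{\mu(s,a)-m'_t(s,a)}^2\le 2L/\max\set{1,N_{t-1}(s,a)}$ uniformly in $(s,a)$ and $t\le T$ on $\calE$, as claimed.
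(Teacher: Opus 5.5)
Your proof is correct and follows essentially the same route as the paper: use $\sigma^2(s,a)\le 1/4$ to get a variance-free deviation bound, then combine two cases. When $\ln(SAT/\delta)/n$ is large you use the trivial bound $(\mu(s,a)-m'_t(s,a))^2\le 1$, and when it is small you absorb the linear $\ln(SAT/\delta)/(3n)$ term into the square-root term. The only difference is cosmetic: the paper splits at $\ln(SAT/\delta)/n = 1$ using the cruder bound $x + x^2/3$ with $x=\sqrt{\ln(SAT/\delta)/n}$, whereas you split at $1/2$ and keep the factor $1/\sqrt{2}$.
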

\begin{proof}
Let $x = \sqrt{\frac{\ln(SAT/\delta)}{\max\set{1, N_{t-1}(s,a)}}}$. Since $\sigma^2(s,a) \leq \frac14$, \cref{lem:bennett_union} implies
\begin{align}
    \abs{\mu(s,a) - m'_t(s,a)}  \leq x +\frac{x^2}{3}.
\end{align}
Hence,
\begin{align}
    \prn{\mu(s,a) - m'_t(s,a)}^2 &\leq \prn*{x +\frac{x^2}{3}}^2 \leq x^2\prn*{\frac19 x^2 + \frac23 x + 1}.
\end{align}
Moreover, by using $(\mu(s,a)-m'_t(s,a))^2\le 1$, we have
\begin{align}
    \prn{\mu(s,a) - m'_t(s,a)}^2 &\leq \min\set*{x^2\prn*{\frac19 x^2 + \frac23 x + 1}, 1}\\
    &\leq \begin{cases}
        1 & \text{ if } x \geq 1\\
        2x^2 & \text{ if } x < 1
    \end{cases}\\
    &\leq 2x^2,
\end{align}
which concludes the proof.
\end{proof}

\begin{lem}\label{lem:sum_q_N}
It holds that
\begin{align}
\E\brk*{\sum_{t = 1}^{T}\sum_{s,a}\frac{q^{\pi_t}(s,a)}{\max\set{1, N_{t-1}(s,a)}}}\leq SA\ln(T)+ 2SA.
\end{align}
\end{lem}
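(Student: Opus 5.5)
The plan is to reduce the bound to a deterministic per-pair counting inequality, using that $q^{\pi_t}(s,a)=\E_t[\I_t(s,a)]$ while $N_{t-1}(s,a)$ is $\calF_{t-1}$-measurable. By the tower rule,
\begin{align}
\E\brk*{\sum_{t=1}^T\sum_{s,a}\frac{q^{\pi_t}(s,a)}{\max\set{1,N_{t-1}(s,a)}}}
=\E\brk*{\sum_{s,a}\sum_{t=1}^T\frac{\I_t(s,a)}{\max\set{1,N_{t-1}(s,a)}}}.
\end{align}
After this step it suffices to bound the inner sum pathwise for each fixed $(s,a)$ by $\ln(T)+2$; summing over the $SA$ pairs then gives the claim.

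For a fixed $(s,a)$, I would enumerate the episodes in which $\I_t(s,a)=1$ as $t_1<t_2<\dots<t_n$ with $n=N_T(s,a)\le T$. At the $k$-th visit, $N_{t_k-1}(s,a)=k-1$, so the summand equals $1/\max\set{1,k-1}$. The first two visits therefore contribute at most $1+1=2$. The remaining visits contribute
\begin{align}
\sum_{k=3}^{n}\frac{1}{k-1}=\sum_{j=2}^{n-1}\frac1j\le\int_1^{n-1}\frac{dx}{x}\le\ln(T).
\end{align}
The pathwise bound is thus $2+\ln(T)$, and taking expectations and summing over $(s,a)$ yields $SA\ln(T)+2SA$.

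The only point that needs care is the first identity. It requires that the denominator depends only on episodes before $t$, which holds by definition, and that $\E_t[\I_t(s,a)]=q^{\pi_t}(s,a)$ because $\pi_t$ is $\calF_{t-1}$-measurable. In the policy-optimization setting, virtual episodes have no trajectory, so $\I_t=0$ in those episodes. For those indices the identity could fail, so I would either restrict the sum to real episodes or observe that only real episodes are used in this lemma's applications. Apart from this, the argument is a routine harmonic-sum bound.
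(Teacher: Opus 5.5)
Your proof is correct and is the same argument as the paper's. You use $\E_t[\I_t(s,a)]=q^{\pi_t}(s,a)$ together with the $\calF_{t-1}$-measurable denominator to swap in the visitation indicator, and you then bound each pair's sum pathwise by the harmonic estimate $2+\ln(T)$. Your caveat about virtual episodes is a fair point that the paper leaves implicit. Restricting to real episodes works because $Y_t$ is $\calF_{t-1}$-measurable, and the policy-optimization application only needs real-episode sums since $\zeta_t(s,a)=0$ whenever $Y_t=0$.
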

\begin{proof}
Using $\E_t\brk*{\mathbb{I}_t(s,a)}=q^{\pi_t}(s,a)$, we have
\begin{align}
\E\brk*{\sum_{t = 1}^{T}\sum_{s,a}\frac{q^{\pi_t}(s,a)}{\max\set{1, N_{t-1}(s,a)}}}
&= \E\brk*{\sum_{t = 1}^{T}\sum_{s,a}\frac{\I_t(s,a)}{\max\set{1, N_{t-1}(s,a)}}}\\
&= \E\brk*{\sum_{s,a}\sum_{i = 0}^{N_{T}(s,a) - 1}\frac{1}{\max\set{1, i}}}\\
&\leq \sum_{s,a}\prn*{2 + \int_{1}^{T}\frac{1}{x} \mathrm{d}x}\\
&= \sum_{s,a}\prn*{2 + \ln (T)}\\
&\leq SA\ln(T)+ 2SA.
\end{align}
\end{proof}

\begin{lem}
\label{lem:bound_bias_uncorrupt}
It holds that
\begin{align}
\E\brk*{\sum_{t = 1}^{T}\sum_{s,a}q^{\pi_t}(s,a)(\mu(s,a) -  m'_t(s,a))^2} \lesssim SA\ln^2(T).
\end{align}
\end{lem}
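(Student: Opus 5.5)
We split on the good event $\calE$ of \cref{dfn:good_event}, choosing $\delta = 1/T$. On $\calE$, \cref{lem:bennett_square} gives, for every $(s,a)$ and $t\le T$,
\begin{align}
    \prn{\mu(s,a) - m'_t(s,a)}^2 \leq \frac{2\ln(SAT^2)}{\max\set{1, N_{t-1}(s,a)}} \lesssim \frac{\ln(T)}{\max\set{1, N_{t-1}(s,a)}},
\end{align}
where we used $T\ge \max\{S,A\}$, so that $\ln(SAT^2)\le 4\ln T$. On the complement $\calE^c$, we use the trivial bound $(\mu(s,a)-m'_t(s,a))^2\le 1$, which holds because both quantities lie in $[0,1]$.

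We decompose the expectation as $\E[X\ind[\calE]] + \E[X\ind[\calE^c]]$, where $X=\sum_{t}\sum_{s,a}q^{\pi_t}(s,a)(\mu-m'_t)^2$. For the second part, we have $X\le \sum_t\sum_{s,a}q^{\pi_t}(s,a)\le HT$ and $\Pr(\calE^c)\le 2\delta = 2/T$. This part is therefore at most $2H\le 2S$, which is absorbed into $SA\ln^2(T)$.

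For the first part, we bound
\begin{align}
    X\ind[\calE]\lesssim \ln(T)\sum_t\sum_{s,a}\frac{q^{\pi_t}(s,a)}{\max\{1,N_{t-1}(s,a)\}}.
\end{align}
The right-hand side is nonnegative, so dropping the indicator and applying \cref{lem:sum_q_N} gives $\lesssim \ln(T)(SA\ln T+2SA)\lesssim SA\ln^2(T)$.

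The only delicate point is that \cref{lem:bennett_union} is stated for a deterministic sample size, whereas here $N_{t-1}(s,a)$ is random and adaptively determined. This should be handled inside the event $\calE$ itself. For each $(s,a)$, the uncorrupted losses $\ell'_\tau(s,a)$ are i.i.d. and independent of whether $(s,a)$ is visited. Hence the visited values, taken in order, form an i.i.d. sequence, and $m'_t(s,a)$ is the mean of its first $n=N_{t-1}(s,a)$ entries. A union bound over $n\le T$ (rather than over $t$) therefore gives the same statement, with $\ln(SAT/\delta)$ unchanged. I take this as already encoded in $\calE$ as defined, so it only needs to be noted. The case $N_{t-1}(s,a)=0$ is covered by the trivial bound, since $2\ln(SAT^2)\ge 1$.
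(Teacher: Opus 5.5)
Your proof is correct and follows the paper's argument essentially step for step: split on $\calE$ with $\delta = 1/T$, bound the bad-event contribution by $2HT\delta = 2H$, and on $\calE$ combine \cref{lem:bennett_square} with \cref{lem:sum_q_N}. Two of your choices are small but genuine tightenings of the paper's write-up: you bound $\E[X\ind[\calE]]$ by dropping the indicator from a nonnegative majorant, whereas the paper uses a conditional expectation given $\calE$, to which \cref{lem:sum_q_N} does not literally apply; and you note that the union bound should run over sample sizes $n \le T$ to handle the random $N_{t-1}(s,a)$.
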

\begin{proof}
Let $\bar{\calE}$ denote the complement of the event $\calE$ in \cref{dfn:good_event}. Then,
\begin{align}
   \E\brk*{\sum_{t = 1}^{T}\sum_{s,a}q^{\pi_t}(s,a)(\mu(s,a) -  m'_t(s,a))^2} 
   &= \Pr(\calE)\E\brk*{\sum_{t = 1}^{T}\sum_{s,a}q^{\pi_t}(s,a)(\mu(s,a) -  m'_t(s,a))^2 \relmiddle| \calE}\\
   &\qquad + \Pr(\bar\calE) \E\brk*{\sum_{t = 1}^{T}\sum_{s,a}q^{\pi_t}(s,a)(\mu(s,a) -  m'_t(s,a))^2 \relmiddle| \bar\calE}\\
   &\leq \E\brk*{\sum_{t = 1}^{T}\sum_{s,a}q^{\pi_t}(s,a)(\mu(s,a) -  m'_t(s,a))^2 \relmiddle| \calE} + 2HT\delta. \label{eq:split_E_bias}
\end{align}
On the event $\calE$, we have
\begin{align}
    \E\brk*{\sum_{t = 1}^{T}q^{\pi_t}(s,a)(\mu(s,a) -  m'_t(s,a))^2 \relmiddle| \calE}
    &\leq \E\brk*{\sum_{t = 1}^{T}\sum_{s,a}q^{\pi_t}(s,a)\frac{2\ln(SAT/\delta)}{\max\set{1, N_{t-1}(s,a)}}} \tag{by \cref{lem:bennett_square}}\\
   &\leq 2(SA\ln(T)+ 2SA)\ln(SAT/\delta) \tag{by \cref{lem:sum_q_N}}\\
   &\lesssim SA\ln(T)\ln(SAT/\delta).
\end{align}
Choosing $\delta = 1/T$ and combining with \eqref{eq:split_E_bias} yields
\begin{align}
   \E\brk*{\sum_{t = 1}^{T}\sum_{s,a}q^{\pi_t}(s,a)(\mu(s,a) -  m'_t(s,a))^2}\lesssim SA\ln^2(T).
\end{align}
\end{proof}

\begin{lem}
\label{lem:bound_bias_corrupt}
Suppose that $m_t$ is defined as \cref{def:predictor_sequence2}.
Then, it holds that
\begin{align}
\E\brk*{\sum_{t = 1}^{T}\sum_{s,a}q^{\pi_t}(s,a)\prn*{m'_t(s,a) -  m_t(s,a)}^2}
\leq \calC\ln(T) + 2\calC.
\end{align}
\end{lem}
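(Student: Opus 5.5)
The approach is to control $m'_t(s,a)-m_t(s,a)$ pointwise by the cumulative corruption on visits to $(s,a)$, and then turn the occupancy-weighted sum into a harmonic-type sum, exactly as in \cref{lem:sum_q_N}.

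By definition,
\begin{align}
m'_t(s,a)-m_t(s,a)=\frac{\sum_{\tau=1}^{t-1}\I_\tau(s,a)\prn*{\ell'_\tau(s,a)-\ell_\tau(s,a)}}{\max\set{1,N_{t-1}(s,a)}} .
\end{align}
Let $c_\tau(s,a)\coloneqq\abs{\ell'_\tau(s,a)-\ell_\tau(s,a)}\le 1$. Since both $m_t$ and $m'_t$ lie in $[0,1]$, their difference is at most $1$ in absolute value. Hence
\begin{align}
(m'_t-m_t)^2\le \abs{m'_t-m_t}\le \frac{\sum_{\tau<t}\I_\tau(s,a)c_\tau(s,a)}{\max\set{1,N_{t-1}(s,a)}} .
\end{align}
Using $\E_t[\I_t(s,a)]=q^{\pi_t}(s,a)$ and the fact that the right-hand side is $\calF_{t-1}$-measurable, we may replace $q^{\pi_t}(s,a)$ by $\I_t(s,a)$ inside the expectation.

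It therefore suffices to bound, for each $(s,a)$, the quantity
\begin{align}
\sum_t \I_t(s,a)\frac{\sum_{\tau<t}\I_\tau(s,a)c_\tau(s,a)}{\max\set{1,N_{t-1}(s,a)}} .
\end{align}
To do this, index the visits to $(s,a)$ by $k=1,2,\dots$, and write $c^{(k)}$ for the corruption incurred at the $k$-th visit. At the visit with $N_{t-1}=i$, the summand is $\sum_{k\le i}c^{(k)}/\max\set{1,i}$. Exchanging the order of summation, each $c^{(k)}$ is multiplied by $\sum_{i=k}^{N_T}1/i\le 1+\ln T$, plus at most $1$ extra from the $i=0$ term. (The $i=0$ term actually vanishes, since its numerator is an empty sum.) Therefore
\begin{align}
\sum_t\I_t(s,a)(\cdots)\le (\ln T+2)\sum_\tau \I_\tau(s,a)c_\tau(s,a).
\end{align}

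Summing over $(s,a)$ and using that exactly one pair is visited per layer in each episode gives
\begin{align}
\sum_{s,a}\I_\tau(s,a)c_\tau(s,a)\le\sum_h\nrm{\ell'_\tau(h)-\ell_\tau(h)}_\infty .
\end{align}
Taking expectations then yields $\E[\cdot]\le(\ln T+2)\,\calC$, which is the claim.

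The main point needing care is the measurability step that swaps $q^{\pi_t}$ for $\I_t$. The corruption may depend on the history, and $m_t,m'_t$ are $\calF_{t-1}$-measurable, so the swap is valid. One caveat is that $\ell'_t$ might be observed only in a corrupted form, but the bound only uses $c_\tau\le 1$ and the definition of $\calC$. The remaining bookkeeping, the harmonic sum bound $\sum_{i\ge k}^{n}1/i\le 1+\ln n$ with $n\le T$, is routine.
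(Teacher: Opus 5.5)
Your proof is correct and follows the paper's argument. The only difference is cosmetic: you use $(m'_t-m_t)^2\le\abs{m'_t-m_t}$ together with the triangle inequality, while the paper applies Cauchy--Schwarz and then $(\ell'_\tau-\ell_\tau)^2\le\abs{\ell'_\tau-\ell_\tau}$. Both routes reach $\sum_{\tau<t}\I_\tau(s,a)\abs{\ell'_\tau(s,a)-\ell_\tau(s,a)}/\max\set{1,N_{t-1}(s,a)}$, after which the swap of $q^{\pi_t}$ for $\I_t$, the harmonic-sum exchange, and the per-layer bound by $\calC$ are identical.

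One small precision: $m'_t$ is not measurable with respect to the observation filtration $\calF_{t-1}$, since the $\ell'_\tau$ are never observed. The swap should therefore be justified by conditioning on a larger $\sigma$-field that also contains the past uncorrupted losses; this is harmless because the episode-$t$ trajectory uses fresh randomness.
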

\begin{proof}
By the definitions of $m_t$ in \cref{def:predictor_sequence2} and $m'_t$ in \cref{eq:uncorrupted_predictor}, we have
\begin{align}
m'_t(s,a) - m_t(s,a) 
=  \frac{\sum_{\tau=1}^{t - 1} \I_\tau(s,a)\prn*{\ell'_\tau(s,a) - \ell_\tau(s,a)}}{\max\set{1, N_{t-1}(s,a)}} . \label{eq:mprime_minus_m}
\end{align}

Thus,
\begin{align}
\prn*{m'_t(s,a) - m_t(s,a)}^2
&= \frac{\prn*{\sum_{\tau=1}^{t - 1} \I_\tau(s,a)\prn*{\ell'_\tau(s,a) - \ell_\tau(s,a)}}^2}{\prn*{\max\set{1, N_{t-1}(s,a)}}^2} \\
&\leq \frac{N_{t-1}(s,a)\sum_{\tau=1}^{t - 1} \I_\tau(s,a)\prn*{\ell'_\tau(s,a) - \ell_\tau(s,a)}^2}{\prn*{\max\set{1, N_{t-1}(s,a)}}^2} \\
&\leq \frac{\sum_{\tau=1}^{t - 1} \I_\tau(s,a)\prn*{\ell'_\tau(s,a) - \ell_\tau(s,a)}^2}{\max\set{1, N_{t-1}(s,a)}} , \label{eq:mprime_minus_m_square}
\end{align}
where the first inequality follows from the Cauchy--Schwarz inequality.

From \cref{eq:mprime_minus_m_square}, we obtain
\begin{align}
&\E\brk*{\sum_{t = 1}^{T}\sum_{s,a}q^{\pi_t}(s,a)\prn*{m'_t(s,a) - m_t(s,a)}^2} \\
&\leq \E\brk*{\sum_{t = 1}^{T}\sum_{s,a}q^{\pi_t}(s,a)\frac{\sum_{\tau=1}^{t - 1} \I_\tau(s,a)\prn*{\ell'_\tau(s,a) - \ell_\tau(s,a)}^2}{\max\set{1, N_{t-1}(s,a)}}} \\
&= \E\brk*{\sum_{s,a}\sum_{\tau=1}^{T-1}\I_\tau(s,a)\prn*{\ell'_\tau(s,a) - \ell_\tau(s,a)}^2
\sum_{t=\tau+1}^{T}\frac{\I_t(s,a)}{\max\set{1, N_{t-1}(s,a)}}} \\
&\leq \prn*{2+\ln T}\E\brk*{\sum_{\tau=1}^{T}\sum_{s,a}\I_\tau(s,a)\abs*{\ell'_\tau(s,a) - \ell_\tau(s,a)}} \label{eq:mprime_m_logT}\\
&\leq \prn*{2+\ln T}\calC, \label{eq:mprime_m_to_abs}
\end{align}
where \cref{eq:mprime_m_logT} uses $-1\leq \ell'_\tau(s,a) - \ell_\tau(s,a)\leq 1$ and
\begin{align}
\sum_{t=\tau+1}^{T}\frac{\I_t(s,a)}{\max\set{1, N_{t-1}(s,a)}}
\leq \sum_{i=0}^{N_T(s,a) - 1}\frac{1}{\max\set{1,i}}
\leq 2 + \int_{1}^{T}\frac{1}{x} \mathrm{d}x= 2+\ln (T).
\end{align}
The last inequality follows from the definition of the corruption budget 
$\E\brk*{\sum_{\tau=1}^{T}\sum_{s,a}\I_\tau(s,a)\abs{\ell'_\tau(s,a) - \ell_\tau(s,a)}}\leq \calC$,
which completes the proof.
\end{proof}

\begin{lem}\label{lem:lower_order_stochastic}
Suppose $m_t$ is defined in \cref{def:predictor_sequence2}. It holds that
\begin{align}
\E\brk*{\sum_{t = 1}^{T}\sum_{s,a}q^{\pi_t}(s,a)\prn*{\ell_t(s,a) - \ell'_t(s,a) + \mu(s,a) - m_t(s,a)}^2}
\lesssim SA\ln^2(T) + \calC\ln(T).
\end{align}
\end{lem}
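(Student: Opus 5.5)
We decompose the quantity inside the square into three pieces and bound each one with the lemmas established just above. For every $(s,a)$ and $t$ we write
\begin{align}
\ell_t(s,a) - \ell'_t(s,a) + \mu(s,a) - m_t(s,a)
= \prn*{\ell_t(s,a) - \ell'_t(s,a)} + \prn*{\mu(s,a) - m'_t(s,a)} + \prn*{m'_t(s,a) - m_t(s,a)},
\end{align}
where $m'_t$ is the uncorrupted empirical mean in \cref{eq:uncorrupted_predictor}. Applying $(x+y+z)^2 \le 3(x^2+y^2+z^2)$ reduces the claim to bounding three weighted sums, each of the form $\E\brk*{\sum_{t}\sum_{s,a} q^{\pi_t}(s,a)(\cdot)^2}$.

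The second piece is exactly the quantity in \cref{lem:bound_bias_uncorrupt}, so it is $\lesssim SA\ln^2(T)$. The third piece is exactly the quantity in \cref{lem:bound_bias_corrupt}, so it is at most $\calC\ln(T) + 2\calC \lesssim \calC\ln(T)$, using $T\ge 2$.

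For the first, pure-corruption piece we use $|\ell_t - \ell'_t| \le 1$, so that $(\ell_t-\ell'_t)^2 \le |\ell_t - \ell'_t|$. Since $\sum_{a}\sum_{s\in\calS_h} q^{\pi_t}(s,a) = 1$ for each layer $h$, Hölder's inequality layer by layer gives
\begin{align}
\sum_{s,a} q^{\pi_t}(s,a)\,\abs{\ell_t(s,a)-\ell'_t(s,a)} \le \sum_{h=0}^{H-1}\nrm{\ell_t(h)-\ell'_t(h)}_\infty .
\end{align}
Summing over $t$ and taking expectations, this piece is at most $\calC$ by definition.

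Adding the three bounds yields $\lesssim SA\ln^2(T) + \calC\ln(T)$, as claimed. There is one technical point to check. In \cref{lem:bound_bias_uncorrupt}, the Bennett bound is applied to $m'_t$, which averages uncorrupted losses over data-dependent visit times. We need those samples $\ell'_\tau(s,a)$ to still behave as i.i.d. draws, and this holds because $\ell'_\tau$ is drawn independently of the learner's choice of whether to visit $(s,a)$ at episode $\tau$. Since that lemma is already given, this is the only step that could be delicate, and the remaining work is direct combination.
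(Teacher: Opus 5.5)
Your proposal is correct and follows the paper's proof essentially verbatim. Both insert $m'_t$, apply the factor-$3$ inequality, use $(\ell_t-\ell'_t)^2\le|\ell_t-\ell'_t|$ summed against $q^{\pi_t}$ to get $\calC$, and invoke \cref{lem:bound_bias_uncorrupt,lem:bound_bias_corrupt} for the other two pieces. Your layer-wise H\"older step only makes explicit what the paper states directly.
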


\begin{proof}
It holds that
\begin{align}
&\E\brk*{\sum_{t = 1}^{T}\sum_{s,a}q^{\pi_t}(s,a)\prn*{\ell_t(s,a) - \ell'_t(s,a) + \mu(s,a) - m_t(s,a)}^2} \\
&= \E\brk*{\sum_{t = 1}^{T}\sum_{s,a}q^{\pi_t}(s,a)(\prn*{\ell_t(s,a) - \ell'_t(s,a)} + \prn*{\mu(s,a) - m'_t(s,a)} + \prn*{m'_t(s,a) - m_t(s,a)} )^2} \\ 
&\leq 3\E\brk*{\sum_{t = 1}^{T}\sum_{s,a}q^{\pi_t}(s,a)\prn{\ell_t(s,a) - \ell'_t(s,a)}^2} + 3\E\brk*{\sum_{t = 1}^{T}\sum_{s,a}q^{\pi_t}(s,a)(\mu(s,a) - m'_t(s,a))^2} \\
&\qquad + 3\E\brk*{\sum_{t = 1}^{T}\sum_{s,a}q^{\pi_t}(s,a)(m'_t(s,a) - m_t(s,a))^2} \tag{by $(x+y+z)^2\leq 3(x^2+y^2+z^2)$ for $x,y,z\in\R$}\\
&\leq 3\E\brk*{\sum_{t = 1}^{T}\sum_{s,a}q^{\pi_t}(s,a)\abs{\ell_t(s,a) - \ell'_t(s,a)}} \\
&\qquad + 3\E\brk*{\sum_{t = 1}^{T}\sum_{s,a}q^{\pi_t}(s,a)\prn*{\mu(s,a) - m'_t(s,a)}^2}
+ 3\E\brk*{\sum_{t = 1}^{T}\sum_{s,a}q^{\pi_t}(s,a)\prn*{m'_t(s,a) - m_t(s,a)}^2}. \label{eq:lower_order_decomp}
\end{align}
The first term is bounded by the corruption budget as
\begin{align}
\E\brk*{\sum_{t = 1}^{T}\sum_{s,a}q^{\pi_t}(s,a)\abs{\ell_t(s,a) - \ell'_t(s,a)}} \leq \calC.
\end{align}
The second term is bounded by \cref{lem:bound_bias_uncorrupt} and the third term is bounded by \cref{lem:bound_bias_corrupt}.
Combining these bounds, we have
\begin{align}
\E\brk*{\sum_{t = 1}^{T}\sum_{s,a}q^{\pi_t}(s,a)\prn*{\ell_t(s,a) - \ell'_t(s,a) + \mu(s,a) - m_t(s,a)}^2}
\lesssim \calC + SA\ln^2(T) + \calC\ln(T),
\end{align}
which completes the proof.
\end{proof}

\subsection{General Lemmas for Data-Dependent and Best-of-Both-Worlds Bounds}
In this section, we present general tools for deriving data-dependent bounds and for establishing self-bounding inequalities, which together yield best-of-both-worlds guarantees.

The first lemma is a standard tool for deriving path-length bounds when $m_t$ is updated as in \cref{def:predictor_sequence}. It appears in \citet[Proposition 13]{ito2021parameter} and \citet[Lemma 7]{tsuchiya2023further}. Here, we extend it to the MDP setting.

\begin{lem}
    \label{lem:m_to_mstar}
    Suppose $m_t$ is defined in \cref{def:predictor_sequence}. Then, for any sequence $m^*_t \in [0,1]^{S\times A}$ and any $\xi\in\prn*{0,\frac12}$, we have
    \begin{align}
        &\sum_{t = 1}^T\sum_{s,a} \I_t(s,a)(\ell_t(s,a) - m_t(s,a))^2 \\
        &\leq \frac{1}{1-2\xi}\sum_{t = 1}^T \sum_{s,a}\I_t(s,a)(\ell_t(s,a) - m^*_t(s,a))^2 + \frac{1}{\xi(1-2\xi)}\prn*{\frac{SA}{4}+2\sum_{t = 1}^{T-1}\nrm*{m^*_{t + 1} - m^*_t}_1}.
    \end{align}
\end{lem}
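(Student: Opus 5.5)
The plan is to work coordinate-wise: fix a pair $(s,a)$ and restrict attention to the episodes in which it is visited. On those episodes $m_t(s,a)$ is updated by the gradient step \cref{def:predictor_sequence}; on all other episodes it stays fixed. Write $x_t=m_t(s,a)$, $y_t=\ell_t(s,a)$ and $z_t=m^*_t(s,a)$. When $\I_t(s,a)=1$ we have $x_{t+1}=x_t-\xi(x_t-y_t)$, which is exactly online gradient descent with step size $\xi/2$ on the loss $f_t(x)=(x-y_t)^2$, since $f_t'(x)=2(x-y_t)$.

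The first step is the one-step inequality. Expanding the square in the update gives
\begin{align}
(x_{t+1}-z_t)^2 = (x_t-z_t)^2 - 2\xi(x_t-z_t)(x_t-y_t) + \xi^2(x_t-y_t)^2 .
\end{align}
Use the polarization identity $2(x_t-z_t)(x_t-y_t)=(x_t-y_t)^2+(x_t-z_t)^2-(y_t-z_t)^2$ and drop the term $-\xi(x_t-z_t)^2\le 0$. This yields
\begin{align}
\xi(1-\xi)(x_t-y_t)^2 \le (x_t-z_t)^2-(x_{t+1}-z_t)^2+\xi(y_t-z_t)^2 .
\end{align}
Since $\xi(1-\xi)\ge \xi(1-2\xi)$, dividing by $\xi(1-2\xi)$ gives a coefficient $1/(1-2\xi)$ on $(y_t-z_t)^2$, matching the target constant. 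If the algebra instead forces the factor $1-2\xi$ to appear directly, I will keep it; either way the constant only needs to be at most the one claimed.

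The second step is to telescope over time. Define the potential $\Phi_t=(x_t-z_t)^2$. On visited rounds the one-step bound applies. On unvisited rounds $x_{t+1}=x_t$, so the loss term is absent. In both cases the potential bookkeeping requires passing from the comparator $z_t$ to $z_{t+1}$:
\begin{align}
(x_{t+1}-z_{t+1})^2-(x_{t+1}-z_t)^2=(z_t-z_{t+1})(2x_{t+1}-z_t-z_{t+1}) \le 2|z_{t+1}-z_t| ,
\end{align}
where the bound uses that all quantities lie in $[0,1]$. Summing over $t$, the potential terms telescope to at most $\Phi_1\le 1/4$, because $x_1=1/2$. The comparator drift contributes $2\sum_t|z_{t+1}-z_t|$.

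The final step is to sum over $(s,a)$. The initial-potential terms give $SA/4$. The drift terms combine as $\sum_{s,a}|m^*_{t+1}(s,a)-m^*_t(s,a)|=\|m^*_{t+1}-m^*_t\|_1$. Multiplying through by $1/(\xi(1-2\xi))$ gives exactly the stated bound.

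The main obstacle is the constant bookkeeping in the one-step inequality: obtaining a clean factor in front of $(x_t-y_t)^2$ that, after dividing by $\xi$, produces $1/(1-2\xi)$ rather than something weaker. There is also a secondary point. The drift term must be charged on every round, visited or not, and not only on rounds where the coordinate is updated. This is consistent with the stated bound, since that bound sums the drift over all $t$.
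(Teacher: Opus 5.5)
Your proof is correct and follows essentially the same route as the paper. Both arguments use a per-coordinate one-step inequality with potential $(m_t(s,a)-m^*_t(s,a))^2$ on visited rounds, telescope with zero contribution on unvisited rounds, bound the comparator drift by $(z_t-z_{t+1})(2x_{t+1}-z_t-z_{t+1})\le 2\lvert z_{t+1}-z_t\rvert$, and bound the initial potential by $(1/2-z_1)^2\le 1/4$. The only difference is that your exact expansion of $(x_{t+1}-z_t)^2$ gives the slightly sharper intermediate coefficient $1/(1-\xi)$, while the paper applies $x^2-y^2\le 2x(x-y)$ twice and lands directly on the factor $1-2\xi$.
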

\begin{proof}
    Fix any $(s,a)$. For episodes $t$ with $\I_t(s,a)=1$, the update rule of $m_t$ \cref{def:predictor_sequence} implies that
    \begin{align}
        &(\ell_t(s,a) - m_t(s,a))^2 - (\ell_t(s,a) - m^*_t(s,a))^2 \\
        &\leq 2(\ell_t(s,a) - m_t(s,a))(m^*_t(s,a) - m_t(s,a)) \\
        &= 2(\ell_t(s,a) - m_t(s,a))(m_{t+1}(s,a) - m_t(s,a) + m^*_t(s,a) - m_{t + 1}(s,a)) \\
        &= 2\xi(\ell_t(s,a) - m_t(s,a))^2 + \frac{2}{\xi}(m_{t+1}(s,a) - m_t(s,a))(m^*_t(s,a) - m_{t+1}(s,a)) \\
        &\leq 2\xi(\ell_t(s,a) - m_t(s,a))^2 + \frac{1}{\xi}((m^*_t(s,a) - m_t(s,a))^2 - (m^*_t(s,a) - m_{t+1}(s,a))^2 ),
    \end{align}
    where the inequalities follow from $x^2 - y^2 = 2x(x - y) - (x - y)^2 \leq 2x(x - y)$ for $x,y \in \R$. 
    Hence, we have
    \begin{align}
        (\ell_t(s,a) - m_t(s,a))^2 &\leq \frac{1}{1 - 2\xi}(\ell_t(s,a) - m^*_t(s,a))^2 \\
        &\qquad + \frac{1}{\xi(1 - 2\xi)}((m^*_t(s,a) - m_t(s,a))^2 - (m^*_t(s,a) - m_{t+1}(s,a))^2) .
    \end{align}
    Then, for any $s, a$ and for $m_t = m_{t + 1}$ when $\I_t(s,a) = 0$, we obtain
    \begin{align}
        &\sum_{t = 1}^T \I_t(s,a) (\ell_t(s,a) - m_t(s,a))^2 \\
        &\leq \frac{1}{1 - 2\xi} \sum_{t=1}^T \I_t(s,a) (\ell_t(s,a) - m^*_t(s,a))^2 \\
        &\qquad + \frac{1}{\xi(1 - 2\xi)}\sum_{t=1}^T ((m^*_t(s,a) - m_t(s,a))^2 - (m^*_t(s,a) - m_{t+1}(s,a))^2) \\
        &= \frac{1}{1 - 2\xi} \sum_{t=1}^T \I_t(s,a) (\ell_t(s,a) - m^*_t(s,a))^2 \\
        &\qquad + \frac{1}{\xi(1 - 2\xi)}\set*{\sum_{t=1}^{T-1} ((m^*_{t+1}(s,a) - m_{t+1}(s,a))^2 - (m^*_t(s,a) - m_{t+1}(s,a))^2) + (m^*_{1}(s,a) - m_{1}(s,a))^2}\\
        &\leq \frac{1}{1 - 2\xi} \sum_{t=1}^T \I_t(s,a) (\ell_t(s,a) - m^*_t(s,a))^2 \\
        &\qquad + \frac{1}{\xi(1 - 2\xi)}\set*{\sum_{t=1}^{T-1} (m^*_{t+1}(s,a) + m^*_t(s,a) - 2m_{t+1}(s,a))(m^*_{t + 1}(s,a) - m^*_t(s,a)) + \frac{1}{4}}\\
        &\leq \frac{1}{1 - 2\xi} \sum_{t=1}^T \I_t(s,a) (\ell_t(s,a) - m^*_t(s,a))^2 + \frac{1}{\xi(1 - 2\xi)}\set*{2\sum_{t=1}^{T-1} \abs*{m^*_{t + 1}(s,a) - m^*_t(s,a)} + \frac{1}{4}}.
    \end{align}
    Therefore, 
    \begin{align}
        &\sum_{t = 1}^T\sum_{s,a} \I_t(s,a)(\ell_t(s,a) - m_t(s,a))^2 \\
        &\leq \frac{1}{1-2\xi}\sum_{t = 1}^T \sum_{s,a} \I_t(s,a)(\ell_t(s,a) - m^*_t(s,a))^2 + \frac{1}{\xi(1-2\xi)}\prn*{\frac{SA}{4}+2\sum_{t = 1}^{T-1}\nrm*{m^*_{t + 1} - m^*_t}_1}.
    \end{align}
\end{proof}

The next two lemmas concern the loss prediction $m_t$, which is updated as in \cref{def:predictor_sequence2}.
\begin{lem}
\label{lem:m_to_m_star_comp}
Let $\ell_1,\dots,\ell_T\in[0,1]$ be any sequence and let $m^*\in[0,1]$ be arbitrary. Define
\begin{align}
    m_t = \frac{1}{\max\set{1, t - 1}} \sum_{\tau = 1}^{t-1}\ell_\tau
\end{align}
Then, it holds that
\begin{align}
        \sum_{t=1}^T(\ell_t - m_t)^2 \leq \sum_{t=1}^T(\ell_t - m^*)^2 + \ln(T) + 1.
\end{align}
\end{lem}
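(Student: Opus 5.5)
We prove \cref{lem:m_to_m_star_comp} by the standard follow-the-leader / ``be-the-leader'' argument for online prediction with squared loss, since the empirical mean $m_t$ is exactly the follow-the-leader iterate. Put $f_t(x)=(\ell_t-x)^2$. For $t\ge 2$, $m_t=\frac{1}{t-1}\sum_{\tau<t}\ell_\tau$ minimizes $\sum_{\tau=1}^{t-1}f_\tau$. The convention $m_1=0$ (from $\max\{1,0\}=1$ with an empty sum) needs separate treatment. The first round costs $f_1(m_1)=\ell_1^2\le 1$, and this single round will account for the additive $+1$.

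\textbf{Step 1 (be-the-leader).} Let $u_t=\frac{1}{t}\sum_{\tau\le t}\ell_\tau=m_{t+1}$, which minimizes $\sum_{\tau\le t}f_\tau$. Standard induction gives $\sum_{t=1}^T f_t(u_t)\le \sum_{t=1}^T f_t(m^*)$ for any $m^*$. Hence
\begin{align}
\sum_{t=1}^T (\ell_t-m_t)^2 \le \sum_{t=1}^T (\ell_t-m^*)^2 + \sum_{t=1}^T\bigl(f_t(m_t)-f_t(m_{t+1})\bigr).
\end{align}

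\textbf{Step 2 (stability per round).} For $t\ge 2$ we have $m_{t+1}-m_t=\frac{\ell_t-m_t}{t}$. Therefore
\begin{align}
f_t(m_t)-f_t(m_{t+1})=(\ell_t-m_t)^2-(\ell_t-m_t)^2\Bigl(1-\tfrac1t\Bigr)^2=(\ell_t-m_t)^2\Bigl(\tfrac{2}{t}-\tfrac{1}{t^2}\Bigr).
\end{align}
Since $(\ell_t-m_t)^2\le 1$, this is at most $\frac{2t-1}{t^2}$. At $t=1$ we get $m_2=\ell_1$, so the term equals $\ell_1^2\le 1$.

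\textbf{Step 3 (summation and the obstacle).} Summing Step 2 naively gives roughly $2\ln T+1$, which is a factor $2$ too large on the logarithm. Getting exactly $\ln(T)+1$ is the only delicate point. My first attempt would be to avoid be-the-leader and instead use the exact identity
\begin{align}
\sum_{t}(\ell_t-m_t)^2\frac{t-1}{t}=\sum_t(\ell_t-u_T)^2,
\end{align}
which comes from the recursive variance decomposition $tV_t=(t-1)V_{t-1}+\frac{t-1}{t}(\ell_t-m_t)^2$ for $t\ge 2$, where $V_t$ is the empirical variance. This yields
\begin{align}
\sum_{t\ge 2}(\ell_t-m_t)^2=\sum_t(\ell_t-u_T)^2+\sum_{t\ge2}\frac{(\ell_t-m_t)^2}{t}\le \sum_t(\ell_t-m^*)^2+\sum_{t=2}^T\frac1t.
\end{align}
Here we use that $u_T$ minimizes $\sum_t(\ell_t-x)^2$. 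Finally, $\sum_{t=2}^T \frac1t\le\ln T$, and adding the $t=1$ term $\ell_1^2\le 1$, which the identity does not cover because it only starts at $t=2$, gives $\ln(T)+1$. I expect verifying this recursion carefully, including the $t=1$ boundary case, to be the main work.
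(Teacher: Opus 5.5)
Your Step 3 argument is correct and is essentially the paper's proof. The paper telescopes the quadratic identity $\sum_{\tau<t}(m-\ell_\tau)^2=\sum_{\tau<t}(m_t-\ell_\tau)^2+(t-1)(m-m_t)^2$, which is exactly your Welford recursion, to get $\sum_t(\ell_t-m_t)^2=\sum_t(\ell_t-m_{T+1})^2+\sum_t\frac{1}{t}(\ell_t-m_t)^2$, and then bounds the last sum by $1+\ln T$, handling $t=1$ uniformly via $m_2-m_1=\ell_1-m_1$.

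The be-the-leader detour in Steps 1--2 is unnecessary. Its factor-$2$ loss comes precisely from discarding the slack $(t-1)(m_{t+1}-m_t)^2$, which the paper's identity keeps.
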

\begin{proof}
    For $t\ge 2$, $m_t$ is expressed as
    \begin{align}
        m_t \in \argmin_{m\in \mathbb{R}} \left\{\sum_{\tau = 1}^{t-1} (m - \ell_\tau)^2\right\}.
    \end{align}
    Then, since $\sum_{\tau = 1}^{t-1} (m - \ell_\tau)^2$ is a quadratic function of $m$, for any $m \in \R$,
    \begin{align}
        \sum_{\tau = 1}^{t-1}(m - \ell_\tau)^2 
        = \sum_{\tau=1}^{t-1}(m_t - \ell_\tau)^2 + (t - 1)(m - m_t)^2. \label{eq:m_quad}
    \end{align}
    Thus, by applying \cref{eq:m_quad}, we obtain
    \begin{align}
        \sum_{t=1}^T(\ell_t - m^*)^2
        &= \sum_{t=1}^T(m_{T+1} - \ell_t)^2 + T(m^* - m_{T+1})^2 \tag{by \cref{eq:m_quad}} \\
        &\geq \sum_{t=1}^T (m_{T+1} - \ell_t)^2 \\
        &= \sum_{t=1}^{T-1} (m_{T+1} - \ell_t)^2 + (m_{T+1} - \ell_T)^2 \\
        &= \sum_{t=1}^{T-1} (m_T - \ell_t)^2 + (T - 1)(m_{T+1} - m_T)^2 + (m_{T+1} - \ell_T)^2 \tag{by repeatedly using \cref{eq:m_quad}} \\
        &= \sum_{t=1}^T (t - 1)(m_{t+1} - m_t)^2 + \sum_{t=1}^T(m_{t+1} - \ell_t)^2.\label{eq:m_star_decomp}
    \end{align}
Then, we have
    \begin{align}
    \sum_{t=1}^T(\ell_t - m_t)^2 - \sum_{t=1}^T(\ell_t - m^*)^2
    &\leq \sum_{t=1}^T(\ell_t - m_t)^2 -\sum_{t=1}^T (t -1)(m_{t+1} - m_t)^2 - \sum_{t=1}^T(m_{t+1} - \ell_t)^2 \tag{by \cref{eq:m_star_decomp}}\\
    &= \sum_{t=1}^T\prn*{(2\ell_t - m_t - m_{t+1})(m_{t+1} - m_t)  - (t - 1)(m_{t+1} - m_t)^2} \\
    &= \sum_{t=1}^T\prn*{\frac{2t - 1}{t^2}(\ell_t - m_t)^2  - \frac{t - 1}{t^2}(\ell_t - m_t)^2} \tag{by $m_{t+1} - m_t = \frac{1}{t}(\ell_t - m_t)$}\\
    &\leq \sum_{t=1}^T\frac{1}{t} \tag{by $(\ell_t(s,a) - m_t(s,a))^2 \leq 1$}\\
    &\leq 1 + \int_{1}^{T}\frac{1}{x} \mathrm{d} x \leq 1 + \ln(T),
\end{align}
which completes the proof.
\end{proof}

\begin{lem}
    \label{lem:m_to_mstar2}
    Suppose $m_t$ is defined in \cref{def:predictor_sequence2}. Then, for any $m^* \in [0,1]^{S\times A}$, we have
    \begin{align}
        \sum_{t = 1}^T\sum_{s,a} \I_t(s,a)(\ell_t(s,a) - m_t(s,a))^2 
        &\leq \sum_{t = 1}^T\sum_{s,a} \I_t(s,a)\prn*{\ell_t(s,a) - m^*(s,a)}^2 + SA\ln(T)+SA.
    \end{align}
\end{lem}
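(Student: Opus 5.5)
The plan is to reduce the statement to the scalar result \cref{lem:m_to_m_star_comp}, applied separately to each state-action pair. Fix $(s,a)$ and let $\tau_1<\tau_2<\dots<\tau_{n}$ with $n=N_T(s,a)$ be the episodes in which $\I_{\tau}(s,a)=1$. By \cref{def:predictor_sequence2}, $m_t(s,a)$ only changes after a visit to $(s,a)$. At the $k$-th visit $t=\tau_k$, exactly $k-1$ earlier visits have occurred, so
\begin{align}
m_{\tau_k}(s,a) = \frac{1}{\max\set{1,k-1}}\sum_{j=1}^{k-1}\ell_{\tau_j}(s,a).
\end{align}
This is precisely the running-mean predictor of \cref{lem:m_to_m_star_comp}, applied to the subsequence $x_k \coloneqq \ell_{\tau_k}(s,a)\in[0,1]$, $k=1,\dots,n$. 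For $k=1$ the convention $\max\set{1,0}=1$ gives the empty sum $0$, which matches the lemma's $m_1=0$.

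Next, the left-hand side contributed by $(s,a)$ is
\begin{align}
\sum_{t=1}^T \I_t(s,a)(\ell_t(s,a)-m_t(s,a))^2=\sum_{k=1}^{n}(x_k-m_{\tau_k}(s,a))^2 .
\end{align}
Applying \cref{lem:m_to_m_star_comp} with horizon $n$ and comparator $m^*(s,a)$ bounds this by
\begin{align}
\sum_{k=1}^n (x_k-m^*(s,a))^2+\ln(n)+1 .
\end{align}
The first sum equals $\sum_t \I_t(s,a)(\ell_t(s,a)-m^*(s,a))^2$.

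For the additive term, $n\le T$ gives $\ln(n)+1\le \ln(T)+1$. The degenerate case $n=0$ is trivial, since both sides vanish apart from the nonnegative constant. Summing over all $SA$ pairs then yields the additive term $SA\ln(T)+SA$, which is the claim.

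The only point needing care is the index bookkeeping: \cref{lem:m_to_m_star_comp} is stated for consecutive rounds, while $(s,a)$ is visited sporadically. This works because the predictor for $(s,a)$ is frozen between visits and depends only on past losses at $(s,a)$. The argument is pathwise, so it holds for arbitrary, including adaptive, loss sequences. No expectation or concentration step is needed.
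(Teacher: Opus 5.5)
Your proposal is correct and follows the paper's proof: fix $(s,a)$, apply \cref{lem:m_to_m_star_comp} to the subsequence of visits to $(s,a)$, on which the empirical-mean predictor is exactly the running mean; then bound $\ln(N_T(s,a))+1\le\ln(T)+1$ and sum over the $SA$ pairs. Your explicit index bookkeeping and your handling of the case $N_T(s,a)=0$ (where the paper's $\ln(N_T(s,a))$ would be undefined) simply make rigorous what the paper leaves implicit.
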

\begin{proof}
Fix any $(s,a)$. Since $m_t$ is defined as $m_t(s,a)=\frac{\sum_{\tau=1}^{t-1}\I_\tau(s,a)\,\ell_\tau(s,a)}{\max\{1,N_{t-1}(s,a)\}}$, we use \cref{lem:m_to_m_star_comp} and obtain
\begin{align}
\sum_{t=1}^T \I_t(s,a)\prn*{\ell_t(s,a)-m_t(s,a)}^2
&\leq \sum_{t=1}^T \I_t(s,a)\prn*{\ell_t(s,a)-m^*}^2 +\ln(N_T(s,a))+1\\
&\leq \sum_{t=1}^T \I_t(s,a)\prn*{\ell_t(s,a)-m^*}^2 +\ln(T)+1.
\end{align}
Summing the above inequality over all $(s,a)\in\calS\times\calA$ completes the proof.
\end{proof}

The following lemma serves a variety of data-dependent bounds, and (in the stochastic regime) variance-dependent bounds.

\begin{lem}\label{lem:general_predict_result}
Suppose $m_t$ is defined in \cref{def:predictor_sequence}. Then, it holds that
\begin{align}
    \E\brk*{\sum_{t=1}^T\sum_{s,a}\I_t(s,a)\prn*{\ell_t(s,a) - m_t(s,a)}^2}
    \lesssim \min\set*{L^\star  + \Reg_T, HT - L^\star  - \Reg_T, Q_{\infty}, V_1} + SA.
\end{align}
Simultaneously, under the stochastic regime with adversarial corruption (\cref{subsec:regimes}), it holds that
\begin{align}
    \E\brk*{\sum_{t=1}^T\sum_{s,a}\I_t(s,a)\prn*{\ell_t(s,a) - m_t(s,a)}^2}
    \lesssim \V T + \calC + SA.
\end{align}
\end{lem}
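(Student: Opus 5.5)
The plan is to apply \cref{lem:m_to_mstar} with $\xi=1/4$ (so $\frac{1}{1-2\xi}=2$ and $\frac{1}{\xi(1-2\xi)}=8$), choosing the comparator sequence $m^*_t$ separately for each term in the minimum. We bound
\begin{align}
X \coloneqq \E\brk*{\sum_{t,s,a}\I_t(s,a)(\ell_t(s,a)-m_t(s,a))^2}
\end{align}
by each quantity in turn, and the $\min$ then follows.

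The four adversarial bounds come from four choices of $m^*_t$.
\begin{itemize}
\item \emph{Path length:} take $m^*_t=\ell_t$. The first term of \cref{lem:m_to_mstar} vanishes and the drift term is $8(SA/4+2V_1)$, giving $X\lesssim V_1+SA$.
\item \emph{Second order:} take $m^*_t\equiv\ell^\star$, the minimizer in $Q_\infty$. The drift term is then $2SA$. Since at most one pair is visited per layer, $\sum_{s,a}\I_t(s,a)(\ell_t-\ell^\star)^2\le\sum_h\|\ell_t(h)-\ell^\star(h)\|_\infty^2$, so $X\lesssim Q_\infty+SA$.
\item \emph{Small loss:} take $m^*\equiv 0$. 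Using $\ell_t^2\le\ell_t$, the first term is at most $2\E[\sum_t\sum_{s,a}\I_t(s,a)\ell_t(s,a)]=2\E[\sum_t V^{\pi_t}(s_0;\ell_t)]=2(L^\star+\Reg_T)$. This holds because $\E_t[\I_t(s,a)]=q^{\pi_t}(s,a)$ and the learner's cumulative loss equals $L^\star+\Reg_T$, taking $\pio$ as the minimizer.
\item \emph{Large loss:} take $m^*\equiv 1$. Using $(1-\ell)^2\le 1-\ell$ and $\sum_{s,a}\I_t(s,a)=H$ per episode, the first term is at most $2(HT-L^\star-\Reg_T)$.
\end{itemize}
For the policy-optimization use, only real episodes count; this is handled the same way since virtual episodes have no visits.

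For the stochastic regime, take $m^*\equiv\mu$ and decompose $\ell_t-\mu=(\ell'_t-\mu)+(\ell_t-\ell'_t)$. By $(x+y)^2\le 2x^2+2y^2$ and $|\ell_t-\ell'_t|\le1$, we get
\begin{align}
\I_t(\ell_t-\mu)^2\le 2\I_t(\ell'_t-\mu)^2+2\I_t|\ell_t-\ell'_t|.
\end{align}
Since $\ell'_t$ is independent of $\pi_t$ and of the trajectory given $\calF_{t-1}$, conditioning gives $\E_t[\sum_{s,a}\I_t(\ell'_t-\mu)^2]=\sum_{s,a}q^{\pi_t}(s,a)\sigma^2(s,a)\le\V$. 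The corruption part is at most $\sum_h\|\ell_t(h)-\ell'_t(h)\|_\infty$, which sums to $\calC$. Hence $X\lesssim \V T+\calC+SA$.

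The main obstacle is the independence step: the trajectory is random and the loss $\ell'_t$ is revealed along it, so we need that the visit indicators depend only on $\pi_t$, $P$ and the learner's action randomness, not on $\ell'_t$. This holds because the losses do not affect transitions. A secondary point is identifying $\E[\sum_t\langle q^{\pi_t},\ell_t\rangle]$ with $L^\star+\Reg_T$, which follows from the regret definition with the optimal fixed comparator.
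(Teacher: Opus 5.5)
Your proposal is correct and follows the paper's proof essentially step for step. Like the paper, you apply \cref{lem:m_to_mstar} with the comparators $m^*\equiv 0$, $m^*\equiv 1$, a fixed $\ell^\star$, $m^*_t=\ell_t$, and $m^*\equiv\mu$, using the split $\ell_t-\mu=(\ell_t-\ell'_t)+(\ell'_t-\mu)$ in the corrupted stochastic case; the only cosmetic difference is that you fix $\xi=1/4$ from the start, whereas the paper carries a general $\xi\in(0,1/2)$ and absorbs the $\xi$-dependent constants at the end.
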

\begin{proof}
    By using \cref{lem:m_to_mstar}, for any $m^*_t \in [0,1]^{S\times A}$ and any $\xi\in(0,\frac12)$, we obtain
    \begin{align}
        &\E\brk*{\sum_{t=1}^T\sum_{s,a}\I_t(s,a)(\ell_t(s,a) - m_t(s,a))^2} \\
        &\leq \underbrace{\frac{1}{1-2\xi}\E\brk*{\sum_{t = 1}^T \sum_{s,a}\I_t(s,a)(\ell_t(s,a) - m^*_t(s,a))^2}}_{\term_1} + \underbrace{\frac{1}{\xi(1-2\xi)}\prn*{\frac{SA}{4}+2\E\brk*{\sum_{t = 1}^{T-1}\nrm*{m^*_{t + 1} - m^*_t}_1}}}_{\term_2}
    \end{align}
In particular, if $m^*_t$ is time-invariant, then $\sum_{t=1}^{T-1}\nrm*{m^*_{t+1}-m^*_t}_1=0$ and
$\term_2 = \dfrac{SA}{4\xi(1-2\xi)}$.

\paragraph{1. First-order bound.}
Taking $m^*(s,a)\equiv 0$, we obtain
\begin{align}
    \term_1
    &= \frac{1}{1-2\xi}\E\brk*{\sum_{t=1}^T\sum_{s,a}\I_t(s,a)\ell_t(s,a)^2}\\
    &\leq \frac{1}{1-2\xi}\E\brk*{\sum_{t=1}^T\sum_{s,a}\I_t(s,a)\ell_t(s,a)}\\
    &= \frac{1}{1-2\xi}\E\brk*{\sum_{t=1}^TV^{\pi_t}(s_0; \ell_t)}\\
    &= \frac{1}{1-2\xi}\prn*{L^\star  + \Reg_T}. \label{eq:general_first-order_1}
\end{align}
Similarly, taking $m^*(s,a)\equiv 1$ yields
\begin{align}
    \term_1
    &= \frac{1}{1-2\xi}\E\brk*{\sum_{t=1}^T\sum_{s,a}\I_t(s,a)(\ell_t(s,a) - 1)^2}\\
    &\leq \frac{1}{1-2\xi}\E\brk*{\sum_{t=1}^T\sum_{s,a}\I_t(s,a)(1 - \ell_t(s,a))}\\
    &= \frac{1}{1-2\xi}\E\brk*{\sum_{t=1}^T(H - V^{\pi_t}(s_0; \ell_t))}\\
    &= \frac{1}{1-2\xi}\prn*{HT - L^\star  - \Reg_T}.\label{eq:general_first-order_2}
\end{align}

\paragraph{2. Second-order bound.}
For any time-invariant $m^*\in[0,1]^{S\times A}$,
\begin{align}
    \term_1
    &= \frac{1}{1-2\xi}\E\brk*{\sum_{t=1}^T\sum_{h=0}^{H-1}\sum_{(s,a) \in \calS_h \times \calA}\I_t(s,a)(\ell_t(s,a) - m^*(s,a))^2}\\
    &\leq \frac{1}{1-2\xi}\E\brk*{\sum_{t=1}^T\sum_{h=0}^{H-1} \|\ell_t(h) - m^*(h)\|_\infty^2}\\
    &\leq \frac{1}{1-2\xi}Q_{\infty}. \label{eq:general_second-order}
\end{align}

\paragraph{3. Path-length bound.}
Taking $m^*_t(s,a)=\ell_t(s,a)$ yields $\term_1=0$ and
\begin{align}
        \term_2 = \frac{1}{\xi(1-2\xi)}\prn*{\frac{SA}{4}+2\sum_{t = 1}^{T-1}\nrm*{\ell_{t + 1} - \ell_t}_1}
    = \frac{1}{\xi(1-2\xi)}\prn*{\frac{SA}{4}+2V_1}. \label{eq:general_path_length}
\end{align}

Combining \cref{eq:general_path_length,eq:general_first-order_1,eq:general_first-order_2,eq:general_second-order}, 
we get 
\begin{align}
    &\E\brk*{\sum_{t=1}^T\sum_{s,a}\I_t(s,a)(\ell_t(s,a) - m_t(s,a))^2} \\
    &\leq \frac{1}{1-2\xi}\min\set*{L^\star  + \Reg_T, HT - L^\star  - \Reg_T, Q_{\infty}, \frac{V_1}{\xi}} +  \frac{SA}{4\xi(1 - 2\xi)}\\
    &\lesssim \min\set*{L^\star  + \Reg_T, HT - L^\star  - \Reg_T, Q_{\infty}, V_1} +  SA,
\end{align}
where we absorb the $\xi$-dependent constants into $\lesssim$.

\paragraph{4. Stochastic variance bound.}
Under the stochastic regime with adversarial corruption, recall that $\mu(s,a)$ and $\sigma^2(s,a)$ denote the mean and variance of the uncorrupted losses $\ell'_t$, respectively.

We set the predictor to the mean $m^\star\equiv \mu$, and we obtain
\begin{align}
    \term_1 &= \frac{1}{1-2\xi}\E\brk*{\sum_{t = 1}^T \sum_{s,a}\I_t(s,a)(\ell_t(s,a) - \mu(s,a))^2}\\
    &= \frac{1}{1-2\xi}\E\brk*{\sum_{t=1}^T\sum_{s,a}\I_t(s,a)(\ell_t(s,a) - \ell_t'(s,a) + \ell_t'(s,a) - \mu(s,a))^2}\\
    &\leq \frac{2}{1-2\xi}\E\brk*{\sum_{t=1}^T\sum_{s,a}\I_t(s,a)\prn*{(\ell_t(s,a) - \ell_t'(s,a))^2 + (\ell_t'(s,a) - \mu(s,a))^2}} \\
    &\leq \frac{2}{1-2\xi}\E\brk*{\sum_{t=1}^T\sum_{h = 0}^{H-1}\sum_{(s,a) \in \calS_h \times \calA}\I_t(s,a)\abs*{\ell_t(s,a) - \ell_t'(s,a)} + \sum_{t=1}^T\sum_{s,a}q^{\pi_t}(s,a)\sigma^2(s,a)}\\
    &= \frac{2}{1-2\xi}\E\brk*{\sum_{t=1}^T\sum_{h = 0}^{H-1} \nrm*{\ell_t'(h) - \ell_t(h)}_\infty + \sum_{t=1}^T\sum_{s,a}q^{\pi_t}(s,a)\sigma^2(s,a)}\\
    &= \frac{2}{1-2\xi}\prn*{\calC + \V T}, \label{eq:general_variance_bound}
\end{align}

Therefore, in the stochastic regime with adversarial corruption, we have
\begin{align}
     \E\brk*{\sum_{t=1}^T\sum_{s,a}\I_t(s,a)(\ell_t(s,a) - m_t(s,a))^2} &\leq \frac{2}{1-2\xi}\prn*{\V T + \calC} + \frac{SA}{4\xi(1 - 2\xi)} \\
     &\lesssim \V T + \calC + SA.
\end{align}
\end{proof}

\begin{lem}\label{lem:general_predict_result2}
 Suppose $m_t$ is defined in \cref{def:predictor_sequence2}. Then, it holds that
\begin{align}
    \E\brk*{\sum_{t=1}^T\sum_{s,a}\I_t(s,a)\prn*{\ell_t(s,a) - m_t(s,a)}^2}
    \leq \min\set*{L^\star  + \Reg_T, HT - L^\star  - \Reg_T, Q_\infty}  + SA\ln(T)+SA.
\end{align}
Simultaneously, under the stochastic regime with adversarial corruption, it holds that
\begin{align}
    \E\brk*{\sum_{t=1}^T\sum_{s,a}\I_t(s,a)\prn*{\ell_t(s,a) - m_t(s,a)}^2}
    \leq \V T + \calC + SA\ln(T)+SA.
\end{align}
\end{lem}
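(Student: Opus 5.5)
The plan is to reduce everything to \cref{lem:m_to_mstar2}, which for the empirical-mean predictor \cref{def:predictor_sequence2} gives, pathwise and for every fixed comparator $m^*\in[0,1]^{S\times A}$,
\[
\sum_{t,s,a}\I_t(s,a)(\ell_t(s,a)-m_t(s,a))^2\le \sum_{t,s,a}\I_t(s,a)(\ell_t(s,a)-m^*(s,a))^2+SA\ln(T)+SA .
\]
I will take expectations and make a separate choice of $m^*$ for each quantity in the minimum. Because the additive term is exactly $SA\ln(T)+SA$ with no multiplicative constant, the work is to bound $\E[\sum_{t,s,a}\I_t(s,a)(\ell_t-m^*)^2]$ with constant $1$ in every case.

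\emph{Adversarial part.} These steps follow the proof of \cref{lem:general_predict_result}.
\begin{itemize}
\item With $m^*\equiv0$, use $\ell^2\le\ell$ and $\sum_{s,a}\I_t(s,a)\ell_t(s,a)$ equal to the realized episode loss. Its conditional expectation is $V^{\pi_t}(s_0;\ell_t)$, so the sum is $L^\star+\Reg_T$.
\item With $m^*\equiv1$, use $(1-\ell)^2\le1-\ell$. This gives $HT-L^\star-\Reg_T$.
\item With $m^*=\ell^\star$, the minimizer in \cref{def:second-order_bound}, exactly one pair is visited per layer. Hence $\sum_{(s,a)\in\calS_h\times\calA}\I_t(s,a)(\ell_t-\ell^\star)^2\le\|\ell_t(h)-\ell^\star(h)\|_\infty^2$, and the sum is at most $Q_\infty$.
\end{itemize}
Taking the minimum over these three choices gives the first claim with constant $1$.

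\emph{Corrupted-stochastic part.} I take $m^*=\mu$ and need
\[
\E\Bigl[\sum_{t,s,a}\I_t(s,a)(\ell_t-\mu)^2\Bigr]\le \V T+\calC .
\]
Writing $x=\ell_t-\mu$ and $y=\ell'_t-\mu$, the uncorrupted piece is handled as follows. Since $\ell'_t$ is drawn independently of the learner's trajectory in episode $t$, $\E_t[\I_t(s,a)y^2]=q^{\pi_t}(s,a)\sigma^2(s,a)$. Hence $\E[\sum_{t,s,a}\I_t y^2]=\E\bigl[\sum_t\sum_{s,a}q^{\pi_t}(s,a)\sigma^2(s,a)\bigr]\le\V T$ by \cref{def:total_variance}. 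The corruption piece should be charged to $\sum_h\|\ell'_t(h)-\ell_t(h)\|_\infty$, again using one visited pair per layer. So I want the per-visit inequality $x^2\le y^2+|x-y|$, possibly only in conditional expectation.

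\emph{Main obstacle.} Pointwise, $x^2-y^2=(x-y)(x+y)$, and $|x+y|$ can exceed $1$. For example, $\mu=0$, $\ell'=1/2$, $\ell=1$ gives $x^2-y^2=3/4$ against $|x-y|=1/2$. The naive split $(x)^2\le2y^2+2(x-y)^2$ used in \cref{lem:general_predict_result} only yields $2(\V T+\calC)$.

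I will therefore first try three fixes, in this order.
\begin{enumerate}
\item Replace $m^*=\mu$ by a layer-wise comparator chosen between $\mu$ and $\ell_t$, using the freedom of \cref{lem:m_to_mstar2} in $m^*$.
\item Bound the cross term $2y(x-y)$ in conditional expectation, using that $y$ has conditional mean zero.
\item Exploit that $x$ and $y$ lie in the common interval $[-\mu,1-\mu]$ of length $1$.
\end{enumerate}
My own example suggests option 2 alone cannot work when the corruption is chosen after seeing $\ell'_t$. Let $\ell'\in\{0,1/2\}$ and corrupt $1/2\mapsto1$. Then the expected increase of $(\ell-\mu)^2$ exceeds the expected $|\ell-\ell'|$.

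So I expect that exactly this step determines whether the stated constant $1$ is attainable. If none of the three approaches succeeds, the honest conclusion is that the argument gives only $\V T+\calC$ up to an absolute factor: $\lesssim \V T+\calC+SA\ln(T)$. That is all that \cref{thm:Occup_OPT2,thm:Policy_OPT2} actually use.
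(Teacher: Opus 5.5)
Your adversarial part is correct and matches the paper's proof exactly. Apply \cref{lem:m_to_mstar2} with the fixed comparators $m^*\equiv 0$, $m^*\equiv 1$ and $m^*=\ell^\star$; this gives the three terms of the minimum with constant $1$.

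For the corrupted-stochastic part your suspicion is right, and no trick can close it: the second inequality is \emph{false} with constant $1$, so none of your three fixes can succeed. Here is a counterexample.
\begin{itemize}
\item Take $H=S=A=1$, so a single pair is visited every episode.
\item Let $\ell'_t=1/2$ with probability $p$ and $\ell'_t=0$ otherwise, and corrupt by $\ell_t=2\ell'_t$. This is allowed, since corruptions are arbitrary.
\item Then $\V=\sigma^2=p(1-p)/4$ and $\calC=pT/2$.
\item The observed losses are i.i.d.\ Bernoulli$(p)$ and $m_t$ is their running empirical mean. Hence $\E[(\ell_t-m_t)^2]\ge p(1-p)$ for every $t$.
\end{itemize}
Therefore
\[
\E\Bigl[\sum_{t=1}^T(\ell_t-m_t)^2\Bigr]-\bigl(\V T+\calC+\ln(T)+1\bigr)\ \ge\ \frac{p(1-3p)}{4}\,T-\ln(T)-1,
\]
which is positive for, e.g., $p=1/10$ and $T\ge 10^3$.

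Your own example did not settle this. It only shows that the comparator $m^*=\mu$ is too weak, whereas $m_t$ tracks the mean of the \emph{corrupted} losses. The decisive step is to compute the actual left-hand side. Two smaller points about your examples: the pointwise one with $\mu=0$ is not realizable, since $\mu=0$ forces $\ell'\equiv 0$; and the distributional one needs $\Pr[\ell'=1/2]<1/2$. The same instance also rules out your fix 1. \cref{lem:m_to_mstar2} only permits a time-invariant comparator, and the best one here, $m^*\equiv p$, already gives $p(1-p)T$.

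The paper's proof has exactly the gap you located. It takes $m^*\equiv\mu$ and proceeds ``as in'' \cref{eq:general_variance_bound}. That computation uses $(x+y)^2\le 2x^2+2y^2$, so it yields $2(\V T+\calC)+SA\ln(T)+SA$; the factor $2$ is silently dropped in the statement.

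The provable version is your fallback: constant $2$, or $\lesssim$. It follows from the naive split together with your correct handling of the two pieces:
\begin{itemize}
\item the noise term, via $\E_t[\I_t(s,a)(\ell'_t(s,a)-\mu(s,a))^2]=q^{\pi_t}(s,a)\sigma^2(s,a)$;
\item the corruption term, via one visited pair per layer.
\end{itemize}
This weaker form is all that \cref{thm:Occup_OPT2,thm:Policy_OPT2} use. Instead of trying the three fixes, prove the factor-$2$ bound and record the counterexample to the constant-$1$ claim.
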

\begin{proof}
The argument follows the same lines as \cref{lem:general_predict_result}.

By using \cref{lem:m_to_mstar2}, for any $m^*_t \in [0,1]^{S\times A}$, we obtain
\begin{align}
    \E\brk*{\sum_{t=1}^T\sum_{s,a}\I_t(s,a)(\ell_t(s,a) - m_t(s,a))^2}
    &\leq \E\brk*{\sum_{t = 1}^T\sum_{s,a} \I_t(s,a)\prn*{\ell_t(s,a) - m^*(s,a)}^2} + SA\ln(T)+SA.\label{eq:mean_base}
\end{align}

\paragraph{1. First-order bound.}
Taking $m^*(s,a)\equiv 0$ in \cref{eq:mean_base} and proceeding as in \cref{eq:general_first-order_1} yields
\begin{align}
    \E\brk*{\sum_{t=1}^T\sum_{s,a}\I_t(s,a)(\ell_t(s,a) - m_t(s,a))^2}
    &\leq \prn*{L^\star + \Reg_T} + SA\ln(T)+SA. \label{eq:general_first-order_1_mean}
\end{align}
Similarly, taking $m^*(s,a)\equiv 1$ and proceeding as in \cref{eq:general_first-order_2} yields
\begin{align}
    \E\brk*{\sum_{t=1}^T\sum_{s,a}\I_t(s,a)(\ell_t(s,a) - m_t(s,a))^2}
    &\leq \prn*{HT - L^\star  - \Reg_T} + SA\ln(T)+SA.\label{eq:general_first-order_2_mean}
\end{align}

\paragraph{2. Second-order bound.}
Since \cref{eq:mean_base} holds for any $m^\star$, the same argument as in \cref{eq:general_second-order} gives
\begin{align}
    \E\brk*{\sum_{t=1}^T\sum_{s,a}\I_t(s,a)(\ell_t(s,a) - m_t(s,a))^2}
    &\leq Q_{\infty} + SA\ln(T)+SA.\label{eq:general_second-order_mean}
\end{align}

Combining \cref{eq:general_first-order_1_mean,eq:general_first-order_2_mean,eq:general_second-order_mean}, we get 
\begin{align}
    \E\brk*{\sum_{t=1}^T\sum_{s,a}\I_t(s,a)(\ell_t(s,a) - m_t(s,a))^2} 
    &\leq \min\set*{L^\star  + \Reg_T, HT - L^\star  - \Reg_T, Q_\infty}  + SA\ln(T)+SA.
\end{align}

\paragraph{3. Stochastic variance bound.}
In the stochastic regime with adversarial corruption, we take $m^\star\equiv \mu$ and proceed as in~\cref{eq:general_variance_bound} to obtain
\begin{align}
     \E\brk*{\sum_{t=1}^T\sum_{s,a}\I_t(s,a)(\ell_t(s,a) - m_t(s,a))^2} 
    &\leq \prn*{\calC + \V T}  + SA\ln(T)+SA,
\end{align}
which completes the proof.
\end{proof}

Finally, we generalize a self-bounding argument that appears in
\citet[Appendix H]{dann2023best} and \citet[Appendix A.1]{jin2020simultaneously},
which is useful in deriving gap-dependent bounds in the stochastic regime with adversarial corruption.
We first note that in the stochastic regime with adversarial corruption, the regret is lower bounded as follows:
\begin{lem}[{\citealt[Section 2.1]{jin2020learning}}]
\label{lem:corruption_self-bounding}
Under the stochastic regime with adversarial corruption, for any sequence of policies $\{\pi_t\}_{t=1}^T$, the regret satisfies the following $(\Delta,2\calC,T)$ self-bounding constraint:
\begin{equation}
    \Reg_T\geq\E\brk[\Bigg]{\sum_{t=1}^T \sum_{s}\sum_{a\neq \pist(s)} q^{\pi_t}(s,a)\,\Delta(s,a)} - 2\,\calC. \label{eq:adversarial_regime_with_self-bounding_constraints}
\end{equation}
\end{lem}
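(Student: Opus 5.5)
The plan is to compare the learner with the uncorrupted problem and then use the gap decomposition of the mean-loss regret. Write $\Reg_T$ as the worst case over comparators $\pi$, and pick $\pi=\pist$, which minimizes $V^{\pi}(s_0;\mu)$. For any fixed $\pi$ we have
\[
\Reg_T \ge \E\Bigl[\sum_{t}\bigl(V^{\pi_t}(s_0;\ell_t)-V^{\pist}(s_0;\ell_t)\bigr)\Bigr].
\]
We then replace $\ell_t$ by $\ell'_t$ in both value terms. Since $V^{\pi}(s_0;\ell)=\sum_{h}\sum_{(s,a)\in\calS_h\times\calA}q^{\pi}(s,a)\ell(s,a)$ and $\sum_{(s,a)\in\calS_h\times\calA}q^\pi(s,a)=1$ in each layer, we get $|V^{\pi}(s_0;\ell_t)-V^{\pi}(s_0;\ell'_t)|\le\sum_h\|\ell_t(h)-\ell'_t(h)\|_\infty$. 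Summing over $t$ and taking expectations, each of the two substitutions costs at most $\calC$, which gives the $-2\calC$ term.

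Next we pass from $\ell'_t$ to $\mu$. The policy $\pi_t$ is $\calF_{t-1}$-measurable, and $\ell'_t$ is drawn i.i.d.\ independently of the past and of the learner's fresh randomness. Hence $\E_t[V^{\pi_t}(s_0;\ell'_t)]=V^{\pi_t}(s_0;\mu)$, and likewise for the fixed $\pist$. This reduces the claim to
\[
\E\Bigl[\sum_t V^{\pi_t}(s_0;\mu)-V^{\pist}(s_0;\mu)\Bigr]-2\calC .
\]

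The final step applies the performance-difference lemma \citep{kakade2002approximately} with baseline $\pist$:
\[
V^{\pi_t}(s_0;\mu)-V^{\pist}(s_0;\mu)=\sum_s q^{\pi_t}(s)\sum_a\pi_t(a\mid s)\bigl(Q^{\pist}(s,a;\mu)-V^{\pist}(s;\mu)\bigr).
\]
Because $\pist$ is optimal for $\mu$ (we may take it deterministic and optimal at every state), $V^{\pist}(s;\mu)=\min_{a'}Q^{\pist}(s,a';\mu)$. So the inner bracket equals $\Delta(s,a)$, which is $0$ when $a=\pist(s)$. Therefore the difference is $\sum_s\sum_{a\ne\pist(s)}q^{\pi_t}(s,a)\Delta(s,a)$, and this yields \cref{eq:adversarial_regime_with_self-bounding_constraints}.

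The main subtlety is justifying $V^{\pist}(s;\mu)=\min_{a'}Q^{\pist}(s,a';\mu)$ at every state, including states that $\pist$ does not reach. This holds if $\pist$ is chosen by backward induction, which the definition of $\Delta$ implicitly assumes. Otherwise one would need $\Delta\ge0$ only on reachable states. The adaptivity of the corruption also needs care: $\calC$ is defined in expectation, so all steps are carried out in expectation, and no almost-sure statement is needed.
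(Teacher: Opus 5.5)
Your argument is correct and is the standard one behind the cited result. You pay $2\calC$ to pass from $\ell_t$ to $\ell'_t$ via the layerwise bound $|V^{\pi}(s_0;\ell_t)-V^{\pi}(s_0;\ell'_t)|\le\sum_h\|\ell_t(h)-\ell'_t(h)\|_\infty$, average $\ell'_t$ to $\mu$ using its independence from $\calF_{t-1}$, and apply the performance-difference lemma with baseline $\pist$.

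One correction to your closing remark: choosing $\pist$ optimal at \emph{every} state is required, not just convenient. The exact performance-difference identity gives the per-episode regret as the claimed gap sum minus $\sum_s q^{\pi_t}(s)\bigl(1-\pi_t(\pist(s)\mid s)\bigr)\Delta(s,\pist(s))$. If $\pist$ is only optimal from $s_0$, then $\Delta(s,\pist(s))$ can be positive at states that $\pi_t$ reaches but $\pist$ does not, and the stated inequality can genuinely fail. Your fallback about ``$\Delta\ge 0$ on reachable states'' does not rescue this, since $\Delta\ge 0$ always holds trivially.
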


We now use \Cref{lem:corruption_self-bounding} to prove the following lemma based on the self-bounding argument.
\begin{lem}
\label{general_self_bounding}
Let $G(s,a)$ be any nonnegative function and $J>0$. Suppose that
    \begin{align}
        \Reg_T \lesssim \sum_{s}\sum_{a\neq\pi^\star(s)}G(s,a)\sqrt{\E\brk*{\sum_{t=1}^T q^{\pi_t}(s,a)}} + J.
    \end{align}
Then, under the stochastic regime with adversarial corruption, it holds that
    \begin{align}
        \Reg_T \lesssim U + \sqrt{U\calC} + J
        \quad
        \mbox{for}
        \quad
        U = \sum_{s}\sum_{a\neq\pi^\star(s)} \frac{G(s,a)^2}{\Delta(s,a)}
        .
    \end{align}
\end{lem}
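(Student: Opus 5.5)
The plan is the standard self-bounding argument, combining the assumed upper bound with the lower bound of \cref{lem:corruption_self-bounding}. Write $x(s,a) \coloneqq \E\brk{\sum_{t=1}^T q^{\pi_t}(s,a)}$ for $a\neq\pist(s)$. By \cref{lem:corruption_self-bounding}, $\sum_{s}\sum_{a\neq\pist(s)} x(s,a)\Delta(s,a) \le \Reg_T + 2\calC$. Fix any $\lambda\in(0,1]$. Writing the hypothesis with an explicit absolute constant $c$, we have
\begin{align}
\Reg_T = (1+\lambda)\Reg_T - \lambda\Reg_T \le (1+\lambda)\prn*{c\sum_{s}\sum_{a\neq\pist(s)} G(s,a)\sqrt{x(s,a)} + cJ} - \lambda\sum_{s}\sum_{a\neq\pist(s)} x(s,a)\Delta(s,a) + 2\lambda\calC.
\end{align}

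Next, treat each pair $(s,a)$ with $a\neq\pist(s)$ separately. Maximize the scalar function $x\mapsto (1+\lambda)cG\sqrt{x} - \lambda\Delta x$ over $x\ge0$. Its maximum is $\frac{(1+\lambda)^2c^2G^2}{4\lambda\Delta}$. Summing over the pairs gives
\begin{align}
\Reg_T \le \frac{(1+\lambda)^2c^2}{4\lambda}\,U + (1+\lambda)cJ + 2\lambda\calC \lesssim \frac{U}{\lambda} + J + \lambda\calC.
\end{align}
Here $(1+\lambda)^2\le 4$ because $\lambda\le1$. If some $\Delta(s,a)=0$, then $U=\infty$ and the claim is vacuous, so we may assume every relevant gap is positive.

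The last step is to choose $\lambda$. Take $\lambda = \min\set{1,\sqrt{U/\calC}}$. If $\calC\le U$, then $\lambda=1$ and the bound becomes $U + J + \calC \lesssim U + J$. Otherwise $\lambda=\sqrt{U/\calC}$, and the bound becomes $\sqrt{U\calC}+J$. Either way, $\Reg_T\lesssim U+\sqrt{U\calC}+J$.

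The main point needing care is the bookkeeping. The hypothesis holds only up to the constant hidden in $\lesssim$, so that constant must be made explicit before the regret is split. This is also why the multiplier $1+\lambda$ appears, rather than an arbitrary parameter. There is no substantive obstacle beyond this.
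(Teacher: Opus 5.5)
Your proof is correct and follows the paper's argument: both combine the hypothesis with the self-bounding constraint of \cref{lem:corruption_self-bounding}, bound each $(s,a)$ term by AM--GM (equivalently, by maximizing the concave function of $\sqrt{x}$), and tune the parameter as $\min\{\mathrm{const},\sqrt{U/\calC}\}$. The only difference is cosmetic. You split $\Reg_T=(1+\lambda)\Reg_T-\lambda\Reg_T$, whereas the paper bounds $\Reg_T\le\alpha(\Reg_T+2\calC)+\cdots$ and absorbs the $\alpha\Reg_T$ term; the two agree under the reparametrization $\lambda=\alpha/(1-\alpha)$.
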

\begin{proof}
By \cref{lem:corruption_self-bounding}, in the stochastic regime with adversarial corruption it holds that $\Reg_T \geq \E\brk*{\sum_{t=1}^T \sum_s \sum_{a\neq\pist(s)}q^{\pi_t}(s, a)\Delta(s,a) } - 2\mathcal{C}$. Then, for any $\alpha\in(0,1/2]$, we obtain
    \begin{align}
        \Reg_T &\leq c\sum_{s}\sum_{a\neq\pi^\star(s)}G(s,a)\sqrt{\E\brk*{\sum_{t=1}^T q^{\pi_t}(s,a)}} + cJ \tag{for some absolute constant $c$}\\
        &\leq  \sum_{s}\sum_{a\neq\pi^\star(s)}G(s,a)\prn*{\frac{\alpha}{G(s,a)}\E\brk*{\sum_{t=1}^T q^{\pi_t}(s, a)\Delta(s,a)} + \frac{c^2G(s,a)}{\alpha\Delta(s,a)}} + cJ \tag{by the AM--GM inequality}\\
        &\leq \alpha\E\brk*{\sum_{t=1}^T\sum_{s}\sum_{a\neq\pi^\star(s)} q^{\pi_t}(s, a)\Delta(s,a)} + \sum_{s}\sum_{a\neq\pist(s)}\frac{c^2G(s,a)^2}{\alpha\Delta(s,a)} + cJ\\
        &\leq \alpha\prn*{\Reg_T + 2\calC} + \sum_{s}\sum_{a\neq\pist(s)}\frac{c^2G(s,a)^2}{\alpha\Delta(s,a)} + cJ.
    \end{align}
    Choosing $\alpha=\min\set*{\frac12,\sqrt{\frac{U}{\calC}}}$ with $U = \sum_s\sum_{a\neq\pi^\star(s)}\frac{G(s,a)^2}{\Delta(s,a)}$ and absorbing the $\alpha\Reg_T$ term yields
    \begin{align}
        \Reg_T \lesssim U + \sqrt{U\calC} + J.
    \end{align}
\end{proof}
\section{Proofs of Regret Lower Bounds (deferred from \Cref{sec:lower_bounds})}
\label{app:lower_bounds}

In this section, we provide complete proofs of the lower bounds stated in Section~\ref{sec:lower_bounds}.
We first establish an information-theoretic lower bound under a convenient stochastic loss model on a layered MDP with uniform transitions (Lemma~\ref{lem:normal_lower_bound_ber}).
We then prove Theorems~\ref{thm:lower_bound1}--\ref{thm:lower_bound3} and Theorem~\ref{thm:lower_bound4}, which are detailed versions of Theorem~\ref{thm:adversarial_data_lb} and Theorem~\ref{thm:variance_lower_bound}, respectively.

Here, we write $\ber(p)$ for the Bernoulli distribution with mean $p$
and $\unif(\calA)$ for the uniform distribution over $\calA$. We also use $\KL(\P,\P')$ to denote the Kullback--Leibler (KL) divergence between distributions $\P$ and $\P'$, and use $\kl(p,q)$ to denote the KL divergence between Bernoulli distributions with means $p$ and $q$.
We also define the regret without expectation given by
\begin{equation}
\Regg_T(\pi) \coloneqq \sum_{t=1}^T V^{\pi_t}(s_0;\ell_t) - \sum_{t=1}^T V^{\pi}(s_0;\ell_t).
\end{equation}
Note that it holds that $\mathsf{Reg}_T = \max_{\pi \in \Pi} \E[\Regg_T(\pi)]$.

\subsection{General Regret Lower Bound for Tabular MDPs}
We first state a general regret lower bound for tabular MDPs due to \citet{zimin2013online,tsuchiya2025reinforcement}. We include the proof to make the constants and the dependence on $H$, $S$, $A$, and $T$ explicit. The hard instance constructed for this lower bound will also be used in the proof of \Cref{thm:variance_lower_bound}.

\begin{lem}[{\citealt{tsybakov2009non}}]
\label{lem: kl-ber}
    Let $p,q \in [0,1]$. Then the KL divergence between Bernoulli distributions with parameters $p, q$ satisfies
    \begin{align}
        \mathrm{kl}(p,q) \leq \chi^2(p,q) = \frac{(p-q)^2}{q(1-q)}.
    \end{align}
\end{lem}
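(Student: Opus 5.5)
The claim is $\kl(p,q)\le \chi^2(p,q)=\frac{(p-q)^2}{q(1-q)}$, a standard inequality. I will derive it from concavity of the logarithm, i.e.\ Jensen's inequality. The Bernoulli case is an instance of the general fact $\KL(P,Q)\le\chi^2(P,Q)$ for distributions on a finite set.

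Write $\kl(p,q)=p\ln\frac{p}{q}+(1-p)\ln\frac{1-p}{1-q}$, with the convention $0\ln 0=0$. Assume $q\in(0,1)$; otherwise the right-hand side is $+\infty$ (or the case is degenerate) and there is nothing to prove. View $\kl(p,q)=\E_{X\sim P}[\ln(P(X)/Q(X))]$ with $P=\ber(p)$ and $Q=\ber(q)$. Since $\ln$ is concave, Jensen's inequality gives
\begin{align}
\kl(p,q)\le \ln \E_{X\sim P}\brk*{\frac{P(X)}{Q(X)}} = \ln\prn*{\frac{p^2}{q}+\frac{(1-p)^2}{1-q}}.
\end{align}
Here the expectation is restricted to the support of $P$, which is harmless because outcomes with zero mass under $P$ contribute nothing.

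Next I compute the argument of the logarithm. Over the common denominator $q(1-q)$,
\begin{align}
\frac{p^2}{q}+\frac{(1-p)^2}{1-q} = \frac{p^2(1-q)+(1-p)^2q}{q(1-q)}.
\end{align}
The numerator equals $p^2-2pq+q$. Writing $q = q^2 + q(1-q)$, this becomes $(p-q)^2+q(1-q)$. Hence
\begin{align}
\frac{p^2}{q}+\frac{(1-p)^2}{1-q} = 1+\frac{(p-q)^2}{q(1-q)} = 1+\chi^2(p,q).
\end{align}

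Finally, $\ln(1+x)\le x$ for $x\ge 0$ gives $\kl(p,q)\le \chi^2(p,q)$. There is no real obstacle. The only care needed is with the edge cases $p\in\{0,1\}$ and $q\in\{0,1\}$, which the convention $0\ln 0=0$ and the restriction to the support of $P$ handle.
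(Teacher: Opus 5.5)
Your proof is correct, including the algebra. The paper gives no argument of its own and simply cites Tsybakov. Your route is the standard one from that reference: Jensen's inequality for the concave logarithm gives $\mathrm{kl}(p,q)\le \ln(1+\chi^2(p,q))$, and then $\ln(1+x)\le x$ finishes. Your treatment of the boundary cases, restricting to the support of $P$ and assuming $q\in(0,1)$, is adequate; the paper only applies the bound with $q=1/2$.
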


We now consider the following instance of online episodic tabular MDPs to prove a lower bound.
Let $\tilde{\calS} = \calS \setminus \set{s_0}$ (note that, for simplicity, we define the state space $\calS$ to exclude the terminal state $s_H$).
We assume that $(S-1)/(H-1)$ is an integer and that each non-initial layer has the same number of states, i.e.,
$S' := |\mathcal{S}_h| = (S-1)/(H-1)$ for all $h \neq 0,H$.
The non-integral case can be treated by a standard floor/ceiling adjustment, which changes the bounds only by constant factors.
We use
\begin{align}
  N_T(s, a) =
  \sum_{t = 1}^T \frac{1}{S'}\pi_t(a \mid s)
\end{align}
to denote the expected number of times the state-action pair $(s,a) \in \tilde{\calS} \times \calA$ is visited.

We then define the following episodic MDP with stochastic loss models. 
The models are specified as follows:
\begin{itemize}
  \item Transitions occur uniformly at random to states in the next layer. Specifically, for any $(s, a) \in \calS_h \times \calA$, it holds that $P(s' \mid s, a) = 1 / \abs{\calS_{h+1}}$ for all $s' \in \calS_{h+1}$.
  \item All random losses $\ell_t(s,a)$ are assumed to be independent. For policy $\tilpi\in\Pidet$ and $\epsilon\in(0,1/2]$, we consider the following two stochastic loss models:
    \begin{align}
        \ell^{(\tilpi)}_t(s,a)
        &\sim
        \begin{cases}
          \ber(1/2) & \text{if } a = \tilpi(s), \\
          \ber(1/2 + \epsilon) & \text{otherwise},
        \end{cases} \\[6pt]
        \ell^{(\tilpi,\stil)}_t(s,a)
        &\sim
        \begin{cases}
          \ber(1/2) & \text{if } a = \tilpi(s) \text{ and } s \neq \stil, \\
          \ber(1/2 + \epsilon) & \text{otherwise}.
        \end{cases}
    \end{align}
\end{itemize}
These specifications define two episodic MDP instances, denoted by $\calM(\tilpi)$ and $\calM(\tilpi,\stil)$, respectively.

Let $\P_{\tilpi}$ and $\P_{\tilpi,\stil}$ be the probability distribution induced by $\calM(\tilpi)$ and $\calM(\tilpi,\stil)$, respectively.  
We also denote by $\E_{\prn*{\P_{\tilpi}}}[\cdot]$ and $\E_{\prn*{\P_{\tilpi,\stil}}}[\cdot]$ the expectations under the MDPs induced by $\calM(\tilpi)$ and $\calM(\tilpi,\stil)$, respectively.

\begin{lem}
\label{lem:normal_lower_bound_ber}
    Suppose that $H \geq 3$, $A \geq 3$ and $T \geq \frac{SA}{8H}$. Then, for any policy $\{\pi_t\}_{t=1}^T$, there exists $\tilpi\in\Pidet$ such that
\begin{align}
     \max_{\pi \in \Pi} \E_{\prn*{\P_{\tilpi}}}\brk*{ \Regg_T(\pi) } 
    \geq  c\sqrt{HSAT}.
\end{align}
Here the expectation is with respect to $\P_{\tilpi}$ and $c = \frac{\sqrt{2}}{16}\prn*{\frac{1}{2} - \frac{1}{A}}^2$.
\end{lem}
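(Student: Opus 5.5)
We use the standard averaging-plus-change-of-measure argument on the uniform-transition instance $\calM(\tilpi)$. We show that the regret averaged over $\tilpi$ drawn uniformly from $\Pidet$ is at least $c\sqrt{HSAT}$; some $\tilpi$ then attains at least the average.

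\textbf{Step 1: regret decomposition.} Under $\calM(\tilpi)$, take the comparator $\pi=\tilpi$. Transitions are uniform and independent of actions, so every policy visits each state $s\in\calS_h$, $h\ge1$, with probability $1/S'$. The expected per-episode loss gap is therefore $\epsilon\sum_{s\in\tilde\calS}\frac{1}{S'}(1-\pi_t(\tilpi(s)\mid s))$, ignoring the initial layer. Summing over $t$,
\begin{align}
\E_{\P_{\tilpi}}[\Regg_T(\tilpi)] \ge \epsilon\sum_{s\in\tilde\calS}\prn*{\frac{T}{S'} - \E_{\P_{\tilpi}}[N_T(s,\tilpi(s))]}.
\end{align}

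\textbf{Step 2: change of measure.} For each $\stil\in\tilde\calS$, compare $\P_{\tilpi}$ with $\P_{\tilpi,\stil}$. Under the latter, every action at $\stil$ has mean $1/2+\epsilon$, so the law is independent of $\tilpi(\stil)$. Because $N_T(\stil,a)\in[0,T/S']$, Pinsker's inequality gives
\begin{align}
\E_{\P_{\tilpi}}[N_T(\stil,a)] \le \E_{\P_{\tilpi,\stil}}[N_T(\stil,a)] + \frac{T}{S'}\sqrt{\tfrac12\KL(\P_{\tilpi,\stil},\P_{\tilpi})}.
\end{align}
The divergence chain rule, together with \cref{lem: kl-ber}, yields
\begin{align}
\KL(\P_{\tilpi,\stil},\P_{\tilpi}) = \E_{\P_{\tilpi,\stil}}[N_T(\stil,\tilpi(\stil))]\,\kl(1/2+\epsilon,1/2) \le 4\epsilon^2\,\E_{\P_{\tilpi,\stil}}[N_T(\stil,\tilpi(\stil))],
\end{align}
using $\kl(1/2+\epsilon,1/2)\le\epsilon^2/(1/4)$. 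The only pair whose loss law differs between the two models is $(\stil,\tilpi(\stil))$. Since only visited losses are observed, the divergence counts expected visits to that pair.

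\textbf{Step 3: average and tune $\epsilon$.} Average over $\tilpi(\stil)$ uniform on $\calA$, with the other coordinates fixed. The reference measure does not depend on $\tilpi(\stil)$, so $\sum_a\E_{\P_{\tilpi,\stil}}[N_T(\stil,a)]=T/S'$. Combining this with Jensen and Cauchy--Schwarz,
\begin{align}
\frac1A\sum_a\E_{\P_{\tilpi}}[N_T(\stil,a)] \le \frac{T}{S'}\prn*{\frac1A + \epsilon\sqrt{\frac{2T}{S'A}}}.
\end{align}
Substituting into Step 1 and summing over the $(H-1)S'\ge S/2$ states gives
\begin{align}
\text{average regret} \ge \epsilon (H-1) T\prn*{1-\frac1A-\epsilon\sqrt{\frac{2T}{S'A}}}.
\end{align}
Choose $\epsilon=\frac{1}{2}\prn*{\frac12-\frac1A}\sqrt{S'A/(2T)}$, using $S'\approx S/H$. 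This yields order $(\frac12-\frac1A)^2\sqrt{HSAT}$ with the constant $c$ stated in \cref{lem:normal_lower_bound_ber}, after bookkeeping with $H-1\ge 2H/3$ and $S-1\ge S/2$.

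The assumption $T\ge SA/(8H)$ is exactly what makes this $\epsilon\le1/2$, so the Bernoulli means stay valid.

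\textbf{Main obstacle.} The delicate part is the constant bookkeeping in Step 3. This includes the $S'$ versus $S/H$ conversion, the restriction of $\epsilon$ to $(0,1/2]$, and handling the $(1/2-1/A)$ factor instead of the cruder $(1-1/A)$ so that the square matches the stated $c$. I would first run the computation without tracking constants, then fix the choice of $\epsilon$ to reproduce $c$.
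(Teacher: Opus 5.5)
Your argument is the paper's proof. It averages $\tilde{\pi}$ uniformly over deterministic policies and uses the comparison instance that raises the mean at $(\tilde{s},\tilde{\pi}(\tilde{s}))$. It then applies Pinsker and the KL chain rule with $\mathsf{kl}(1/2+\epsilon,1/2)\le 4\epsilon^2$, and tunes $\epsilon$. The only differences are cosmetic. You apply Jensen over actions at each state, where the paper uses Cauchy--Schwarz across states. You also keep $(S-1)/S'=H-1$ exactly, which gives the bracket $1-\frac1A-\epsilon\sqrt{2T/(S'A)}$ instead of the paper's cruder $\frac12-\frac1A-\epsilon\sqrt{2HT/(SA)}$.

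One step fails as written: your choice $\epsilon=\frac12(\frac12-\frac1A)\sqrt{S'A/(2T)}$ need not satisfy $\epsilon\le 1/2$. Since $S\ge H$, we have $S'=(S-1)/(H-1)\ge S/H$, and for $H=3$ the ratio $S'/(S/H)$ approaches $\frac32$. The hypothesis $T\ge SA/(8H)$ only gives $S'A/(2T)\le 4HS'/S\le 6$. So $\epsilon$ can approach $\frac{\sqrt6}{2}(\frac12-\frac1A)$, which exceeds $\frac12$ for large $A$. For example, $H=3$, $S'=100$, $A=20$, $T=168$ gives $\epsilon\approx 0.55$, and then $\mathsf{Ber}(1/2+\epsilon)$ is undefined.

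The fix is the paper's choice $\epsilon=(\frac14-\frac{1}{2A})\sqrt{SA/(2HT)}$. The hypothesis bounds this by $\frac12-\frac1A$. Because $S/H\le S'$, your penalty term $\epsilon\sqrt{2T/(S'A)}$ is still at most $\frac12(\frac12-\frac1A)$. Your bound then yields average regret at least $\frac13(\frac12-\frac1A)^2\sqrt{HSAT/2}$, which is at least $c\sqrt{HSAT}$.
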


\begin{proof}
We can lower bound the regret under $\calM(\tilpi)$ as
\begin{align}
  \max_{\pi \in \Pi}\E_{\prn*{\P_{\tilpi}}} \brk{\Regg_T(\pi)}
  &\geq \epsilon (H-1)T  - \epsilon \sum_{s \in \tilde{\calS}} \E_{\prn*{\P_{\tilpi}}}  \brk*{ N_T(s, \tilpi(s)) }\\
  &= \frac{\epsilon T}{S'} \prn*{S'(H-1) - \frac{S'}{T}\sum_{s \in \tilde{\calS}} \E_{\prn*{\P_{\tilpi}}}  \brk*{ N_T(s, \tilpi(s)) }}\\
  &= \frac{\epsilon T}{S'} \prn*{S-1 - \frac{S'}{T}\sum_{s \in \tilde{\calS}} \E_{\prn*{\P_{\tilpi}}}  \brk*{ N_T(s, \tilpi(s)) }}.
  \label{eq:regret_decompose_lb}
\end{align}

In what follows, we will upper bound $\E_{\P_{\tilpi}} \brk*{ N_T(s, \tilpi(s)) }$.
Note that the only difference between $\calM(\tilpi)$ and $\calM(\tilpi, \stil)$ lies in the expected value of the loss at the state-action pair $(\stil, \tilpi(\stil))$.

Then, using the fact that $\frac{S'}{T} N_T(s, \tilpi(s)) \in [0,1]$ for $s \neq s_0$ and Pinsker's inequality,
for any $\stil \in \calS \setminus \set{s_0}$ we have
\begin{align}
  \frac{S'}{T} \E_{\prn*{\P_{\tilpi}}}  \brk*{ N_T(\stil, \tilpi(\stil)) }
  &\leq \frac{S'}{T} \E_{\prn*{\P_{\tilpi, \stil}}} \brk*{ N_T(\stil, \tilpi(\stil)) } + \|\P_{\tilpi,\stil} - \P_{\tilpi}\|_{\mathrm{TV}} \\
  &\leq \frac{S'}{T} \E_{\prn*{\P_{\tilpi, \stil}}} \brk*{ N_T(\stil, \tilpi(\stil)) } + \sqrt{\frac{1}{2}\KL(\P_{\tilpi,\stil},\P_{\tilpi})}
  .
  \label{eq:bretagnolle_huber}
\end{align}
Then, from the chain rule of the KL divergence,
we can evaluate the KL divergence in the last inequality as
\begin{align}
  \KL(\P_{\tilpi,\stil},\P_{\tilpi}) 
  &= \E_{\prn*{\P_{\tilpi, \stil}}} \brk*{N_T(\stil, \tilpi(\stil))} \, \mathrm{kl}(1/2 + \epsilon, 1/2)\\
  &\leq 4\epsilon^2 \E_{\prn*{\P_{\tilpi, \stil}}} \brk*{N_T(\stil, \tilpi(\stil))} 
  .
  \label{eq:kl_chain_ber}
\end{align}
where we used \cref{lem: kl-ber}.
Taking the uniform average over $\Pidet$ for the RHS of \cref{eq:kl_chain_ber}, for any $\stil \in \tilde{\calS}$ we have
\begin{align}
    &\E_{\tilpi \sim \unif(\Pidet)} \brk*{\E_{\prn*{\P_{\tilpi, \stil}}} \brk*{N_T(\stil, \tilpi(\stil))}}\\
    &= \sum_{a \in \calA}\Pr\brk{ \tilpi(\stil) = a } \E_{\tilpi \sim \unif(\Pidet)}\brk*{ \E_{\prn*{\P_{\tilpi, \stil}}}\brk*{N_T(\stil, \tilpi(\stil))}\middle|\tilpi(\stil) = a}\\
    &=\frac{1}{A}\sum_{a \in \calA}\E_{\tilpi \sim \unif(\Pidet)}\brk*{\E_{\prn*{\P_{\tilpi, \stil}}}\brk*{N_T(\stil, a)}}\\
    &= \frac{T}{S' A},
    \label{eq:unif_Pidet_N}
\end{align}
where the last equality follows from the definition of $N_T$.
By summing over $\stil \in \tilde{\calS}$ in \cref{eq:unif_Pidet_N},
\begin{equation}\label{eq:unif_pi_sum_s_N}
  \sum_{\stil \in \tilde{\calS}}\E_{\tilpi \sim \unif(\Pidet)} \brk*{\E^{(\tilpi,\stil)} \brk*{N_T(\stil, \tilpi(\stil))}}
  =\frac{(S-1)T}{S' A} 
  \leq\frac{ST}{S'A}
  .
\end{equation}
Using the last inequality, we also have
\begin{align}
  &\sum_{\stil \in \tilde{\calS}}\E_{\tilpi \sim \unif(\Pidet)}\brk*{\sqrt{\frac{1}{2} \KL(\P_{\tilpi,\stil},\P_{\tilpi})}}\\
  &\leq \epsilon\sum_{\stil \in \tilde{\calS}}\E_{\tilpi \sim \unif(\Pidet)}\brk*{\sqrt{2 \E_{\prn*{\P_{\tilpi, \stil}}} \brk*{N_T(\stil, \tilpi(\stil))}}}\\
  &\leq \epsilon\sqrt{2(S - 1)\sum_{\stil \in \tilde{\calS}}  \E_{\tilpi \sim \unif(\Pidet)}  \brk* { \E_{\prn*{\P_{\tilpi, \stil}}} \brk*{N_T(\stil, \tilpi(\stil))}}}\\
  &\leq \epsilon\sqrt{\frac{2HST}{A}}
  ,
  \label{eq:lb_upper_ber1}
\end{align}
where the first inequality follows from \cref{eq:kl_chain_ber},
the second inequality follows from the Cauchy--Schwarz inequality and Jensen's inequality,
and the last inequality follows from \cref{eq:unif_pi_sum_s_N}.

Therefore, by \cref{eq:bretagnolle_huber,eq:unif_pi_sum_s_N,eq:lb_upper_ber1},
\begin{align}
    \sum_{\stil \in \tilde{\calS}}\frac{S'}{T} \E_{\tilpi \sim \unif(\Pidet)} \brk*{\E_{\P_{\tilpi}} \brk*{ N_T(\stil, \tilpi(\stil)) }} \leq \frac{S}{A} + \epsilon\sqrt{\frac{2HST}{A}}
    .
    \label{eq:lb_upper_ber2}
\end{align}
Finally, combining everything together, we have
\begin{align}
  \max_{\tilpi \in \Pidet}\set*{\max_{\pi \in \Pi} \E_{\prn*{\P_{\tilpi}}} \brk*{\Regg_T(\pi)}}
  &\geq\E_{\tilpi \sim \unif(\Pidet)}\brk*{ \set*{\max_{\pi \in \Pi}\E_{\prn*{\P_{\tilpi}}} \brk*{\Regg_T(\pi)}}}\\
  &\geq \frac{\epsilon T}{S'}\prn*{\frac{S}{2} - \frac{S'}{T} \sum_{s \in \tilde{\calS}}\E_{\tilpi \sim \unif(\Pidet)} \brk*{ \E_{\prn*{\P_{\tilpi}}} \brk*{ N_T(s, \tilpi(s))}}}
  \tag{by \cref{eq:regret_decompose_lb} and $S - 1 \geq \frac{S}{2}$ when $H \geq 3$} \\
  &\geq \frac{\epsilon T}{S'}\prn*{\frac{S}{2} - \frac{S}{A} - \epsilon \sqrt{\frac{2HST}{A}}} \tag{by \cref{eq:lb_upper_ber2}}\\
  &\geq \frac{\epsilon HT}{2} \prn*{\frac{1}{2} - \frac{1}{A} - \epsilon \sqrt{\frac{2HT}{SA}}} 
  .
  \tag{$\frac{S}{S'} \geq \frac{H}{2}$ when $H \geq 3$}
\end{align}
Choosing the optimal $\epsilon = (\frac{1}{4}-\frac{1}{2A})\sqrt{\frac{SA}{2HT}}$, which lies in $(0, 1/2)$ whenever $A \geq 3$ and $T \geq \frac{SA}{8H}$, we obtain

\begin{align}
    \max_{\tilpi \in \Pidet}\set*{\max_{\pi \in \Pi} \E_{\prn*{\P_{\tilpi}}} \brk*{\Regg_T(\pi)}} \geq c\sqrt{HSAT},
\end{align}
where $c = \frac{\sqrt{2}}{16}\prn*{\frac{1}{2} - \frac{1}{A}}^2$.
\end{proof}
This lower bound is useful for deriving the regret lower bound
$\max_{\pi \in \Pi}\E\brk*{\Regg_T(\pi)}\geq\Omega\prn{\sqrt{HSAT}}$
for online episodic tabular MDPs with adversarial losses.

\subsection{Proof of \Cref{thm:adversarial_data_lb}}
Here we provide the proof of \Cref{thm:adversarial_data_lb}.

\begin{thm}[First-order lower bound]
    \label{thm:lower_bound1}
    Suppose that $H \geq 3$, $A \geq 3$, $T \geq \frac{SA}{8H}$ and $\alpha \in \brk*{\frac{\ceil*{SA/8H}}{T}, 1}$.
    Then, for any policy $\{\pi_t\}_{t=1}^T$, there exists an episodic MDP with adversarial losses satisfying
    \begin{align}
        L^\star = \min_{\pi \in \Pi} \sum_{t=1}^T V^\pi(s_0;\ell_t) \leq \alpha HT
    \end{align}
    such that
     \begin{align}
        \max_{\pi \in \Pi}\E\brk{\Regg_T(\pi)} \geq  \Omega\prn{\sqrt{\alpha HSAT} } =  \Omega\prn{\sqrt{SAL^\star}}.
    \end{align}
\end{thm}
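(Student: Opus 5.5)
We will prove the statement by the truncation reduction described in \cref{sec:lower_bounds}, using \cref{lem:normal_lower_bound_ber} as the source of hardness. Set $T_0 \coloneqq \ceil{\alpha T}$. The assumption $\alpha \ge \ceil{SA/8H}/T$ gives $T_0 \ge SA/(8H)$, so \cref{lem:normal_lower_bound_ber} applies with horizon $T_0$. Also $T_0 \le \alpha T + 1 \le 2\alpha T$.

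Given any learner $\{\pi_t\}_{t=1}^T$, we restrict it to its first $T_0$ episodes. This restriction is itself a valid learner on a $T_0$-episode problem. By \cref{lem:normal_lower_bound_ber}, there is therefore $\tilpi \in \Pidet$ such that the stochastic instance $\calM(\tilpi)$ forces
\begin{align}
\max_{\pi\in\Pi}\E_{(\P_{\tilpi})}\brk*{\sum_{t=1}^{T_0}\prn*{V^{\pi_t}(s_0;\ell_t)-V^{\pi}(s_0;\ell_t)}} \ge c\sqrt{HSAT_0}.
\end{align}

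We then define the $T$-episode environment as follows. For $t\le T_0$, draw $\ell_t$ from the loss model of $\calM(\tilpi)$. For $t>T_0$, set $\ell_t \equiv 0$. An adversary may sample losses from a distribution, so these i.i.d.\ Bernoulli losses are admissible in the adversarial regime. Alternatively, since the maximum over instances is at least the average, some fixed realization of the loss sequence achieves at least the expected regret.

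During the zero-loss phase, every policy incurs zero loss. Hence for every comparator $\pi$ the regret over $T$ episodes equals the regret over the first $T_0$ episodes. It follows that
\begin{align}
\max_\pi \E\brk{\Regg_T(\pi)} \ge c\sqrt{HSAT_0} \ge c\sqrt{\alpha HSAT}.
\end{align}

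For the complexity measure, every loss lies in $[0,1]$ and each episode has $H$ steps. So every policy has value at most $H$ per episode during the active phase and $0$ afterwards. Thus
\begin{align}
L^\star \le \min_\pi \E\brk*{\sum_t V^\pi(s_0;\ell_t)} \le HT_0 \le 2\alpha HT.
\end{align}
The constant $2$ is absorbed: either rescale $\alpha$ by $1/2$, or note that $L^\star \le HT_0$ already yields $\Omega(\sqrt{SAL^\star})$ because the regret is at least $c\sqrt{HSAT_0}$. The displayed definition of $L^\star$ in the statement has no expectation; for a realized sequence the same bound $L^\star \le HT_0$ holds deterministically.

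The main subtlety is this step of passing from the stochastic instance to a single adversarial loss sequence while keeping both the regret bound and the bound on $L^\star$. Because $L^\star \le HT_0$ holds surely, it survives selecting a good realization, so this step should go through. The final equality then follows from $\sqrt{\alpha HSAT} \ge \sqrt{SA\,L^\star/2}$.
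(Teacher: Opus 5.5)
Your proposal is correct and is essentially the paper's truncation argument (the hard instance of \cref{lem:normal_lower_bound_ber} on a prefix of episodes, zero losses afterwards), plus a correct extra step the paper omits: passing from the stochastic instance to a fixed loss sequence by averaging over realizations. The only wrinkle is your choice $T_0=\lceil\alpha T\rceil$, which yields only $L^\star\le 2\alpha HT$, and rescaling $\alpha$ by $1/2$ can violate $T_0\ge SA/(8H)$ at the left end of the allowed range. Instead take $T_0=\lfloor\alpha T\rfloor$, as the paper does: it still satisfies $T_0\ge\lceil SA/(8H)\rceil$ because $\alpha T$ is at least that integer, it gives $L^\star\le\alpha HT$ exactly, and it costs only a factor $\sqrt{2}$ in the regret since $\lfloor\alpha T\rfloor\ge\alpha T/2$.
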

\begin{proof}
    Fix any $\alpha \in \brk*{\frac{\ceil*{SA/8H}}{T}, 1}$ and split the horizon into
    an active phase $t = 1,\dots,\floor{\alpha T}$ and an inactive phase $t = \floor{\alpha T} + 1,\dots,T$.
    In the inactive phase, we set all losses to zero. As a result, it holds that
    \begin{align}
        L^\star \leq \alpha HT.
    \end{align}
    On the other hand, by applying \cref{lem:normal_lower_bound_ber} to the active phase, we obtain
    \begin{align}
        \max_{\pi\in\Pi}\E\brk{\Regg_T(\pi)} \geq  \Omega\prn{\sqrt{\alpha HSAT}} + 0 = \Omega\prn{\sqrt{\alpha HSAT}}.
    \end{align}
\end{proof}
\begin{thm}[Second-order lower bound]
    \label{thm:lower_bound2}
    Suppose that $H \geq 3$, $A \geq 3$, $T \geq \frac{SA}{8H}$ and $\alpha \in \brk*{\frac{\ceil*{SA/8H}}{T}, 1}$.
    Then, for any policy $\{\pi_t\}_{t=1}^T$, there exists an episodic MDP with adversarial losses satisfying
    \begin{align}
        Q_\infty = \min_{\ell^\star\in[0,1]^{S\times A}} \sum_{t = 1}^T\sum_{h = 0}^{H-1} \|\ell_t(h) - \ell^\star(h)\|_\infty^2 \leq \alpha HT
    \end{align}
    such that
     \begin{align}
        \max_{\pi\in\Pi}\E\brk{\Regg_T(\pi)} \geq  \Omega\prn{\sqrt{\alpha HSAT} } =  \Omega\prn{\sqrt{SAQ_\infty }}.
    \end{align}
\end{thm}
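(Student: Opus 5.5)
We follow the truncation reduction used for \cref{thm:lower_bound1}. Fix $\alpha$ in the stated range and set $T_\alpha=\floor{\alpha T}$. The condition $\alpha\ge\ceil{SA/8H}/T$ gives $T_\alpha\ge SA/(8H)$, so \cref{lem:normal_lower_bound_ber} applies with horizon $T_\alpha$. The hard instance has two phases:
\begin{itemize}
\item in the active phase $t\le T_\alpha$, the losses are drawn from the Bernoulli model $\calM(\tilpi)$ of \cref{lem:normal_lower_bound_ber};
\item in the inactive phase $t>T_\alpha$, all losses are set to $\ell_t\equiv 0$.
\end{itemize}
Since the learner's policies on the active phase can be any sequence, \cref{lem:normal_lower_bound_ber} yields some $\tilpi\in\Pidet$ such that the expected regret accumulated on the active phase is at least $c\sqrt{HSAT_\alpha}$. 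The inactive phase adds zero regret for every comparator, so the regret against the best comparator on the whole horizon is at least that of the active phase. Hence $\max_\pi\E[\Regg_T(\pi)]\ge c\sqrt{HSA\floor{\alpha T}}=\Omega(\sqrt{\alpha HSAT})$, using $\floor{\alpha T}\ge \alpha T/2$ whenever $\alpha T\ge1$.

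Next we bound $Q_\infty$ by choosing the baseline $\ell^\star\equiv0$ in its definition. Every loss lies in $[0,1]$, so for each active episode $\sum_{h}\|\ell_t(h)\|_\infty^2\le H$. Every inactive episode contributes $0$. Therefore $Q_\infty\le \E[\sum_{t\le T_\alpha}\sum_h\|\ell_t(h)\|_\infty^2]\le H T_\alpha\le\alpha HT$. This holds both for the realized (random) instance and in expectation. A stochastic instance is a legitimate adversarial instance, since the adversary may sample its losses obliviously. Alternatively, by averaging, some fixed realization attains the same expected regret bound while satisfying the deterministic bound on $Q_\infty$.

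Combining the two bounds, $\sqrt{\alpha HSAT}\ge\sqrt{SA\,Q_\infty}$, which gives $\Omega(\sqrt{SAQ_\infty})$. The main subtlety is the last identification: the statement is read as showing that, for each prescribed budget $\alpha HT$, regret $\Omega(\sqrt{SA\cdot\alpha HT})$ is unavoidable. For the inequality $\Reg_T=\Omega(\sqrt{SAQ_\infty})$ to hold literally, we only need $Q_\infty\le\alpha HT$, and this is exactly what the construction guarantees. No further obstacle arises beyond the floor adjustment and the integrality of $(S-1)/(H-1)$ already handled in \cref{lem:normal_lower_bound_ber}.
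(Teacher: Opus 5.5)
Your proposal is correct and follows the paper's proof: run the Bernoulli hard instance of \cref{lem:normal_lower_bound_ber} for the first $\lfloor\alpha T\rfloor$ episodes, set all later losses to zero, and bound $Q_\infty$ by taking the baseline $\ell^\star\equiv 0$. Your added checks fill in details the paper leaves implicit: that $\lfloor\alpha T\rfloor\ge\lceil SA/8H\rceil$, so the lemma applies on the shortened horizon, and that the stochastic instance is a valid oblivious adversary.
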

\begin{proof}
    Fix any $\alpha \in \brk*{\frac{\ceil*{SA/8H}}{T}, 1}$ and split the horizon into
    an active phase $t = 1,\dots,\floor{\alpha T}$ and an inactive phase $t = \floor{\alpha T} + 1,\dots,T$.
    In the inactive phase, we set all losses to zero. As a result, it holds that
    \begin{align}
        Q_\infty = \min_{\ell^\star\in[0,1]^{S\times A}} \sum_{t = 1}^T\sum_{h = 0}^{H-1} \|\ell_t(h) - \ell^\star(h)\|_\infty^2 \leq \sum_{t = 1}^T\sum_{h = 0}^{H-1} \|\ell_t(h)\|_\infty^2
        \leq \alpha HT.
    \end{align}
    On the other hand, by applying \cref{lem:normal_lower_bound_ber} to the active phase, we obtain
    \begin{align}
        \max_{\pi\in\Pi}\E\brk{\Regg_T(\pi)} \geq  \Omega\prn{\sqrt{\alpha HSAT}} + 0 = \Omega\prn{\sqrt{\alpha HSAT}}.
    \end{align}
\end{proof}

\begin{thm}[Path-length lower bound]
    \label{thm:lower_bound3}
    Suppose that $H \geq 3$, $A \geq 3$, $T \geq \frac{SA}{8H}$ and $\alpha \in \brk*{\frac{\ceil*{SA/8H}}{T}, 1}$.
    Then, for any policy $\{\pi_t\}_{t=1}^T$, there exists an episodic MDP with adversarial losses satisfying
    \begin{align}
        V_1 = \E\brk*{\sum_{t = 1}^{T-1}\nrm*{\ell_{t + 1} - \ell_t}_1} \leq \alpha SAT
    \end{align}
    such that
     \begin{align}
        \max_{\pi\in\Pi}\E\brk{\Regg_T(\pi)} \geq  \Omega\prn{\sqrt{\alpha HSAT} } =  \Omega\prn{\sqrt{HV_1}}.
    \end{align}
\end{thm}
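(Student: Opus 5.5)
The plan follows the truncation reduction already used for \cref{thm:lower_bound1,thm:lower_bound2}. Fix $\alpha \in \brk*{\frac{\ceil*{SA/8H}}{T}, 1}$ and split the episodes into an active phase $t = 1,\dots,T_0$ with $T_0 = \floor{\alpha T}$ and an inactive phase $t = T_0+1,\dots,T$. On the active phase we run the hard instance $\calM(\tilpi)$ of \cref{lem:normal_lower_bound_ber} with $T_0$ episodes; the lower endpoint of $\alpha$ guarantees $T_0 \ge \ceil*{SA/8H} \ge SA/(8H)$, so the lemma applies. In the inactive phase all losses are set to zero, so every policy incurs zero loss there and contributes zero regret. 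Hence there is $\tilpi$ with $\max_{\pi}\E\brk{\Regg_T(\pi)} \ge c\sqrt{HSA T_0} = \Omega(\sqrt{\alpha HSAT})$, using $\floor{\alpha T} \ge \alpha T/2$ when $\alpha T \ge 1$. This instance is stochastic, hence in particular a valid adversarial sequence, since the adversary may draw the losses in advance.

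The remaining step is to bound the path length $V_1$. Losses lie in $[0,1]^{S\times A}$, so each increment satisfies $\nrm*{\ell_{t+1}-\ell_t}_1 \le SA$ deterministically. Only indices with $t \le T_0$ can have a nonzero increment, because for $t \ge T_0+1$ both $\ell_t$ and $\ell_{t+1}$ vanish. There are at most $T_0$ such indices; the last of them, $t = T_0$, is the transition from the active phase to zero losses. This gives $V_1 \le SA\,T_0 \le \alpha SAT$.

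Finally, we convert to $V_1$. Since $V_1 \le \alpha SAT$, we have $\alpha \ge V_1/(SAT)$, and therefore $\sqrt{\alpha HSAT} \ge \sqrt{H V_1}$, which gives $\Omega(\sqrt{HV_1})$. To make the bound meaningful for a target value $V_1$, one chooses $\alpha \approx V_1/(SAT)$.

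The only delicate point is the bookkeeping around the phase boundary: we must check that the jump at $t = T_0$ is counted, which is why the bound uses $T_0$ increments and not $T_0 - 1$, and that $T_0 \ge 1$. Both follow from the stated range of $\alpha$, so no step is expected to be a real obstacle.
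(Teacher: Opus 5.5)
Your proposal is correct and is essentially the paper's own argument: run the hard instance of \cref{lem:normal_lower_bound_ber} for the first $\lfloor \alpha T\rfloor$ episodes, set all losses to zero afterwards, and bound $V_1$ by $SA$ for each of the at most $\lfloor \alpha T\rfloor$ possibly nonzero increments. Your explicit checks that $\lfloor \alpha T\rfloor \ge \lceil SA/8H\rceil$ and that the boundary jump at $t=\lfloor \alpha T\rfloor$ is counted simply fill in bookkeeping the paper leaves implicit.
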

\begin{proof}
    Fix any $\alpha \in \brk*{\frac{\ceil*{SA/8H}}{T}, 1}$ and split the horizon into
    an active phase $t = 1,\dots,\floor{\alpha T}$ and an inactive phase $t = \floor{\alpha T} + 1,\dots,T$.
    In the inactive phase, we set all losses to zero. As a result, it holds that
    \begin{align}
        V_1 = \E\brk*{\sum_{t = 1}^{T-1}\nrm*{\ell_{t + 1} - \ell_t}_1}
        \leq \alpha SAT.
    \end{align}
    On the other hand, by applying \cref{lem:normal_lower_bound_ber} to the active phase, we obtain
    \begin{align}
        \max_{\pi\in\Pi}\E\brk{\Regg_T(\pi)} \geq  \Omega\prn{\sqrt{\alpha HSAT}} + 0 = \Omega\prn{\sqrt{\alpha HSAT}}.
    \end{align}
\end{proof}

\subsection{Proof of \Cref{thm:variance_lower_bound}}
Here we provide the proof of \Cref{thm:variance_lower_bound}.

\begin{thm}
\label{thm:lower_bound4}
Suppose that $H \geq 3$, $A \geq 3$, $T \geq \frac{SA}{8H}$, and $\alpha \in (0, \frac{1}{4}]$.
Then, for any policy $\{\pi_t\}_{t=1}^T$, there exists an episodic MDP with stochastic losses satisfying
\begin{align}
    \V \coloneq \max_\pi\E\brk*{\sum_{s,a} q^{\pi}(s,a)\sigma^2(s,a)} \leq \alpha H
\end{align}
such that
 \begin{align}
    \max_{\pi\in\Pi}\E\brk{\Regg_T(\pi)} \geq \Omega\prn{\sqrt{\alpha HSAT}} = \Omega\prn{\sqrt{SA \V T}}.
\end{align}
\end{thm}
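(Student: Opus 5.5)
The plan is to reuse the hard instance of \cref{lem:normal_lower_bound_ber} and shrink the \emph{amplitude} of the losses instead of truncating the horizon. Truncation would need $\alpha T \gtrsim SA/H$ active episodes, which is why it cannot cover all $\alpha\in(0,1/4]$. Scaling the amplitude imposes no such constraint.

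\textbf{Construction.} Fix $\alpha\in(0,1/4]$ and set $\beta = 2\sqrt{\alpha}\in(0,1]$. Take the same layered MDP with uniform transitions as in \cref{lem:normal_lower_bound_ber}. For $\tilpi\in\Pidet$, let $Z_t(s,a)$ be independent with $Z_t(s,a)\sim\ber(1/2)$ if $a=\tilpi(s)$ and $Z_t(s,a)\sim\ber(1/2+\epsilon)$ otherwise. Define the loss as
\begin{align}
\ell_t(s,a) = \tfrac12 - \tfrac{\beta}{2} + \beta Z_t(s,a) \in \brk*{\tfrac12-\tfrac\beta2,\ \tfrac12+\tfrac\beta2}\subseteq[0,1].
\end{align}
The companion instances $\calM(\tilpi,\stil)$ are defined analogously.

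\textbf{Variance bound.} Each loss satisfies $\sigma^2(s,a)=\beta^2 p(1-p)\le \beta^2/4=\alpha$. Since $\sum_{s\in\calS_h,a}q^\pi(s,a)=1$ for every layer $h$, we get $\sum_{s,a}q^\pi(s,a)\sigma^2(s,a)\le \alpha H$ for every $\pi$. Hence $\V\le\alpha H$.

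\textbf{Regret.} Three facts carry over from the proof of \cref{lem:normal_lower_bound_ber}.
\begin{itemize}
\item The map $Z\mapsto \ell$ is an affine bijection. The learner's observations are therefore informationally equivalent to observing $Z$, so the KL computation in \cref{eq:kl_chain_ber} is unchanged: $\KL(\P_{\tilpi,\stil},\P_{\tilpi})\le 4\epsilon^2\E_{\P_{\tilpi,\stil}}[N_T(\stil,\tilpi(\stil))]$.
\item The averaging steps \cref{eq:unif_Pidet_N,eq:lb_upper_ber2} over $\tilpi\sim\unif(\Pidet)$ are likewise unchanged.
\item The only change is in the regret decomposition \cref{eq:regret_decompose_lb}. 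Every suboptimal pull now costs $\beta\epsilon$ instead of $\epsilon$, so the regret lower bound is multiplied by $\beta$.
\end{itemize}
Carrying this through, the same choice $\epsilon=(\frac14-\frac1{2A})\sqrt{SA/(2HT)}$ gives
\begin{align}
\max_{\tilpi\in\Pidet}\max_{\pi\in\Pi}\E_{\P_{\tilpi}}[\Regg_T(\pi)]\ \ge\ \beta c\sqrt{HSAT}=2c\sqrt{\alpha HSAT}.
\end{align}
This choice of $\epsilon$ is valid because $T\ge SA/(8H)$ and $A\ge3$, as in the lemma. Combining with $\V\le\alpha H$ gives $\Omega(\sqrt{\alpha HSAT})=\Omega(\sqrt{SA\V T})$.

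\textbf{Main obstacle.} The delicate step is checking that \cref{lem:normal_lower_bound_ber} really applies verbatim after rescaling. Its regret bound depends on the losses only through the gap $\epsilon$, and linearly so. Its information bound depends only on the law of the underlying Bernoulli draws, which the rescaling does not alter. I would re-derive \cref{eq:regret_decompose_lb} with gap $\beta\epsilon$ explicitly to confirm this. No constraint linking $\alpha$ to $T$ arises, so the bound holds for all $\alpha\in(0,1/4]$.
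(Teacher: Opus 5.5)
Your proposal is correct and matches the paper's proof. The paper also rescales the hard instance of \cref{lem:normal_lower_bound_ber} by $\beta = 2\sqrt{\alpha}$; it simply takes $\beta\,\ell^{(\tilpi)}_t$, whereas you add the offset $\tfrac12-\tfrac\beta2$, which is immaterial because it shifts every policy's value by the same constant. It then bounds $\V \le \beta^2 H/4 = \alpha H$ and uses that the regret scales linearly by $\beta$ while the information-theoretic part is unchanged.
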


\begin{proof}
Let $\beta = 2\sqrt{\alpha} \in (0,1]$. We take the hard instance $\calM(\tilpi)$ in Lemma~\ref{lem:normal_lower_bound_ber} and scale all losses by $\beta$. We define
\begin{align}
    \Lambda^{(\tilpi)}_t(s,a) = \beta \ell^{(\tilpi)}_t(s,a).
\end{align}
Let $\calM^{(\beta)}(\tilpi)$ be the MDP obtained from $\calM(\tilpi)$ by replacing each loss $\ell^{(\tilpi)}_t(s,a)$ with $\Lambda^{(\tilpi)}_t(s,a)$ while keeping the transition kernel unchanged. Let $\P_{\tilpi}^{(\beta)}$ be the probability distribution induced by $\calM^{(\beta)}(\tilpi)$. We also denote by $\E_{\P_{\tilpi}^{(\beta)}}[\cdot]$ the expectation under the MDP induced by $\calM^{(\beta)}(\tilpi)$.

Then, for any state-action pair $(s,a)$, we have
\begin{align}
    \Var\left(\Lambda^{(\tilpi)}_t(s,a)\right)
    =\beta^2 \Var\left(\ell^{(\tilpi)}_t(s,a)\right)
    \leq\frac{\beta^2}{4}
    =\alpha.
\end{align}
Therefore, the occupancy-weighted variance satisfies
\begin{align}
    \V
    &= \max_{\pi}\E_{\P_{\tilpi}^{(\beta)}}\brk*{\sum_{s,a} q^{\pi}(s,a)\sigma^2(s,a)} 
    \leq \alpha\max_{\pi}\E_{\P_{\tilpi}^{(\beta)}}\brk*{\sum_{s,a} q^\pi(s,a)} = \alpha H .
    \label{eq:var_condition_beta}
\end{align}
Thus, the constructed instance satisfies the variance condition $\V \leq \alpha H$.

Finally, since scaling all losses by $\beta$ scales the regret by the same factor, Lemma~\ref{lem:normal_lower_bound_ber} implies that there exists $\tilpi\in\Pidet$ such that
\begin{align}
    \max_{\pi\in\Pi}\E_{\P_{\tilpi}^{(\beta)}}\brk*{\Regg_T(\pi)}
    \geq 2c\sqrt{\alpha HSAT},
\end{align}
where $c = \frac{\sqrt{2}}{16}\left(\frac{1}{2}-\frac{1}{A}\right)^2$.

Finally, using $\alpha H \geq \V$ from \cref{eq:var_condition_beta}, we have
\begin{align}
    \max_{\pi\in\Pi}\E_{\P_{\tilpi}^{(\beta)}}\brk*{\Regg_T(\pi)}
    \geq 2c\sqrt{\alpha HSAT}
    \geq 2c\sqrt{SA\V T}.
\end{align}
This completes the proof.
\end{proof}

\end{document}